\documentclass[twoside]{article}

\usepackage[accepted]{aistats2026}

\usepackage{verbatim}
\usepackage{booktabs}       
\usepackage{nicefrac}      
\usepackage{amsmath,amsthm,amssymb}
\usepackage{mathtools}
\usepackage{todonotes}
\usepackage{enumitem}
\usepackage{bm}
\usepackage{bbm}
\usepackage{dsfont}
\usepackage{todonotes}
\usepackage{booktabs}
\usepackage{float}
\usepackage{multirow}
\usepackage{subcaption}

\usepackage[utf8]{inputenc}

\usepackage[
  backend=biber,
  style=authoryear,     
  sorting=ynt,           
  sortcites=true,        
  giveninits=true,   
  maxcitenames=2,        
  mincitenames=1,
  maxbibnames=99,        
  uniquename=init,
  doi=false,isbn=false,url=false,
  eprint=false,
  hyperref=true
]{biblatex}

\renewbibmacro{in:}{}

\let\citep\parencite
\let\citet\textcite

\DeclareCiteCommand{\parencite}[\mkbibparens]
  {\usebibmacro{prenote}}
  {\bibhyperref{%
     \printnames{labelname}%
     \setunit{\nameyeardelim}%
     \printlabeldateextra}}
  {\multicitedelim}
  {\usebibmacro{postnote}}

\DeclareCiteCommand{\textcite}
  {\usebibmacro{prenote}}
  {\bibhyperref{%
     \printnames{labelname}%
     \setunit{\nameyeardelim}%
     \printlabeldateextra}}
  {\multicitedelim}
  {\usebibmacro{postnote}}

\usepackage{bm}
\usepackage{amsmath}
\usepackage{amssymb}
\usepackage{mathtools}
\usepackage{stmaryrd}

\usepackage{amsthm}

\usepackage[linesnumbered, ruled, vlined]{algorithm2e}
\definecolor{bluepurple}{RGB}{0, 0, 139}
\SetCommentSty{myCommentStyle}

\usepackage{algpseudocode}

\usepackage[mathscr]{eucal}
\usepackage{thmtools, thm-restate}
\usepackage{upgreek}
\let\what\widehat
\usepackage{relsize}
\SetKwProg{Fn}{\textsc{Function}}{}{end}

\newcommand{\bigint}{\mathlarger{\mathlarger{\int}}}
\newcommand{\mdpr}{\ensuremath{\mu}}
\newcommand{\mdpp}{\ensuremath{p}}
\newcommand{\kernforce}{r^\mathrm{f}}

\newcommand{\diff}{\ensuremath{\mathrm{d}}}
\newcommand{\myeqref}[1]{Eq.~\eqref{#1}}
\newcommand{\wrt}{w.r.t. }
\newcommand{\mdp}{\ensuremath{M}}
\newcommand{\empmdp}{\ensuremath{\hat M}}
\newcommand{\Mdp}{\ensuremath{\mathbf{M}}}
\newcommand{\mdq}{\ensuremath{\widetilde{{M}}}}
\newcommand{\mdqp}{\ensuremath{\Phi}}
\newcommand{\mdqr}{\ensuremath{\theta}}
\newcommand{\loss}{\ensuremath{r^\mathrm{exp}}}
\newcommand{\expb}[1]{\ensuremath{\exp\left\{  #1 \right\}}}

\DeclareMathOperator*{\glr}{GLR}

\DeclareMathOperator{\var}{Var}
\DeclareMathOperator{\dir}{Dir}
\let\underbrace\LaTeXunderbrace

\usepackage{etoolbox}
\DeclarePairedDelimiterX\norm[1]\lVert\rVert{
\ifblank{#1}{\:\cdot\:}{#1}
}

\DeclarePairedDelimiterX\Set[1]\{\}{%

#1
}
\DeclarePairedDelimiterXPP\lnorm[1]{}\lVert\rVert{_2}{\ifblank{#1}{\:\cdot\:}{#1}}
\DeclarePairedDelimiterXPP\pnorm[2]{}\lVert\rVert{_{#1}}{\ifblank{#2}{\:\cdot\:}{#2}}

\DeclarePairedDelimiterXPP\Prob[1]{\mathbb{P}}(){}{

#1}

\usepackage{hyperref}
\usepackage{cleveref} 
\usepackage{xcolor}
\definecolor{Green}{HTML}{228B22}
\hypersetup{colorlinks=true,citecolor=Green, urlcolor=Green, linkcolor=Green}

\newcommand{\algoname}{ 
 \hyperlink{REPIT}{PIPS}}

\newcommand{\closure}[0]{\mathrm{cl}}

\newtheorem{theorem}{Theorem}
\newtheorem{lemma}[theorem]{Lemma}
\newtheorem{proposition}[theorem]{Proposition}

\newtheorem{assumption}{Assumption}

\newcommand{\indp}[1]{\ensuremath{\mathbf{1}\!\left( #1\right)}}
\newcommand{\C}{\subset}

\newcommand{\bP}{\ensuremath{\mathrm{Pr}}} 

\newcommand{\bpol}{\ensuremath{\pi^\mathrm{exp}}} 
\newcommand{\bpolmix}{\ensuremath{\tilde{\pi}^\mathrm{exp}}} 
\newcommand{\bB}{\mathbb{B}}
\newcommand{\bE}{\mathbb{E}}

\newcommand{\bR}{\mathbb{R}}

\newcommand{\bN}{\mathbb{N}}

\newcommand{\cN}{\mathcal{N}}

\newcommand{\cX}{\mathcal{X}}
\newcommand{\cY}{\mathcal{Y}}

\newcommand{\cC}{\mathcal{C}}
\DeclareMathOperator{\KL}{KL}

\newcommand{\cS}{\mathcal{S}}
\newcommand{\cR}{\mathcal{R}}

\newcommand{\cA}{\mathcal{A}}
\newcommand{\cZ}{\mathcal{Z}}

\newcommand{\cL}{\mathcal{L}}

\newcommand{\cH}{\mathcal{H}}

\newcommand{\cO}{\mathcal{O}}
\newcommand{\cP}{\mathcal{P}}

\newcommand{\cE}{\mathcal{E}}

\newcommand{\veps}{\varepsilon}

\newcommand{\iid}{{ \it i.i.d }}

\newcommand{\impl}{\Longrightarrow}

\DeclareMathOperator*{\argmax}{argmax}
\DeclareMathOperator*{\argmin}{argmin}

\DeclareMathOperator{\vol}{{vol}}
\DeclareMathOperator{\alt}{\mathrm{Alt}}

\newcommand{\myexp}[1]{\ensuremath{{e}^{#1}}}

\newcommand{\muh}{\ensuremath{{\hat{\mu}}}} 
\newcommand{\mut}{\ensuremath{{\tilde{\mu}}}} 
\newcommand{\maxp}[2]{\ensuremath{\max\left( #1, #2 \right)}}

\DeclarePairedDelimiterX\innerp[2]{\langle}{\rangle}{#1,#2}

\newcommand{\lp}{\ensuremath{\left(}}
\newcommand{\rp}{\ensuremath{\right)}}
\newcommand{\lb}{\ensuremath{\left\{}}
\newcommand{\rb}{\ensuremath{\right\}}}
\newcommand{\lsb}{\ensuremath{\left[}}
\newcommand{\llvert}{\ensuremath{\left \lvert}}
\newcommand{\rrvert}{\ensuremath{\right \rvert}}
\newcommand{\rsb}{\ensuremath{\right]}}
\newcommand{\wt}{\ensuremath{\widetilde}}
\newcommand{\wh}{\ensuremath{\widehat}}

\let\geq\geqslant
\let\leq\leqslant

\newcommand{\ceq}{\ensuremath{\triangleq}}
\newcommand{\sect}{\Alph{section}}

\makeatletter
\newcommand{\arabicsectionlabell}[1]{%
  \protected@write\@auxout{}{%
    \string\newlabel{#1@arabic}{{\arabic{section}}{\thepage}}%
  }%
}
\makeatother

\begin{document}

\runningauthor{Cyrille Kone, Kevin Jamieson}

\twocolumn[

\aistatstitle{Optimal Posterior Sampling for Policy Identification in Tabular Markov Decision Processes}

\aistatsauthor{ Cyrille Kone \And Kevin Jamieson }

\aistatsaddress{ Univ. Lille, CNRS, Inria, Centrale Lille, \\ UMR 9189-CRIStAL, F-59000 Lille, France \And University of Washington,\\  Seattle, USA  } ]

\begin{abstract}
 We study the $(\varepsilon, \delta)$-PAC policy identification problem in finite-horizon episodic Markov Decision Processes. Existing approaches provide finite-time guarantees for approximate settings ($\varepsilon>0$) but suffer from high computational cost, rendering them hard to implement, and also suffer from suboptimal dependence on $\log(1/\delta)$. We propose a randomized and computationally efficient algorithm for best policy identification that combines posterior sampling with an online learning algorithm to guide exploration in the MDP. Our method achieves asymptotic optimality in sample complexity, also in terms of posterior contraction rate, and runs in $O(S^2AH)$ per episode, matching standard model-based approaches. Unlike prior algorithms such as MOCA and PEDEL, our guarantees remain meaningful in the asymptotic regime and avoid sub-optimal polynomial dependence on $\log(1/\delta)$.
Our results provide both theoretical insights and practical tools for efficient policy identification in tabular MDPs. 
\end{abstract}

\section{INTRODUCTION}
\label{sec:intro}
In online reinforcement learning (RL), an agent sequentially interacts with an unknown Markov decision process (MDP) by observing trajectories composed of transitions and rewards generated by the environment.
We are interested in the setting where interactions occur in episodes of finite length $H$: an episodic finite-horizon setting \citep{10.5555/528623}. Put formally, the MDP is described by $\mdp \ceq (\cS, \cA, H, \mdpp, \mdpr, s_\text{init})$ where $\cS$ is the state space of size $S$, $\cA$ is the action space of size $A$, $\mdpp$ and $\mdpr$ denote the transition and reward kernels, and $s_\mathrm{init}$ is the initial state. 

Each episode starts in the initial state $s_\mathrm{init}$. At stage $h \in [H]$, when in state $s_h \in \cS$, the agent selects an action $a_h \in \cA$ according to its strategy, collects a (random) reward $r_h$, and transitions to the (random) next state $s_{h+1}$ according to the environment dynamics. After $H$ steps, the environment resets to $s_\mathrm{init}$. The agent must learn to efficiently explore the environment in order either to maximize cumulative rewards across episodes \citep{JMLR:v11:jaksch10a} or, after an exploration phase, to identify a “good” policy that maximizes the expected sum of rewards per episode \citep{10.1145/180139.181019, NIPS1998_99adff45}. In this work, we focus on the latter objective, studied in the celebrated $(\veps, \delta)$-PAC (Probably Approximately Correct) policy identification setting, where the agent explores adaptively and eventually stops to recommend a policy it believes to be $\veps$-optimal. The goal is to keep the exploration phase as short as possible while guaranteeing the correctness of the recommendation.

In this framework, the agent follows adaptive exploratory policies and stops at a (random) time $\tau$, when it recommends a candidate policy $\hat{\pi}_\tau$ believed to be optimal. The objective is to minimize the exploration time while ensuring PAC correctness. This setting is a pure exploration problem, since performance is evaluated only through the final recommended policy, not during the exploration phase.

An algorithm for $(\veps,\delta)$-PAC policy identification typically has three components : (i) an adaptive exploration policy $\rho_t$ to gather new observations at episode $t$, (ii) a recommendation rule $\hat \pi_t$ representing the guess of the learning agent for the best policy and (iii) a stopping rule $\tau$ specifying when to terminate interaction with the environment.

Early work on $(\veps, \delta)$-PAC policy identification adapted algorithms from the regret-minimization setting by incorporating adaptive stopping rules, and established worst-case guarantees on the sample complexity \citep{10.5555/3042573.3042791,kaufmann21a,pmlr-v139-menard21a}. However, subsequent results \citep{pmlr-v178-wagenmaker22a, pmlr-v201-tirinzoni23a} proved such rates to be inherently suboptimal from a problem-dependent perspective. Recent research has therefore shifted toward identifying the exact complexity term governing policy identification. In this regime, racing-style elimination algorithms have been proposed \citep{NEURIPS2024_28795419, NEURIPS2022_39c60dda, pmlr-v178-wagenmaker22a, NEURIPS2022_27bf08fe}. These approaches often rely on sophisticated algorithmic designs or exhaustive enumeration of deterministic policies, which leads to significant computational challenges. Another line of work has introduced tracking-based algorithms \citep{pmlr-v139-marjani21a, NEURIPS2021_d9896106}; while promising, these methods are also computationally costly. Importantly, none of these approaches achieves instance-dependent optimality when $\delta$ is small.

In this work, we revisit the problem of $(\veps, \delta)$-PAC policy identification in the tabular episodic setting with a focus on the instance-dependent complexity when $\delta$ is small. We propose an algorithm that relies on posterior sampling and avoids costly policy enumeration and optimization steps required by many prior methods. 

 \subsection{Related Work}
 \label{sec:related_work}
 The earliest results on $(\veps, \delta)$-PAC policy identification focused on establishing worst-case sample complexity bounds, i.e., the dependence on $S, A, H, \veps$, and $\delta$ \citep{10.1145/180139.181019, NIPS1998_99adff45, 10.5555/3042573.3042791, 10.1007/s10994-013-5368-1, 10.5555/3174304.3175320, 10.5555/2969442.2969555, NEURIPS2018_bb03e43f, pmlr-v125-agarwal20b, pmlr-v139-menard21a}. Notably, the most recent of these results matches the lower bound $O\left(\tfrac{SAH^3}{\veps^2}\log(1/\delta)\right)$ proven by \citet{pmlr-v132-domingues21a}. The corresponding algorithms are often derived from optimistic methods originally designed for regret minimization \citep{pmlr-v70-azar17a} or from online-to-batch and regret-to-PAC reductions \citep{NEURIPS2018_d3b1fb02, pmlr-v201-tirinzoni23a}. While such worst-case guarantees provide a complete characterization of performance in the hardest environments, they do not capture the true, instance-dependent complexity of PAC policy identification. In particular, many MDP instances can be solved with far fewer samples than the worst-case bounds would suggest.

A series of works has recently attempted to characterize the true instance-dependent complexity of $(\veps, \delta)$-PAC policy identification. The seminal work of \citet{pmlr-v178-wagenmaker22a} notably observed that algorithms relying on the optimism principle, and, more generally, any algorithm satisfying good regret properties, cannot be instance-optimal in the $(\veps, \delta)$-PAC setting for all MDPs. The authors then introduced MOCA, a racing-type algorithm with action elimination, and showed that, with high probability, its sample complexity is upper-bounded by $C_\mathrm{MOCA}(\mdp, \veps) \log(1/\delta) + \text{poly}(\log(1/\delta), \log(1/\veps), SAH)/\veps,$ where $C_\mathrm{MOCA}(\mdp, \veps)$ scales with $H^2\sum_{h=1}^H \min_{\rho} \max_{s,a}\tfrac{1}{(\veps \lor \Delta_h(s,a))^2 w^\rho_h(s,a)}\:,$ and $w_h^\rho(s,a)$ is the probability that $(s,a)$ is visited at step $h$ when $\rho$ is played during an episode. $C_\mathrm{MOCA}(\mdp, \veps)$ is therefore identified as the leading complexity term when $\veps$ is small or when the suboptimality gaps are small. However, due to second-order terms, it may cease to be the dominating term when $\delta$ is small \citep{pmlr-v178-wagenmaker22a}.

\citet{NEURIPS2022_27bf08fe} proposed PEDEL, a policy-elimination algorithm that estimates policy values using a sophisticated G-optimal design. For tabular MDPs the sample complexity of PEDEL scales with $C_\mathrm{PEDEL}(\mdp, \veps) \ceq H^4 \sum_{h=1}^H {\min_\rho }\max_{\pi \in \Pi_\mathrm{det}} \sum_{s,a} \tfrac{w^\pi_h(s,a)^2}{w_h^\rho(s,a) (\veps \lor \Delta(\pi))^2}$ where $\Pi_\mathrm{det}$ is the set of deterministic (Markovian) policies and $\Delta(\pi)$ is the policy gap. See \citet{pmlr-v201-tirinzoni23a} for further discussion of the different gaps in $(\veps, \delta)$-PAC policy identification. 
By design, PEDEL requires enumerating all deterministic policies, which for finite-horizon MDPs is of size $A^{SH}$. The identified complexities for PEDEL and MOCA are generally not comparable.
 
\citet{al-marjani23a} introduced an algorithm whose identified complexity ranges between MOCA and PEDEL but suffers from a highly suboptimal polynomial dependence on $\log(1/\delta)$. 

\citet{NEURIPS2024_28795419} improved upon PEDEL by estimating not the individual policy values but the differences in policy values with respect to a reference policy. However, like PEDEL, their algorithm still requires enumerating the set of deterministic policies. 

\citet{NEURIPS2022_39c60dda} introduced EPRL, a fully adaptive action-elimination algorithm for MDPs with deterministic transitions. The sample complexity of EPRL also scales with the sum of the inverse squared suboptimality gaps of suboptimal reachable actions. The authors proved a lower bound on the sample complexity of a $(\veps, \delta)$-PAC algorithm for deterministic MDPs, showing that EPRL is optimal up to $O(H^2)$ terms. Importantly, this optimality guarantee holds only in the deterministic setting and does not extend to stochastic MDPs.
 
\citet{pmlr-v139-marjani21a} studied the instance-dependent complexity of $(\veps, \delta)$-PAC policy identification with a \emph{generative model} for $\veps = 0$ and stated the optimal complexity as a solution to an intensive min–max program, similar to the techniques popularized in \citet{garivier_optimal_2016}. The authors proposed an algorithm that solves a proxy of this problem. Applying the same technique with a \emph{forward model}, \citet{NEURIPS2021_d9896106} proposed MDP-NaS and proved a sample complexity asymptotically (as $\delta \to 0$) equivalent to $C(\mdp)\log(1/\delta)$, where $C(\mdp)$ is worse than $C_\text{MOCA}(\mdp, 0)$.

To date, all existing algorithms for policy identification either suffer from computational intractability, exhibit a suboptimal dependence on the leading term involving $\log(1/\delta)$ as $\delta \to 0$, or use regret minimization algorithms for their sampling rules.

\paragraph{Posterior Sampling and Top-Two Methods.}
Posterior sampling has emerged as a powerful tool for exploration in bandits and MDPs. Algorithms such as Top-Two Thompson Sampling \citep{russo_simple_2016, jourdan_top_2022, zhaoqi_peps} achieve optimal exploration in bandits by maintaining uncertainty over the true model. They appear to be statistically optimal, computationally efficient, and empirically competitive with other approaches. In RL, particularly for regret minimization, posterior sampling algorithms have become popular for designing efficient and tractable methods \citep{osband2013moreefficientreinforcementlearning, NIPS2017_3621f145, NEURIPS2019_451ae867, NEURIPS2022_45e15bae}. Yet, to the best of our knowledge, no posterior sampling algorithm has been analyzed for best policy identification in RL. We aim to close this gap by proposing an optimal and tractable algorithm for the episodic tabular setting. However, we emphasize that our goal is not merely to establish a sample complexity bound for an existing regret-minimization algorithm, as such algorithms are known to be suboptimal from an instance-dependent perspective (see above). Instead, we aim to design an instance-optimal algorithm that retains the computational efficiency of posterior sampling.

 \subsection{Main Contributions}
\label{sec:contributions}
In this paper, we focus on the theoretical analysis of the high-confidence regime (small $\delta$) and the exact identification case ($\varepsilon = 0$): 
\begin{itemize} 
    \item \emph{Algorithmic framework.} We introduce\algoname{}, a top-two, posterior-sampling-based algorithm for PAC policy identification, which breaks the policy-enumeration bottleneck by using posterior-driven exploration coupled with an online learner.   

    \item \emph{Theoretical guarantees.} We establish optimal instance-dependent sample complexity bounds in the high-confidence and exact identification regimes. In addition, we prove sharp posterior contraction results, providing a refined characterization of how uncertainty about the optimal policy decreases with the number of episodes. We further discuss the extension of \algoname{} to the $\veps>0$ case for which a simple modification of\algoname{} is proposed.  

    \item \emph{Empirical validation.} We complement our theory with experiments showing that our algorithm outperforms existing baselines, both in terms of sample efficiency and computational scalability.  
\end{itemize}
\section{PROBLEM FORMULATION}
\label{sec:setting}
We consider an episodic, finite-horizon Markov decision process (MDP) with finite state space $\cS$ of size $S$ and action space $\cA$ of size $A$. The MDP is defined by the tuple $\mdp  \ceq (\cS, \cA,  \{p_h\}_{h=1}^H, \{R_h\}_{h=1}^H, s_\mathrm{init})$ where $p_h$ and $R_h$ denote the transition and reward kernels at step $h$, and $s_\mathrm{init}$ is the initial state. 
A (Markovian) policy is a sequence $\pi = \{\pi_h\}_{h=1}^H$ with $\pi_h(\cdot  |  s)$ a distribution over actions given state $s \in \cS$. Under policy $\pi$, at each stage $h \in [H]$ the agent draws $a_h \sim \pi_h(s_h)$, receives reward $r_h \sim R_h(s_h, a_h)$ with mean $\mu_h(s_h, a_h) = \mu(s,a,h)$, and transitions to $s_{h+1} \sim p_h(\cdot  |  s_h, a_h) \equiv p(\cdot \mid s,a,h)$. A policy is called \emph{deterministic} if each $\pi_h(\cdot  |  s)$ assigns all probability mass to a single action denoted $\pi_h(s)$ or $\pi(s,h)$. 

\paragraph{Learning Problem.} Given the MDP $\mdp$, for any policy $\pi = \{\pi_h\}_{h=1}^H$, the $Q$-function is defined at each state-action as $Q_h^{\pi}(s, a) \ceq \bE_\mdp^{\pi}\bigl[\sum_{k=h}^H r_k  |  s_h=s, a_h=a\bigr]\:,$ i.e., the expected accumulated reward when starting from $(s,a)$ at step $h$ and following $\pi$ thereafter. The corresponding state-value function is $V_h^\pi(s) \ceq \bE_\mdp^{\pi} \bigl[\sum_{k=h}^H r_k  |  s_h = s \bigl] $ and we write $V_0^\pi \ceq V_1^\pi(s_\mathrm{init})$ for the expected return when starting from the initial state.  Let $\Pi$ denote the set of all policies (deterministic and stochastic). The learner’s goal is to identify an optimal policy
$\pi^\star \in \argmax_{\pi \in \Pi } V_0^\pi$ using as few episodes as possible. For an MDP $\Mdp$, we let $\Pi^\star(\mdp)$ (resp. $\Pi_\mathrm{det}^\star(\mdp)$) denote the set of optimal (resp. deterministic optimal) policies, and drop the dependence on $\mdp$ when clear from context. Finally, we denote by $\{\cH_t\}_{t \geq 1}$ the filtration of the learning process, i.e., the information collected up to the end of the $t$-th episode.  
\begin{assumption}
The MDP $\mdp$ admits a unique optimal policy $\pi^\star$. 
\end{assumption} 

We assume normally distributed rewards, i.e. $R_h(s,a) \sim \cN(\mu(s,a, h), 1)$, and we further assume without loss of generality that the expected reward $\mu(s,a, h) \in (0, 1)$. Any tabular MDP in this parametric set is uniquely identified by its transition probabilities $p\ceq \{p_{h}\}_{h=1}^H$ and expected reward functions $\mu \ceq \{ \mu_h\}_{h=1}^H$. Maintaining a distribution over these parameters is therefore equivalent to maintaining a distribution over MDP models. 
In the sequel, when the context is clear, we identify an MDP by its transition probabilities $p$ and expected rewards $\mu$. We denote by $\Mdp$ the set of such parametric MDPs. 

\paragraph{Empirical MDP.} For $t \geq 1$, let $n_t(s' \mid s,a,h) \,\ceq\, \sum_{i=1}^t \indp{ s_h^i = s,\, a_h^i = a,\, s_{h+1}^i = s'}$ be the number of transitions to state $s'$ after taking action $a$ in state $s$ at step $h$ during the first $t$ episodes. We also define $n_t(s,a, h) \,\ceq\, \sum_{s'} n_t(s'\mid s,a,h) \,=\, \sum_{k=1}^{t-1} \indp{ s_h^k = s,\, a_h^k = a},$ the total number of visits to $(s,a,h)$ up to the end of the $t$-th episode. The empirical transition probability at $(s,a,h)$ is then given by
$$ \hat p_t(s' | s,a,h) \,\ceq\,
\begin{cases}
\dfrac{n_t(s'\mid s,a, h)}{n_t(s,a, h)}, & \text{if } n_t(s,a, h) > 0, \\[1.2ex]
\dfrac{1}{S}, & \text{otherwise}.
\end{cases} $$
We denote by $\muh_t(s,a,h)$ the empirical mean reward at $(s,a,h)$. The empirical reward and transition kernels $(\hat p_t, \muh_t)$ define a unique MDP in $\Mdp$, which we denote by $\empmdp_t$. For any policy $\pi \in \Pi$, we write $(\hat V_{t,h}^\pi)$ and $(\hat Q_{t, h}^\pi)$ for the value and $Q$-functions of $\pi$ in $\empmdp_t$. 

\paragraph{Prior and Posterior over MDPs.}
Our distribution over MDPs is defined over the parameters that uniquely determine the model. For transitions, independent Dirichlet priors are considered, which, by conjugacy, yield Dirichlet posteriors. For rewards, we assign independent uniform priors on $(0,1)$, leading to truncated normal posteriors. After observing an episode $\{(s_h^t, a_h^t, r_h^t, s_{h+1}^t)\}_{h=1}^H$, the posterior is updated via Bayes’ rule. Direct computation gives the closed form  
\begin{equation}
\label{eq:posterior_def}
   \!\!\bP(p, \mu | \cH_{t-1})
    = \!\!\prod_{s,a,h}\!\!f_{s,a,h}(\mu_h(s,a))  g_{s,a,h}(p_h(\cdot  |  s,a)),\!\!
    \end{equation}
    where $f_{s,a,h}$ is the density of density of the truncated normal $\cN_{[0,1]}\!\bigl(\hat\mu_{t}(s,a,h), \tfrac{1}{n_{t}(s,a, h)}\bigr),$ and $g_{s,a,h}$ is the density of $\dir\!\bigl(\bm u + n_{t}(\cdot \mid s,a,h)\bigr)$, the Dirichlet with parameter $\bm + n_{t}(\cdot \mid s,a,h),$ for $\bm\in \bR^d_+$, that we will specify latter.  

Our goal is not to develop a full Bayesian model; the posterior primarily serves as an algorithmic tool to guide exploration. 

\paragraph{Information-theoretic Lower Bounds.} 
For two probability distributions $P,Q$ supported on a discrete space $\cS$, the Kullback–Leibler (KL) divergence is  
$\KL(P \,\Vert\, Q) \ceq \sum_{s \in \cS} P(s) \log \frac{P(s)}{Q(s)}$.  
For two MDPs $\mdp, \mdp'$, we say that $\mdp$ is absolutely continuous with respect to $\mdp'$ (denoted $\mdp \ll \mdp'$) if for every $(s,a,h)$, $p_h(\cdot  |  s,a) \ll p_h'(\cdot  |  s,a)$ and $R_h(s,a) \ll R_h'(s,a)$. Given an MDP $\mdp$, define  
$ \alt(\mdp) \,\ceq\, \bigl\{ \mdp' \in \Mdp : \mdp \ll \mdp' \;\; \text{and} \;\; \Pi^\star(\mdp) \cap \Pi^\star(\mdp') = \emptyset \bigr\},$ the set of alternative MDPs $\mdp'$ such that $\mdp$ is absolutely continuous with respect to $\mdp'$ but where no policy optimal in $\mdp$ remains optimal in $\mdp'$. We denote by $(\KL(\mdp \,\|\, \mdp'))_{s,a,h}$ the KL divergence between the transition and reward kernels of $\mdp$ and $\mdp'$. For each $(s,a,h)$, introducing $\mdp_{s,a,h} \ceq R_h(s,a) \otimes p_h(\cdot | s,a)$, 
\begin{align*}
   \KL\!(\!\mdp\Vert \mdp\!')_{s,a,h} \!&= \KL\!\bigl(\mdp_{s,a,h} \Vert \mdp_{s,a,h}'\bigr) \\
    \!&= \KL\!\bigl(R_{s,a,h} \Vert R_{s,a,h}'\!\bigr) +  \KL\!\bigl(\!p_{s,a,h} \Vert p_{s,a,h}'\!\bigr)\!,
\end{align*}
where $p_{s,a,h} \ceq p_h(\cdot | s,a)$ and $R_{s,a,h} \ceq R_h(s,a)$. 

\indent
An algorithm is said to be {$\delta$-PAC for best policy identification (BPI)} if its stopping time $\tau$ and recommendation rule $\hat\pi_\tau$ satisfy $$ \bP_{\mdp}\!\bigl(\tau < \infty, \; \hat\pi_\tau \neq \pi^\star\bigr) \;\leq\; \delta, \;\; \forall\, \mdp \in \Mdp \:. $$
Let $\Omega_\mdp$ denote the set of visitation probabilities induced by Markovian policies on $\mdp$, $\Omega_\mdp \ceq
   \{ \{w_h^\pi(s,a)\}_{s,a,h} :  \pi \in \Pi \}$. 

From the information-contraction principle and change-of-distribution techniques (as popularized in \citet{garivier_optimal_2016}), the following holds.  
\begin{restatable}{theorem}{lbdSc}
\label{thm:lbd-sc}
Any $\delta$-PAC algorithm with stopping time $\tau$ satisfies, 
\begin{align*}
	&\qquad\qquad\bE_\mdp[\tau]
    \,\geq\, \Gamma_\mdp^{-1} \log \frac{1}{2.4\delta}, \;\; \text{where} \\
    &\Gamma_\mdp \ceq 
    \sup_{w \in \Omega_\mdp} 
    \inf_{\mdq \in \alt(\mdp)} 
    \Biggl[ \sum_{s,a,h} w_h(s,a) \KL\!(\mdp \Vert \mdq)_{(s,a,h)} \!\Biggr]. 	
	\end{align*}
\end{restatable}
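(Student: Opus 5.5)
We follow the change-of-measure argument of \citet{garivier_optimal_2016}, adapted to episodes of an MDP. Fix a $\delta$-PAC algorithm with $\bE_\mdp[\tau]<\infty$ (otherwise there is nothing to prove), and fix an arbitrary $\mdq \in \alt(\mdp)$. Let $N_\tau(s,a,h)$ be the number of visits to $(s,a,h)$ during the first $\tau$ episodes.

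\emph{Step 1: log-likelihood ratio.} Since $\mdp \ll \mdq$, the law of the observation sequence up to $\tau$ under $\mdp$ is absolutely continuous with respect to its law under $\mdq$. The log-likelihood ratio of the observations decomposes over transitions and rewards. The algorithm's (possibly randomized) policy choices cancel, because they depend only on the past history. Hence the log-likelihood ratio equals
\[
\sum_{s,a,h}\sum_{i\le N_\tau(s,a,h)} \log\frac{d\mdp_{s,a,h}}{d\mdq_{s,a,h}}(Y_i).
\]
A Wald-type identity for the stopping time $\tau$ then gives
\[
\bE_\mdp[L_\tau]=\sum_{s,a,h}\bE_\mdp[N_\tau(s,a,h)]\,\KL(\mdp\Vert\mdq)_{s,a,h}.
\]

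\emph{Step 2: data-processing inequality.} Consider the event $E=\{\hat\pi_\tau\in\Pi^\star(\mdp)\}$, which is $\cH_\tau$-measurable. Since the algorithm is $\delta$-PAC, $\bP_\mdp(E)\ge1-\delta$. Because $\Pi^\star(\mdp)\cap\Pi^\star(\mdq)=\emptyset$, the same $\delta$-PAC property gives $\bP_\mdq(E)\le\delta$. The data-processing inequality for KL, applied to the measurable map $\cH_\tau\mapsto\indp{E}$, then yields
\[
\bE_\mdp[L_\tau]\ge \mathrm{kl}(1-\delta,\delta)\ge\log\frac{1}{2.4\delta},
\]
using the standard bound on the binary relative entropy.

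\emph{Step 3: normalization into $\Omega_\mdp$.} Set $w_h(s,a)=\bE_\mdp[N_\tau(s,a,h)]/\bE_\mdp[\tau]$. We must show $w\in\Omega_\mdp$; this is the only MDP-specific step and the one we expect to need the most care.

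In each episode $t$, conditionally on $\cH_{t-1}$, the algorithm plays some (possibly history-dependent, within-episode) policy, and its expected visitation vector is an occupancy measure $w^{\rho_t}$. Summing over episodes, $\bE_\mdp[N_\tau(s,a,h)]=\bE_\mdp\bigl[\sum_{t\le\tau} w^{\rho_t}_h(s,a)\bigr]$. This uses optional stopping, since $\{\tau\ge t\}\in\cH_{t-1}$. Dividing by $\bE_\mdp[\tau]$ gives a convex combination of occupancy measures.

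Two standard facts close the argument. First, the occupancy measure of any history-dependent policy is matched by a Markovian policy, via $\pi_h(a|s)=w_h(s,a)/\sum_{b}w_h(s,b)$. Second, the set of occupancy measures is a convex polytope, cut out by the flow constraints $\sum_a w_{h+1}(s',a)=\sum_{s,a}p_h(s'|s,a)w_h(s,a)$ with $\sum_a w_1(s_\mathrm{init},a)=1$. Together these give $w\in\Omega_\mdp$.

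\emph{Step 4: conclusion.} Combining the three steps,
\[
\log\frac{1}{2.4\delta}\le \bE_\mdp[\tau]\sum_{s,a,h}w_h(s,a)\,\KL(\mdp\Vert\mdq)_{s,a,h}.
\]
This holds for every $\mdq\in\alt(\mdp)$ with the same $w$, so we may take the infimum over $\mdq$. We then bound that infimum by the supremum over $w\in\Omega_\mdp$, which is $\Gamma_\mdp$. Rearranging gives $\bE_\mdp[\tau]\ge\Gamma_\mdp^{-1}\log\frac{1}{2.4\delta}$.
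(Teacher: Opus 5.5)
Your proposal is correct and follows the paper's route. The steps are the same: the change-of-measure bound $\sum_{s,a,h}\bE_\mdp[n_{\tau+1}(s,a,h)]\KL(\mdp\Vert\mdq)_{s,a,h}\ge\log\frac{1}{2.4\delta}$ (the paper cites it from Garivier--Kaufmann, while you derive it via Wald's identity plus data processing), then the tower-rule identity $\bE_\mdp[n_{\tau+1}(s,a,h)]=\bE_\mdp[\sum_{t\le\tau}w_h^{\rho_t}(s,a)]$, then normalization by $\bE_\mdp[\tau]$, and finally the infimum over $\alt(\mdp)$ followed by the supremum over $\Omega_\mdp$. The only difference is how you show $w\in\Omega_\mdp$: you use the flow-constraint polytope together with the Markov realization $\pi_h(a\mid s)=w_h(s,a)/\sum_b w_h(s,b)$, whereas the paper invokes the Carath\'eodory argument of Wagenmaker et al. If you keep your version, say explicitly that the polytope is closed, since $w$ is an infinite average $\sum_t \bP(\tau\ge t)\,(\cdot)/\bE_\mdp[\tau]$ rather than a finite convex combination.
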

 \medskip
\noindent
We can express $\Gamma_\mdp$ as  
\begin{equation}
\label{eq:game}
    \Gamma_\mdp = 
    \max_{\bpol} 
    \!\!\!\inf_{\mdq \in \alt(\mdp)} 
    \!\!\bE_\mdp^{\bpol}\!\lsb\sum_{h=1}^H \KL(\mdp \Vert \mdq)_{s_h,a_h,h}  \rsb, \!
\end{equation}
where the expectation is taken over trajectories generated by policy $\rho$ with dynamics governed by $\mdp$ and the KL terms act as deterministic rewards. The problem in Eq.\eqref{eq:game} can be interpreted as the value of a two-player zero-sum game: the learner chooses an exploration policy $\bpol$ to maximize the accumulated information (measured by KL terms), while an adversary selects an alternative MDP $\mdq$ (with different optimal policies than in $\mdp$) to minimize it. 

The quantity in Eq.\eqref{eq:game} not only characterizes the sample complexity of BPI but also governs the contraction rate of the posterior distribution of any adaptive algorithm for BPI.  
\begin{restatable}{theorem}{postLbdMain}
\label{thm:opt_post_contra}
Fix an MDP $\mdp \in \Mdp$ with optimal policy $\pi^\star$ and let $\nu_t$ denote the posterior distribution defined in \myeqref{eq:posterior_def}. For any adaptive algorithm for best policy identification, with probability one,  
$$ \limsup_{t \to \infty} 
    -\tfrac{1}{t} 
    \log \bP_{\wt\mdp \sim \nu_t \,|\, \cH_{t-1}}
        \!\bigl(\pi^\star \notin \Pi^\star(\mdq)\bigr) 
    \;\leq\; \Gamma_\mdp \:. $$
\end{restatable}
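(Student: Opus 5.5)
The plan is to lower-bound the posterior mass of the alternative set by the mass of a small neighbourhood of a single well-chosen alternative MDP. The statement only concerns a $\limsup$ of $-\tfrac1t\log$ of that probability, so it suffices to show that for every $\eta>0$, almost surely, for all large $t$,
\[
\bP_{\mdq\sim\nu_t}\bigl(\pi^\star\notin\Pi^\star(\mdq)\bigr)\ \geq\ \exp\bigl(-t(\Gamma_\mdp+\eta)\bigr).
\]
Write $N_t(s,a,h)=n_t(s,a,h)$ and $\bar w_t\ceq N_t/t$. The first step is to observe that $\bar w_t$ lies asymptotically in $\Omega_\mdp$. Indeed, $\bar w_t$ is the average of the indicators of visits, and the conditional expectations of these indicators, $w^{\rho_k}_h(s,a)$, belong to $\Omega_\mdp$. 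Since $\Omega_\mdp$ is convex (mixtures of Markov policies are realised as occupancy measures of Markov policies), it contains their Cesàro average. A martingale SLLN then gives $\mathrm{dist}(\bar w_t,\Omega_\mdp)\to0$ almost surely. Combined with compactness of $\Omega_\mdp$, this will let us compare with $\Gamma_\mdp$ along subsequences. We also use the strong consistency facts that matter: for every $(s,a,h)$ with $N_t\to\infty$, $\hat p_t\to p$ and $\muh_t\to\mu$ almost surely. Triples visited only finitely often contribute zero weight in the limit.

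The second step is to choose, for each $t$, an alternative $\mdq_t\in\alt(\mdp)$ with
\[
\sum_{s,a,h}\bar w_t(s,a,h)\,\KL(\mdp\Vert\mdq_t)_{s,a,h}\ \leq\ \inf_{\mdq\in\alt(\mdp)}\sum \bar w_t\KL+\eta\ \leq\ \Gamma_\mdp+2\eta
\]
for $t$ large. The last inequality uses $\mathrm{dist}(\bar w_t,\Omega_\mdp)\to0$ together with continuity of the map $w\mapsto\inf_{\mdq}\sum w\KL$, which is an infimum of linear functions and hence concave. It is upper semicontinuous and finite on the simplex, and therefore continuous on the relevant compact set.

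The next modification is important. We perturb $\mdq_t$ so that it is \emph{strictly} inside the open set $\{\pi^\star\notin\Pi^\star\}$, which is open because values are continuous in $(p,\mu)$. We also make it have full-support transitions and rewards bounded away from $\{0,1\}$. The perturbation costs at most $\eta$ in the objective. Then a small ball $B_t$ of radius $r>0$ around $\mdq_t$ is contained in the alternative set. We want $r$ to be uniform in $t$, which we get by restricting $\mdq_t$ to a compact family of such strictly-alternative models obtained via a finite cover.

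The third step, which is the main obstacle, is a Laplace/large-deviation lower bound for the product posterior of \myeqref{eq:posterior_def} on $B_t$. For each $(s,a,h)$ we bound the posterior mass separately. The truncated-normal factor assigns mass at least $\exp\bigl(-N_t\,(\muh_t-\mu')^2/2-O(\log N_t)\bigr)$ to an interval of width $r$ around $\mu'$. The Dirichlet factor assigns mass at least $\exp\bigl(-N_t\KL(\hat p_t\Vert p')-O(S\log N_t)\bigr)$ to the neighbourhood of $p'$, by the standard Dirichlet/Sanov-type lower bound using Stirling on the normaliser. Multiplying over triples gives
\[
\nu_t(B_t)\geq\exp\Bigl(-t\sum\bar w_t\bigl[\KL(\hat p_t\Vert q_t)+(\muh_t-\mu_t')^2/2\bigr]-O(SAH\,S\log t)\Bigr).
\]
Finally, we replace $\hat p_t,\muh_t$ by $p,\mu$. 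This uses almost-sure consistency on triples with $N_t\to\infty$, and uniform continuity of $\KL(\cdot\Vert q)$ when $q$ has full support bounded away from zero, which is exactly why we made $\mdq_t$ full-support. On rarely visited triples $\bar w_t\to0$ and the terms are bounded. The exponent is then at most $t(\Gamma_\mdp+4\eta)+o(t)$. Taking logs, dividing by $t$, and letting $\eta\downarrow0$ yields the claim. Care is needed with the uniform-prior truncation and with Dirichlet cells of size $o(1)$; I would handle both by fixing $r$ and absorbing the resulting constants into the $o(t)$ term.
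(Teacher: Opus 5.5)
Your argument is correct, but it takes a different route from the paper.

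\textbf{The paper's route.} The paper argues through the GLR. It writes the posterior mass as $\Lambda_t/\int_{\Mdp}e^{-\Upsilon_t}$ with $\Lambda_t=\int_{\alt(\mdp)}e^{-\Upsilon_t}$. Since $\Upsilon_t\geq 0$, it bounds the denominator trivially by $\vol(\Mdp)$. It then lower-bounds $\log\Lambda_t\geq-\inf_{\alt(\mdp)}\Upsilon_t-o(t)$ via Lemma~\ref{lem:utl_lower_bound}, which uses convexity of the log-likelihood, an ``easier'' alternative from Lemma~\ref{lem:build_easier}, and the shrunk copy $(1-\gamma)\mdq'+\gamma\alt\subset\alt$ from Lemma~\ref{lem:convex_vol}. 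The comparison with $\Gamma_\mdp$ is a separate statement, Proposition~\ref{prop:upbd-glr}. That proposition is proved with concentration events, Borel--Cantelli, truncation to $\alt(\mdp,T^{1/4})$, and the claim $u_T\downarrow\Gamma_\mdp$.

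\textbf{Your route.} You work at a fixed near-optimal, full-support alternative measured in true-parameter KL. You use per-factor Laplace bounds for the truncated-normal and Dirichlet posteriors, and almost-sure consistency in place of concentration. Uniformity of the radius and of the minimal transition probability comes from a finite cover of $\Omega_\mdp$.

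\textbf{What each buys.} The paper's route yields a two-sided characterization $e^{-\glr(t)-o(t)}\lesssim\bP(\alt)\lesssim\mathrm{poly}(t)\,e^{-\glr(t)}$, whose upper half is Lemma~\ref{lem:concentr}. This sandwich is exactly what the sampling-rule and stopping analyses reuse. Your route is more elementary and self-contained and needs none of the convexity or volume machinery. It also avoids the exchange of $\lim_T$ and $\sup_w$ hidden in $u_T\downarrow\Gamma_\mdp$, which the paper asserts without proof. On the other hand, it gives only the one-sided asymptotic statement.

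\textbf{Three small points.}
\begin{itemize}
\item ``Upper semicontinuous and finite, hence continuous'' is not a valid implication on its own. It holds here only because $w\mapsto\inf_{\mdq}\sum w\KL$ is concave on a polyhedral set (Gale--Klee--Rockafellar). You do not actually need continuity. Upper semicontinuity plus your subsequence argument, or directly the finite cover of $\Omega_\mdp$, already gives $\inf_{\alt(\mdp)}\sum\bar w_t\KL\leq\Gamma_\mdp+2\eta$ eventually.
\item The fixed radius $r$ only serves to make room. The per-triple bounds with $O(\log N_t)$ losses must be taken on sub-cells of diameter of order $1/N_t$ around $\mdq_t$. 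Taking the infimum of the density over the whole radius-$r$ ball would cost $\Theta(rt)$ in the exponent, which is not $o(t)$.
\item Every point of $\alt(\mdp)$ already lies in the open set $\{\pi^\star\notin\Pi^\star\}$. Your perturbation is therefore needed only for full-support transitions and rewards away from $\{0,1\}$.
\end{itemize}
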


In words, for any adaptive algorithm, the posterior probability of misidentifying the optimal policy decays at most as $\myexp{-t \Gamma_\mdp}$ asymptotically in $t$. While analogous results are known in the multi-armed bandit setting \citep{russo_simple_2016}, our result is novel in the context of tabular MDPs (see Appendix~\ref{appx:posterior_convergence} for a full proof). 
\section{POLICY IDENTIFICATION VIA POSTERIOR SAMPLING}
\label{sec:algo}
Our algorithm uses the posterior distribution to guide exploration in the environment. At episode $t$, the learner computes the optimal policy of the empirical MDP $\empmdp_{t}$, denoted by $\hat \pi_t$, based on all data observed up to episode $t$. The core idea is then to construct a \emph{challenger} MDP $\mdq_{t}$, sampled from an inflated posterior distribution, such that $\hat \pi_t$ is not optimal in $\mdq_{t}$. This challenger highlights uncertainty about the optimal policy and directs further exploration.  

Formally, let $1/\eta_t$ be the inflation parameter. We define the density of the inflated posterior distribution over MDPs as  
\begin{equation*}
\nu^{\eta_t}_t (p, \mu) 
    \,\ceq\,  
    \prod_{s,a,h} f^t_{s,a,h}(\mu(s,a, h)) \,g^t_{s,a,h}(p(\cdot  |  s,a, h)) \:, 	
\end{equation*}
where $f^t_{s,a,h}$ is the density of inflated $N_{[0,1]}(\muh_{t}(s,a, h), \frac{1}{\eta_t n_{t}(s,a, h)})$ and $g^t_{s,a,h}$ is the density of the inflated Dirichlet $ \dir\!\lp \bm u + \eta_t n_{t}(\cdot \mid s,a,h)\rp$,
and we set the prior hyperparameter to $\bm u = (1,\dots,1)$. The challenger MDP $\mdq_{t}$ is generated via conditional posterior sampling: we draw candidate MDPs $\mdq_{t, 1}, \mdq_{t, 2}, \dots$ \iid from $\nu^{\eta_t}_t$ and select the first one for which $\hat\pi_t$ is not optimal. Concretely, for each sampled MDP $\mdq_{t, k}$, we compute its optimal $Q$-function $\wt Q_{t, h}^{k, \star}$ by backward induction \citep{10.5555/528623} and then check whether $\hat\pi_t$ is optimal for this $Q$-function. As soon as a sample is found where $\hat\pi_t$ is suboptimal, this instance is designated as the challenger MDP $\mdq_{t}$. 

This sampling procedure naturally favors randomness in insufficiently explored state-action pairs, where deviations from the empirical model are most likely to produce a different optimal policy. Consequently, the challenger MDP $\mdq_{t}$ serves as an informative alternative to the current estimate, concentrating exploration on the state-action pairs that are most relevant for distinguishing optimal from suboptimal policies. Once the challenger is obtained, an online RL learner directs exploration toward the regions where the empirical model and the challenger differ the most, as measured by their KL divergence.

\paragraph{Exploration Policy.} 
 The learner maintains an online RL reward-maximization strategy over the space of exploration policies to ensure that sufficient evidence is collected to distinguish between the empirical MDP $\empmdp$ and the challenger MDP $\mdq$. An initial exploration policy $\bpol_1$ is given. At the end of each episode, the exploration policy is updated via a policy improvement step performed by a mirror descent learner. By letting $\mdq_{t} \ceq (\mdqr_{t}, \mdqp_{t})$, we define the exploration reward kernel as 
 
 \begin{equation}
\label{eq:main-loss-def}
\begin{aligned}
&\quad \loss_t(s,a, h) \,\ceq\, 
  \frac{\bigl(\muh_{t}(s,a, h)-\mdqr_{t}(s,a,h)\bigr)^2}{2} \;\; +\;\\
&\;\;\sum_{s'} \hat p_{t}\!(s'\mid s,a,h)
  \log\frac{\hat p_{t}(s' \mid s,a,h)}%
           {\maxp{\mdqp_{t}(s'\mid s,a,h)}{\myexp{-t^\alpha}}} \: .
\end{aligned}
\end{equation}
which corresponds to the KL divergence between the marginals $\mdq_{t}(s,a, h)$ and $\empmdp_{t}(s,a,h)$ with transition in $\mdq_{t}$ clipped to $\myexp{-t^\alpha}$ to avoid singularities and control the magnitude of the reward. An optimistic policy evaluation step estimates the $Q$-function of $\bpol_t$ in the MDP with reward kernel $\loss_t$ and (unknown) transition kernel $p$ (hence the use of optimism). Concretely, using Bernstein type confidence bonuses, and initializing $\bar V_{t, H+1}(\cdot) = 0$, the optimisitic $Q$-function is recursively evaluated as 
\begin{align}
\label{eq:Q-update}
 \begin{split}
 	&\bar  Q_{t, h}(s,a) \ceq \loss_t(s,a,h) \;+\; \hat p_{t,h} \bar V_{t, h+1}(s,a) \; + \;  \\ & \quad \frac{2 g_{t,h} \beta^p(n_t(s,a,h), 1/t^3)}{3 \maxp{1}{ n_t(s,a,h)}} \; +\;
\\ &\sqrt{\frac{2 \, \var_{\hat p_{t, h}}\bigl[\bar V_{t, h+1}\bigr](s,a) \, \cdot \beta^p(n_t(s,a,h),\,1/t^3)}{\maxp{1}{ n_t(s,a,h)}}}\:, 
 \end{split}
 \end{align}
with the associated value function $\bar V_{t, h+1}(s) \ceq \sum_{a}{\bar  Q_{t, h}(s,a)}{\rho_h(a | s)}$ and $g_{t,h} \ceq \max_{s} \bar V_{t, h+1}(s)$. We introduce $\hat p_{t, h} \bar V_{t, h+1}(s,a) \ceq  \bE_{s' \sim \hat p_{t, h}(\cdot  |  s,a)}\bigl[\bar V_{t, h+1}(s')\bigr],$ and $\var_{\hat p_{t, h}}\bigl[\bar  V_{t, h+1}\bigr](s,a) = \var_{s' \sim \hat p_{t, h}(\cdot  |  s,a)}(\bar V_{t, h+1}(s'))$. 
As proven in Appendix~\ref{appx:sup_player}, for some properly tuned threshold $\beta^p$, the $Q$-function computed in Eq.\eqref{eq:Q-update} is, with probability at least $1-1/t^3$, an optimistic estimate of $Q_{t,h}^{\bpol_t}(s,a)$, the $Q$-function of $\bpol_t$ in $(\loss_t, p)$. Next, one step of policy improvement via an online learner on the $\cA$-simplex.  We maintain $SH$ independent online learners 
$(\mathcal{L}_{s,h})_{s,h}$ on the simplex $\triangle_A$ (distributions over actions). 

The policy improvement step updates each online learner to improve the exploration policy for each $(s, h)$ via the estimated exploration $Q$-function. Notable choices include AdaHedge and other instances of mirror descent learners.  This is analyzed in Appendix~\ref{appx:sup_player}.

  To correct for potential empirical bias in initial estimates of some state-action pairs, the exploration policy $\bpol_t$ is mixed with a forced exploration policy $c_t$, which ensures that any reachable state will be explored sufficiently. Given some parameter $\gamma \in (0, 1)$, at episode $t$, with probability $t^{-\gamma}$, the behaviour policy $\bpol_t$ is replaced a forced exploration policy $c_t$ (explained below). This ensures broad coverage while keeping the forced exploration negligible: the total number of episodes allocated is sublinear in $t$, and for $\gamma$ close to $1$, only $O(\log t)$ episodes are allocated to forced exploration, letting the exploration mainly be led by \textsc{ComputeExplorationPolicy}. 

The forced exploration policy performs a single iteration update of UCBVI \citep{pmlr-v70-azar17a} with a reward function designed as an indicator of a randomly picked state-action pair. A triplet $(\tilde s^t, \tilde a^t, \tilde h^t)$ is selected randomly and a reward kernel is defined as $\kernforce_t: (s,a,h) \mapsto \indp{s=\tilde s^t, a=\tilde a^t, h=\tilde h^t}$ and $c_{t+1}$ is defined as the greedy policy wrt an optimistic estimate of the $Q$-function for the reward kernel $\kernforce_t$. The procedure is described and analyzed in Appendix~\ref{appx:sufficient_explore}. 

\noindent In Appendix~\ref{appx:extension}, we describe a simple modification to construct the challenger MDP when the goal is to identify an $\veps$-optimal policy (with $\veps > 0$).

\paragraph{Computational Cost.}
The computational cost of Algorithm~\ref{alg:repit} is dominated by the sampling and planning steps performed at each episode. Concretely, the algorithm generates $SAH$ Dirichlet samples of dimension $S$ and $SAH$ samples from a normal distribution.  
It then verifies whether the sampled MDP and the empirical MDP share the same optimal policy, which can be done via backward induction \citep{10.5555/528623} in time $O(S^2AH)$. 
Altogether, the per-episode computational complexity is $O(S^2AH)$, with a memory requirement of $O(S^2AH)$, consistent with standard model-based approaches. 
\begin{algorithm}[h!]
\DontPrintSemicolon
\LinesNumbered
\SetAlgoLined
\SetKwInOut{Input}{Require}
\SetKwInOut{Output}{Output}
\Input{Initial exploration $\bpol_1 = \{\bpol_1(\cdot | s, h)\}_{s, h}$; initial forced exploration $c_1$; mixing parameter $\gamma \in (0,1)$; clipping parameter $\alpha \in (0,\tfrac12)$; posterior inflation $(\eta_t)_{t\geq 1}$; initial MDP $\empmdp_{0} \in \Mdp$ (arbitrary)} 

\BlankLine
\For{episode $t = 1,2,\ldots$}{
  \tcp{Empirical best policy}
  Compute $\hat\pi_{\,t} \in \arg\max_{\pi} \hat V_{t, 0}^{\pi}$ \; 
  \BlankLine
  \tcp{Computing a challenger MDP}
  \For{$k = 1,2,\ldots$}{\label{algo:sample-challenger}
    Sample candidate challenger $\mdq_{t, k} \sim \nu^{\eta_t}_{t}$\;
    Compute the optimal $Q$-function $\;\wt Q_{t}^{k,\star}$ of $\mdq_{t, k}$ by backward induction\;
    \If{$\hat\pi_{t}$ is suboptimal for $\;\wt Q_{t}^{k,\star}$}{
      Set challenger $\mdq_{t} \gets \mdq_{t, k}$ and \textbf{break}\;
    }
  }
  \BlankLine
  
 Compute $c_{t}$ as in Algorithm~\ref{alg:appx_forcexp}\; 
    
  Draw $Z_t \sim \mathrm{Bernoulli}(t^{-\gamma})$ and define $\forall (s,h)\in \cS \times [H]$ 
  $$
     \bpolmix_t(\cdot | s, h) \;\leftarrow\;
     \begin{cases}
       c_t(\cdot | s, h), & \text{if } Z_t = 1,\\
       \bpol_t(\cdot | s, h), & \text{if } Z_t = 0,
     \end{cases}
  $$
  \BlankLine
  \tcp{Rollout and data collection}
  Set $s_1^t \gets s_{\mathrm{init}}$\;
  \For{$h = 1$ \KwTo $H$}{
    Sample action $a_h^t \sim \bpolmix_t(\cdot | s_h^t, h)$\;
    Observe reward $r_h^t$ and transition to $s_{h+1}^t$\;
  }
  \BlankLine
  \tcp{KL-based bonuses for the online learner}
  Construct bonus kernel $\loss_t(s,a, h)$ from $\empmdp_{t}$ and $\mdq_{t}$ as in Eq.\eqref{eq:main-loss-def}\; 
  \BlankLine
  \tcp{Behaviour policy improvement}
  Update main exploration policy $\bpol_{t+1} \gets \textsc{ComputeExplorationPolicy}\bigl(t, \bpol_t,\, \loss_t,\, \hat p_t,\, n_t \bigr)$\; 

}
\caption{\textsc{PIPS}: Policy Identification via Posterior Sampling in Tabular MDPs  \label{alg:repit}}
\end{algorithm}
\begin{algorithm}[h!]
\Fn{\textsc{ComputeExplorationPolicy}$(t, \bpol_t, \loss_t, \hat p_t, n_t)$}{
\DontPrintSemicolon
\LinesNumbered
\SetAlgoLined
\SetKwInOut{Input}{Require}
\SetKwInOut{Output}{Output}
\Input{Independent online learning instances $(\cL_{s, h})_{s, h}$}
\BlankLine
Set $\bar V_{t, H+1}(s) \leftarrow 0 \quad \ \forall \; s \in \cS$\;
\For{episode stage $h = H, H-1, \ldots, 1$}{
  \ForEach{$(s,a) \in \cS \times \cA$}{
    $\bar Q_{t, h}(s,a) \;\leftarrow\; \text{Eq.\eqref{eq:Q-update}} $ 
  }
  \tcp{Optimistic policy evaluation }
    $\bar V_{t, h}(s) \;\leftarrow\; \sum_{a \in \cA} \bpol_t(a  |  s,h)\, \bar Q_{t, h}(s,a) \quad \forall\; s \in \cS$
}
\BlankLine
  \tcp{Policy improvement via online Mirror descent learner}
  
  \ForEach{$(s,h) \in \cS \times [H]$}{
  Feed learner $\cL_{s,h}$ with gain $\bar {Q}_{t, h}(s, \cdot)$ \; 
    Update $\bpol_{t+1}( \cdot \mid s, h)$ from $\cL_{s, h}$ \; }

\BlankLine
\Return policy $\bpol_{t+1}$\;
\caption{\textsc{Exploration Policy Helper Function}
\label{alg:ol}} }
\end{algorithm}
\section{MAIN THEORETICAL RESULTS}
\label{sec:mainres}
In this section, we present the guarantees attained by\algoname{} both in terms of posterior contraction rate and sample complexity upper bound. 

Our first result characterizes the posterior contraction rate.  If we were to sample an MDP from the posterior the "Bayesian'' error would be $ \bP_{\mdq \sim \nu_t | \cH_{t-1}}(\pi^\star \notin \Pi^\star(\mdq)) = \int_{\alt(\mdp)} \diff  \nu_t((p, \mu))$ which by Theorem~\ref{thm:opt_post_contra} is asymptotically larger than $\myexp {- t \Gamma_\mdp}$ for any adaptive algorithm.  
The result below shows that\algoname{} reaches this optimal rate. 
\begin{restatable}{theorem}{postMain}
	Consider an MDP $\mdp \in \Mdp$ with a unique optimal policy $\pi^\star$. Let $\gamma \in (0, 1)$, $\alpha \in (0, 1/2)$, $\alpha < \gamma$, $\varsigma \in (0, 1)$, with $\varsigma> 2\alpha $. Run with parameterss $\gamma, \alpha$ and learning rate $\eta_t = O(t^{-\varsigma})$, letting $\nu_t$ denote the non-inflated posterior distribution, Algorithm~\ref{alg:repit} satisfies with probability one  
	$$  \limsup_{t \to \infty} 
    -\frac{1}{t} 
    \log \bP_{\wt\mdp \sim \nu_t | \cH_{t-1}}\!\bigl(\pi^\star \notin \Pi^\star(\mdq)\bigr) 
    \;=\; \Gamma_\mdp . $$
\end{restatable}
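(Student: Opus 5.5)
The upper inequality $\limsup_{t} -\frac1t\log \bP(\pi^\star\notin\Pi^\star(\mdq))\leq \Gamma_\mdp$ is exactly Theorem~\ref{thm:opt_post_contra}, which holds for any adaptive algorithm. So the work is the reverse inequality $\liminf_{t}-\frac1t\log \bP_{\mdq\sim\nu_t}(\alt(\mdp))\geq \Gamma_\mdp$ almost surely. I would follow the bandit Top-Two strategy of \citet{russo_simple_2016}, with three steps.

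First, a Laplace-type estimate of the posterior mass. Using the product form \eqref{eq:posterior_def}, I show that on the event where every reachable $(s,a,h)$ is visited at a linear rate (so $n_t(s,a,h)\to\infty$ and $\empmdp_t\to\mdp$),
$$\log\nu_t(\alt(\mdp)) = -\inf_{\mdq\in\alt(\mdp)}\sum_{s,a,h} n_t(s,a,h)\,\KL(\empmdp_t\Vert\mdq)_{s,a,h} + o(t).$$
For the Gaussian rewards this comes from explicit truncated-normal tails. For the Dirichlet transitions it comes from standard Dirichlet large-deviation bounds (Sanov-type), of the form $\log g \approx -n\KL(\hat p\Vert q)+O(S\log n)$, applied uniformly over a compact set of alternatives. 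The lower bound direction only needs upper bounds on the density integrated over $\alt(\mdp)$. Continuity of $\KL(\cdot\Vert\mdq)$ in the first argument then lets me replace $\empmdp_t$ by $\mdp$. It therefore suffices to prove that the empirical allocation $w_t \ceq n_t/t$ satisfies
$$\liminf_t \inf_{\mdq\in\alt(\mdp)}\sum_{s,a,h} w_{t,h}(s,a)\KL(\mdp\Vert\mdq)_{s,a,h}\geq \Gamma_\mdp.$$

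Second, sufficient exploration and consistency. The forced-exploration analysis (Appendix~\ref{appx:sufficient_explore}) together with $Z_t\sim\mathrm{Bernoulli}(t^{-\gamma})$ gives $n_t(s,a,h)\gtrsim t^{1-\gamma}$, up to logarithmic factors, for every reachable triple almost surely. With Borel--Cantelli this yields $\empmdp_t\to\mdp$ and $\hat\pi_t=\pi^\star$ eventually.

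Third, optimality of the allocation, which is the main step. I would show that the game in \eqref{eq:game} is being solved online. The max-player is the optimistic mirror-descent learner; Appendix~\ref{appx:sup_player} gives it sublinear regret against the comparator $\rho^\star$ attaining $\Gamma_\mdp$. The gains $\loss_t$ are bounded by $O(t^\alpha)$ because of the clipping, and this is why $\alpha<1/2$ and $\alpha<\gamma$ are needed. The min-player is the challenger $\mdq_t$ drawn from $\nu_t^{\eta_t}$ restricted to $\alt(\empmdp_t)$. Using the Laplace estimate again with inflation $\eta_t=O(t^{-\varsigma})$, the conditional law of $\mdq_t$ concentrates on near-minimizers of $\mdq\mapsto\sum n_t\KL(\empmdp_t\Vert\mdq)$, with excess loss $o(t)$. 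The condition $\varsigma>2\alpha$ is used so that the clipping error $e^{-t^\alpha}$ and the noise induced by the inflation remain negligible. The challenger then acts as an approximate best response, i.e. a follow-the-leader player. Summing the regret inequalities gives
$$\sum_{k\leq t}\bE^{\bpol_k}\Bigl[\textstyle\sum_h \loss_k\Bigr]\geq t\,\Gamma_\mdp-o(t).$$
The left side is $\approx\sum_{s,a,h} n_t\,\KL(\empmdp\Vert\mdq_t)$, by martingale concentration of visits around $w^{\bpol_k}$ and because the forced-exploration rounds are sublinear. Combining this with the best-response property gives the displayed $\liminf$.

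The main obstacle is this third step. The difficulty is coupling the stochastic challenger to a deterministic best response uniformly over the non-compact, non-convex set $\alt(\mdp)$, while the gains are only boundedly controlled through clipping. I would first compactify $\alt(\mdp)$ by restricting to alternatives with transitions bounded below by $e^{-t^\alpha}$. I would then show that the posterior mass of the remaining region is negligible at the exponential scale $t$.
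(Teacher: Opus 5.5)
Your Steps 1 and 2 match the paper. The bound $\nu_t(\alt(\mdp))\le \mathrm{poly}(t)\,e^{-\glr(t)}$ is Lemma~\ref{lem:concentr}, and consistency is Appendix~\ref{appx:sufficient_explore}. The max-player half of Step 3 also matches. The gap is in the min-player half.

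You show only that $\mdq_t$ nearly minimizes the \emph{cumulative} loss $\mdq\mapsto\sum_{s,a,h}n_t(s,a,h)\KL(\empmdp_t\Vert\mdq)_{s,a,h}$ over the alternative set, i.e.\ a follow-the-leader property. You then assert $\sum_{k\le t}\langle w^{\bpol_k},\loss_k\rangle\approx\sum_{s,a,h}n_t(s,a,h)\KL(\empmdp_t\Vert\mdq_t)_{s,a,h}$. Martingale concentration only turns the left side into realized losses $\sum_{k\le t}\ell_k(\mdq_k)$, evaluated at the time-varying challengers $\mdq_1,\dots,\mdq_t$. Replacing all of them by a single fixed alternative is exactly the statement that the challenger sequence is no-regret: $\sum_{k\le t}\ell_k(\mdq_k)\le\inf_{\lambda\in\alt(\mdp)}\sum_{k\le t}\ell_k(\lambda)+o(t)$.

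Follow-the-leader does not give this. By the be-the-leader lemma, its regret is controlled only by the stability term $\sum_k[\ell_k(\mdq_k)-\ell_k(\mdq_{k+1})]$, and nothing in your argument makes that $o(t)$. It is of order $t$ if the minimizer keeps jumping between components of the non-convex set $\alt(\mdp)$. That is the typical situation near the optimal allocation, where several alternatives are tied. A best response to the \emph{current} $w^{\bpol_k}$ would have nonpositive regret for free, since $\sum_k\min_\lambda\ell_k(\lambda)\le\min_\lambda\sum_k\ell_k(\lambda)$. But the posterior responds to the history $n_k$, not to $w^{\bpol_k}$. Without the no-regret inequality you only have $\sum_k\ell_k(\mdq_k)\ge t\,\Gamma_\mdp-o(t)$. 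That does not lower bound the smaller quantity $\glr(t)\approx\inf_\lambda\sum_{k<t}\ell_k(\lambda)$, which is what you need.

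The missing inequality is Theorem~\ref{thm:inf-no-regret}, the technical core of the paper's proof. There, conditional posterior sampling is treated as continuous exponential weights with learning rate $\eta_t$ on the stochastic log-likelihood loss:
\begin{itemize}
\item the log-normalizers $\eta_t^{-1}\log\Lambda_t^{\eta_t}$ are telescoped once $\hat\pi_t$ has stabilized, so the truncation set $\alt(\empmdp_t)$ no longer moves;
\item each round is bounded by Hoeffding's lemma;
\item the decreasing learning rate is handled by the monotonicity of $\eta\mapsto\eta^{-1}\log\int e^{-\eta g}$;
\item the drift of $\muh_t$ is handled through the $\sigma_{t+1}$ terms;
\item the final normalizer is lower bounded by a volume argument around a perturbed near-minimizer (Lemmas~\ref{lem:build_easier} and~\ref{lem:convex_vol}).
\end{itemize}
Your Laplace-type concentration on near-minimizers corresponds only to this last piece. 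In particular, $\varsigma>2\alpha$ is needed for the Hoeffding (stability) term $\sum_t\eta_t C_{t,\alpha}^2=O\bigl(\sum_t t^{2\alpha-\varsigma}\bigr)=o(T)$, which is exactly the part your argument omits, not for clipping or inflation noise. Compactifying $\alt(\mdp)$, which you flag as the main obstacle, is a real technicality but does not address this.
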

\noindent Combined with Theorem~\ref{thm:opt_post_contra}, this result ensures that when running Algorithm~\ref{alg:repit}, the posterior probability of mis-identifying the optimal policy decays exponentially fast with an optimal rate. 

The next result we prove is in terms of sample complexity. We show that when equipped with a valid stopping procedure, Algorithm~\ref{alg:repit} attains the optimal sample complexity prescribed by Theorem~\ref{thm:lbd-sc}.

\paragraph{Sample Complexity.}\algoname{} can be coupled with a posterior-sampling-based stopping rule similar to the stopping rule used in \citet{kone2024b} for bandits. This stopping rule is based on the number of \iid samples collected from $\nu^{\eta_t}_t$ before finding a challenger MDP, i.e., by clipping the number of iterations in line~\ref{algo:sample-challenger} of Algorithm~\ref{alg:repit} by a threshold $B(t, \delta)$ to be defined.

Recalling that $\mdq_{t, 1}, \mdq_{t, 2}, \ldots $ are sampled\iid from the posterior distribution $\nu_t^{\eta_t}$, the stopping time is formally defined as 
\begin{equation}
\label{eq:sto-time-ps}
\!\!\!\!\tau_\delta \ceq \!\inf \lb t \geq 1 \!:\!  \forall \,1\leq k\leq B(t, \delta), \!\hat \pi_{t} \!\in \Pi^\star(\mdq_{t, k}) \!\rb\!. \!\!	
\end{equation} 
The number of resampling is calibrated to ensure the $\delta$-correctness of this stopping rule.  
\begin{restatable}{theorem}{scMain} \label{thm:sc-main}
Let $\gamma \in (0, 1)$, $\alpha \in (0, 1/2)$, $\alpha < \gamma$, $\varsigma \in (0, 1)$, with $\varsigma> 2\alpha $. 	If $B(t,\delta)$ satisfies $\limsup_{\delta \to 0} \frac{\log B(t,\delta)}{\eta_t \log\frac1\delta} \leq 1$, then, \algoname{} coupled with the stopping time $\tau_\delta$ described in \eqref{eq:sto-time-ps} and run with parameterss $\gamma, \alpha$ and learning rate $\eta_t = O(t^{-\varsigma})$,
 satisfies 
	\begin{eqnarray*}
		&&\bE_\mdp\bigl[\tau_\delta\bigr] \leq \Gamma_\mdp^{-1}  \log\frac1\delta + o\bigl(\log\tfrac1\delta\bigr)\:, \; \text{and} \; \\&& \quad \bP_\mdp\!\biggl(\limsup_{\delta \to 0} \frac{\tau_\delta}{\log\tfrac1\delta} \leq \Gamma_\mdp^{-1}\biggr)  = 1\:. 
	\end{eqnarray*}
\end{restatable}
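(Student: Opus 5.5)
The plan follows the now-standard template for asymptotically optimal pure-exploration algorithms (in the style of \citet{garivier_optimal_2016} and the posterior-sampling stopping rules of \citet{kone2024b}), split into a convergence part and a stopping part.

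\textbf{Step 1: the empirical allocation converges to an optimal one.} Define the empirical visitation $\bar w_t \ceq \frac1t\sum_{i\le t} \indp{(s_h^i,a_h^i)=(s,a)}$ at each step $h$. I would show that almost surely $\inf_{\mdq\in\alt(\mdp)}\sum_{s,a,h}\bar w_{t,h}(s,a)\KL(\mdp\Vert\mdq)_{s,a,h}\to\Gamma_\mdp$. The argument has three parts.
\begin{itemize}
\item[(a)] The forced exploration (with probability $t^{-\gamma}$, running UCBVI on random indicator rewards, Appendix~\ref{appx:sufficient_explore}) gives $n_t(s,a,h)\gtrsim t^{1-\gamma}$, up to logarithmic factors, for every reachable triplet. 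Hence $\empmdp_t\to\mdp$ and $\hat\pi_t=\pi^\star$ eventually.
\item[(b)] Because $\eta_t\to0$ slowly and $\varsigma>2\alpha$, the inflated posterior puts mass on alternatives at rate $\exp(-\eta_t t\,\cdot)$. Conditionally on $\hat\pi_t\notin\Pi^\star(\mdq_t)$, the challenger $\mdq_t$ is then distributed like a Gibbs-type measure on $\alt(\empmdp_t)$ that concentrates near the minimizers of $\sum \bar w_t\KL$. So the challenger acts as an approximate best response. The clipping at $\myexp{-t^\alpha}$ keeps $\loss_t\le t^\alpha$, and the bias this clipping introduces is negligible.
\item[(c)] The exploration player is optimistic policy evaluation combined with mirror descent per $(s,h)$ (Appendix~\ref{appx:sup_player}). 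Its regret against the best fixed exploration policy on the rewards $\loss_t$ is $O(t^{1/2+\alpha}\,\mathrm{polylog})=o(t)$. This is where $\alpha<1/2$ is used.
\end{itemize}
A saddle-point argument on the game \eqref{eq:game} then gives the following: the average of $\bE^{\bpol_i}\sum_h\loss_i$ is at least $\Gamma_\mdp-o(1)$, and it is at most the value at the challenger, which is approximately the infimum. A martingale concentration argument converts expected visitations into $\bar w_t$.

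\textbf{Step 2: the stopping time.} Once $\hat\pi_t=\pi^\star$, the probability that the algorithm does not stop at time $t$ is
\[
1-\bigl(1-q_t\bigr)^{B(t,\delta)}, \qquad q_t\ceq\nu_t^{\eta_t}(\alt(\empmdp_t)).
\]
This is at most $B(t,\delta)\,q_t$. A Laplace/large-deviation estimate for the product of truncated normals and Dirichlets gives $\log q_t\le -\eta_t\, t\inf_{\alt}\sum\bar w_t\KL+o(\eta_t t)$. Using the hypothesis $\log B\le(1+o(1))\eta_t\log(1/\delta)$, the algorithm stops as soon as $t\,(\Gamma_\mdp-o(1))>(1+o(1))\log(1/\delta)$. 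This yields the almost-sure statement. For the expectation, I would define good events $\cE_T$ on which Step 1's approximations hold uniformly for $t\ge\sqrt T$. I would then bound $\sum_T\bP(\cE_T^c)<\infty$ via the concentration and forced-exploration bounds, and write $\bE[\tau_\delta]\le T_0(\delta)+\sum_T\bP(\cE_T^c)+\sum_{t}B q_t$ in the usual way.

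\textbf{Main obstacle.} The hardest step is (b): showing the sampled challenger is an approximate best response. The difficulty is making the conditional-posterior concentration uniform in $t$. The posterior inflation $\eta_t$ changes the temperature, and $\alt(\empmdp_t)$ is a non-convex set that changes with $t$, so both effects must be handled at once. I would first prove continuity of $(w,\mdp)\mapsto\inf_{\alt(\mdp)}\sum w\KL$ near $\mdp$, which holds under the uniqueness assumption. I would then use Laplace-type upper and lower bounds on $\nu_t^{\eta_t}$ restricted to $\alt$-neighborhoods of the minimizers. These same bounds, applied with $\eta=1$, also give the posterior contraction theorem stated above.
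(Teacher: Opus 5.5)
Your Step~2 is essentially the paper's argument: the union bound $1-(1-q_t)^{B(t,\delta)}\le B(t,\delta)\,q_t$, a Gaussian--Dirichlet tail bound $q_t\le \mathrm{poly}(t)\,e^{-\eta_t\glr(t)}$, and $\glr(t)\ge t\Gamma_\mdp-o(t)$ on events with summable failure probabilities. The gap is in Step~1(b) and in the saddle-point step that relies on it.

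\textbf{Step 1(b) gives a follow-the-leader player, not a no-regret one.} A density proportional to $\exp(-\eta_t\sum_{i<t}\ell_i)$ on $\alt(\empmdp_t)$ that concentrates near the minimizers of $\sum_{s,a,h}\bar w_t\KL$ makes the challenger an approximate best response to the \emph{past cumulative} allocation. It is not a best response to the current $w^{\bpol_t}$. Your phrase ``it is at most the value at the challenger, which is approximately the infimum'' requires the inf-player no-regret inequality
$\sum_{i\le t}\langle w^{\bpol_i},\KL(\mdp\Vert\mdq_i)\rangle\le\inf_{\mdq\in\alt(\mdp)}\sum_{i\le t}\langle w^{\bpol_i},\KL(\mdp\Vert\mdq)\rangle+o(t)$.
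A best response to the current iterate gives this for free, since a sum of infima is at most the infimum of the sum. Follow-the-leader does not: its regret is controlled by $\sum_i[\ell_i(\mdq_i)-\ell_i(\mdq_{i+1})]$, which is small only if the leader is stable. On the non-convex $\alt(\mdp)$, a union over competing policies $\pi\neq\pi^\star$, several components are typically tied near the optimal allocation. The minimizer can then jump between them as $\bar w_t$ fluctuates. Each jump can cost up to the loss range $C_{t,\alpha}\asymp t^\alpha$, against an AdaHedge iterate that need not move slowly. Sharper concentration (a colder Gibbs measure) makes this worse, not better.

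\textbf{What the paper does instead.} It never uses a best-response property. It treats the truncated inflated posterior as continuous exponential weights on $\alt(\empmdp_t)$, with learning rate $\eta_t$ and loss $\ell^\alpha_t$. It then proves no-regret through the potential $\Lambda_t^{\eta_t}$ (Lemma~\ref{lem:hedge-regret}, Theorem~\ref{thm:inf-no-regret}):
\begin{itemize}
\item Hoeffding's lemma gives a stability term $\sum_t\eta_tC_{t,\alpha}^2$, which is $o(T)$ exactly when $\varsigma>2\alpha$.
\item The comparator term $-\frac{1}{\eta_T}\log\Lambda_T^{\eta_T}$ is controlled by a volume argument around a perturbed best alternative in hindsight (Lemmas~\ref{lem:build_easier} and~\ref{lem:convex_vol}). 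This costs $O(\log T/\eta_T)=o(T)$ exactly when $\varsigma<1$.
\item The remaining terms come from the Gaussian mean updates and the changing learning rate.
\end{itemize}

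A telltale sign of the gap is that $\varsigma>2\alpha$ has no genuine role in your account. A best-response reading would favour large $\eta_t$, even $\eta_t=1$. In the actual proof, this condition is precisely what makes the smoothing strong enough relative to the growing loss range. You also omit the $O(T^{1+\alpha-\gamma})$ cost of mixing in the forced-exploration policy, which is where $\alpha<\gamma$ enters.
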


Theorem~\ref{thm:sc-main} provides a sufficient condition on $B(t,\delta)$ to guarantee optimal sample complexity, but it does not ensure $\delta$-correctness with this threshold. The lemma below provides the correctness guarantee when $\delta$ is small. 
\begin{lemma}
\label{lem:calibration}
Let $\beta$ be a threshold such that for any $\delta \in (0,1)$, the event
$$ \cE_\delta  \ceq  \lb \forall t \geq 1,  \sum_{x \in \cX} n_t(x)\KL(\empmdp_t\,\|\,\mdp)_{x} \leq\beta(t, \delta) \rb$$
holds with probability at least $1-\delta$, where $R_h(s,a) \ceq \cN(\mu(s,a, 1), 1)$, $\hat R_h^t(s,a) \ceq \cN(\muh_t(s,a,h), 1/n_t(s,a,h))$ and $\KL(\empmdp_t\,\|\,\mdp)_{s,a,h}\ceq \KL(\hat R_h^t(s,a) \otimes \hat p_t(\cdot| s, a, h)\,\|\, R_h(s,a)\otimes p(\cdot| s,a, h))$. $x$ denotes a generic $s,a,h$ and $\cX = \cS \times \cA \times [H]$. 
	For $B(t, \delta) = O(\exp(\eta_t \beta(t, \delta)) \log(t/\delta))$ the posterior stopping time defined in Eq.\eqref{eq:sto-time-ps}satisfies 
	$ \limsup_{\delta \to 0}  {\bP_\mdp(\tau_\delta < \infty, \hat \pi_{\tau_\delta} \neq \pi^\star)}/{\delta } \leq 1.$
\end{lemma}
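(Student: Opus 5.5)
We decompose the error event according to whether the concentration event $\cE_\delta$ holds. Since $\bP_\mdp(\cE_\delta^c)\leq \delta$, it suffices to show that
\[
\bP_\mdp\bigl(\cE_\delta,\ \tau_\delta<\infty,\ \hat\pi_{\tau_\delta}\neq\pi^\star\bigr)=o(\delta).
\]
Take $B(t,\delta)=\lceil \exp(\eta_t\beta(t,\delta))\log(t^2/\delta)\rceil$ up to constants. Then
\[
\bP\bigl(\cE_\delta,\tau_\delta<\infty,\hat\pi_{\tau_\delta}\neq\pi^\star\bigr)\leq \sum_{t\geq 1}\bE\Bigl[\indp{\cE_\delta,\ \hat\pi_t\neq\pi^\star}\,\bigl(1-q_t\bigr)^{B(t,\delta)}\Bigr],
\]
where $q_t\ceq \nu_t^{\eta_t}\bigl(\hat\pi_t\notin\Pi^\star(\mdq)\bigr)$ is the conditional probability, given $\cH_{t-1}$, that one draw is a challenger. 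This holds because the $B(t,\delta)$ draws at round $t$ are \iid given $\cH_{t-1}$, and stopping at $t$ requires that all of them keep $\hat\pi_t$ optimal.

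The key step is a lower bound on $q_t$ on the event $\{\cE_\delta,\ \hat\pi_t\neq\pi^\star\}$. Since $\pi^\star$ is the unique optimal policy of $\mdp$ and $\hat\pi_t\neq\pi^\star$, the true model $\mdp$ lies in the set $\{\mdq:\hat\pi_t\notin\Pi^\star(\mdq)\}$. We would like to say that the inflated posterior puts mass at least $\exp(-\eta_t\sum_x n_t(x)\KL(\empmdp_t\|\mdp)_x)$ times a polynomial factor near $\mdp$. Concretely:
\begin{enumerate}[label=(\roman*)]
\item For the Gaussian rewards, the truncated normal density at $\mu$ is at least $c\sqrt{\eta_t n}\,\exp\bigl(-\eta_t n(\muh-\mu)^2/2\bigr)$, and the exponent equals $\eta_t n$ times the KL term.
\item For the transitions, the Dirichlet$(\bm u+\eta_t n\hat p)$ mass of a small neighbourhood of $p$ can be lower bounded by Sanov-type/Stirling estimates (as in Dirichlet boundary-crossing lemmas used for Thompson sampling in MDPs) by $\mathrm{poly}(\eta_t n)^{-S}\exp\bigl(-\eta_t n\KL(\hat p\|p)\bigr)$.
\end{enumerate}
The neighbourhood must stay inside the alternative set. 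Because the optimal value gap of $\mdp$ is positive (unique optimal policy), a small ball around $\mdp$ keeps $\pi^\star$ uniquely optimal and hence excludes $\hat\pi_t$. The radius shrinks polynomially in $t$ only if we take balls of size $1/t$; a fixed radius suffices here. Multiplying over $x\in\cX$ and using $\cE_\delta$ gives
\[
q_t\geq \exp\bigl(-\eta_t\beta(t,\delta)\bigr)/\mathrm{poly}(t).
\]

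With this bound, $(1-q_t)^{B}\leq \exp(-q_tB)\leq \exp\bigl(-\log(t^2/\delta)/\mathrm{poly}(t)\bigr)$. That polynomial loss is the main obstacle, so the polynomial factor should be absorbed into $B$ via the $O(\cdot)$, i.e. take $B=\mathrm{poly}(t)\exp(\eta_t\beta)\log(t^2/\delta)$, which is still of the stated order since $\log\mathrm{poly}(t)$ is negligible. Each summand is then at most $\delta/t^2$. This gives a bound of $\delta\sum_t t^{-2}$, which is $O(\delta)$ but not $o(\delta)$.

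To sharpen it to $o(\delta)$, we split time at $T_\delta=\log^2(1/\delta)$.
\begin{itemize}
\item For $t\geq T_\delta$, the tail sum is $\delta\sum_{t\geq T_\delta}t^{-2}=o(\delta)$.
\item For $t<T_\delta$, on $\cE_\delta$ the counts are too small to stop. For $\hat\pi_t$ to appear optimal in all draws, $q_t$ must be small. But on $\cE_\delta$, with $\eta_t\to 0$, the inflated posterior is so flat that $q_t$ is bounded below by a constant $\exp(-\eta_t\beta)$ with $\eta_t\beta(t,\delta)\lesssim t^{-\varsigma}\log(1/\delta)$. Then $(1-q_t)^B$ is super-polynomially small in $1/\delta$, using $B\geq \log(1/\delta)\exp(\eta_t\beta)$ and $q_tB\gtrsim\log(t^2/\delta)\cdot\mathrm{poly}(t)$. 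Choosing the constant in $B$ slightly larger (e.g. $\log(t^2/\delta)$ replaced by $(1+o(1))\log(t^2/\delta)$) yields $\sum_{t<T_\delta}\leq \delta\cdot o(1)$.
\end{itemize}

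Altogether the error probability is at most $\delta+o(\delta)$, which proves the $\limsup\leq 1$ claim. The genuinely delicate step is the uniform Dirichlet anti-concentration lower bound with explicit polynomial factors, including the unvisited pairs. For those, $n_t(x)=0$ and the prior $\bm u$ alone gives a constant mass, so they contribute no exponential cost.
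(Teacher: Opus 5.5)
Your argument follows the paper's route, Lemma~\ref{lem:stopping}. You split on the concentration event, use that the draws at round $t$ are i.i.d.\ given $\cH_{t-1}$ to get $(1-q_t)^{B(t,\delta)}\le e^{-q_tB(t,\delta)}$, and then calibrate $B$. The paper stops at ``tune $B$'' and never lower-bounds $q_t$. Your observation supplies that missing step: on $\{\hat\pi_t\neq\pi^\star\}$, the true model $\mdp$ and a fixed neighbourhood of it lie in $\{\mdq:\hat\pi_t\notin\Pi^\star(\mdq)\}$. When you condition, use the time-$t$ slice of $\cE_\delta$, which is $\cH_{t-1}$-measurable, rather than $\cE_\delta$ itself. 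Two steps still need repair.

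First, the $\mathrm{poly}(t)$ loss, which forces you to enlarge $B$ beyond the stated form, comes only from bounding density times volume. Take $N\ceq(1-\zeta)\mdp+\zeta\Mdp$, with $\zeta$ small enough (depending on the gap of $\mdp$, as does your fixed radius) that $\pi^\star$ stays the unique optimum on $N$; this is Lemma~\ref{lem:convex_vol}. Then write each factor of $q_t$ as a ratio of integrals of the unnormalized densities $e^{-\eta_t n(\muh-y)^2/2}$ and $e^{-\eta_t n\KL(\hat p\|q)}$. By convexity of the exponents in $(y,q)$, the numerator over the $x$-th factor of $N$ is at least $\zeta^{S}e^{-(1-\zeta)\eta_t n_t(x)\KL_x}$ times the integral of the $\zeta$-tempered density over the full space. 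That integral dominates the denominator; this is exactly the trick of Lemma~\ref{lem:dirichlet_bound}. Hence $q_t\ge\zeta^{S^2AH}e^{-\eta_t\beta(t,\delta)}$ on the relevant event, with no $t$-dependence and no special treatment of unvisited pairs. So $B$ has the stated form, with a hidden constant that depends on $\mdp$.

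Second, the $o(\delta)$ sharpening is unnecessary, and its justification for $t<T_\delta$ is wrong. For small $t$, $e^{-\eta_t\beta(t,\delta)}$ is of order $\delta^{t^{-\varsigma}}$, not a constant. Your $B$ also yields only $q_tB\ge\log(t^2/\delta)$, not $\log(t^2/\delta)\,\mathrm{poly}(t)$. There are two clean fixes:
\begin{itemize}
\item Split on $\cE_{\delta/2}$, as the paper does, and put $\beta(t,\delta/2)$ and $\log(4t^2/\delta)$ into $B$. The residual sum is then at most $\sum_t\delta/(4t^2)<\delta/2$, which even gives error at most $\delta$ for every $\delta$.
\item Keep $\cE_\delta$ but double the logarithm in $B$. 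Each term is then at most $(\delta/t^2)^2$ and the residual is $O(\delta^2)$.
\end{itemize}
A factor $1+o(1)$ gives $o(\delta)$ only if the $o(1)$ term decays more slowly than $1/\log(1/\delta)$.
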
 
\citet{kaufmann_mixture_2021} prescribes correct thresholds $\beta(t,\delta) = \log(1/\delta) + O(\log(t))$. Thus $B(t,\delta)$ as defined in Lemma~\ref{lem:calibration} satisfies $\limsup_{\delta \to 0} \frac{\log B(t,\delta)}{\eta_t \log\frac1\delta} \leq 1,$ so that Theorem~\ref{thm:sc-main} applies and\algoname{} is asymptically optimal and correct with the value of $B(t,\delta)$ prescribed by Lemma~\ref{lem:calibration}. 
\section{EXPERIMENTS}
\label{sec:expe}

\begin{figure*}[tb]
  \centering
  \begin{minipage}[b]{\linewidth}
    \centering
    \includegraphics[width=0.323\linewidth]{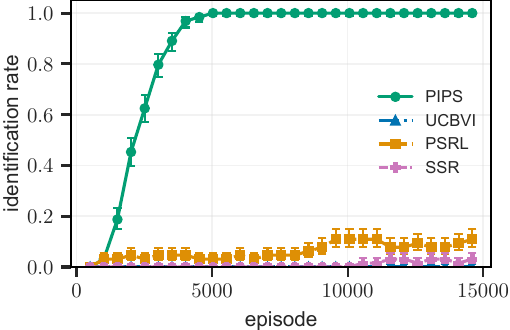}
      \hspace{0.025cm}
    \includegraphics[width=0.323\linewidth]{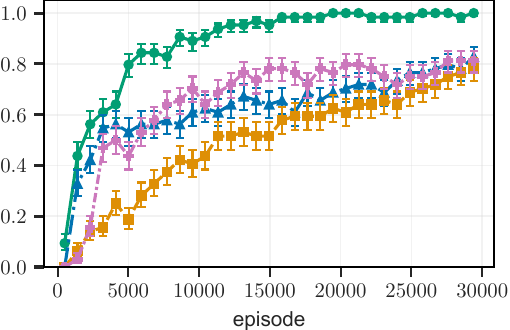}
    \hspace{0.025cm}
    \includegraphics[width=0.323\linewidth]{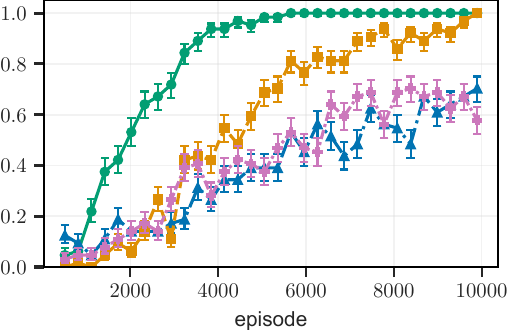}
   \caption{Empirical identification rate averaged on 64 runs with $95\%$ Wilson's confidence interval. From left to right: MOCA instance, RiverSwim, CombinationLock.}
    \label{fig:all_probid}
  \end{minipage}
\end{figure*}

In this section, we evaluate our algorithm against a range of competitors.
We examine two performance metrics : (i) the correct identification rate, defined as $1$ if the recommended policy is optimal and $0$ otherwise, and (ii) the performance factor
$ 1 - \tfrac{V_0^\star - V^{\hat \pi^t}}{V_0^\star}$. 
The average identification rate reflects the fraction of times the recommended policy is optimal, while the performance factor measures the value of the recommended policy relative to the optimal value. 

 To ensure fairness, all algorithms are run without a stopping rule. Each experiment is repeated 50 times, 
 and we report the average value of both metrics.
\paragraph{Baselines.}Algorithm~\ref{alg:repit}, is compared to several established baselines. We evaluate BPI-UCBVI \citep{pmlr-v139-menard21a}. Its sampling rule corresponds to the greedy policy with respect to an optimistic estimate of the optimal $Q$-function. Another baseline is PSRL \citep{osband2013moreefficientreinforcementlearning}, a randomized exploration algorithm that maintains a posterior distribution over MDPs and acts greedily wrt a sampled MDP, and SSR \citep{NEURIPS2022_298c3e32}, a variant of PSRL with distributions over $Q$-functions.

\subsection{Environments}
We evaluate the algorithms on challenging environments that require efficient exploration. 

\paragraph{MOCA}
The first environment, adapted from \citet{pmlr-v178-wagenmaker22a} and shown in Fig.~\ref{fig:mocamdp}, contains $(L+1)$ states and $(L+1)$ actions. Among them, $a^\star$ is the optimal action at each stage but transitions to the next states with small probability. In contrast, the other actions $a_1,\ldots,a_L$ are suboptimal, but taking $a_l$ in $s_{\mathrm{init}}$ deterministically leads to state $s_l$. This environment poses a subtle challenge: identifying the optimal action in states $s_1, \ldots, s_L$ may require first taking suboptimal actions in $s_{\text{init}}$. As a result, it is particularly difficult for regret-minimizing algorithms applied to BPI, since they tend to avoid actions with poor short-term performance. 

We also evaluate on the classic RiverSwim MDP (see e.g. \citet{STREHL20081309}), and the CombinationLock MDP (see e.g. \citet{pmlr-v162-zhang22aa}). 

\paragraph{RiverSwim.} In the RiverSwim environment, the optimal policy always selects the \texttt{RIGHT} action to reach the high-reward state at the end of the chain, despite the lower immediate rewards along the way. We use a chain of length $L=8$, set the horizon to $H=10$, and take $s_{\text{init}} = s_1$. The rewards are Gaussian with variance $1/1000$. The environment is described in Appendix~{\ref{appsubsec:add_exp}, Fig.\ref{fig:riverswim}.

\paragraph{Combination Lock.} We consider a unichain, episodic combination-lock MDP with $S=H$ states $s_1,\dots,s_H$ and an action set of size $A\geq 2$. Each nonterminal state $s_l$ ($l=1,\dots,H-1$) has a unique optimal action denoted $a_l^\star$. If the agent selects the optimal action $a_l^\star$ in state $s_l$, it transitions to $s_{l+1}$ with probability $1-\varepsilon$ and returns to the initial state $s_1$ with probability $\varepsilon$. Any suboptimal action $a\neq a_l^\star$ at $s_l$ sends the agent back to $s_1$ with probability $1$. The terminal state $s_H$ is absorbing, and the agent receives a reward of $1$ only upon reaching $s_H$ (i.e., after taking the optimal action at every step); all other transitions yield a reward of $0$. In our experiments, we set $H=10$, $A=3$, and $\varepsilon=0.01$.

\subsection{Summary} Fig.~\ref{fig:all_probid} plots the identification rate vs the number of episodes for the three MDPs. The most significant improvement of\algoname{} over its competitors is achieved in the MOCA MDP. This is expected as in this MDP, optimistic algorithms essentially explore by following the policy with the highest value, which, as described in Fig.~\ref{fig:mocamdp}, only reach some states with probability exponentially small in $S$. 

Fig.~\ref{fig:all_perf} shows the performance factor results. Additional experiments, presented in Appendix~\ref{appsubsec:add_exp}, Fig.~\ref{fig:appx-samples-mocaexp},\ref{fig:appx-samples-mocaexp-log}, \ref{fig:appx-samples-mocaexp-loglog}, compare the posterior contraction rates of\algoname{} and PSRL on the MOCA instance, highlighting the significant advantage of\algoname{} over its competitors. Further implementation details are provided in Appendix~\ref{appsubsec:impl_details}. 

\begin{figure*}[htb]
	    \centering
    \includegraphics[width=\linewidth]{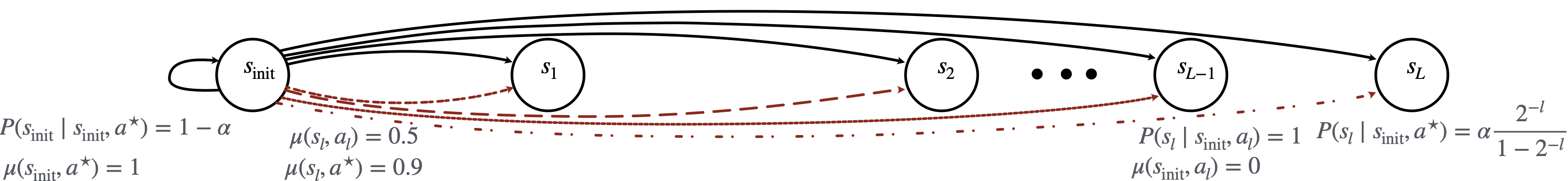}
    \caption{Custom MOCA environment where reaching the optimal policy requires exploratory steps involving sub-optimal actions $a_1,\ldots, a_L$ (red dotted lines).}
   \label{fig:mocamdp}
\end{figure*}

\begin{figure*}[hbt]
	\centering
        \includegraphics[width=\linewidth]{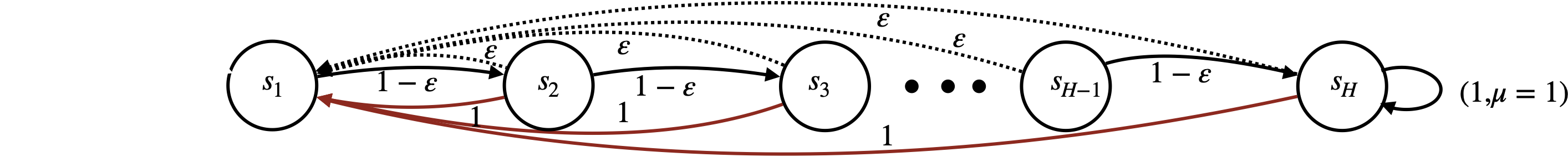}
    \caption{The $H$-state CombinationLock environment. Each nonterminal state offers $A$ actions. The unique optimal action at state $s_l$ advances the agent to $s_{l+1}$ with probability $1-\varepsilon$ (solid black arrow) and returns it to the initial state $s_{\mathrm{init}}$ with probability $\varepsilon$ (dashed arrow); any suboptimal action (red arrow) sends the agent back to $s_{\mathrm{init}}$ with probability $1$. The terminal state $s_H$ is absorbing and yields a reward $1$ (all other transitions give a reward $0$). Transition probabilities are annotated on the corresponding arrows.}
    \label{fig:comb_lock}
\end{figure*}

\begin{figure}[hbt]
  \centering
    \includegraphics[width=0.8\linewidth]{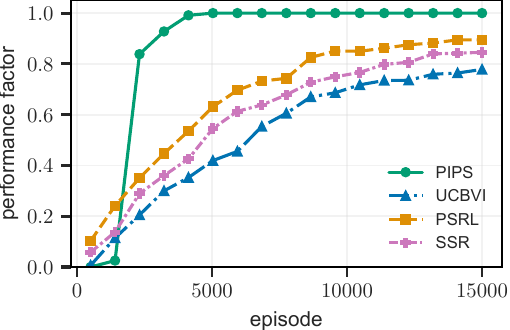}
      \vspace{0.05cm}
    \includegraphics[width=0.8\linewidth]{figs/res_comblock_8_3_var14_64_10000_eps_25_full_bernstein_probid.pdf}
   \caption{Performance factor averaged on 64 runs with $95\%$ Wilson's confidence interval. MOCA instance (top), and CombinationLock (bottom).
   }
    \label{fig:all_perf}
\end{figure}

\section{DISCUSSION AND OPEN PROBLEMS}
\label{sec:conclusion}

We have studied the problem of best policy identification in finite-horizon episodic tabular MDPs. In this work, we introduced a computationally efficient algorithm that leverages posterior sampling combined with an online learning algorithm to explore the MDP and iteratively solve the optimization problem characterizing the optimal lower bound for best policy identification. Our analysis establishes optimal asymptotic guarantees on posterior contraction and sample complexity when the algorithm is equipped with a $\delta$-PAC-calibrated threshold for the posterior stopping rule. 

Despite its computational efficiency, our algorithm still requires sampling $O(S^2AH)$ parameters per round to generate a challenger MDP, which can be a bottleneck in tabular MDPs with large state spaces. Furthermore, while its theoretical guarantees are tight in the asymptotic regime, 
they do not provide explicit dependence on the state, action, or horizon parameters, which may not be negligible in the non-asymptotic setting, particularly when $\delta = \Omega(1)$. Obtaining tight finite-sample guarantees in the moderate confidence regime remains an open challenge as current algorithms scale with diverse and often incomparable 
complexity terms, including substantial "second-order'' terms, such as those polynomial in $\log\bigl(1/\delta\bigr)$ \citep{pmlr-v178-wagenmaker22a, NEURIPS2022_27bf08fe, al-marjani23a, NEURIPS2024_28795419}.

This work can be extended in several promising directions. For instance, one could consider extensions to the linear function approximation setting \citep{pmlr-v125-jin20a}, or to the $(\veps, \delta)$-PAC framework, where the objective is to identify an $\veps$-optimal policy for some $\veps > 0$. We briefly discuss an extension of\algoname{} to the $(\veps, \delta)$-PAC setting in Appendix~\ref{appx:extension}, although the resulting guarantees are sub-optimal. Extending our main results to this framework remains an open question. 

\section*{Acknowledgements}
KJ was funded in part by NSF Awards 2141511, 2023239. 
This work was done while CK was visiting KJ at the University of Washington.
\printbibliography
\section*{Checklist}
\begin{enumerate}

  \item For all models and algorithms presented, check if you include:
  \begin{enumerate}
    \item A clear description of the mathematical setting, assumptions, algorithm, and/or model. [\textbf{Yes}] All claims are proven in the Appendix. 
    \item An analysis of the properties and complexity (time, space, sample size) of any algorithm. [\textbf{Yes}] The complexity is analyzed in a paragraph in the main, and all sample complexity claims are proven in the Appendix. 
    \item (Optional) Anonymized source code, with specification of all dependencies, including external libraries. [\textbf{Not Applicable}]
  \end{enumerate}

  \item For any theoretical claim, check if you include:
  \begin{enumerate}
    \item Statements of the full set of assumptions of all theoretical results. [\textbf{Yes}] Statements made in the main are proven in the Appendix
    \item Complete proofs of all theoretical results. [\textbf{Yes}] The results are proven in the Appendix
    \item Clear explanations of any assumptions. [\textbf{Yes}]  
  \end{enumerate}

  \item For all figures and tables that present empirical results, check if you include:
  \begin{enumerate}
    \item The code, data, and instructions needed to reproduce the main experimental results (either in the supplemental material or as a URL). \textbf{[Yes]} Section~\ref{appx:add_exp_impl_details} describes the experimental setup and other implementation details for reproducibility. 
    \item All the training details (e.g., data splits, hyperparameters, how they were chosen). [\textbf{Yes}] Described in Section~\ref{appx:add_exp_impl_details}
    \item A clear definition of the specific measure or statistics and error bars (e.g., with respect to the random seed after running experiments multiple times). [\textbf{Yes}] Described in Section~\ref{sec:expe}
    \item A description of the computing infrastructure used. (e.g., type of GPUs, internal cluster, or cloud provider). [\textbf{Yes}] Described in Section~\ref{appx:add_exp_impl_details}
  \end{enumerate}

  \item If you are using existing assets (e.g., code, data, models) or curating/releasing new assets, check if you include:
  \begin{enumerate}
    \item Citations of the creator If your work uses existing assets. [\textbf{Not Applicable}]
    \item The license information of the assets, if applicable. [\textbf{Not Applicable}]
    \item New assets either in the supplemental material or as a URL, if applicable. [\textbf{Not Applicable}]
    \item Information about consent from data providers/curators. [\textbf{Not Applicable}]
    \item Discussion of sensible content if applicable, e.g., personally identifiable information or offensive content. [\textbf{Not Applicable}]
  \end{enumerate}

  \item If you used crowdsourcing or conducted research with human subjects, check if you include:
  \begin{enumerate}
    \item The full text of instructions given to participants and screenshots. [\textbf{Not Applicable}]
    \item Descriptions of potential participant risks, with links to Institutional Review Board (IRB) approvals if applicable. [\textbf{Not Applicable}]
    \item The estimated hourly wage paid to participants and the total amount spent on participant compensation. [\textbf{Not Applicable}]
  \end{enumerate}

\end{enumerate}
\clearpage
\appendix
\thispagestyle{empty}
\onecolumn

\tableofcontents 
\clearpage

\numberwithin{equation}{section}
\section{EXTENDED NOTATION TABLE}
\label{sec:notation}

We recall the notation used in the main and introduce some notation used in the appendix. In what follows, $\mdp$ is always fixed, and its optimal policy is also fixed. In general, $\hat \cdot$ denotes empirical quantities, whereas $\wt \cdot$ denotes quantities and variables related to the posterior distribution. 

\label{sec:outline_notation} 
\begin{table}[H]
\caption{Additional notations used in the main and in the appendix}
\label{tab:notation_table_setting}
\begin{center}
\begin{tabular}{l | l}
  \toprule
 Symbol & Definition \\
 \toprule 
$\mdp$ & $(\cS,\cA,H,P , \mu,  s_\text{init})$ \\
$\empmdp_{t}$ & $(\cS,\cA,H, \hat p_{t},  \muh_{t}, s_\mathrm{init})$ \\
$\pi^\star$ & Unique (deterministic) optimal policy of $\mdp$ \\ 
$\triangle$ & $\cS$-Probability simplex \\ 
$\Mdp $ & Set of parameters of MDPs : $\{(0,1) \times \triangle\}^{SAH}$\\ 
$\Mdp^\veps$ & 
$\lb (r,q) \in \Mdp : \forall (s,a,h,s')\; , q(s'\mid s,a,h)\geq \veps \rb $\\
  $\cR$ & Expected reward kernel space: $(0,1)^{SAH}$ \\ 
  $\cP$ & Transition kernel space: $\triangle ^{SAH}$ \\ 
 $A^\pi$ &  $\lb (r,q) \in \Mdp: V_0^\pi(r,q) \geq V_0^{\pi^\star}(r,q)\rb$ 
\\ 
 $A^\pi(q)$ &  $\lb q: \exists r \in \cR \text{ such that } (r,q) \in A^\pi \rb$ 
 \\
 $\Pi_\mathrm{det}$ & Deterministic Markovian policies 
 \\
  $(\eta_t)_t$& Learning rate or posterior inflation rate \\  
  $\bpol_t$ & Behavioral or exploration policy at episode $t$ \\
  $c_t$ & Forced exploration or coverage policy at episode $t$ \\
  $n_t(s,a,t)$ & Number of pulls of $(s,a)$ up to the end of episode $t$ \\ 
  $n_t(s'\mid s,a,h)$ & Number of transitions $(s,a,s')$ at stage $h$ \\ 
    $\muh_t(s,a, h)$ & Empirical reward at $(s,a,h)$ based upon the first $t$ episodes  \\
  $\Pi^\star(\mdp)$ & Optimal start-state policies of $\mdp$ \\ 
  $\Pi_\mathrm{det}^\star(\mdp)$ & Optimal deterministic policies of $\mdp$ \\ 
 $\alt(\mdp)$ & Alternative models : $\big\{ \mdp' \in \Mdp  : \mdp \ll \mdp' \text{ and } \Pi^\star(\mdp) \cap \Pi^\star(\mdp') = \emptyset \big\}$ \\
  $\nu_{\eta_t}^t(\cdot   |   \alt(\empmdp_t))$ & Truncated and inflated posterior at the start episode $t$
  \\
  $\lb (s_h^t, a_h^t, r_h^t, s_{h+1}^t)\rb_{h=1}^{H}$ & Trajectory collected during episode $t$ \\
  $\hat p_t(\cdot   \mid    s,a, h) \ceq \hat p_t(\cdot \mid s,a,h)$ & Empirical transition probabilities at $(s,a,h)$ during episode $t$ 
  \\ 
  $W_h(s,a)$ & Maximum visitation probability of $(s,a,h)$: $\max_\rho w_h^\rho(s,a)$ \\
   \bottomrule  
\end{tabular}
\end{center}
\end{table}

\paragraph{Value gaps.}
\label{par:value_gaps}
We recall that a policy $\pi$ is \emph{globally optimal} if its support at state and stage is included in the argmax set of the optimal $Q$-function.  For a policy $\pi$, we define the \emph{gap} at $(s,a,h)$ as
\[
    \Delta_h^\pi(s,a) \;=\; V_h^\pi(s) - Q_h^\pi(s,a).
\]
Note that, if $\pi$ is globally optimal then the above gap is called the \emph{value gap} and quantifies the suboptimality of action $a$ in state $s$ at stage $h$ relative to the optimal action. We say that an MDP admits a \emph{strongly globally  policy} if, for every $(s,h)$, the argmax set of the optimal $Q$-function is a singleton : $|\argmax_{a \in \cA} Q_h^\star(s,a) | = 1$
i.e., there is a unique optimal action at each state and stage. If in addition, the start-state optimal policy is unique, then it must coincide with the greedy policy induced by $Q^\star$. In this case, we say the MDP admits a \emph{unique start-state optimal policy}, which is also strongly optimal. 

Finally, writing $a^\star \ceq \argmax_a Q_h^\star(s,a)$, we define at $(s,h)$,  
$ \Delta_h^\pi (s,a^\star) \,=\,\min_{a \neq a^\star} \Delta_h(s,a) $ and we define the \emph{minimal value gap} as 
\begin{equation}
	\Delta_{\min}^\pi \;\ceq\; \min_{s,a,h} \Delta_h^\pi (s,a)\:, 
	\end{equation}
which is postive when $\pi$ is the strongly globally optimal policy in $\mdp$. When the MDP is not clear from the context, we write $\Delta_{\min}^\pi(\mdp)$. 
 
\paragraph{Additional notation.} Given a (Markov) policy $\rho$, we denote by $w^\rho$ the visitation probabilities induced on the MDP, i.e., for any $s,a,h$, $w^\rho_{h}(s,a)$ is the probability that the state-action $(s,a)$ is visited at episode $h$ when $\rho$ is followed. 

The reward space is $\cR$ and the transitions space is $\cP$.  We have $\Mdp = \cR \otimes \cP$.

Since our results are targeting the first-order asymptotic bound, we let $T_0$  denote an arbitrary constant that may appear multiple times in our calculations.  

${O}(f(t))$ denotes an asymptotic upper bound when $t\to \infty$, hiding constants, and $g(t) = o(f(t))$ if and only if $g(t) / f(t) \to 0$ when $t\to \infty$. 

Finally, we define the following $\ell_1$-distance  on $\Mdp $, 
$$ \Vert \mdp - \mdp'\Vert_1 = \sum_{s,a,h} \bigl| \mu(s,a,h) - \mu'(s,a, h) \bigr| + \Vert p(\cdot | s,a, h)  - p'(\cdot |s,a, h) \Vert_1, \quad \forall\; \mdp, \mdp' \in \Mdp  \:.$$
\section{SAMPLE COMPLEXITY LOWER BOUND}
\label{appx:lower_bound}

We now state a lower bound on the sample complexity of any $\delta$-PAC algorithm for best policy identification, derived from the celebrated information-contraction principle.  

\smallskip
\noindent
Let
$$\alt(\mdp) \ceq \bigl\{  \mdp' \in \Mdp : \mdp \ll \mdp' \quad \text{and} \quad \Pi^\star(\mdp) \cap \Pi^\star(\mdp') = \emptyset \bigr\},$$
that is, the set of MDPs $\mdp'$, so that $\mdp$ is absolutely continuous with respect to $\mdp'$ but have no optimal policy in common with $\mdp$.  

At episode $t$, an adaptive algorithm selects an exploration policy $\rho_t$ based on past observations. Executing $\rho_t$ in the environment generates the trajectory 
$$
\{s_h^t, a_h^t, r_h^t, s_{h+1}^t\}_{h=1}^H, 
\quad s_1^t = s_\text{init}, 
\quad a_h^t \mid s_h^t  \sim \bpol_t(\cdot \mid s_h^t, h), 
$$
with transitions $s_{h+1}^t \mid s_h^t, a_h^t \sim p(\cdot \mid s_h^t,a_h^t, h)$ and rewards $r_h^t \mid s_h^t, a_h^t \sim R_h(s_h^t,a_h^t)$.  
Let $\cH_t$ denote the $\sigma$-algebra generated by all observations and exploration policies up to the end of episode $t$, including external randomness $(U_1,\dots,U_t)$ used for tie-breaking and randomization.  
\smallskip
\noindent
For two MDPs 
$$
\mdp = (\cS,\cA,H, \mdpp, \mdpr,s_\text{init}), 
\quad 
\mdq = (\cS,\cA,H, \tilde \mdpp, \tilde \mdpr ,s_\text{init}),
$$
we define the KL divergence between their kernels at $(s,a,h)$ as
$$
\KL(\mdp\,\|\,\mdq)_{s,a,h} \ceq
\KL\!\Bigl( R_h(s,a) \otimes p(\cdot \mid s,a, h) \,\Big\|\, \wt R_h(s,a) \otimes \wt p(\cdot \mid s,a, h) \Bigr).
$$

\smallskip
\noindent
An algorithm is said to be $\delta$-PAC if, for any $\mdp \in \Mdp$, its stopping time $\tau$ and recommendation $\hat\pi^\tau$ satisfy
$$
\bP_{\mdp}\bigl(\tau < \infty,\; \hat\pi_\tau \notin \Pi^\star(\mdp)\bigr) \leq \delta.
$$
\smallskip
\noindent
Using the data-processing inequality and change-of-distribution arguments (see \citet{garivier_optimal_2016}), we obtain the following fundamental bound:
\begin{lemma}
\label{lem:lower_bound}
Consider an MDP $\mdp$ with a unique optimal policy.  
Then, for any $\delta$-PAC algorithm with stopping time $\tau$, it holds that
$$
    \sum_{s,a,h} \bE_{\mdp}\lsb n_{\tau+1}(s,a,h)\rsb\,\KL(\mdp\,\|\,\mdq)_{s,a,h}
    \geq \log \frac{1}{2.4\delta},
    \qquad \forall\; \mdq \in \alt(\mdp).$$
\end{lemma}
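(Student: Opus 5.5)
The plan is the standard change-of-measure argument of \citet{garivier_optimal_2016}, adapted to trajectories in the MDP. Fix $\mdq \in \alt(\mdp)$ and a $\delta$-PAC algorithm with stopping time $\tau$; we may assume $\bE_\mdp[\tau] < \infty$, since otherwise the left-hand side is infinite (every reachable $(s,a,h)$ with positive KL would carry infinite expected count, or else the bound becomes trivial after truncation).

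\textbf{Step 1: log-likelihood ratio.} Write the log-likelihood ratio of the observations up to the end of episode $\tau$ under $\mdp$ versus $\mdq$. The exploration policies $\rho_t$ and the external randomization $U_t$ are $\cH_{t-1}$-measurable choices whose conditional laws are identical under both models, so they cancel. The initial state is the same in both models. Hence only the reward and transition densities remain:
\[
L_\tau = \sum_{t=1}^{\tau}\sum_{h=1}^H \log\frac{\diff R_h(s_h^t,a_h^t)}{\diff \wt R_h(s_h^t,a_h^t)}(r_h^t) + \log\frac{p(s_{h+1}^t\mid s_h^t,a_h^t,h)}{\wt p(s_{h+1}^t\mid s_h^t,a_h^t,h)}.
\]
This expression is well defined because $\mdp \ll \mdq$.

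\textbf{Step 2: Wald-type identity.} Show that
\[
\bE_\mdp[L_\tau] = \sum_{s,a,h}\bE_\mdp[n_{\tau+1}(s,a,h)]\,\KL(\mdp\Vert\mdq)_{s,a,h}.
\]
Regroup the sum by visits to each triplet $(s,a,h)$. Then note that the centered increments form a martingale difference sequence with respect to the within-episode filtration: conditionally on being at $(s,a,h)$, the pair (reward, next state) is drawn from $R_h(s,a)\otimes p(\cdot\mid s,a,h)$ independently of the past. Optional stopping, justified by $\bE_\mdp[\tau]<\infty$ and the integrability of each increment (Gaussian rewards, finite support for transitions), yields the identity. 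The count $n_{\tau+1}$ is exactly the number of visits over episodes $1,\dots,\tau$, in accordance with the paper's indexing convention.

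\textbf{Step 3: transportation lemma.} Apply the data-processing inequality (Lemma 1 of \citet{garivier_optimal_2016}): for any $\cH_\tau$-measurable event $E$,
\[
\bE_\mdp[L_\tau] \geq \mathrm{kl}\bigl(\bP_\mdp(E),\bP_\mdq(E)\bigr),
\]
where $\mathrm{kl}$ is the binary relative entropy. Take $E = \{\tau<\infty,\ \hat\pi_\tau = \pi^\star\}$, with $\pi^\star$ the unique optimal policy of $\mdp$. The $\delta$-PAC property under $\mdp$ gives $\bP_\mdp(E)\geq 1-\delta$. Since $\mdq\in\alt(\mdp)$, we have $\pi^\star\notin\Pi^\star(\mdq)$, so the $\delta$-PAC property under $\mdq$ (which requires $\mdq\in\Mdp$, as holds by definition of $\alt$) gives $\bP_\mdq(E)\leq\delta$. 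By monotonicity of $\mathrm{kl}$ and the standard bound $\mathrm{kl}(1-\delta,\delta)\geq\log\frac{1}{2.4\delta}$, the claim follows.

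\textbf{Main obstacle.} The delicate point is Step 2, specifically the justification of optional stopping and the handling of events of infinite stopping time. These require care in measure-theoretic bookkeeping with the random exploration policies and forced-exploration randomization, but introduce no new ideas.
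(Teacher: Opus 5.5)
Your route (log-likelihood ratio, Wald identity, data-processing inequality with $E=\{\tau<\infty,\hat\pi_\tau=\pi^\star\}$, then $\mathrm{kl}(1-\delta,\delta)\ge\log\frac{1}{2.4\delta}$) is the Garivier--Kaufmann argument the paper invokes; the paper gives no proof beyond that citation. Steps 1--3 are fine when $\bE_\mdp[\tau]<\infty$.

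\textbf{The gap is your opening reduction.} It is not true that $\bE_\mdp[\tau]=\infty$ makes the left-hand side infinite. The identity $\sum_{s,a}n_{\tau+1}(s,a,h)=\tau$ only forces \emph{some} triplet to have infinite expected count, and that triplet may have $\KL(\mdp\|\mdq)_{s,a,h}=0$.

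Worse, under the paper's definition of $\delta$-PAC, which only asks $\bP_\mdp(\tau<\infty,\hat\pi_\tau\neq\pi^\star)\le\delta$, the statement is false when $\bP_\mdp(\tau=\infty)>0$. Take an algorithm that never stops and always plays a fixed deterministic policy $\rho$; it is trivially $\delta$-PAC. Choose $\mdp$ with $\pi^\star=\rho$, and let $\mdq$ only raise the mean of an action $\rho$ never plays, enough to change the optimal policy. For example, use a two-armed bandit with means $0.6,0.4$ and raise $0.4$ to $0.7$. Every triplet where $\mdq$ differs from $\mdp$ then has zero count, so the left-hand side is $0<\log\frac{1}{2.4\delta}$. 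Correspondingly, your Step 3 claim $\bP_\mdp(E)\ge 1-\delta$ is available only through the unjustified reduction. It is exactly the a.s.\ finiteness built into Garivier and Kaufmann's definition of $\delta$-PAC, and it is absent from the paper's.

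\textbf{Fix.} Add the hypothesis $\bP_\mdp(\tau<\infty)=1$. This is harmless for Theorem~\ref{thm:lbd-sc}, which is trivial when $\bE_\mdp[\tau]=\infty$. Then replace the reduction by truncation:
\begin{itemize}
\item For fixed $T$, run Steps 1--3 with the bounded stopping time $\tau\wedge T$, so optional stopping is immediate, and with the event $E_T=\{\tau\le T,\hat\pi_\tau=\pi^\star\}\in\cH_{\tau\wedge T}$.
\item You have $\bP_\mdq(E_T)\le\delta$ and $\bP_\mdp(E_T)\uparrow\bP_\mdp(E)\ge 1-\delta$. So for large $T$, monotonicity of $\mathrm{kl}$ gives $\sum_{s,a,h}\bE_\mdp[n_{\tau\wedge T+1}(s,a,h)]\KL(\mdp\|\mdq)_{s,a,h}\ge\mathrm{kl}(\bP_\mdp(E_T),\delta)$.
\item Let $T\to\infty$, using monotone convergence on the left and continuity of $\mathrm{kl}(\cdot,\delta)$ on the right.
\end{itemize}
This yields the claim, including the case $\tau<\infty$ a.s.\ with $\bE_\mdp[\tau]=\infty$, which your reduction leaves uncovered.
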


Using this result, we prove the sample complexity lower bound. We recall that $\Omega_\mdp$ denotes the set of visitation probabilities induced by Markovian policies on $\mdp$ 

\begin{equation}
\Omega_\mdp \;\ceq\; 
    \bigl\{  \{w_h(s,a)\}_{s,a,h} : \exists \; \pi \in \Pi  \;\;\text{such that}\;\; 
    w_h(s,a) = \bP_\mdp^\pi(s_h = s, a_h = a), \;\forall (s,a,h) \bigr\}. 	
\end{equation}

\lbdSc* 
\begin{proof}
Fix $\mdp$ with a unique optimal policy and let $\tau$ be the stopping time of any $\delta$-PAC algorithm. By Lemma~\ref{lem:lower_bound}, for every $\mdq \in \alt(\mdp)$,
\begin{equation}\label{eq:lb-def-1}
\sum_{s,a,h} \bE_\mdp\!\lsb n_{\tau+1}(s,a, h)\rsb \, \KL(\mdp\,\|\,\mdq)_{s,a,h}
\;\geq\;\log \frac{1}{2.4\delta} \: .  
\end{equation}

We recall that $\rho^t$ is predictable (i.e., measurable wrt $\cH_{t-1}$). Therefore \begin{eqnarray*}
	 \bE_\mdp\!\left[n_{\tau+1}(s,a, h)\right]   
	 &=& \bE_\mdp\!\left[ \sum_{t\geq 1} \indp{ \tau \geq t } \cdot \indp{ s_h^t =s , a_h^t = a} \right] \\
	 &=& \bE_\mdp \lsb \bE \!\lsb \sum_{t\geq 1} \indp{ \tau \geq t } \cdot \indp {s_h^t =s , a_h^t = a }\middle | \cH_{t-1}\rsb \rsb
\end{eqnarray*}
which follows from the tower rule of expectations. Next observe that the event $\indp{  t \leq \tau }$ is the complementary of the event $\indp{ \tau < t}$ which is the same as $\indp{ \tau \leq t-1}$ so it is measurable wrt $\cH_{t-1}$. Thus 
\begin{eqnarray*}
	\bE\lsb \indp{ \tau \geq t} \indp{ s_h^t =s , a_h^t = a } \middle | \cH_{t-1}\rsb &=&  \indp{ \tau \geq t} \cdot \bE\lsb \indp{ s_h^t =s , a_h^t = a } \middle | \cH_{t-1}\rsb \\&=& \indp{ \tau \geq t } \cdot w^{\bpol_t}_h(s,a). 
\end{eqnarray*}

Combining the above displays, 
\begin{equation*}
	\bE_\mdp\!\lsb n_{\tau+1}(s,a,h)\rsb    = \bE_\mdp\!\lsb  \sum_{t=1}^\tau w^{\bpol_t}_h(s,a) \rsb,  
\end{equation*}
then dividing both sides by $\bE_\mdp[\tau]$ yields 
$$ w_h(s,a) \ceq \frac{\bE_\mdp\!\lsb  \sum_{t=1}^\tau w^{\bpol_t}_h(s,a) \rsb }{\bE_\mdp[\tau]}  =  \bE_\mdp\!\lsb  \frac{1}{\bE_\mdp[\tau]}\sum_{t=1}^\tau w^{\bpol_t}_h(s,a) \rsb. $$ 
Next, we remark that using Caratheodory's theorem, \citet{pmlr-v178-wagenmaker22a} justifies in the proof of their Lemma~F.4 that $\{w_h(s,a)\}_{s,a,h}$ as defined above belongs to $\Omega_\mdp$. 
Rearranging gives 
\begin{equation}\label{eq:lb-def-2}
\bE_\mdp[\tau]  \sum_{s,a,h} w_h(s,a)\, \KL(\mdp\|\mdq)_{s,a,h}
\;\geq\;\log \frac {1}{2.4\delta}, \forall\;\; \mdq \in \alt(\mdp), 
\end{equation}
the result follows by taking on the left-hand side the $\inf$ over $\alt(\mdp)$ followed by a $\sup$ over $\Omega_\mdp$. 
\end{proof}

\section{POSTERIOR STOPPING RULE}
\label{sec:posterior_stopping}
We discuss the proof of the stopping rule. We prove that under mild conditions, we can prove that a simple posterior stopping algorithm can be used to calibrate an efficient stopping rule. 

  \medskip
  
The stopping rule for this setting relies on the posterior distribution.  Letting the empirical MDP $\empmdp_t$,  
define the optimal policy in the empirical MDP as $\hat \pi_t$, and the value function for policy $\pi$ as $\hat V_{t, 0}^\pi$. 

Letting $B(t, \delta)$ be a function to be defined, and letting $\mdq_{t, 2}, \mdq_{t, 2}, \ldots, $ be \iid samples from the posteiror distribution over the MDPs, we denote by $\Pi^\star({\mdp})$ the set of optimal policies of $\mdp$. We introduce the following stopping time : 
\begin{equation}
\label{eq:sto-1}
	\tau \ceq \inf \lb t \geq 1:  \forall k\; \leq \, B(t, \delta),\, \hat \pi_{t} \in \Pi^\star({ \mdq_{t, k}}) \rb\:,
\end{equation}
and after stopping, we recommend the empirical optimal planning $\hat \pi^{t}$. We prove the following lemma. Let us introduce an event $\cE_\delta$ that holds with probability at least $(1-\delta)$ for any $\delta \in (0, 1)$. Let $\alt_\pi$ be the set of MDPs for which $\pi$ is not an optimal policy.

\begin{lemma}
\label{lem:stopping}
We have 
	\begin{equation*}
		\bP_\mdp\!(\tau<\infty, \hat \pi_\tau  \notin \Pi^\star(\mdp) ) \leq \delta/2 + \sum_{t=1}^\infty \bE_\mdp \lsb \indp{ \cE_{\delta/2}, \hat \pi_t  \notin \Pi^\star(\mdp) }  \exp\lp - \bP_{\mdq \sim \nu_t \mid \cH_{t-1}}\lp  \mdq \in \alt_{\hat \pi^t}\rp {B(t, \delta)} \rp \rsb \:
	\end{equation*}
\end{lemma}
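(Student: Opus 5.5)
The plan is to decompose the error event on the good event $\cE_{\delta/2}$ and its complement, and then control the stopping probability at each episode by the independence of the posterior samples. On $\cE_{\delta/2}^c$, which has probability at most $\delta/2$ by definition of $\cE_{\delta}$, we bound the error event trivially. This yields
\[
\bP_\mdp\bigl(\tau<\infty, \hat \pi_\tau \notin \Pi^\star(\mdp)\bigr) \leq \delta/2 + \bP_\mdp\bigl(\cE_{\delta/2}, \tau<\infty, \hat\pi_\tau \notin \Pi^\star(\mdp)\bigr).
\]

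For the second term, we take a union bound over the value of the stopping time. The event $\{\tau<\infty,\hat\pi_\tau\notin\Pi^\star(\mdp)\}$ is contained in $\bigcup_{t\geq1}\{\tau = t, \hat\pi_t \notin \Pi^\star(\mdp)\}$. Moreover, $\{\tau=t\}$ is contained in the event $S_t \ceq \{\forall k\leq B(t,\delta),\ \hat\pi_t\in\Pi^\star(\mdq_{t,k})\}$, since stopping at $t$ requires that none of the $B(t,\delta)$ samples drawn at episode $t$ be a challenger. This gives
\[
\bP_\mdp\bigl(\cE_{\delta/2}, \tau<\infty, \hat\pi_\tau \notin \Pi^\star(\mdp)\bigr) \leq \sum_{t\geq 1} \bE_\mdp\bigl[\indp{\cE_{\delta/2}, \hat\pi_t\notin\Pi^\star(\mdp)}\,\indp{S_t}\bigr].
\]
A technical point needs care here: $\cE_{\delta/2}$ is a statement about all $t$, so it is not $\cH_{t-1}$-measurable. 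The remedy is to replace it by its truncation up to time $t$, namely the event that the concentration inequality holds for all $t'\leq t$. This truncated event contains $\cE_{\delta/2}$ and is determined by the data available when $\hat\pi_t$ and the posterior are formed. With this reading, the indicator in the statement is understood as this measurable version, and the bound only gets larger.

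Next, we condition on the history $\cH_{t-1}$, meaning all data used to build $\hat\pi_t$ and $\nu_t$. Conditionally on it, the samples $\mdq_{t,1},\mdq_{t,2},\dots$ are \iid from $\nu_t$. Hence $\bP(S_t\mid\cH_{t-1}) = (1-q_t)^{B(t,\delta)}$, where $q_t \ceq \bP_{\mdq\sim\nu_t\mid\cH_{t-1}}(\mdq\in\alt_{\hat\pi_t})$. Here we use that $\hat\pi_t\notin\Pi^\star(\mdq)$ is exactly the event $\mdq\in\alt_{\hat\pi_t}$. Applying $1-x\leq e^{-x}$ gives $(1-q_t)^{B(t,\delta)}\leq \exp(-q_t B(t,\delta))$. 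The tower rule then moves the $\cH_{t-1}$-measurable indicator outside the conditional expectation, and summing over $t$ yields the claimed bound.

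The main obstacle is bookkeeping about which $\sigma$-field the sampling is conditionally independent of, and whether the posterior used for stopping is the inflated one $\nu_t^{\eta_t}$ or $\nu_t$. I would handle this by writing the argument for whichever distribution the samples $\mdq_{t,k}$ are actually drawn from and denoting it $\nu_t$, as in the statement. The argument itself does not depend on the form of the posterior, only on the \iid-given-history property.
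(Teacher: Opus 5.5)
Your proposal follows the paper's proof: split off $\cE_{\delta/2}^c$, take a union bound over the value of $\tau$, condition on $\cH_{t-1}$ so the $B(t,\delta)$ draws are i.i.d., and use $(1-q_t)^{B(t,\delta)}\le \exp(-q_tB(t,\delta))$.

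The one divergence is the measurability step. The paper simply pulls $\indp{\cE_{\delta/2}}$ out of $\bE_\mdp[\,\cdot\mid\cH_{t-1}]$ as if it were $\cH_{t-1}$-measurable, which it is not. Your replacement by the event truncated to times $t'\le t$ is a valid repair. However, it proves the inequality with a larger indicator than the one in the statement, so it is slightly weaker. This is likely harmless for calibrating $B(t,\delta)$, where only concentration at time $t$ matters.

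You can also get the statement exactly as written and keep $\cE_{\delta/2}$. Let $K_t$ be the index of the first challenger drawn at episode $t$, so $S_t=\{K_t>B(t,\delta)\}$. Conditionally on $\cH_{t-1}$, $K_t$ is independent of the challenger value $\mdq_t=\mdq_{t,K_t}$, by memorylessness of the first hitting time of an i.i.d.\ sequence. The process continued past $\tau$ depends on the episode-$t$ posterior draws only through $\mdq_t$ (via $\loss_t$), and all other randomness at episode $t$ is independent of those draws. Hence $S_t$ is conditionally independent of $\cE_{\delta/2}$ given $\cH_{t-1}$. This gives $\bE_\mdp[\indp{\cE_{\delta/2}}\indp{S_t}\mid\cH_{t-1}]=\bP_\mdp(\cE_{\delta/2}\mid\cH_{t-1})(1-q_t)^{B(t,\delta)}$. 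The tower rule then puts $\indp{\cE_{\delta/2}}$ back inside the outer expectation.
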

\begin{proof}
	We should expect to show that this stopping rule is $\delta$-PAC for a specific choice of $B$ and possibly an inflation of the posterior. To start, we assume that each MDP admits a unique optimal policy and the posterior is defined over such a space, then observe that 
\begin{eqnarray*}
	\bP_\mdp(\tau<\infty, \hat \pi_\tau  \notin \Pi^\star(\mdp) ) &\leq&  \bP_\mdp\lp \tau<\infty, \hat \pi_\tau  \notin \Pi^\star(\mdp),  \lb \forall\; k \; \leq B(\tau, \delta),  \hat \pi_{\tau} \in \Pi^\star({\mdq^{\tau, k}}) \rb \rp  \\
	&\leq& \delta/2 + \sum_{t = 1}^\infty \bE_\mdp\lsb \indp{ \cE_{\delta/2}, \hat \pi_t  \notin \Pi^\star(\mdp) }  \bE_\mdp\lsb  \indp{   \forall\; k\; \leq B(t, \delta),  \hat \pi_{t} \in \Pi^\star({\wt \mdp_{t, k}}) } \middle | \cH_{t -1} \rsb \rsb. 
\end{eqnarray*}
Next, we have to control  
\begin{eqnarray*}
	L_{t, \delta }  \ceq \indp{ \cE_{\delta/2}, \hat\pi_t  \notin \Pi^\star_\mdp}  \bE_\mdp\lsb  \indp{   \forall\; k\; \leq B(t, \delta),  \hat\pi_{t} \in \Pi^\star({\mdq_{t, k}}) } \mid \cH_{t -1} \rsb\:, 
\end{eqnarray*}
for which a simple rewriting yields 
\begin{eqnarray*}
	L_{t, \delta} &=&  \indp{ \cE_{\delta/2}, \hat \pi_t  \notin \Pi^\star(\mdp) } \bP_{\mdq \sim \nu_t | \cH_{t-1}} \lp \hat \pi_{t} \in \Pi^\star({\mdq}) \rp^{B(t, \delta)} \\
	&=& \indp{ \cE_{\delta/2}, \hat \pi_t  \notin \Pi^\star(\mdp) }   \lp 1- \bP_{\mdq \sim \nu_t \mid \cH_{t-1}} \lp  \hat \pi_{t} \notin \Pi^\star({\mdq})\rp \rp^{B(t, \delta)} \\
	&\leq & \indp{ \cE_{\delta/2}, \hat \pi_t  \notin \Pi^\star(\mdp) }  \exp\lb - \bP_{\mdq \sim \nu_t \mid \cH_{t-1}}\lp  \hat \pi_{t} \notin \Pi^\star({\mdq})  \rp {B(t, \delta)} \rb \\
	&\leq& \indp{ \cE_{\delta/2}, \hat \pi_t  \notin \Pi^\star(\mdp) }  \exp\lb - \bP_{\mdq \sim \nu_t \mid \cH_{t-1}}\lp  \mdq \in \alt_{\hat \pi_t}\rp {B(t, \delta)} \rb \:.  
\end{eqnarray*}  
\end{proof}
  
Thus, calibrating the stopping rule properly requires tuning $B(t, \delta)$ such that the sum in the RHS of Lemma~\ref{lem:stopping} is smaller than $\delta/2$. Preceisly computing $\bP_{\mdq \sim \nu_t \mid \cH_{t-1}}\lp  \mdq \in \alt_{\hat \pi_t}\rp $ involves evaluating high-dimensional integrals over the complex domain $\alt_{\hat \pi_t}$. 

\section{OPTIMAL EXPLORATION VIA ONLINE LEARNING}
\label{appx:sup_player}

In this section, we analyze a no-regret RL algorithm for online learning in the setting where rewards change over time and their range is not fixed in advance. We show that such an algorithm generates a sequence of exploration policies that accumulates information optimally. Our main result is a generic regret bound for online RL with changing rewards coupled with a mirror descent update, which is of independent interest beyond the present application.

\subsection{Setup and Goal}
\label{subsec:ol-setup}

We analyze a generic version of Algorithm~\ref{alg:ol} where the policy 
improvement step uses a generic online learner on the action set $\mathcal{A}$.
The KL-based reward kernel fed to the online learner at episode $t$ is
\begin{equation}
\label{eq:loss-kernel-online}
\loss_t(s,a,h) \ceq
  \frac{\bigl(\mdqr_{t}(s,a,h)-\muh_{t}(s,a,h)\bigr)^2}{2}
  +\sum_{s'}\hat{p}_{t}(s'\mid s,a,h)
   \log\frac{\hat{p}_{t-1}(s'\mid s,a,h)}
     {\maxp{\mdqp_{t}(s'\mid s,a,h)}{\myexp{-t^\alpha}}},
\end{equation}
where the transition probability clipping ensures $\loss_t(s,a,h)\in(0,\frac{1}{2}+t^\alpha)$.

To ensure optimal data collection, the exploration algorithm must ultimately guarantee, with high probability, the sequence of exploration policies $(\bpol_t)_{t\geq 1}$ satisfies 
\begin{equation}
\label{eq:regret-goal}
\sup_{w\in\Omega_\mdp}\sum_{t=1}^T\sum_{s,a,h}
  w_h(s,a)\,\loss_t(s,a,h)
  - \sum_{t=1}^T\sum_{s,a,h} w_h^{\bpol_t}(s,a)\,\loss_t(s,a,h)
  \leq o(T).
\end{equation}
An online RL learner with changing rewards has been analyzed 
in~\citet{al-marjani23a}, but under two restrictive assumptions: the 
horizon $T$ is known in advance, and the reward range is fixed in $(0,1)$. 
In our setting, the reward range $(0,\frac{1}{2}+t^\alpha)$ grows with 
$t$, requiring a more careful analysis.

\subsection{Algorithm}
\label{subsec:ol-algorithm}

We present a general online RL algorithm, whose pseudo-code is given in 
Algorithm~\ref{alg:appx-ol}. It maintains $SH$ independent online learners 
$(\mathcal{L}_{s,h})_{s,h}$ on the simplex $\triangle_A$, one per 
state-stage pair, combined with an optimistic policy evaluation step to 
compute the loss fed to each learner. The key features are:
\begin{itemize}
  \item \textbf{Unknown dynamics.} The transition kernel $p$ is unknown 
    and fixed; the agent uses the empirical estimate $\hat{p}_t$ together 
    with Bernstein-type bonuses to ensure optimism.
  \item \textbf{Changing rewards.} The reward kernel $\loss_t$ is 
    deterministic and revealed to the agent at the end of each episode, 
    but changes with $t$.
  \item \textbf{Independent per-state policy learning.} The exploration policy 
    $\bpol_t(\cdot\mid s, h)$ is updated independently for each $(s,h)$ 
    via the corresponding learner $\mathcal{L}_{s,h}$, initialized at 
    the uniform distribution over $\mathcal{A}$.
\end{itemize}

\newcommand{\bonus}{\mathrm{bonus}}

\medskip\noindent
\textbf{Optimistic policy evaluation.}
At episode $t$, the agent collects the trajectory 
$\{(s_h^t,a_h^t,s_{h+1}^t)\}_{h=1}^H$ using $\bpol_t$, after which the 
reward kernel $\loss_t$ is revealed. The optimistic $Q$-function for 
$\bpol_t$ is computed by backward induction using the Bernstein-type bonus
\begin{equation}
\label{eq:bernstein-bonus}
\bonus_t(s,a,h) \ceq
  \sqrt{\frac{2\,\var_{\hat{p}_{t, h}}\!\left[\bar{V}_{t, h+1}^{\bpol_t}\right]\!(s,a)
    \cdot\beta^p(n_t(s,a, h),1/t^3)}{\maxp{1}{n_t(s,a, h)}}}
  +\frac{2\beta^p(n_t(s,a, h),1/t^3)\,g_h^t}
    {3\maxp{1}{n_t(s,a, h)}},
\end{equation}
where $g_{t, h} \ceq\max_s\bar{V}_{t, h+1}(s)$. The optimistic $Q$-function 
and its associated value function are defined recursively as
\begin{equation}
\label{eq:optimistic-Q}
\bar{Q}_{t, h}(s,a) \ceq \loss_t(s,a,h)
  +\hat{p}_{t, h}\bar{V}_{t,h+1}^{\bpol_t}(s,a)
  + \bonus_t(s,a, h),
\qquad
\bar{V}_{t, h}(s) \ceq \sum_{a\in\mathcal{A}}\bpol_t(a\mid s, h)\,
  \bar{Q}_{t, h}(s,a),
\end{equation}
with $\bar{V}_{t, H+1}\equiv 0$. In the policy improvement phase, 
each learner $\mathcal{L}_{s,h}$ is updated with $\bar{Q}_{t,h}(s,\cdot)$ 
and the next policy $\bpol_{t+1}$ is computed from the 
updated learner.

\medskip\noindent
\textbf{Choice of online learner.}
We use Bernstein-type bonuses \eqref{eq:bernstein-bonus} together with 
AdaHedge~\citep{rooij_ada_hedge} as the per-state online learner. AdaHedge 
is parameter-free, adapts its learning rate on the fly, and guarantees 
sublinear regret even for unbounded losses, which is essential here since 
the range of $\iota_t$ grows with $t$. Hedge with a scaled learning rate 
is also valid but yields less tight finite-time guarantees.
 \newcommand{\regret}{\text{Regret}}
 
 \medskip
 \hrulefill
\begin{algorithm}[H]
\Fn{\textsc{ComputeExplorationPolicy}$\bigl(t, \bpol_t,\, \loss_t,\, {\hat p}_t,\, n_t\bigr) $}{
\label{alg:appx-ol}
\DontPrintSemicolon
\SetInd{0.5em}{1em}   
\SetAlgoLined
\SetAlgoVlined
\SetKwInOut{Input}{Require}
\SetKwInOut{Output}{Output}
Set $\bar V_{t, H+1}(s) \equiv 0$ for all $s \in \cS$\;

\For{episode stage $h = H, H-1, \ldots, 1$}{
  \ForEach{$(s,a) \in \cS \times \cA$}{
    $\bar Q_{t, h}(s,a) \;\gets\;
      \loss_t(s,a, h)
      \;+\;  \hat p_{t, h}\bar V_{t, h+1}(s,a)      \;+\; \sqrt{ \dfrac{ 2\,\var_{\hat p_{t, h}}\!\bigl[\bar V_{t, h+1}\bigr](s,a)\;\cdot \beta^p(n_t(s,a, h),\,1/t^3) }{ \maxp{1}{ n_t(s,a, h)} } }
      \;+\; \dfrac{ 2\,\beta^p(n_t(s,a, h),\,1/t^{3}) }{ 3\,\maxp{1}{ n_t(s,a, h)} }$
  }
  \tcp{Optimistic policy evaluation}
    $\bar  V_{t, h}(s) \;\gets\; \sum_{a \in \cA} \bpol_t(a \mid s, h)\, \bar Q_{t, h}(s,a) \quad \forall s \in \cS$
}
\ForEach{$(s,h) \in \cS \times [H]$}{
  \tcp*{Policy improvement }
  Feed learner $\cL_{s,h}$ with gain $\bar {Q}_{t, h}(s, \cdot)$ \; 
    Update $\bpol_{t+1}( \cdot \mid s, h)$ from $\cL_{s, h}$ \; }
\Return policy $\bpol_{t+1}$\;
\caption{\textsc{Policy Update Helper Function}
\label{alg:appx-ol}}
}
\end{algorithm}
\medskip
\noindent

\subsection{Regret Upper Bound}
\label{subsec:sup_player}

We present the analysis below. We define the events : 
\begin{align}
	\cE_{\sect}^{p}(T) &\ceq \Bigl\{ \forall\; t\,\leq\, T\:, \forall \;(s,a,h)\:, n_t(s,a, h)\KL(\hat p_t(\cdot | s, a, h)\;\|\; p(\cdot| s,a, h))\leq  \beta^p(n_t(s,a, h), 1/T^2)\Bigr\} \:, \\[4pt]
	\cE_{\sect}^{\text{cnt}}(T) &\ceq \Bigl\{  \forall \; t\,\leq\, T\:, \forall\; (s,a,h)\:,  n_t(s,a, h) \geq \frac12 \bar  n_t(s,a,h) - \log(2SAHT^2)\Bigr\} \:,  \\
	\cE_{\sect}(T) &\ceq \cE_{\sect}^{p}(T) \cap \cE_{\sect}^{\text{cnt}}(T)\:.
\end{align}

At episode $t$, the challenger MDP $\mdq_t \ceq (\mdqr_t,\mdqp_t)$ is sampled from the inflated posterior, and $\empmdp_t = (\muh_t,\hat{p}_t)$ denotes the 
empirical MDP. The reward kernel $\loss_t$ is defined in \eqref{eq:loss-kernel-online}.

We prove the following result. 

\medskip
\begin{proposition}
\label{prop:regret}
	Let $(\cL_{s, h})_{s,h}$ be some online learners on $\triangle$ such that for any sequence of losses, the learner guarantees some regret $\regret_\cL(T)$. Then Algorithm~\ref{alg:appx-ol} guarantees the following regret bound on the event $\cE_{\sect}$, 
	$$ \regret(T) \leq O(T^{\alpha + \frac12 } \beta^p(T, 1/T^3)) +   H \regret_\cL(T)  \:.$$
	\end{proposition}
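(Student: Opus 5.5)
We decompose the regret against the best fixed comparator in the standard way used for policy-optimization with changing rewards. Fix any Markov policy $\pi$ with $w=w^\pi\in\Omega_\mdp$, and write $V^{\pi}_{t}$ for the value of $\pi$ in the MDP $(\loss_t,p)$. The instantaneous regret is
\[
V_{t,1}^{\pi}(s_\mathrm{init})-V_{t,1}^{\bpol_t}(s_\mathrm{init}) = \bigl(V_{t,1}^{\pi}-\bar V_{t,1}\bigr)(s_\mathrm{init}) + \bigl(\bar V_{t,1}-V_{t,1}^{\bpol_t}\bigr)(s_\mathrm{init}),
\]
so the plan splits it into a \emph{policy-improvement term} (comparator versus optimistic estimate) and an \emph{estimation term} (optimistic estimate versus true value of the played policy).

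For the first term we use the performance-difference lemma in the true dynamics $p$:
\[
V_{t,1}^{\pi}-\bar V_{t,1} = \sum_h \bE^\pi_{s_h}\Bigl[\langle \pi_h(\cdot|s_h)-\bpol_t(\cdot|s_h,h),\bar Q_{t,h}(s_h,\cdot)\rangle\Bigr] + \sum_h \bE^\pi\bigl[\loss_t+p_h\bar V_{t,h+1}-\bar Q_{t,h}\bigr](s_h,a_h).
\]
On $\cE^{p}_{\sect}(T)$ the second sum is nonpositive, since the Bernstein bonus dominates $(p_h-\hat p_{t,h})\bar V_{t,h+1}$. This follows from the KL-based concentration together with the variational form of KL, i.e.\ a Bernstein-type inequality for $\hat p-p$ applied to the function $\bar V_{t,h+1}$ whose range is $g_{t,h}$. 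The first sum is linear in $\pi_h(\cdot|s)$ with weights $w^\pi_h(s)$ fixed over $t$. Summing over $t$ and exchanging sums, it is at most $\sum_{h}\sum_s w_h^\pi(s)\,\regret_{\cL_{s,h}}(T)\le H\,\regret_\cL(T)$, because $\sum_s w_h^\pi(s)=1$ for each $h$. Taking the sup over $\pi$ covers $\Omega_\mdp$, since every $w\in\Omega_\mdp$ is induced by some Markov policy. We also check that AdaHedge's guarantee is stated for gains of range up to $H(\tfrac12+T^\alpha)+\text{bonus}$, which is why $\regret_\cL(T)$ is left generic.

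For the estimation term, $\bar V_{t,1}-V^{\bpol_t}_{t,1}$ is expanded along trajectories of $\bpol_t$ in $p$. It equals $\sum_h\bE^{\bpol_t}\bigl[\bonus_t+(\hat p_{t,h}-p_h)\bar V_{t,h+1}\bigr]$, and on the good event this is at most $2\sum_h\bE^{\bpol_t}[\bonus_t(s_h,a_h,h)]$, up to a lower-order correction from replacing the empirical variance by the true one. Summing over $t$ gives a sum of expected bonuses under the played policies. We pass to realized counts through the pseudo-count $\bar n_t=\sum_{k\le t} w^{\bpol_k}$ and $\cE^{\mathrm{cnt}}_{\sect}(T)$, which gives $n_t\ge \bar n_t/2-\log(2SAHT^2)$. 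We then apply the pigeonhole bounds $\sum_t w_t/\sqrt{\max(1,\bar n_t)}\lesssim\sqrt{T}$ and $\sum_t w_t/\max(1,\bar n_t)\lesssim\log T$ for each $(s,a,h)$. Since the values are bounded by $H(\tfrac12+T^\alpha)$, the variance term contributes $O(T^\alpha\sqrt{T\beta^p})$ and the $g_{t,h}$ term contributes $O(T^\alpha\beta^p\log T)$. Both are within $O(T^{\alpha+1/2}\beta^p(T,1/T^3))$, with $S,A,H$ absorbed in the constant.

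The main obstacle is the optimism step. The bonus uses the empirical variance of $\bar V_{t,h+1}$, which depends on the data, and the reward range grows like $t^\alpha$. We must therefore verify that the KL-confidence event $\cE^p_{\sect}$ yields $|(\hat p-p)f|\le\sqrt{2\var_{\hat p}(f)\beta^p/n}+2g\beta^p/(3n)$ uniformly over functions $f$ with range $g$, with no union bound over the $\bar V$'s. We would try the Donsker–Varadhan bound $\lambda(\hat p-p)f\le \KL(\hat p\|p)+\log \bE_{\hat p}e^{-\lambda(f-\hat pf)}$ combined with a Bernstein moment-generating-function bound under $\hat p$, then optimize $\lambda$. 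The crude range bound $g\le H(\tfrac12+T^\alpha)$ is what produces the $T^\alpha$ factor in the final rate.
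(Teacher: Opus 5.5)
Your decomposition is correct in outline but genuinely different from the paper's. The paper splits $V^\rho_t-V^{\bpol_t}_t$ into three terms, $(V^\rho_t-\bar V^\rho_t)+(\bar V^\rho_t-\bar V^{\bpol_t}_t)+(\bar V^{\bpol_t}_t-V^{\bpol_t}_t)$:
\begin{itemize}
\item optimism of the \emph{comparator's} optimistic value, via Lemma~\ref{lem:optimisitic_Q} (a UCBVI-style induction with monotonicity);
\item the performance-difference lemma inside an ``optimistic MDP'' $(\loss_t+\mathrm{bonus}_t,p)$, which produces the learners' losses;
\item the same bonus/pigeonhole computation as yours.
\end{itemize}
You instead apply the extended value-difference identity, in the true dynamics, directly to the single function $\bar Q_t$ that the algorithm computes. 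This is the cleaner route. Optimism is needed only pointwise for the computed $\bar V_{t,h+1}$, with no monotonicity argument and no comparator-dependent optimistic value. The learners' losses also appear weighted by $w^\pi_h(s)$ under $p$, which is constant in $t$. In the paper, by contrast, the bonus depends on the evaluated policy (through $\var_{\hat p}[\bar V_{t,h+1}]$ and $g_{t,h}$). Moreover, $\bar Q$ is computed with $\hat p_t$ in the algorithm but with $p$ in the ``optimistic MDP''. So step (A) does not literally yield the fed losses with $t$-independent weights. What your route costs is uniformity over data-dependent $f$, which, as you note, the KL event provides for free.

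The step you flag as the main obstacle is, however, planned incorrectly. The inequality $\lambda(\hat p-p)f\le\KL(\hat p\|p)+\log\mathbb{E}_{\hat p}e^{-\lambda(f-\hat pf)}$ is false. Donsker--Varadhan with the exponential moment under $\hat p$ pairs with $\KL(p\|\hat p)$, which the event does not control.

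Worse, the target inequality with the stated bonus constants fails on the event. Take $\hat p=\delta_{x_1}$, $p=(1-\epsilon)\delta_{x_1}+\epsilon\delta_{x_2}$, $f=g\mathbf{1}_{x_2}$ and $\beta=n\log\frac{1}{1-\epsilon}$. Then $\var_{\hat p}(f)=0$, while $(p-\hat p)f=\epsilon g>\frac{2g\beta}{3n}$ for $\epsilon\le\frac12$.

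The correct route has two steps:
\begin{itemize}
\item Donsker--Varadhan with the moment under $p$ gives $|(\hat p-p)f|\le\sqrt{2\var_p(f)\beta/n}+\frac{2g\beta}{3n}$.
\item The variance swap $\var_p(f)\le 2\var_{\hat p}(f)+4g^2\KL(\hat p\|p)$ then shows that a bonus such as $2\sqrt{\var_{\hat p}[\bar V_{t,h+1}]\beta/n}+4g\beta/n$ is optimistic.
\end{itemize}
Only constants change, so your rate survives. The paper's Lemma~\ref{lem:optimisitic_Q} uses the same constants and has the same issue.

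Two smaller points still need handling.
\begin{itemize}
\item Burn-in. On $\cE^{p}(T)$ at level $1/T^2$, the bonus at level $1/t^3$ dominates only once $t^3\ge T^2$, i.e.\ $t\ge T^{2/3}$. The paper's proof asserts this already for $t\ge\sqrt T$, which is not true. So the early episodes must be paid with the trivial per-episode bound. Keeping the $T^{\alpha+1/2}$ burn-in cost requires running the concentration event at level $T^{-3/2}$.
\item Range. Because $\bar V_t$ accumulates bonuses, the bound $g_{t,h}\le H(\frac12+T^\alpha)$ holds only after clipping $\bar Q_{t,h}$ at $(H-h+1)(\frac12+t^\alpha)$. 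This clipping keeps the residual $\loss_t+p_h\bar V_{t,h+1}-\bar Q_{t,h}$ nonpositive.
\end{itemize}
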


\begin{proof}
Below, we let $Q_{t, h}^\pi$ and  $V_{t, h}^\pi$ denote the state--action and state value functions in the MDP $\mdp_t^\mathrm{exp}\ceq (\loss_t, p)$. $\bar Q_{t, h}^\pi , \bar V_{t, h}^\pi$ denote the state--action and state value functions in the "optimistic MDP'' $\bar \mdp_t^\mathrm{exp} \ceq (\loss_t + \bonus_t, p)$. For any fixed comparator $\rho$,

 \begin{align}
 \label{eq:decomp}
		 \sum_{t=1}^T V_{t, 0}^\rho - V_{t,0}^{\bpol_t}   		 &= \sum_{t=1}^T  \Biggl[ V_{t, 0}^\rho -   \bar V_{t, 0}^{\rho}  +  \bar V_{t, 0}^{\rho} -  V_{t,0}^{\bpol_t} + \bar V_{t,0}^{\bpol_t} - \bar V_{t,0}^{\bpol_t} \Biggr].  
	\end{align}

On the event $\cE_{\sect}(T)$, for $t\in (\sqrt{T}, T)$, the optimism property $\bar{V}_{t,0}^{\rho}\geq V_{t, 0}^\rho$ holds (Lemma~\ref{lem:optimisitic_Q}), justifying that on $\cE_{\sect}(T)$ 

$$ \sum_{t=1}^T  \Biggl[ V_{t, 0}^\rho -   \bar V_{t, 0}^{\rho}  \Biggr]  \leq  O(T^{\alpha + \frac12}).$$

It remains to bound the remaining terms appearing in \myeqref{eq:decomp}. Let us define 
$$ (A) = \sum_{t=1}^T  \Biggl[\bar V_{t, 0}^{\rho} - \bar V_{t,0}^{\bpol_t}   \Biggr] \qquad (B) =   \sum_{t=1}^T  \Biggl[\bar V_{t,0}^{\bpol_t}  -  V_{t,0}^{\bpol_t} \Biggr]  $$

\medskip\noindent
\textbf{Bounding (A).}
By the performance difference lemma,
\begin{equation}
\label{eq:perf-diff-proof}
\bar{V}_{t,0}^{\rho} - \bar{V}_0^{t,\bpol_t}
  = \bE^\rho\!\left[
    \sum_{h=1}^H
    \bigl\langle\bar{Q}_{t,h}^{\bpol_t}(s_h,\cdot),\,
      \rho(\cdot\mid s_h, h)-\bpol_t(\cdot\mid s_h, h)\bigr\rangle
    \,\Big|\, s_1=s_{\mathrm{init}}
  \right].
\end{equation}
The inner product in \eqref{eq:perf-diff-proof} is exactly the loss fed to learner $\mathcal{L}_{s_h,h}$ at round $t$ against comparator $\rho(\cdot\mid s_h, h) = \rho_h(\cdot \mid s_h)$. By the regret guarantee of each learner,
\[
\sum_{t=1}^T\bigl\langle\bar{Q}_{t,h}^{\bpol_t}(s,\cdot),\,
  \rho(\cdot\mid s, h)-\bpol_t(\cdot\mid s, h)\bigr\rangle
\leq \mathrm{Regret}_\mathcal{L}(T),  \quad \forall (s, h) \in \cS \times \lbrack H \rbrack .
\]
Combining with \eqref{eq:perf-diff-proof} and summing over $h$,
\begin{equation}
\label{eq:bound-A}
(A) \leq H\,\mathrm{Regret}_\mathcal{L}(T).
\end{equation}

\medskip\noindent
\textbf{Bounding (B).}
Term $(B)$ controls the total size of the Bernstein bonuses. 
On $\cE_{\sect}(T)$, by induction,
\[
\bar V_{t,0}^{\bpol_t} - V_{t,0}^{\bpol_{t}}
  \leq 2\sum_{s,a,h} w_h^{\bpol_{t}}(s,a)\,\bonus_t(s,a, h),
\]
where, using $\var_{\hat{p}_{t, h}}(\bar{V}_{t, h+1})[s,a]\leq(g_h^t)^2$,
\begin{equation}
\label{eq:bonus-bound}
\bonus_t(s,a, h)
  \leq g_{t, h}\lsb \sqrt{\frac{2\beta^p(n_t(s,a, h),1/t^3)}{\maxp{1}{n_t(s,a, h)}}}
  + \frac{2\beta^p(n_t(s,a, h),1/t^3)}{3\maxp{1}{n_t(s,a,h)}} \rsb.
\end{equation}
We split the sum over $(s,a,h)$ into two parts using the set
\[
\mathcal{Z}_t \ceq \left\{(s,a,h):\,
  \bar{n}_t(s,a, h) \geq 4\log(2SAHT^2)\right\},
\qquad
\bar{n}_t(s,a, h) \ceq \sum_{k=1}^{t-1}\Bigl[
  (1-\veps_k)w_h^{\bpol_k}(s,a)+\veps_k w_h^{c_k}(s,a)
\Bigr].
\]
On $\cE_{\sect}(T)$, for $(s,a,h)\in\mathcal{Z}_t$: 
$n_t(s,a, h)\geq\frac{1}{4}\bar{n}_t(s,a, h)$.
Moreover, since $\varepsilon_t=t^{-\gamma}$, for $t\geq 2^{1/\gamma}$, $1-\varepsilon_{t+1}\geq\frac{1}{2}$.

\medskip\noindent
Using $\max_{t\leq T, h} \lbrack g_{t, h} \rbrack \leq T^\alpha$ and monotonicity of $\beta^p$ in its second argument,
\begin{align*}
\sum_{t=1}^T\sum_{(s,a,h)\in\cZ_t}
  w_h^{\bpol_{t}}(s,a)
  \frac{g_{t, h}\,\beta^p(n_t(s,a,h),1/t^3)}{\maxp{1}{n_t(s,a,h)}}
  &\leq T^\alpha \beta^p(T,1/T^3)
    \sum_{t=1}^T\sum_{(s,a,h)\in\cZ_t}
    \frac{w_h^{\bpol_{t}}(s,a)}{\maxp{1}{\frac{1}{4}\bar{n}_t(s,a, h)}}.
\end{align*}
Applying the elliptic potential (Lemma~\ref{lem:elliptic-potential}) 
and Cauchy--Schwarz,
\[
\sum_{t=1}^T\sum_{(s,a,h)\in\cZ_t}
  \frac{\frac{1}{4}(1-\veps_{t})w_h^{\bpol_{t}}}
       {\maxp{1}{\frac{1}{4}\bar{n}_t(s,a,h)}}
\leq 4\sum_{s,a,h}\log\!\lp 1+\tfrac{1}{4}\sum_{t=1}^{T+1}
  (1-\veps_{t})w_h^{\bpol_t}(s,a)\rp 
\]
and similarly, for any $\gamma \in (0,1)$ 
\begin{align}
	\label{eq:elliptic-sqrt}
	\begin{split}
\sum_{t=1}^T\sum_{(s,a,h)\in\cZ_t}
  \frac{w_h^{\bpol_{t}}(s,a)}{\sqrt{\maxp{1}{n_t(s,a, h)}}}
&\leq O\! \lp \sum_{t=1}^T \sum_{(s,a,h) \in \cZ_t}  \lsb {\frac{(1-\veps_{t})w_h^{\bpol_{t}}(s,a)}{\sqrt{\maxp{ 1}{ \sum_{k=1}^{t-1} (1-\veps_k) w_h^{\bpol_{k}}(s,a)} }}} \rsb \rp \\&\leq O\! \lp \sqrt{T+1} \rp.
	\end{split}
\end{align}

Combining these bounds,
\begin{equation}
\label{eq:term-I}
\text{Term}\ (B)\big|_{\mathcal{Z}_t}
  \leq O\!\lp \beta^p(T,1/T^3)\,T^{\alpha+ \frac12} \rp 
\end{equation}

\medskip\noindent
For terms outside $\cZ_t$, we note that for $(s,a,h)\notin\cZ_t$, $\bar{n}_t(s,a, h)<4\log(2SAHT^2)$,
so the total visits to such pairs is bounded. We have,
	
\begin{align*}
&\sum_{t=1}^T\sum_{(s,a,h)\notin\cZ_t}
  w_h^{\bpol_{t}}(s,a)\,\bonus_t(s,a, h)
  \leq \\ & \sum_{s,a,h}\max_{t\leq T}\bonus_t(s,a, h)
    \biggl[\sum_{t=1}^T  w_h^{\bpol_{t}}(s,a)\indp { \sum_{k=1}^{t-1} (1-\veps_k) w_h^{\bpol_k}(s,a) < 4 \log(SAHT^2) } \biggr]\leq \\
    &\max_{t\leq T, s, a, h}\bonus_t(s,a, h) \, O \!\lp  \sum_{t=1}^T  (1-\veps_t)w_h^{\bpol_{t}}(s,a)\indp { \sum_{k=1}^{t-1} (1-\veps_k) w_h^{\bpol_k}(s,a) < 4 \log(SAHT^2) } \rp\leq  \\
    & \max_{t\leq T, s, a, h}\bonus_t(s,a, h) \cdot O (\log T)
\end{align*}
where the latter inequality holds for $\gamma \in (0, 1)$, and we recall that $\veps_t \ceq t^{-\gamma}$.

\medskip
\noindent 
Using $$\max_{t\leq T, s,a,h}\bonus_t(s,a, h)\leq T^\alpha \lsb \sqrt{2\beta^p(T,1/T^3)}+
\frac{2\beta^p(T,1/T^3)}{3} \rsb,$$

we have 
\begin{equation}
\label{eq:term-outside-Z}
\sum_{t=1}^T\sum_{(s,a,h)\notin\cZ_t}
  w_h^{\bpol_{t}}(s,a)\,\bonus_t(s,a, h)
  \leq O (T^\alpha \beta^p(T,1/T^3) \log T).
\end{equation}

\medskip\noindent

Combining \eqref{eq:bound-A}, \eqref{eq:term-I}, and 
\eqref{eq:term-outside-Z}, and noting that $\beta^p$ is logarithmic in 
$T$ so all bonus terms are ${O}(\sqrt{T})$,
\[
\mathrm{Regret}(T)
  \leq H\,\mathrm{Regret}_\mathcal{L}(T) + O\!\lp \beta^p(T,1/T^3)\,T^{\alpha+ \frac12} \rp ,
\]
on the event $\cE_{\sect}(T)$.
\end{proof}

\bigskip

\begin{lemma}
\label{lem:optimisitic_Q}
On the event $\cE_{\sect}(T)$, for any $t \in (\sqrt{T} , T)$ and $s,a,h$, and any policy $\rho$, we have $\bar Q_{t,h}^ \rho(s,a) \geq Q_{t,h}^\rho(s,a)$. 
\end{lemma}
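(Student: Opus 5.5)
We argue by backward induction on the stage $h$, from $H+1$ down to $1$. This is the standard optimism argument for Bernstein-type bonuses, here applied to the MDP $(\loss_t, p)$ with a fixed comparator $\rho$. At $h=H+1$ both $\bar Q_{t,H+1}^\rho$ and $Q_{t,H+1}^\rho$ vanish. Suppose $\bar V_{t,h+1}^\rho \geq V_{t,h+1}^\rho$ pointwise. Since the reward $\loss_t(s,a,h)$ is the same in both MDPs, we write
\begin{equation*}
\bar Q_{t,h}^\rho(s,a) - Q_{t,h}^\rho(s,a) = \hat p_{t,h}\bigl(\bar V_{t,h+1}^\rho - V_{t,h+1}^\rho\bigr)(s,a) + \Bigl[(\hat p_{t,h} - p_h)V_{t,h+1}^\rho(s,a) + \bonus_t(s,a,h)\Bigr].
\end{equation*}
The first term is nonnegative by the induction hypothesis. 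It therefore suffices to show $\bonus_t(s,a,h) \geq (p_h-\hat p_{t,h})V_{t,h+1}^\rho(s,a)$. Averaging over $a\sim\rho_h(\cdot\mid s)$ then propagates the inequality to $\bar V_{t,h}^\rho$.

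The key step is a Bernstein-type transportation inequality on the event $\cE_{\sect}^p(T)$. For any function $f$ with values in $[0,g]$ and distributions $\hat p,p$ with $\KL(\hat p\|p)\leq \beta/n$, one has $|(\hat p - p)f| \leq \sqrt{2\var_{p}(f)\beta/n} + \tfrac{2}{3}g\beta/n$. This follows from the variational (Donsker--Varadhan) formula together with Bernstein's bound on the log-moment generating function, as in \citet{pmlr-v139-menard21a}. We apply it with $f=V_{t,h+1}^\rho$. Its range is at most $H(\tfrac12+t^\alpha)$, since $\loss_t\leq \tfrac12+t^\alpha$. Because $t\leq T$, the monotonicity of $\beta^p$ in its confidence argument, together with $1/T^2\leq 1/t^3$ for $t\leq T^{2/3}$, or more simply with the calibration of $\beta^p$, lets us compare $\beta^p(n,1/T^2)$ with $\beta^p(n,1/t^3)$. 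The restriction $t>\sqrt T$ gives $t^3>T^{3/2}$. We use it to ensure $\beta^p(n,1/T^2)\leq \beta^p(n,1/t^3)$ up to the constant in $\beta^p$, which is logarithmic, so that $\log T^2\leq \log t^4$ is absorbed when $\beta^p$ contains a $\log(1/\delta)$ term with a suitable constant.

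Two mismatches remain, and we expect them to be the main obstacle. First, the bonus uses the variance under $\hat p$ of the optimistic $\bar V$, while the inequality involves the variance under $p$ of the true $V$. Second, the lower-order term uses $g_{t,h}=\max_s\bar V_{t,h+1}(s)$, and this dominates the range of $V_{t,h+1}^\rho$ once optimism holds at $h+1$, which is the induction hypothesis. For the variance we plan the usual two-step switch. We pass from $\var_p$ to $\var_{\hat p}$ by applying the same KL bound to the squared function, which costs an additive $O(g\sqrt{\beta/n})$ inside the square root. We then pass from $V$ to $\bar V$ via $\var(V)\leq 2\var(\bar V)+2\,\hat p(\bar V-V)^2$. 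The extra terms are of order $g\beta/n$ and can be absorbed into the $\tfrac23 g\beta/n$ term after enlarging the constant in $\beta^p$. If a clean absorption fails, we would instead run the induction on the stronger claim that $\bar V_{t,h}^\rho - V_{t,h}^\rho$ dominates the correction terms, as in the UCBVI-Bernstein analysis of \citet{pmlr-v70-azar17a}. Finally, the case $n_t(s,a,h)=0$ is trivial: the bonus $\tfrac23 g\beta^p$ with $\beta^p\geq 3/2$ exceeds the range of $V_{t,h+1}^\rho$.
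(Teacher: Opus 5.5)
Your skeleton (backward induction from $H+1$, a threshold comparison, then a KL/Bernstein transportation step) matches the paper's, which simply defers the Bernstein step to M\'enard et al. The problem is how you close that step. Write $V=V^\rho_{t,h+1}$, $\bar V=\bar V^\rho_{t,h+1}$, $\hat p=\hat p_{t,h}(\cdot\mid s,a)$, $n=n_t(s,a,h)$, $\beta=\beta^p(n,1/T^2)$.

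You drop $\hat p(\bar V-V)\ge 0$ and aim for $\mathrm{bonus}_t(s,a,h)\ge (p_h-\hat p)V$. You then assert that the remainder of the switch $\var(V)\le 2\var(\bar V)+2\hat p(\bar V-V)^2$ is $O(g\beta/n)$. It is not. The resulting term $\sqrt{\hat p(\bar V-V)^2\beta/n}$ is in general of order $g\sqrt{\beta/n}$, and no constant enlargement of $\tfrac23 g\beta/n$ dominates it as $n$ grows.

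In fact, the reduced inequality is false under the induction hypothesis alone. Take two next states with $\bar V\equiv g$, $V=(g,0)$, $p=(\tfrac12,\tfrac12)$, and $\hat p=(\tfrac12-\Delta,\tfrac12+\Delta)$ where $\Delta=\tfrac12\sqrt{\beta/n}$. Then $n\KL(\hat p\|p)\le n\chi^2(\hat p\|p)=\beta$, so you are on the event. Here $\var_{\hat p}(\bar V)=0$, so the bonus is only $\tfrac23 g\beta/n$, while $(p-\hat p)V=\tfrac g2\sqrt{\beta/n}$. Your fallback of a strengthened induction is also unnecessary.

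The missing idea is to keep the term you discarded. Since $0\le \bar V-V\le g$ (by induction and $V\ge 0$),
$\sqrt{8\hat p(\bar V-V)^2\beta/n}\le\sqrt{8g\,\hat p(\bar V-V)\beta/n}\le \hat p(\bar V-V)+2g\beta/n$,
and the retained $\hat p(\bar V-V)$ from your decomposition cancels this exactly. Combine it with $\var_p(V)\le 2\var_{\hat p}(V)+4g^2\beta/n$ and with $\var_{\hat p}(V)\le 2\var_{\hat p}(\bar V)+2\hat p(\bar V-V)^2$. This gives
$\bar Q^\rho_{t,h}(s,a)-Q^\rho_{t,h}(s,a)\ge \mathrm{bonus}_t(s,a,h)-\sqrt{8\var_{\hat p}(\bar V)\beta/n}-Cg\beta/n$
for an absolute constant $C$. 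Optimism therefore holds once the threshold inside the bonus is a sufficiently large constant multiple of the one defining the concentration event.

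You should state this inflation explicitly, because the factor $4$ in front of $\var_{\hat p}(\bar V)$ is not a lower-order term. The same inflation is also what handles $t\in(\sqrt T,T^{2/3})$. Your first phrase there has the direction backwards: monotonicity of $\beta^p$ in $\delta$ needs $1/t^3\le 1/T^2$, i.e. $t\ge T^{2/3}$. On that range only $\log T^2\le\tfrac43\log t^3$ holds, which you correctly note afterwards and which the paper's step (a) also glosses over.
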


\begin{proof}
	We justify that for $t \in (\sqrt{T}, T)$, the bonus used in the optimistic $Q$-function is sufficient to ensure optimism on the event $\cE_{\sect}(T)$. 
	
	We proceed by induction; assuming $\bar  Q_{t, h+1}^{ \rho}(s,a) \geq Q_{t, h+1}^{\rho}(s,a)$ holds for all $s,a$ and some $h$ fixed. Note that this directly implies that $\bar V_{t, h+1}^{\rho}(s) \geq V_{t, h+1}^\rho(s)$ and we have 
	\begin{eqnarray*}
		\bar Q_{t, h}^{\rho}(s,a) &=& \loss_t(s,a,h) + \hat p_{t, h}\bar V_{t, h+1}^{\rho}(s,a) +  \sqrt{\frac{2 \var_{\hat p_{t, h} }\bigl[\overline V_{t, h+1}^{\rho}\big] (s,a) \cdot \beta^p( n_t(s,a, h), 1/t^3)}{\maxp{1}{ n_t(s,a, h)}}} + \frac{2\beta^p(n_t(s,a, h),1/t^3)g_{t, h}}{3\maxp{1}{ n_t(s,a, h)}}  \\
		& \overset{(a)}{\geq }& \loss_t(s,a,h) + \hat p_{t, h}\bar V_{t, h+1}(s,a) +  \sqrt{\frac{2 \var_{\hat p_{t, h}}\bigl[\overline V_{t, h+1}\bigr] (s,a) \cdot \beta^p(n_t(s,a, h), 1/T^2)}{\maxp{1}{ n_t(s,a, h)}}} + \frac{2\beta^p(n_t(s,a, h), 1/T^2)g_{t, h}}{3\maxp{1}{ n_t(s,a, h)}}   \\ 
		&\overset{(b)}{\geq}& \loss_t(s,a, h) +  p_h  V_{h+1}^\star(s,a)  \quad \text{(induction + monotonicity, see e.g. \citet{ pmlr-v139-menard21a})} 
	\end{eqnarray*}
	where the last inequality holds on the event $\cE_\sect(T)$ and we recall that for $t \in (\sqrt{T} , T)$, we have  $t^3 \geq T^2$, combining
	with the fact that $ \delta \mapsto \beta(x, \delta)$ is decreasing. This concludes the induction step, and as the induction hypothesis trivially holds for $H+1$, we have proved the claimed result. 
	
\end{proof}

We now state the main result of this section, which is the regret of the exploration algorithm.

 \begin{proposition}
\label{prop:guarantees-regmin}
	Running Algorithm~\ref{alg:ol} on an MDP $\mdp$ with the exploration function \textsc{ComputeExplorationPolicy} using AdaHedge as a subroutine generates a sequence of policies $(\bpol_t)_{t\geq 1}$ satisfying  on the event $\cE_{\sect}(T)$
	$$ \sup_{w \in \Omega_\mdp} \sum_{t\leq T} \sum_{s,a,h} w_{h}(s,a) \loss_t(s,a,h) - \sum_{t\leq T} \sum_{s,a,h} w_{h}^{\bpol_t}(s,a) \loss_t(s,a,h) \leq O(\beta^p(T, 1/T^3) T^{\alpha + \frac{1}{2}}) \:.$$
 \end{proposition}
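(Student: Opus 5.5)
The plan is to reduce Proposition~\ref{prop:guarantees-regmin} to Proposition~\ref{prop:regret}. This needs two ingredients. First, the supremum over $\Omega_\mdp$ must be rewritten as a supremum over comparator policies. Second, the per-state regret of AdaHedge must be bounded when the gains have a growing range.

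For the first step, note that every $w\in\Omega_\mdp$ equals $w^\rho$ for some Markov policy $\rho$. Hence
\[
\sum_{t\leq T}\sum_{s,a,h} w_h(s,a)\loss_t(s,a,h)=\sum_{t\leq T} V^\rho_{t,0},
\]
where $V^\rho_{t,0}$ is the value of $\rho$ in $\mdp^{\mathrm{exp}}_t=(\loss_t,p)$. Likewise, the second sum equals $\sum_t V^{\bpol_t}_{t,0}$. The left-hand side is therefore $\sup_\rho \sum_t (V^\rho_{t,0}-V^{\bpol_t}_{t,0})$, which is exactly the quantity $\regret(T)$ controlled in Proposition~\ref{prop:regret}.

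That proof holds for an arbitrary fixed comparator $\rho$. The bound on term (A) uses the learners' regret against $\rho(\cdot\mid s,h)$ uniformly over comparators, and the bonus term (B) does not depend on $\rho$. The bound is therefore uniform, and the supremum passes through.

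Some care is needed for the early episodes $t\leq\sqrt T$, where Lemma~\ref{lem:optimisitic_Q} does not guarantee optimism. There I would bound $V^\rho_{t,0}-\bar V^\rho_{t,0}$ crudely by $H\max_{s,a,h}\loss_t\leq H(\tfrac12+t^\alpha)$. Summing over $t\leq\sqrt T$ gives $O(T^{(1+\alpha)/2})=o(T^{\alpha+1/2})$.

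The remaining task is to show $\regret_\cL(T)=O(T^{\alpha+1/2}\beta^p(T,1/T^3))$ for AdaHedge fed the gains $\bar Q_{t,h}(s,\cdot)$. AdaHedge is scale-free: its regret is bounded by $O(\sqrt{\sum_t L_t^2\log A}+L^\star\log A)$, where $L_t$ is the range of the round-$t$ gain vector and $L^\star=\max_{t \leq T} L_t$ (cf.~\citet{rooij_ada_hedge}). So it suffices to bound $\max_{t\leq T,s,a,h}\bar Q_{t,h}(s,a)$.

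Backward induction gives $\bar Q_{t,h}\leq H(\tfrac12+t^\alpha)+\sum_{h'}\bonus_t(\cdot,h')$. Using $\var\leq g_{t,h}^2$, each bonus is at most $g_{t,h}\bigl(\sqrt{2\beta^p}+\tfrac23\beta^p\bigr)$ when $n_t=0$, which multiplies $g_{t,h}$ by a $\beta^p$ factor at each stage. The clean fix I would try first is to clip $\bar V_{t,h}$ at its trivial upper bound $H(\tfrac12+t^\alpha)$. This preserves optimism, because the true value $V_{t,h}^\rho$ never exceeds that bound. It gives $g_{t,h}\leq H(\tfrac12+t^\alpha)$ and hence $L_t=O(t^\alpha\beta^p(t,1/t^3))$. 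This is consistent with the estimate $\max g\leq T^\alpha$ used in Proposition~\ref{prop:regret}, up to $H$ and $S,A$ constants.

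With this range, AdaHedge yields $\regret_\cL(T)\leq O\bigl(\sqrt{T}\,T^\alpha\beta^p(T,1/T^3)\sqrt{\log A}\bigr)$. Plugging this into Proposition~\ref{prop:regret} gives $H\regret_\cL(T)+O(T^{\alpha+1/2}\beta^p(T,1/T^3))$, which is the claim.

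I expect the main obstacle to be this uniform control of the range of $\bar Q$. It requires ensuring that the Bernstein bonuses do not compound across the horizon, so that $g_{t,h}=O(t^\alpha)$ holds deterministically (via clipping) rather than merely on a good event. The $\sqrt{\var}$ term is then harmless because $\var_{\hat p}[\bar V]\leq g_{t,h}^2$.
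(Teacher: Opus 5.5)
Your reduction is the paper's own. Proposition~\ref{prop:guarantees-regmin} follows from Proposition~\ref{prop:regret} plus AdaHedge's scale-free bound, with the range controlled by $\max_{t,s,a,h}\bar Q_{t,h}(s,a)$. Your remarks on uniformity in $\rho$ and on the rounds $t\le\sqrt T$ are correct and make explicit what the paper leaves implicit.

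\textbf{The gap is at the range step.} The paper's proof shares it. The paper just asserts $\sigma_T=O(T^\alpha\beta^p(T,1/T^3))$. You rightly see that the term $\tfrac23 g_{t,h}\beta^p/\max(1,n_t)$ compounds over stages, but contrary to your last sentence this is not a bad-event phenomenon:
\begin{itemize}
\item An unreachable $(s,h)$ (with $W_h(s,a)=0$ for all $a$) keeps $n_t(s,a,h)=0$ and a uniform $\hat p_t$ forever.
\item Hence $\bar V_{t,h}(s)\ge\tfrac23\beta^p g_{t,h}$, and $g_{t,h}$ is a maximum over \emph{all} states.
\item So even on the good event, the recursion as written supports only $\sigma_T=O\bigl(T^\alpha(\beta^p)^{H-1}\bigr)$ in the worst case.
\end{itemize}
Clipping is not part of Algorithm~\ref{alg:ol}, so you prove the bound for a modified algorithm. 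For that variant you cannot cite Proposition~\ref{prop:regret} as is; you must recheck it. Clip $\bar Q_{t,h}$ at $(H-h+1)(\tfrac12+t^\alpha)$ rather than $\bar V_{t,h}$, so that $\bar V_{t,h}(s)=\langle\bar Q_{t,h}(s,\cdot),\bpol_t(\cdot\mid s,h)\rangle$ stays exact for the performance-difference step in (A). Optimism survives because the cap dominates the true values, and (B) only improves since clipping lowers $\bar V$.

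\textbf{Covering the algorithm as written.} One option is to accept the extra polylogarithmic factor. The bound is still $o(T)$, which is all that is used downstream, but it is not the stated rate. The other option is a range bound restricted to reachable states:
\begin{itemize}
\item Bound $\sum_t(V^\rho_{t,0}-\bar V_{t,0})$ with the value-difference decomposition under the true $p$. It equals $\sum_h\sum_s w^\rho_h(s)\sum_t\langle\bar Q_{t,h}(s,\cdot),\rho(\cdot\mid s,h)-\bpol_t(\cdot\mid s,h)\rangle$ plus a term that is nonpositive by optimism. Only $(s,h)$ with $W_h(s,a)>0$ then matter.
\item For such $(s,h)$ and $t\in(T^{2/3},T)$, Lemma~\ref{lem:forced_exp} gives $n_t(s,a,h)\gtrsim t^{1-\gamma}$ for every $a$. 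Then $\hat p_t\bar V_{t,h+1}$ and $\var_{\hat p_t}[\bar V_{t,h+1}]$ only see reachable next states.
\item The only inflated term is $\tfrac23 g_{t,h}\beta^p/n_t=O\bigl(t^{\alpha-(1-\gamma)}(\beta^p)^{H}\bigr)=o(t^\alpha)$. By induction, the per-round range there is $O(Ht^\alpha)$.
\item Rounds $t\le T^{2/3}$ add only $O(T^{1/3+2\alpha/3}\,\mathrm{polylog}\,T)$ to AdaHedge's $\sqrt{\sum_t L_t^2}$. Its additive $\max_t L_t$ term is $O(T^\alpha\,\mathrm{polylog}\,T)$. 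Both are lower order.
\end{itemize}
The same refinement is needed in term (B) of Proposition~\ref{prop:regret}, which also uses $g_{t,h}\le T^\alpha$.
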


\begin{proof} 
	This is a direct consequence of Proposition~\ref{prop:regret}. Indeed, using AdaHedge as subroutine ensures that after $T$ iterations, 
	we have (cf \citet{rooij_ada_hedge}, )
	$$ \regret_{\cL}(T) \leq \sigma_T \sqrt{T \log S } + 16  \sigma_T (2 + \log(S)/3), $$
	where $\sigma_T \ceq \max_{t\leq T, s, h, a} \bar Q_{t, h}^{\bpol_t} (s, a) $. 
	We note that , 
	$$\sigma_T \leq O(T^{\alpha} \beta^p(T, 1/T^3)), $$
	
	and the conclusion follows thanks to Proposition~\ref{prop:regret}. 
\end{proof}

\section{SUFFICIENT EXPLORATION AND MODEL ESTIMATION}
\label{appx:sufficient_explore}

In this section, we show that every reachable state--action pair is visited sufficiently often to guarantee accurate estimation of the unknown model parameters. Our approach achieves this by leveraging a regret minimization algorithm operating under \emph{changing reward functions}.

The use of regret minimization for exploration in MDPs has been widely studied. However, existing analyses typically rely on additional structural assumptions, such as the MDP being communicating or ergodic \citep{al-marjani23a, NEURIPS2022_27bf08fe}, the availability of lower bounds on visitation probabilities \citep{pmlr-v178-wagenmaker22a}, or the use of periodic restarts and re-initialization of the learning algorithm \citep{pmlr-v178-wagenmaker22a}. In contrast, our analysis does not require any of these assumptions.

\medskip

We define the following high-probability events:
\begin{align}
\cE_{\sect}^{p}(T)
&\ceq
\Bigl\{
\forall t \leq T,\;\forall (s,a,h):\;
n_t(s,a,h)\,\KL\!\big(\hat p_t(\cdot \mid s,a, h)\,\|\, p(\cdot \mid s,a,h)\big)
\leq \beta^p(n_t(s,a,h), 1/T^2)
\Bigr\}, \\
\cE_{\sect}^{\mu}(T)
&\ceq
\Bigl\{
\forall t \leq T,\;\forall (s,a,h):\;
n_t(s,a,h)\,\KL\!\big(\hat R_h^t(s,a)\,\|\, R_h(s,a)\big)
\leq \beta^\mu(n_t(s,a,h), 1/T^2)
\Bigr\}, \\
\cE_{\sect}^{\text{cnt}}(T)
&\ceq
\Bigl\{
\forall t \leq T,\;\forall (s,a,h):\;
n_t(s,a,h)
\geq \tfrac{1}{2}\,\bar n_t(s,a,h) - \log(2SAHT^2)
\Bigr\}.
\end{align}

Here,
\begin{equation}
	\label{eq:def-barn}
	\bar n_t(s,a,h)
\ceq 
\sum_{k=1}^{t-1} \biggl[(1-\veps_k)\, w_h^{\bpol_k}(s,a) + \veps_k\, w_h^{c_k}(s,a)\biggr].
\end{equation}

We recall that  $Z_t \sim \mathrm{Bernoulli}(t^{-\gamma})$ and   $$
     \bpolmix_t(\cdot | s, h) \;\leftarrow\;
     \begin{cases}
       c_t(\cdot | s, h), & \text{if } Z_t = 1,\\
       \bpol_t(\cdot | s, h), & \text{if } Z_t = 0,
     \end{cases}
     \quad \forall (s,h)\in \cS \times [H].
  $$
 We introduce the event 

\begin{equation}
\label{eq:sum-kappa}
	\cE_{\sect}^Z(T) \ceq \lb \forall (s,a,h), \, \forall t\leq T,\;   \sum_{k=1}^t  \veps_k \cdot \indp{\tilde s^k=s, \tilde a^k =a, \tilde h^k = h}  \geq   \frac{1}{2SAH} \sum_{k=1}^t \veps_k   - \log(2SAHT^2) \rb.
\end{equation} 

We further define
\begin{equation}
\label{eq:event-forced}
	\cE_{\sect}(T)
\ceq
\cE_{\sect}^{p}(T)
\cap
\cE_{\sect}^{\mu}(T)
\cap
\cE_{\sect}^{\text{cnt}}(T) \cap \cE_{\sect}^Z(T) 
\end{equation}

\newcommand{\test}{\sect}

\subsection{Online Regret of UCBVI with Changing Rewards}
We recall that a state--action pair $(s,a)$ is said to be reachable at stage $h$ if $\max_{\rho} w_h^\rho(s,a) > 0$, i.e., if there exists a policy that visits $(s,a)$ at stage $h$ with positive probability.

We now show that, on event $\cE_{\sect}(T)$, every reachable state--action pair is visited sufficiently often to ensure the consistency of the empirical estimates used by the algorithm.

To enforce exploration, we introduce a forced exploration mechanism based on a single step of UCBVI with a known reward function defined as the indicator of a randomly selected state--action--stage triplet. Formally, at episode $t$, a triplet $(\tilde s^t, \tilde a^t, \tilde h^t)$ is sampled uniformly at random, and we define the reward kernel $\kernforce_t$ as
\begin{equation}
\label{eq:reward_kern}
\kernforce_t(s,a,h) \;\ceq\; \indp{ s = \tilde s^t,\, a = \tilde a^t,\, h = \tilde h^t } \:.
\end{equation}

Let $ \mdp_t^\mathrm{f} \ceq (\cS, \cA, H, p, \kernforce_t, s_\mathrm{init})$ denote the corresponding MDP. For any policy $\pi$, the value at the initial state satisfies
\[
V_{t, 0}^{\pi} = w_{\tilde h^t}^\pi(\tilde s^t, \tilde a^t),
\]
that is, the visitation probability of $(\tilde s^t, \tilde a^t)$ at stage $\tilde h^t$ under policy $\pi$.

Consequently, an optimal policy for $\mdp_t^\mathrm{f}$ maximizes the visitation probability of the sampled triplet $(\tilde s^t, \tilde a^t, \tilde h^t)$.

Finally, we note that maximizing $w_h^\pi(s,a)$ is equivalent to maximizing $w_h^\pi(s)$, since the action chosen at stage $h$ does not affect the probability of reaching state $s$ at that stage. Therefore,
\[
W_h(s,a) \ceq \max_{\pi} w_h^\pi(s,a) = \max_{\pi} w_h^\pi(s).
\]

\subsubsection{Algorithm}

The forced exploration policy is computed via a single optimistic planning step using the empirical transition kernel $\hat p_t$. The goal is to construct an optimistic estimate of the optimal $Q$-function in the MDP $\mdp^\mathrm{f}_t \ceq (\kernforce_t, p)$. 

In this section, let $\bar Q_t$ and $\bar V_t$ denote the resulting optimistic state--action and value functions. For notational simplicity, we omit the explicit dependence on the reward kernel $\kernforce_t$.

We define the exploration bonus
\begin{equation}
\bonus_t(s,a,h) \;\ceq\;
\sqrt{
\frac{2 \, \var_{\hat p_{t,h}}\!\big[\bar V_{t,h+1}\big](s,a)\; \beta^p\!\big(n_t(s,a,h),\, 1/t^3\big)}
{\maxp{1} {n_t(s,a,h)}}
}
\;+\;
\frac{2\, \beta^p\!\big(n_t(s,a,h),\, 1/t^3\big)}
{3\, \maxp{1}{ n_t(s,a,h)}} \:.
\end{equation}

We then construct the optimistic $Q$-function as
\begin{equation}
\label{eq:q-opt-fe}
\bar  Q_{t,h}(s,a)
\;\triangleq\;
\kernforce_t(s,a,h)
\;+\;
\hat p_{t,h} \bar V_{t,h+1}(s,a)
\;+\;
\big( \bonus_t(s,a,h) \wedge 1 \big).
\end{equation}

The clipping by $1$ ensures consistency with the reward range.

The exploration policy $c_t$ is defined as the greedy policy with respect to $\bar Q_t$, namely
\[
c_{t}(s, h)
\;\ceq\;
\argmax_{a \in \cA} \bar Q_{t,h}(s,a),
\quad \forall (s,h) \in \cS \times \lbrack H \rbrack.
\]

In the sequel, we denote by $V_{t, \cdot}^{\rho}$ the value function in $\mdp_t^\mathrm{f}$, and, by $\bar V_{t, \cdot}^\rho$ the value function in the "optimistic MDP'' $\bar \mdp_t^\mathrm{f} \ceq (\kernforce_t + \bonus_t, \hat p_t)$. The associated $Q$-value functions are defined similarly. 

\bigskip 
\hrulefill
\begin{algorithm}[H]
\Fn{\textsc{ComputeForcedExplorationPolicy}$(t, \hat p_t, n_t)$}{
\caption{\textsc{Forced Exploration Helper Function}\label{alg:appx_forcexp}}
\DontPrintSemicolon
\SetInd{0.5em}{1em}   
\SetNlSty{textbf}{}{} 
\SetAlgoNlRelativeSize{-1} 
\LinesNumbered
\SetAlgoLined
\SetAlgoVlined

Sample $(\tilde s^t,\tilde a^t,\tilde h^t)$ uniformly at random from $\cS \times \cA \times [H]$\;

Set reward function $\kernforce_t(s,a,h) \gets \indp{ s=\tilde s^t,\, a=\tilde a^t,\, h=\tilde h^t }$\;

Initialize $\bar V_{t, H+1}(s) \equiv 0,\;\forall s \in \cS$\;

\For{$h = H, H-1, \ldots, 1$}{
  \For{$(s,a) \in \cS \times \cA$}{
    $\bar Q_{t, h}(s,a) \gets
      \kernforce_t(s,a,h)
      + \hat p_{t, h} \bar V_{t, h+1}(s,a)
      + \bonus_t(s,a,h)$\;
  }
  $\bar V_{t, h}(s) \gets \max_{a \in \cA}\bar  Q_{t, h}(s,a), \quad \forall s \in \cS$\; 
}

\tcp*{Greedy policy with respect to optimistic $Q$}
\For{$(s,h) \in \cS \times [H]$}{
  $ c_{t}(s, h) \gets \argmax_{a \in \cA} \bar  Q_{t, h}(s,a)$\;
}
\Return policy $c_{t}$\;
}\end{algorithm}

\bigskip
\subsubsection{Regret Upper Bound}
We define the well-explored set for a fixed $T$
\begin{equation}
\label{eq:eq-ref-zt}
\cZ_t
\ceq
\Bigl\{
(s,a,h):\;
\bar n_t(s,a,h) \geq 4 \log(2SAHT^2)
\Bigr\}.
\end{equation}

On the event $\cE_{\sect}^{\text{cnt}}(T)$, we have for all $(s,a,h) \in \cZ_t$:
\begin{equation}
\label{eq:op-lo-pm}
n_t(s,a,h)
\geq
\tfrac{1}{4}\,\bar n_t(s,a,h).
\end{equation}

We also introduce for any sequence $(\kappa_t)_{t\geq 1}$, the sum operator $\kappa_{1:T} \ceq  \sum_{t=1}^T \kappa_t$.

We prove the following result. 

\medskip

\begin{lemma}
\label{thm:forceexp_reg}
Let $(S_t)_{t \geq 1}$ be a sequence of binary random variables. On the event $\cE_{\sect}(T)$, the policies $(c_t)_{t\geq 1}$ produced by Algorithm~\ref{alg:appx_forcexp} satisfy for any $K \in (\sqrt T, T)$, 
\[
\sum_{t=1}^K S_t \varepsilon_t
\Bigl(
V_{t,0}^{\star}
-
V_{t,0}^{c_t}
\Bigr)
\;\leq\;
\sqrt{T^{1-\gamma}} + O(\log T) + O\!\lp 
\sqrt{\beta^p(T, 1/T^2)}
\sqrt{
1 + \sum_{t=1}^K \kappa_t \, w_h^{c_t}(s,a)
}\rp, 
\] where $\kappa_t = S_t \veps_t$.
\end{lemma}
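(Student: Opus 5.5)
The plan is the standard UCBVI regret argument, weighted by $\kappa_t = S_t\veps_t \in [0,1]$ and restricted to $t \le K$.

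\textbf{Step 1: decomposition and optimism.} For each $t$, write
\[
V_{t,0}^\star - V_{t,0}^{c_t} = \bigl(V_{t,0}^\star - \bar V_{t,0}\bigr) + \bigl(\bar V_{t,0} - V_{t,0}^{c_t}\bigr).
\]
The first bracket is nonpositive on $\cE_{\sect}(T)$ whenever $t \ge \sqrt T$. To see this, repeat the backward induction of Lemma~\ref{lem:optimisitic_Q} with the max operator in place of $\bpol_t$ and with the known reward $\kernforce_t$. It uses $t^3 \ge T^2$, so $\beta^p(n,1/t^3) \ge \beta^p(n,1/T^2)$, together with monotonicity of the Bernstein operator. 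The clipping $\bonus_t \wedge 1$ is harmless because $V^\star_{t,h} \le 1$.

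The rounds $t < \sqrt T$ contribute at most $\sum_{t<\sqrt T}\veps_t \lesssim T^{(1-\gamma)/2}$. This is where the term $\sqrt{T^{1-\gamma}}$ in the statement comes from.

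\textbf{Step 2: bound the optimistic gap.} Since $c_t$ is greedy for $\bar Q_t$, the difference $\bar V_{t,0} - V_{t,0}^{c_t}$ unrolls along trajectories of $c_t$ in the true MDP:
\[
\bar V_{t,0} - V_{t,0}^{c_t} = \sum_{s,a,h} w_h^{c_t}(s,a)\Bigl[\bigl(\bonus_t \wedge 1\bigr)(s,a,h) + \bigl(\hat p_{t,h} - p_h\bigr)\bar V_{t,h+1}(s,a)\Bigr].
\]
On $\cE_{\sect}^p(T)$, the transition error term is dominated by a constant times the bonus, using Bernstein and Pinsker with $\bar V \in [0,H]$. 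In fact $\bar V \le 1 + $ bonuses, which only changes constants. So the per-round gap is $O\bigl(\sum_{s,a,h} w_h^{c_t}(s,a)\,\min\{1, \sqrt{\beta^p/n_t} + \beta^p/n_t\}\bigr)$.

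\textbf{Step 3: sum over rounds.} Multiply by $\kappa_t$ and sum over $t \le K$, splitting according to $\cZ_t$ from \eqref{eq:eq-ref-zt}.

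\emph{Triplets outside $\cZ_t$.} Bound the bonus by $1$. For $(s,a,h) \notin \cZ_t$ we have $\bar n_t(s,a,h) < 4\log(2SAHT^2)$, and $\kappa_t w_h^{c_t}(s,a)$ is at most the increment of $\bar n_t$ (with $S_t$ absorbed). So the total contribution of these triplets is $O(SAH\log T) = O(\log T)$.

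\emph{Triplets in $\cZ_t$.} Use $n_t \ge \bar n_t/4$ and $\bar n_t \ge \sum_{k<t}\kappa_k w_h^{c_k}(s,a)$. The elliptic potential lemma (Lemma~\ref{lem:elliptic-potential}) gives
\[
\sum_{t \le K} \frac{\kappa_t\, w_h^{c_t}(s,a)}{\max\{1, \sum_{k<t}\kappa_k w^{c_k}_h(s,a)\}} \le \log\Bigl(1 + \sum_{t\le K}\kappa_t w_h^{c_t}(s,a)\Bigr),
\]
which handles the $\beta^p/n_t$ part, at cost $O(\beta^p \log T)$. Combining Cauchy--Schwarz with the square-root potential bound gives
\[
\sum_{t\le K} \frac{\kappa_t\, w_h^{c_t}}{\sqrt{\max\{1,\cdot\}}} \le 2\sqrt{1 + \sum_{t\le K}\kappa_t w_h^{c_t}(s,a)},
\]
which handles the $\sqrt{\beta^p/n_t}$ part. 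This produces the last term of the statement, up to summing the finitely many $(s,a,h)$ into the $O(\cdot)$.

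\textbf{Main obstacle.} The difficulty is Step 2 together with the weighting. The bonus is built from $\hat p_t$ and a variance of $\bar V_{t,h+1}$, not $V^\star$, and the potential arguments need counts driven by the weighted sequence $\kappa_t w^{c_t}$ rather than by all episodes.

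For the weighting, I would first try to lower bound $n_t$ through $\cE_{\sect}^{\mathrm{cnt}}(T)$ by the $c$-part of $\bar n_t$ alone. Since $\bar n_t$ dominates $\sum_{k<t}\veps_k w^{c_k}_h(s,a)$, larger counts only shrink the bonus, so the monotone comparison suffices.

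For the variance mismatch, I would bound $\var_{\hat p}[\bar V] \le 1$ crudely, using the clipping. This suffices at the stated $\sqrt{\beta^p}$ order.
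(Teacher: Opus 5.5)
Your proposal follows the paper's proof step for step. It uses the same split into early and late rounds, optimism of $\bar Q_t$, the greedy value-difference bound by bonuses along $c_t$, the split on $\cZ_t$, and the $\kappa$-weighted potential lemmas via $\bar n_t\ge\sum_{k<t}\kappa_k w_h^{c_k}(s,a)$. Your explicit Pinsker treatment of $(\hat p_{t,h}-p_h)\bar V_{t,h+1}$ and your increment argument outside $\cZ_t$ are more careful than the paper's one-line versions.

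One step fails as written, and the paper's proof contains the same slip. From $t\ge\sqrt T$ you only get $t^3\ge T^{3/2}$, not $t^3\ge T^2$. The comparison $\beta^p(n,1/t^3)\ge\beta^p(n,1/T^2)$ is what lets the bonus dominate the deviations controlled by the transition concentration event at level $1/T^2$, and it requires $t\ge T^{2/3}$. Hence on the stated event, optimism (and so $V_{t,0}^\star\le\bar V_{t,0}$) is only guaranteed for $t\ge T^{2/3}$. The rounds you must then bound trivially cost up to $\sum_{t<T^{2/3}}t^{-\gamma}\asymp T^{2(1-\gamma)/3}$, which exceeds the claimed $\sqrt{T^{1-\gamma}}$.

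To recover $\sqrt{T^{1-\gamma}}$ you also need optimism on $[\sqrt T,T^{2/3})$. Two possible routes:
\begin{itemize}
\item Intersect with the transition event at horizon $T^{3/4}$. It covers every $t\le T^{3/4}$ with threshold $\beta^p(n,T^{-3/2})\le\beta^p(n,t^{-3})$ for $t\ge\sqrt T$, at the cost of failure probability $T^{-3/2}$ instead of $T^{-2}$.
\item Calibrate the bonus at $1/t^4$.
\end{itemize}
Otherwise you only get the first term $O(T^{2(1-\gamma)/3})$. That term is not $o(t^{1-\gamma})$ uniformly over $t\in(T^{2/3},T)$, so its use in Lemma~\ref{lem:forced_exp} would need a narrower window.

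Separately, your $\beta^p/n_t$ part produces an $O(\beta^p\log T)=O(\log^2 T)$ term that is absent from the displayed bound. It must be absorbed into the polynomial first term for large $T$, which the paper also does without comment.
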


\begin{proof} We decompose
\begin{equation}
\label{eq:eq-def-to-bound} 
\sum_{t=1}^K \kappa_t 
\bigl( 
V_{t,0}^{ \star}  
- V_{t,0}^{ c_t} 
\bigr)
\;\leq\;
\underbrace{\sum_{t=1}^{\sqrt{T}} \kappa_t}_{(I)}
+
\underbrace{\sum_{t=\sqrt{T}+1}^K 
\kappa_t 
\bigl( 
\bar V_{t, 0}  
- V_{t,0}^{ c_t} 
\bigr)}_{(II)}.
\end{equation}

We focus on controlling $(II)$.

For $t \geq \sqrt{T}$, we have $1/t^3 \leq 1/T^2$. Since $\delta \mapsto \beta^p(\cdot,\delta)$ is non-increasing, the exploration bonus used in Algorithm~\ref{alg:appx_forcexp} dominates the one associated with confidence level $1/T^2$. Therefore, when $\cE_{\sect}^p(T)$ holds, from classical UCBVI analysis technique (see e.g. \citet{pmlr-v70-azar17a}) 
\[
\bar Q_{t, h}(s,a) \;\geq\; Q_{t, h}^{\star}(s,a),
\quad \forall (s,a,h), \; \forall t \in (\sqrt{T}, T).
\]
By backward induction, this implies
\[
\bar V_{t, h}(s) \;\geq\; V_{t, h}^{\star}(s),
\quad \forall (s,h).
\]

Since $c_t$ is greedy with respect to $\bar Q_{t}$, standard decomposition with value difference lemma yields 
\[
\bar V_{t, 0} - V_{t, 0}^{c_t}
\;\leq\;
\sum_{s,a,h} 2\, w_h^{c_t}(s,a)\, \bigl( \bonus_t(s,a,h) \wedge 1 \bigr).
\]
We then obtain
\[
(II)
\;\leq\;
\sum_{t=1}^K \sum_{s,a,h}
2\, \kappa_t\, w_h^{c_t}(s,a)\, \bigl( \bonus_t(s,a, h) \wedge 1 \bigr).
\]

For $(s,a,h) \notin \cZ_t$, we have $\bar n_t(s,a,h) < 4 \log(SAHT^2)$, hence
\begin{align}
\sum_{t=1}^K \sum_{(s,a,h)\notin \cZ_t}
2 \kappa_t w_h^{c_t}(s,a)\bigl( \bonus_t(s,a, h) \wedge 1 \bigr)
&\leq
\sum_{t=1}^K \sum_{s,a,h}
2 \varepsilon_t w_h^{c_t}(s,a)
\indp{ \sum_{k=1}^{t-1} \varepsilon_k w_h^{c_k}(s,a) < 4 \log(SAHT^2) } \notag\\
&\leq 2 + 8 \log(SAHT^2).
\label{eq:sum-nozt}
\end{align}

For the well-explored triplets, let us define
\[
(III) \;\ceq\;
\sum_{t=1}^K \sum_{(s,a,h)\in \cZ_t}
2 \kappa_t w_h^{c_t}(s,a)\, \bigl( \bonus_t(s,a,h) \wedge 1 \bigr).
\]

\smallskip
\noindent

Using Eq.~\eqref{eq:op-lo-pm}
\begin{align}
\label{eq:term-xxcw-df}
\sum_{t=1}^K \sum_{(s,a,h)\in \cZ_t}
\kappa_t w_h^{c_t}(s,a)
\frac{\beta^p(n_t(s,a,h),1/T^2)}{\maxp{1}{n_t(s,a,h)}}
&\leq
O \!\lp  \beta^p(T,1/T^2)
\log\!\biggl(1 + \tfrac14 \sum_{t=1}^K \kappa_t w_h^{c_t}(s,a)\biggr)  \rp
\end{align}
which follows by elliptic potential (Lemma~\ref{lem:elliptic-potential}), the definition of $\cZ_t$, and monotonicity of $\beta^p$. 

\smallskip
\noindent

Similarly, since $\bar V_{t, h+1}(\cdot) \in (0,1)$, we have
$\var_{\hat p_{t,h}}\!\big[\bar V_{t, h+1}\big](s,a) \leq 1$. Hence,
\begin{align*}
&\sum_{t=1}^K \sum_{(s,a,h) \in \cZ_t}
\kappa_t \, w_h^{c_t}(s,a)
\sqrt{
\frac{
2\, \var_{\hat p_{t,h}}[\bar  V_{t, h+1}](s,a)\, \beta^p(n_t(s,a,h), 1/T^2)
}{
\maxp{ 1}{ n_t(s,a,h)}
}
} \leq \\
&\quad 
\sum_{t=1}^K \sum_{(s,a,h) \in \cZ_t}
\kappa_t \, w_h^{c_t}(s,a)
\sqrt{
\frac{
2\, \beta^p(T, 1/T^2)
}{
\maxp{ 1}{ \tfrac{1}{4}\,\bar n_t(s,a,h) }
}
} \leq \\
&\quad 
4 \sqrt{2\, \beta^p(T, 1/T^2)}
\sum_{t=1}^K \sum_{s,a,h}
\frac{
\kappa_t \, w_h^{c_t}(s,a)
}{
\sqrt{
\maxp{ 1}{\tfrac{1}{4} \sum_{k=1}^{t-1} \kappa_k w_h^{c_k}(s,a)}
}
},
\end{align*}
where the last inequality uses 

$$ \bar n_t(s,a,h) \geq \sum_{k=1}^{t-1} \kappa_k  w_h^{c_t}(s,a), $$
which follows from \myeqref{eq:def-barn}, where $\kappa_k = \veps_k S_k$ and $S_k$ is binary. 
 
Applying Lemma~16 of \citet{pmlr-v132-kaufmann21a}, we obtain
\[
\sum_{t=1}^K \sum_{(s,a,h) \in \cZ_t}
\kappa_t \, w_h^{c_t}(s,a)
\sqrt{
\frac{
2\, \var_{\hat p_{t,h}}[\bar V_{t, h+1}](s,a)\, \beta^p(n_t(s,a,h), 1/T^2)
}{
\maxp{ 1}{ n_t(s,a,h)}
}
} \leq O\!\lp 
\sqrt{\beta^p(T, 1/T^2)}
\sqrt{
1 + \sum_{t=1}^K \kappa_t \, w_h^{c_t}(s,a)
}\rp .
\]

Combining this bound with \myeqref{eq:term-xxcw-df} yields, on the event $\cE_{\sect}(T)$,
\begin{equation}
\label{eq:sum-op-ui}
(III) \;\leq\;
O\!\lp 
\sqrt{\beta^p(T, 1/T^2)}
\sqrt{
1 + \sum_{t=1}^K \kappa_t \, w_h^{c_t}(s,a)
}\rp 
\end{equation}

Finally, combining Eq.~\eqref{eq:eq-def-to-bound}, Eq.~\eqref{eq:sum-nozt}, and Eq.~\eqref{eq:sum-op-ui} concludes the proof.
\end{proof}

\subsection{Suffficient Exploration of Reachable States}
\label{subsec:explore_reachable} 
In this section, we prove sufficient exploration of reachable state--actions. We recall that 

$$ W_h(s,a) \ceq \sup_{\rho} w_h^\rho(s,a), $$
is the maximum visitation probability of  triplet $(s,a,h)$. 

We prove the following lemma. 

\medskip

\begin{lemma}
\label{lem:forced_exp}
Let $\gamma\in(0,1)$. There exists an explicit finite constant $k_0<\infty$, 
such that on the event $\cE_{\sect}(T)$, for all $T\geq k_0$, all 
$t\in(T^{2/3},T)$, and all $(s,a,h)$,
\[n_t(s,a,h) \geq \frac{W_h(s,a)}{4SAH} t^{-\gamma}.\]
The explicit expression of $k_0$ is given in the proof.
\end{lemma}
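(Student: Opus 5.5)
The plan is to prove the stronger bound $n_t(s,a,h) \geq \frac{W_h(s,a)}{4SAH}\, t^{1-\gamma}$. This implies the displayed inequality for every $t \geq 1$, and I suspect the exponent $-\gamma$ in the statement is a typo for $1-\gamma$. If $W_h(s,a)=0$ there is nothing to prove, so fix a reachable triplet $(s,a,h)$. The chain of reductions is: counts $\to$ pseudo-counts $\to$ forced-exploration visitations $\to$ optimal visitation $W_h(s,a)$, all on the event $\cE_{\sect}(T)$.

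\textbf{Step 1 (counts to pseudo-counts).} On $\cE^{\mathrm{cnt}}_{\sect}(T)$ we have $n_t(s,a,h) \geq \frac12 \bar n_t(s,a,h) - \log(2SAHT^2)$. By \eqref{eq:def-barn}, and because the triplet indicator lies in $\{0,1\}$,
$$\bar n_t(s,a,h) \geq \sum_{k=1}^{t-1} \kappa_k\, w_h^{c_k}(s,a) =: X_t, \qquad \kappa_k \ceq \veps_k \indp{\tilde s^k=s,\tilde a^k=a,\tilde h^k=h}.$$

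\textbf{Step 2 (regret of the forced policies).} Apply Lemma~\ref{thm:forceexp_reg} with $S_k \ceq \indp{\tilde s^k=s,\tilde a^k=a,\tilde h^k=h}$ and $K=t-1$. This is allowed because $t > T^{2/3} > \sqrt T$. On episodes with $S_k=1$ the forced reward is exactly $\kernforce_k$ for our triplet, so $V^\star_{k,0}=W_h(s,a)$ and $V^{c_k}_{k,0}=w_h^{c_k}(s,a)$. Writing $\beta \ceq \beta^p(T,1/T^2)$, this gives
$$W_h(s,a)\sum_{k<t}\kappa_k - X_t \leq \sqrt{T^{1-\gamma}} + O(\log T) + O\bigl(\sqrt{\beta}\sqrt{1+X_t}\bigr).$$
Treating this as a quadratic inequality in $\sqrt{1+X_t}$ yields $X_t \geq W_h(s,a)\sum_{k<t}\kappa_k - \sqrt{T^{1-\gamma}} - O(\sqrt{\beta\, W_h(s,a)\sum_{k<t}\kappa_k}) - O(\log T + \beta)$.

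\textbf{Step 3 (lower bound on $\sum\kappa_k$).} On $\cE^Z_{\sect}(T)$, $\sum_{k<t}\kappa_k \geq \frac{1}{2SAH}\sum_{k<t}k^{-\gamma} - \log(2SAHT^2)$. Since $\sum_{k<t}k^{-\gamma} \geq \frac{(t-1)^{1-\gamma}-1}{1-\gamma} \geq (t-1)^{1-\gamma}-1$, this main term is of order $t^{1-\gamma}/(2SAH)$.

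\textbf{Step 4 (absorbing lower-order terms and choosing $k_0$).} Combining Steps 1--3 gives
$$n_t \geq \frac{W_h(s,a)}{4SAH}(t-1)^{1-\gamma} - \tfrac12\sqrt{T^{1-\gamma}} - O\bigl(\sqrt{\beta\, t^{1-\gamma}}\bigr) - O(\log T + \beta).$$
Because $t > T^{2/3}$, $t^{1-\gamma} > T^{2(1-\gamma)/3}$, which dominates $T^{(1-\gamma)/2}$. The remaining terms are $O(\sqrt{\log T\; t^{1-\gamma}})$ or polylogarithmic in $T$, hence $o(t^{1-\gamma})$ uniformly over $t\in(T^{2/3},T)$. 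The slack is obtained by replacing the factor $\frac{1}{4SAH}$ by a slightly larger constant in the main term; equivalently, one proves $n_t \geq \frac{W_h(s,a)}{4SAH}(1+c)\,t^{1-\gamma}$ minus the error terms for some fixed $c>0$. The constant $k_0$ is then defined as the smallest $T$ beyond which $\frac{c\,W_{\min}}{4SAH}T^{2(1-\gamma)/3}$ exceeds the sum of the error terms evaluated at $T$, where $W_{\min}$ is the minimum of $W_h(s,a)$ over reachable triplets. This makes $k_0$ explicit and finite, since the right-hand side grows only like $T^{(1-\gamma)/2}$ plus polylogarithmic terms.

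The main obstacle is Step 4. The regret bound in Lemma~\ref{thm:forceexp_reg} is stated with $T$-dependent terms such as $\sqrt{T^{1-\gamma}}$, while the claim must hold for every $t$ in the window. This is exactly why the window starts at $T^{2/3}$ rather than $\sqrt T$: the exponent comparison $2(1-\gamma)/3 > (1-\gamma)/2$ has to be tracked carefully, and $k_0$ inevitably depends on $W_{\min}$ and $\gamma$.
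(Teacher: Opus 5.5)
Your chain is the paper's: the counting event, then Lemma~\ref{thm:forceexp_reg} with $S_k$ the indicator of the sampled triplet (so that $V^\star_{k,0}=W_h(s,a)$), then $\cE^{Z}$, then comparing $t^{1-\gamma}$ with $T^{(1-\gamma)/2}$ using $t>T^{2/3}$. You are also right that the intended bound is $\frac{W_h(s,a)}{4SAH}t^{1-\gamma}$, since that is what the proof's definition of $T_0(s,a,h)$ and Lemma~\ref{lem:estimation} use.

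However, Step~4 as written does not close. Your bookkeeping $\frac12\cdot\frac{1}{2SAH}=\frac{1}{4SAH}$ gives a leading constant exactly equal to the target constant. In Step~3 you then discarded the factor $\frac{1}{1-\gamma}$. What you actually derive is therefore
\[
n_t(s,a,h)\geq\frac{W_h(s,a)}{4SAH}\bigl((t-1)^{1-\gamma}-1\bigr)-(\text{errors}),
\]
whose main term already lies strictly below $\frac{W_h(s,a)}{4SAH}t^{1-\gamma}$. Showing the errors are $o(t^{1-\gamma})$ cannot help when there is no margin to absorb them. Your sentence ``the slack is obtained by replacing $\frac{1}{4SAH}$ by a slightly larger constant'' asserts the factor $(1+c)$, but nothing in Steps~1--3 as you stated them produces it. Matching the order $t^{1-\gamma}$ is not enough here, because the target has the same order; you need a strictly larger leading constant.

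The missing margin is exactly the factor you threw away. Since $\sum_{k=1}^{t-1}k^{-\gamma}\geq\int_1^{t}x^{-\gamma}\,\mathrm{d}x=\frac{t^{1-\gamma}-1}{1-\gamma}$, the main term is $\frac{W_h(s,a)}{4SAH(1-\gamma)}\bigl(t^{1-\gamma}-1\bigr)$. This exceeds the target by $\frac{\gamma}{1-\gamma}\cdot\frac{W_h(s,a)}{4SAH}t^{1-\gamma}-O(1)$, so $c=\gamma/(1-\gamma)>0$. This is what the paper uses when it invokes $\veps_{1\ldots t}\sim t^{1-\gamma}/(1-\gamma)$ in defining $k_0(s,a,h)$. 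Its display \eqref{eq:almost_end} writes $\frac{W_h(s,a)}{2SAH}$, dropping the $\frac12$ from the counting event, but the argument only needs the $\frac{1}{1-\gamma}$ margin, and your $\frac{1}{4SAH}$ is the correct bookkeeping.

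With this correction your Step~4 goes through. For $t>T^{2/3}$ the margin is at least $\frac{\gamma\,W_{\min}}{4SAH(1-\gamma)}T^{2(1-\gamma)/3}-O(1)$. The errors are $O\bigl(T^{(1-\gamma)/2}\sqrt{\log T}\bigr)$ plus polylogarithmic terms, so $k_0$ is finite. Note that $k_0$ degenerates as $\gamma\to 0$: it depends on $\gamma$ through the margin $\gamma/(1-\gamma)$, not only through the exponent comparison you emphasized.
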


\begin{proof}
Let us fix a triplet $(s,a,h)$ and $t\in (T^{2/3}, T)$. We note that the claim is trivially satisfied for unreachable triplets. Hence, in the sequel, we assume $W_h(s,a,h)>0$. 

We have 
\begin{eqnarray*}
\bE \bigl[\indp{ s_{h}^t =s , a_{h}^t=a } \big |\cH_{t-1} \bigr] &=& (1-\veps_t) \;  \bE \bigl[  \indp{ s_{h} =s , a_{h}=a } \big | \cH_{t-1}, Z_t = 0 \bigr]   + \\ &&  \veps_t \; \bE \bigl[  \indp{ s_{ h}^t =s , a_{h}^t =a } \big |\cH_{t-1}, Z_t = 1 \bigr], \\
&\geq &   \veps_t  \,  w_h^{c_t}(s,a) \:. 
\end{eqnarray*}

Next, when $\cE_{\sect}^{\text{cnt}}(T)$ holds, \begin{eqnarray*}
 	n_t(s,a,h) &\geq& \frac12 \sum_{k=1}^{t-1} \bE\bigl[\indp {s_{h}^k = s , a_{h}^k = a } \big | \cH_{k-1} \bigr] - \log(6 SAHT^2)\:,\\ 
 	&\geq& \frac12 \sum_{k=1}^{t-1}  \veps_k   w_h^{c_k}(s,a) - \log(6 SAHT^2),
 	\end{eqnarray*} 
and the right-hand side is related to the dynamic regret of the UCBVI forced exploration routine as  
 \begin{eqnarray}
 \label{eq:-er-op-lm}
 	\sum_{k=1}^{t-1}  \veps_k   w_h^{c_i}(s,a)  &\geq& \sum_{k=1}^{t-1} \veps_k \indp{ \tilde s^k=s, \tilde a^k =a, \tilde h^k = h } V_{t,0}^{ c_k}. 
 \end{eqnarray}

Thanks to Lemma~\ref{thm:forceexp_reg}, on the event $\cE_{\sect}(T)$, with  $ S_k \ceq   \indp{\tilde s^k=s, \tilde a^k =a, \tilde h^k = h },$ and $\kappa_k  \ceq \veps_k S_k$
\begin{equation*}
	\sum_{k=1}^{t-1} \kappa_k V_{k,0}^{ c_k} \geq \sum_{k=1}^{t-1} \kappa_k V_{k, 0}^{\star} -  O(\sqrt{T^{1 - \gamma}}) - O(\log T) - O\!\lp 
\sqrt{ t^{1-\gamma} \cdot \beta^p(T, 1/T^2)} \rp. 
	\end{equation*} 
	
Rearranging the above displays yields 
\begin{align*}
	&n_t(s,a, h) \geq \\ & \frac12 {\sum_{k=1}^{t-1} \biggl[ \veps_k \indp{\tilde s^k=s, \tilde a^k =a, \tilde h^k = h } \biggr]}  \max_{\rho} w_{h}^\rho(s,a) -   O(\sqrt{T^{1 - \gamma}}) - O(\log T) - O\!\lp 
\sqrt{t^{1-\gamma} \cdot  \beta^p(T, 1/T^2)}
\rp.
\end{align*}

\medskip
 We remark that on the event $\cE^Z_{\sect}(T)$, 
$$\sum_{k=1}^{t-1} \veps_k \cdot \indp{\tilde s^k=s, \tilde a^k =a, \tilde h^k = h}   \geq \frac1{2SAH} \sum_{k=1}^{t-1} \veps_k - \log(2SAHT^2) $$

we have for any $ t \in (T^{2/3} , T)$, and any triplet $(s,a,h)$ 
\begin{eqnarray}
\label{eq:almost_end}
	n_{t}(s,a, h) &\geq& \frac{W_{h}(s,a)}{2SAH} \veps_{1\ldots t-1} 
	-   O\!\lp \sqrt{t^{\frac{3(1-\gamma)}{2}}}\rp - O(\log t) - O\!\lp 
\sqrt{t^{1-\gamma}\cdot \beta^p(t^{3/2}, 1/t^3)}
\rp.
\end{eqnarray}

Now, let us define 
\begin{align}
\begin{split}
	T_0({s,a,h}) \ceq \sup \lb t\geq 1:  n_{t}(s,a,h) \leq \frac{W_h(s,a)}{4SAH}t^{1-\gamma} \rb.
\end{split}
\end{align}

From \myeqref{eq:almost_end}, and, as $\veps_{1\ldots t} \sim \frac{t^{1-\gamma}}{1-\gamma}$, we know that on $\cE_{\sect}(T)$, $k_0(s,a,h)$ is upper bounded by a deterministic constant  $k_0(s,a,h)$ defined as 
$$ k_0(s,a,h)^{2/3} \ceq \sup\lb t \geq 1: \frac{\veps_{1\ldots t}}{2SAH} W_{h}(s,a) - \frac{t^{1-\gamma}}{4SAH} W_{h}(s,a)  \leq      O\!\lp \sqrt{t^{\frac{3(1-\gamma)}{2}}}\rp + O(\log t)  + O\!\lp 
\sqrt{t^{1-\gamma} \cdot \beta^p(t^{3/2}, 1/t^3)} \rp \rb.$$ 
 Indeed, the expressions in $O(\cdot)$ can be recovered from the proof to make the above well-defined, moreover, it is finite for any $\gamma \in (0, 1)$ and any reachable $(s,a,h)$. 

Next we define 
$$ k_0 \; \ceq\; \max_{(s,a,h) : W_h(s,a) >0} k_0({s,a,h}),$$
which concludes the proof. 
\end{proof}

\subsection{Guarantees on Model Estimation}
\label{subsec:model_est}
We prove that after a finite number of episodes the empirical optimal 
policy coincides with the true optimal policy on most episodes, when $\cE_{\sect}(T)$ holds.
The key tool is the following value difference lemma.

\begin{lemma}[Value Difference]
\label{lem:val-diff-lem}
For any policy $\pi\in\Pi$,
\[
V_0^\pi - \hat{V}_{t,0}^\pi
  = \bE_\mdp^\pi\!\left[\sum_{h=1}^H b_t(s_h,a_h, h)\right],
\]
where $b_t(s,a, h) \ceq \mu(s,a, h)-\muh_t(s,a,h)
+\bigl(p_h-\hat{p}_{t, h} \bigr)\hat{V}_{t, h+1}^{\pi}(s,a)$.
\end{lemma}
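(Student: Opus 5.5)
We will prove the identity by a telescoping argument over the horizon, writing the value functions of $\pi$ in $\mdp$ and in $\empmdp_t$ through their Bellman equations and comparing them stage by stage. Concretely, for every $h\in[H]$ and $s\in\cS$ we have
\[
V_h^\pi(s)=\sum_a \pi_h(a\mid s)\bigl[\mu(s,a,h)+p_hV_{h+1}^\pi(s,a)\bigr],\qquad
\hat V_{t,h}^\pi(s)=\sum_a \pi_h(a\mid s)\bigl[\muh_t(s,a,h)+\hat p_{t,h}\hat V_{t,h+1}^\pi(s,a)\bigr],
\]
with $V_{H+1}^\pi=\hat V_{t,H+1}^\pi\equiv 0$.

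The key step is the one-step decomposition obtained by adding and subtracting $p_h\hat V_{t,h+1}^\pi(s,a)$:
\[
V_h^\pi(s)-\hat V_{t,h}^\pi(s)=\sum_a\pi_h(a\mid s)\Bigl[b_t(s,a,h)+p_h\bigl(V_{h+1}^\pi-\hat V_{t,h+1}^\pi\bigr)(s,a)\Bigr].
\]
Here $b_t(s,a,h)=\mu(s,a,h)-\muh_t(s,a,h)+(p_h-\hat p_{t,h})\hat V_{t,h+1}^\pi(s,a)$, exactly as in the statement. The important choice is to attach the transition mismatch to the \emph{empirical} next-stage value, so that the remaining difference term propagates under the \emph{true} kernel $p_h$. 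Consequently the expectation is taken over trajectories of $\pi$ in $\mdp$.

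We then define $D_h(s)\ceq V_h^\pi(s)-\hat V_{t,h}^\pi(s)$. The display says $D_h(s)=\bE_\mdp^\pi[b_t(s_h,a_h,h)+D_{h+1}(s_{h+1})\mid s_h=s]$. A backward induction from $D_{H+1}=0$, or equivalently an unrolling via the tower property along the trajectory generated by $\pi$ and $p$, gives
\[
D_h(s)=\bE_\mdp^\pi\Bigl[\sum_{k=h}^H b_t(s_k,a_k,k)\Bigm| s_h=s\Bigr].
\]
Evaluating this at $h=1$ and $s=s_\mathrm{init}$ yields the claim, since $V_0^\pi=V_1^\pi(s_\mathrm{init})$ and likewise for $\hat V_{t,0}^\pi$.

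No step is a genuine obstacle; the argument is purely algebraic. The only point requiring care is bookkeeping. The empirical value $\hat V_{t,h+1}^\pi$ inside $b_t$ is a deterministic function given $\cH_{t}$ (it depends on the data, not on the rollout), so conditioning on the trajectory leaves it fixed. The convention $\hat p_t=1/S$ on unvisited triplets changes nothing, because $\empmdp_t$ is a well-defined MDP in $\Mdp$. The argument holds for stochastic $\pi$ as written, since the sums over $a$ weighted by $\pi_h(a\mid s)$ are absorbed into the expectation $\bE^\pi_\mdp$.
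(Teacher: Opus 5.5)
Your proposal is correct and follows the paper's proof essentially verbatim: add and subtract $p_h\hat V_{t,h+1}^\pi(s,a)$ in the Bellman difference to get $V_h^\pi(s)-\hat V_{t,h}^\pi(s)=\bE_{a\sim\pi_h(\cdot\mid s)}\bigl[b_t(s,a,h)+p_h(V_{h+1}^\pi-\hat V_{t,h+1}^\pi)(s,a)\bigr]$. Then unroll this recursion under the true kernel from $h=1$ to $H$, starting at $s_{\mathrm{init}}$.
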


\begin{proof}
By Bellman's equations,
\begin{align*}
V_h^\pi(s)-\hat{V}_{t,h}^{\pi}(s)
  &= \bE_{a\sim\pi_h(\cdot\mid s)}\!\Big[
    \mu_h(s,a)-\muh_t(s,a,h)
    +p_h V_{h+1}^\pi(s,a)
    -\hat{p}_{t, h} \hat{V}_{t, h+1}^{\pi}(s,a)
  \Big] \\
  &= \bE_{a\sim\pi_h(\cdot\mid s)}\!\Big[
    b_t(s,a, h)
    + p_h\bigl(V_{h+1}^\pi-\hat{V}_{t, h+1}^{\pi}\bigr)(s,a)
  \Big].
\end{align*}
Applying this recursion from $h$ down to $H$ and using 
$s_1=s_{\mathrm{init}}$ gives the claimed result.
\end{proof}

\medskip\noindent
Combining Lemma~\ref{lem:val-diff-lem} with the sufficient exploration 
guarantee of Section~\ref{subsec:explore_reachable} and standard 
concentration bounds shows that the empirical value of any policy 
converges quickly to its true value. In particular, once the right-hand 
side of Lemma~\ref{lem:val-diff-lem} falls below
\begin{equation}
\label{eq:def-delta-min}
\Delta_{\min}(\Pi) \ceq V_0^\star
  - \max_{\pi\in\Pi\setminus\{\pi^\star\}} V_0^\pi,
\end{equation}
the empirical optimal policy $\hat\pi_t$ must equal $\pi^\star$. This is formalized in the next lemma.

\begin{lemma}
\label{lem:estimation}
There exists an explicit finite constant $k_0 < \infty$ such that on the event $\cE_{\sect}(T)$, for all $T > k_0$ and all $t\in(T^{2/3}, T)$,
\[
\hat\pi_t = \pi^\star.
\]
The constant $k_0$ is given explicitly in the proof.
\end{lemma}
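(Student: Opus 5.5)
The plan is to show that on $\cE_{\sect}(T)$, for $t\in(T^{2/3},T)$, every policy's empirical value is uniformly close to its true value, with error below $\Delta_{\min}(\Pi)/2$. Because $\hat\pi_t$ can be taken deterministic (it is computed by backward induction on $\empmdp_t$), it is enough to compare $\hat V^{\pi^\star}_{t,0}$ with $\hat V^{\pi}_{t,0}$ for $\pi\in\Pi_\mathrm{det}\setminus\{\pi^\star\}$. Suppose $\sup_{\pi}|V_0^\pi-\hat V_{t,0}^\pi|<\Delta_{\min}(\Pi)/2$. Then for every such $\pi$,
\[
\hat V^{\pi^\star}_{t,0}-\hat V^{\pi}_{t,0} > V_0^\star - V_0^\pi - \Delta_{\min}(\Pi)\geq 0,
\]
so $\hat\pi_t=\pi^\star$.

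To get the uniform bound, I would apply Lemma~\ref{lem:val-diff-lem}. Using $\hat V^\pi_{t,h+1}\in[0,H]$, it gives
\[
|V_0^\pi-\hat V_{t,0}^\pi|\le \sum_{s,a,h} w_h^\pi(s,a)\Bigl(|\mu(s,a,h)-\muh_t(s,a,h)| + H\,\|p_h(\cdot|s,a)-\hat p_{t,h}(\cdot|s,a)\|_1\Bigr).
\]
Unreachable triplets have $w^\pi_h(s,a)=0$ and drop out. For reachable ones, I would use the two concentration events. On $\cE_{\sect}^\mu(T)$ the Gaussian KL gives $|\mu-\muh_t|\le\sqrt{2\beta^\mu(n_t,1/T^2)/n_t}$. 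On $\cE_{\sect}^p(T)$, Pinsker gives $\|p-\hat p_t\|_1\le\sqrt{2\beta^p(n_t,1/T^2)/n_t}$. Next I would plug in the count lower bound from Lemma~\ref{lem:forced_exp}, read with the growth the proof actually establishes, namely $n_t(s,a,h)\ge \frac{W_h(s,a)}{4SAH}t^{1-\gamma}$. Together with $w^\pi_h(s,a)\le W_h(s,a)$ this yields
\[
w_h^\pi(s,a)\sqrt{\frac{\beta(n_t,1/T^2)}{n_t}}\le \sqrt{W_h(s,a)}\sqrt{\frac{4SAH\,\beta(T,1/T^2)}{t^{1-\gamma}}}.
\]
Since $t>T^{2/3}$ and $\beta(T,1/T^2)=O(\log T)$, the bound becomes $O\bigl(SAH\cdot H\sqrt{SAH\log T}\,T^{-(1-\gamma)/3}\bigr)$, which tends to $0$.

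I would then define $k_0$ as the maximum of the constant from Lemma~\ref{lem:forced_exp} and $\sup\{T: C\,SAH^2\sqrt{SAH\,\beta(T,1/T^2)}\,T^{-(1-\gamma)/3}\ge \Delta_{\min}(\Pi)/2\}$, where $C$ is explicit from the Pinsker and Gaussian constants. This supremum is finite because the left side vanishes. I also need $\Delta_{\min}(\Pi)>0$. This holds by the uniqueness assumption once we restrict to deterministic policies, which form a finite set, and this restriction is why the comparison above was made over $\Pi_\mathrm{det}$.

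The main obstacle is the mismatch in Lemma~\ref{lem:forced_exp}, whose statement writes $t^{-\gamma}$ where the proof gives $t^{1-\gamma}$. I would rely on the $t^{1-\gamma}$ version from its proof, via the definition of $T_0(s,a,h)$. A secondary point is the monotonicity of $\beta(n,\cdot)$ in $n$, which lets me replace $\beta(n_t,1/T^2)$ by $\beta(T,1/T^2)$.
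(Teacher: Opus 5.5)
Your proposal is correct and follows the paper's own proof: the value-difference lemma (Lemma~\ref{lem:val-diff-lem}), Gaussian and Pinsker concentration on the good event, the count lower bound $n_t(s,a,h)\ge \frac{W_h(s,a)}{4SAH}t^{1-\gamma}$ taken from the proof of Lemma~\ref{lem:forced_exp}, and a comparison with $\Delta_{\min}/2$ to define $k_0$. You also tighten the paper's argument in three ways. First, you restrict the comparison to $\Pi_{\mathrm{det}}$, which makes the gap genuinely positive; over all stochastic policies the supremum of the competing values equals $V_0^\star$, so that gap is zero. Second, you use $w_h^\pi\le W_h$ to remove the $1/W_h(s,a)$ dependence. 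Third, you bound the full occupancy-weighted sum rather than each triplet separately, and the paper's per-triplet comparison with $\Delta_{\min}/2$ drops a factor of $H$ at exactly that step.
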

\begin{proof}
Since $\pi^\star$ is the unique optimal start-state policy for $\mdp$, it suffices to show that $\hat{V}_{t,0}^{\pi^\star} > \hat{V}_{t,0}^{\pi}$ for all $\pi\neq\pi^\star$.

\medskip\noindent
By Lemma~\ref{lem:val-diff-lem}, for any $\pi\in\Pi$,
$V_0^\pi - \hat{V}_{t,0}^{\pi} = \bE_\mdp^\pi\!\left[\sum_{h=1}^H 
b_t(s_h,a_h, h)\right]$.

Since $\hat{V}_{t, h+1}^{\pi}(s)\leq H$ for all $(s,h)$, Pinsker's 
inequality gives, on $\cE_{\sect}(T)$,
\[
\lvert b_t(s,a, h) \rvert 
  \leq \sqrt{\frac{2\beta^\mu(t,1/t^2)}{n_t(s,a,h)}}
  + \sqrt{\frac{2H^2\beta^p(t,1/t^2)}{n_t(s,a,h)}}.
\]

By Lemma~\ref{lem:forced_exp}, on $\cE_{\sect}(T)$, there  exists another $k_0'<\infty$ such that 
for $T\geq k_0'$, and $t \in (T^{2/3}, T)$, and any 
reachable $(s,a,h)$, $$n_t(s,a,h)\geq \frac{W_h(s,a)}{4SAH}t^{1-\gamma}$$. Therefore, when $\cE_{\sect}(T)$ holds, for $T\geq k_0'$, and $t \in (T^{2/3}, T)$,
\begin{equation}
\label{eq:b-bound}
\lvert b_t(s,a, h) \rvert 
  \leq \sqrt{\frac{8SAH\,\beta^\mu(t,1/t^2)}{W_h(s,a)\,t^{1-\gamma}}}
  + \sqrt{\frac{8SAH^3\beta^p(t,1/t^2)}{W_h(s,a)\,t^{1-\gamma}}}.
\end{equation}

\medskip\noindent
Since $w_h^\pi(s,a)=0$ for unreachable triplets $(s,a,h)$,
\[
\biggl | V_0^\pi - \hat{V}_{t,0}^{\pi} \biggr| 
  = \biggl | \sum_{\substack{s,a,h:\\W_h(s,a)>0}} w_h^\pi(s,a)\,b_t(s,a, h) \biggr| 
  \leq \sum_{\substack{s,a,h:\\W_h(s,a)>0}} w_h^\pi(s,a)
  \left[
    \sqrt{\frac{8SAH\,\beta^\mu(t,1/t^2)}{W_h(s,a)\,t^{1-\gamma}}}
    +\sqrt{\frac{8SAH^3\beta^p(t, 1/t^2)}{W_h(s,a)\,t^{1-\gamma}}}
  \right].
\]
We define
\begin{equation}
\label{eq:def-T0}
T_0^{2/3} \ceq \max_{\substack{s,a,h:\\W_h(s,a)>0}}
  \sup\left\{t\geq 1 :
    \sqrt{\frac{8SAH\,\beta^\mu(t,1/t^2)}{W_h(s,a)\,t^{1-\gamma}}}
    +\sqrt{\frac{8SAH^3\beta^p(t,1/t^2)}{W_h(s,a)\,t^{1-\gamma}}}
    \geq \frac{\Delta_{\min}(\Pi)}{2}
  \right\},
\end{equation}
which is finite since $\beta^\mu,\beta^p$ are logarithmic in $t$, 
$\gamma\in(0,1)$, and $\Delta_{\min}(\Pi)>0$. We further let $k_0 = \maxp{k_0'}{ T_0}$. 

\medskip\noindent

Thus, when $\cE_{\sect}$ holds, for $T\geq k_0$, $t \in (T^{2/3}, T)$ and any $\pi\neq\pi^\star$,
\[
\hat{V}_{t,0}^{\pi^\star} - \hat{V}_0^{t,\pi}
  \geq \Delta_{\min}(\Pi)
  + \underbrace{\hat{V}_{t,0}^{\pi^\star} - V_0^{\pi^\star}}_{>\,-\Delta_{\min}(\Pi)/2}
  + \underbrace{V_0^{\pi} - \hat{V}_{t,0}^{\pi}}_{>\,-\Delta_{\min}(\Pi)/2}
  > 0,
\]
so that $\hat\pi_t = \pi^\star$. 
\end{proof}

\newpage 
\section{ANALYSING CONDITIONAL POSTERIOR SAMPLING}
\label{appx:learning_on_alt}
In this section, we analyze the guarantees on the inf player (distribution over $\alt(\empmdp_T)$) by interpreting it as an instance of continuous exponential weights (CEW). The goal of these results is to demonstrate that conditional posterior sampling, akin to continuous exponential weights, is a no-regret learner of the best challenger MDP. 

We recall the log-likelihood ratio between a model $(\mdqr, \mdqp)$ and the empirical MDP $\empmdp_{T}$ after $(T-1)$ episodes 
\begin{equation}
	\Upsilon_{T}(\mdqr,\mdqp) \ceq  \sum_{s,a,h}n_T(s,a,h)\lsb  \frac{\bigl(\mdqr(s,a,h) - \muh_T(s,a, h)\bigr)^2}{2} + \sum_{s'} \hat p_T(s'\mid s,a,h)\log \frac{\hat p_t(s'\mid s,a,h)}{\mdqp(s'\mid s,a,h)}\rsb \:. 
\end{equation}

The conclusion of this section is to prove the result below. 
\begin{theorem} 
\label{thm:inf-no-regret}
	Let $\eta_t \propto t^{-\varsigma}$ with $\varsigma >2\alpha$. There exists a constant $k_0 < \infty$ and a deterministic function $f(T)  = o(T)$ such for any $T\geq k_0$, the inequality 
\begin{align*}
	&\sum_{t=T^{2/3}}^{T-1} \bE_{(\mdqr, \mdqp )\sim \nu_t^{\eta_t}(\cdot \mid \alt(\empmdp_t))}\lsb  \ell^\alpha_t(\mdqr,\mdqp)\rsb \leq \\ & \inf_{(\mdqr, \mdqp) \in \alt(\empmdp_T)}\sum_{t=1}^{T-1}\sum_h \lsb \frac{(\muh_{T}(s_h^t,a_h^t, h ) - \mdqr(s_h^t,a_h^t, h))^2}{2} + \log \frac{1}{\mdqp(s_{h+1}^t, \mid s_h^t, a_h^t,  h)}\rsb      + f(T) \:, \text{ where }
\end{align*}
	
$$\ell^\alpha_{t}(\mdqr, \mdqp) \ceq \sum_{h}  \frac{(\muh_{t}\bigl (s_h^{t},a_h^{t}, h) - \mdqr(s_h^{t},a_h^{t}, h)\bigr)^2}{2}  +  \log \frac1{\maxp{\mdqp(s_{h+1}^{t}\mid s_h^{t},a_h^{t}, h)}{   \myexp{-t^\alpha} } } \:,	$$ holds with probability at least $1-O(1/T^2)$. Moreover, $f$ can be made explicit from the proof. \end{theorem}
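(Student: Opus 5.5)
We will interpret conditional posterior sampling as continuous exponential weights (CEW) on the set $\alt(\empmdp_t)$ and apply the standard CEW regret analysis. The starting observation is that, up to normalisation, the inflated posterior density restricted to $\alt(\empmdp_t)$ is proportional to $\exp\{-\eta_t \sum_{k<t}\ell_k(\mdqr,\mdqp)\}$ times the uniform/Dirichlet prior. Here $\ell_k$ is the unclipped per-episode negative log-likelihood of the observed transitions and rewards. This holds because $\Upsilon_t$ differs from $\sum_{k<t}\ell_k$ only by a term independent of $(\mdqr,\mdqp)$: the entropy of $\hat p_t$ and the quadratic centring at $\muh_t$. We then run the usual potential argument on $\Phi_t \ceq -\frac{1}{\eta_t}\log \int_{\alt(\empmdp_t)} \exp\{-\eta_t L_{t-1}\}\,d\pi_0$. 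The telescoping sum gives the per-round expected loss, plus a variance term of order $\eta_t\,\bE[\ell_t^2]$, plus a correction from the time-varying learning rate. The final potential is compared with the infimum via a neighbourhood argument: we restrict the integral to a small $\ell_1$-ball around a near-minimiser $M'$ of $\alt(\empmdp_T)$.

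The steps are as follows.

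\textbf{(i) Start at $T^{2/3}$.} On $\cE_{\sect}(T)$, Lemma~\ref{lem:estimation} gives $\hat\pi_t=\pi^\star$ for $t\in(T^{2/3},T)$. Hence the sets $\alt(\empmdp_t)=\alt_{\pi^\star}$ are all the same set $\alt_{\pi^\star}$, so the CEW runs over a fixed domain. This is precisely why the left-hand sum starts at $T^{2/3}$.

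\textbf{(ii) Bound losses and their second moments.} The clipped loss satisfies $\ell_t^\alpha\le H(1/2+t^\alpha)$. The second-order term is therefore $\sum_t \eta_t t^{2\alpha}H^2 = O(T^{1-\varsigma+2\alpha})=o(T)$, which uses $\varsigma>2\alpha$. The time-varying learning rate contributes $(1/\eta_T-1/\eta_{T^{2/3}})\log(\text{normaliser})$. Its size is controlled by the volume of the ball in the comparator step, giving $O(T^{\varsigma}\cdot SAH\,S\log T)=o(T)$.

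\textbf{(iii) Replace clipped by unclipped, and $\muh_t$ by $\muh_T$.} Clipping only decreases the loss, so the left-hand side is dominated by the unclipped loss. Exchanging $\muh_t$ for $\muh_T$ in the reward part costs $\sum_t |\muh_t-\muh_T|\,O(1)$. Concentration on $\cE_{\sect}$, together with the count lower bound of Lemma~\ref{lem:forced_exp}, bounds this by $O(\sum_t\sqrt{\beta/t^{1-\gamma}})=o(T)$.

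\textbf{(iv) Comparator step.} Take $M'$ to be an approximate minimiser of the right-hand side over $\alt(\empmdp_T)$. Mix it with the uniform kernel, $\mdqp_\lambda=(1-\lambda)\mdqp'+\lambda/S$, so that transition probabilities are at least $\lambda/S$. This costs $T H\log\frac{1}{1-\lambda}$ in likelihood. Next, take a small ball of radius $r$ around it, intersected with $\alt_{\pi^\star}$. Over that ball the cumulative loss moves by at most $THr\,S/\lambda$, and the ball has prior mass at least $(cr)^{SAH\cdot S}$. Choosing $\lambda, r\asymp T^{-1/2}$ gives an additive $o(T)$ term. The probability $1-O(1/T^2)$ is that of $\cE_{\sect}(T)$.

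\textbf{Main obstacle.} The main obstacle is step (iv): $\alt_{\pi^\star}$ is a non-convex union of sets defined by value inequalities, and the minimiser may lie on its boundary. A ball around that minimiser need not have a positive-mass intersection with $\alt_{\pi^\star}$. I would handle this by first moving the comparator a distance $r$ into the interior. The interior is nonempty because the condition $V^\pi > V^{\pi^\star}$ for some $\pi$ defines an open set near the boundary point. By Lipschitz continuity of values in the parameters, a nudge of size $O(r)$ then guarantees a ball of radius $r/(2H^2)$ contained in $\alt_{\pi^\star}$. The extra likelihood cost of this nudge is again $O(Tr/\lambda)$ and is absorbed into $f(T)$.
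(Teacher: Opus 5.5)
Your overall architecture matches the paper's: an exponential-weights potential over the fixed set $\mathrm{Alt}_{\pi^\star}$ from $T^{2/3}$ on, Hoeffding's lemma for the second-order term, monotonicity of $\eta\mapsto\frac1\eta\log\int e^{-\eta L}$ for the learning-rate drift, and a robust near-minimiser for the final potential. However, the reward part has a genuine gap.

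Your starting observation is correct: the posterior is exponential weights for the negative log-likelihood with the \emph{observed} rewards, i.e.\ reward loss $(r_h^t-\theta)^2/2$. That is not the loss $\ell_t^\alpha$ of the statement, which is centred at $\hat\mu_t$, and from step (ii) on you treat the two as the same. For instance, the range $H(\tfrac12+t^\alpha)$ is false for $(r_h^t-\theta)^2/2$, and the swap $\hat\mu_t\to\hat\mu_T$ in step (iii) is not the conversion that is actually needed. Per round,
\[
\mathbb{E}_{\nu_t}\bigl[(r_h^t-\theta)^2-(\hat\mu_t-\theta)^2\bigr]=(r_h^t-\hat\mu_t)^2+2\,(r_h^t-\hat\mu_t)\,\mathbb{E}_{\nu_t}[\hat\mu_t-\theta],
\]
which is of constant order, so the two cumulative losses differ by $\Theta(T)$. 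To close the argument you must do three things.
(a) On the comparator side, use the exact identity $\sum_{k<T}\sum_h(r_h^k-\theta)^2=\sum_{k<T}\sum_h(\hat\mu_T-\theta)^2+\sum_{k<T}\sum_h(r_h^k-\hat\mu_T)^2$. Then show this residual sum of squares exceeds $\sum_{t\geq T^{2/3}}\sum_h(r_h^t-\hat\mu_t)^2$ by at most $o(T)$, using concentration and the count lower bound of Lemma~\ref{lem:forced_exp}.
(b) Bound the cross term. Its noise part $\sum_t\sum_h\zeta_h^t\,\mathbb{E}_{\nu_t}[\hat\mu_t-\theta]$ is a martingale, handled by Azuma--Hoeffding. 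Its bias part involves $\mu-\hat\mu_t$ and is $o(T)$ by concentration.
(c) Take the Hoeffding range on an event $|\zeta_h^t|=O(\sqrt{\log T})$. Apply clipping inside the log-moment generating function, $\mathbb{E}e^{-\eta\ell}\leq\mathbb{E}e^{-\eta\ell^\alpha}$, rather than to the expected loss, since the unclipped transition part has no finite range.
This is exactly the work the paper does with the recentring term $\sigma_{t+1}$ (Lemma~\ref{lem:ratio-f-g}), the Cauchy--Schwarz split of Lemma~\ref{lem:ratio-normalizing-constants}, and Lemmas~\ref{lem:phi-bound}, \ref{lem:J-bound}, \ref{lem:phi-conc} and \ref{lem:D-bound}.

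Step (iv) also needs repair. With your own Lipschitz estimate $THrS/\lambda$, the choice $\lambda\asymp r\asymp T^{-1/2}$ gives an error of order $T$, not $o(T)$; you need $r/\lambda\to0$. Moreover, mixing with the uniform kernel moves the comparator by order $\lambda$ and can push it out of $\mathrm{Alt}_{\pi^\star}$. So the nudge must come first and create a margin that survives the mixing: a value gap of order $\epsilon$ for the competing policy, at every state. This means working with three scales $r\ll\lambda\ll\epsilon$, e.g.\ $\epsilon=T^{-1/4}$, $\lambda=T^{-1/3}$, $r=T^{-1}$. Each step then costs $o(T)$, and the ball's prior mass costs $\eta_T^{-1}O(S^2AH\log T)$. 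The asserted radius $r/(2H^2)$ is not derived. The paper's Lemmas~\ref{lem:build_easier} and \ref{lem:incr-trans} supply exactly this gap-creating nudge and gap-preserving uniform mixing.

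Note also that the paper avoids the Lipschitz step altogether. Lemma~\ref{lem:utl_lower_bound} integrates over the shrunken copy $(1-\gamma)M'+\gamma\,\mathrm{Alt}(\hat M_T)\subset\mathrm{Alt}(\hat M_T)$ (Lemma~\ref{lem:convex_vol}, with $\gamma=O(1/T)$) and uses convexity of $\lambda_T$. As a result the comparator never needs lower-bounded transitions, and the unbounded part enters only through a residual integral multiplied by $\gamma$.
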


\subsection{Setup}
Before introducing the proof of this result, let us introduce the function over reward parameters and the function over the transition kernels defined for some $\eta>0$ by
\begin{align}
\label{eq:def-f-g}
\begin{split}
	f_t^{\eta}(\mdqr) &\ceq  \expb{ - \sum_{s, a, h} \frac{\eta \cdot n_{t}(s,a, h)}2 (\muh_{t}(s,a,h ) - \mdqr(s,a, h))^2} \:, \\
	g_t^\eta (\mdqp) &\ceq  \expb{ \sum_{s,a,h} \sum_{s'} \eta \cdot n_t(s'|s,a,h)\log \mdqp(s'|s,a, h)} \:. 
	\end{split}
\end{align}

Since $f_{t}^\eta, g_{t}^\eta$ are random variables, all results claimed in the next lemma hold with probability $1$.  
\begin{lemma}
\label{lem:ratio-f-g}
Let $\eta>0$ and let $\mdp \ceq (\mdqr, \mdqp) \in \Mdp$ be any MDP with 
positive transitions. After observing the trajectory 
$\big((s_h^t, a_h^t, r_h^t, s_{h+1}^t)\big)_{h=1}^H$ at episode $t$, 
the following identities hold with probability $1$:
\begin{align}
\frac{g_{t+1}^\eta(\mdqp)}{g_t^\eta(\mdqp)} 
  &= \expb{ -\eta \sum_{h=1}^H \log \frac{1}{\Phi(s_{h+1}^t \mid s_h^t, a_h^t, h)}} , 
  \label{eq:ratio-g} \\[6pt]
\frac{f_{t+1}^\eta(\mdqr)}{f_t^\eta(\mdqr)} 
  &= \expb{ -\frac{\eta}{2} \sum_{h=1}^H 
     \bigl(\muh_t(s_h^t,a_h^t,h) - \mdqr(s_h^t,a_h^t,h)\bigr)^2 
     + \frac{\eta}{2}\,\sigma_{t+1}(\mdqr) },
  \label{eq:ratio-f}
\end{align}
where
\begin{equation}
\sigma_{t+1}(\mdqr) \ceq \sum_{h=1}^H n_{t+1}(s_h^t, a_h^t, h)
  \Bigl[
    \bigl(\muh_t(s_h^t,a_h^t,h)   - \mdqr(s_h^t,a_h^t,h)\bigr)^2 -
    \bigl(\muh_{t+1}(s_h^t,a_h^t,h) - \mdqr(s_h^t,a_h^t,h)\bigr)^2
  \Bigr].
  \label{eq:def-sigma}
\end{equation}
\end{lemma}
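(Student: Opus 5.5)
The plan is a direct bookkeeping computation. Passing from $t$ to $t+1$ adds exactly one episode's data to the counts, and each triple $(s,a,h)$ appears at most once in that episode. I use the indexing convention implicit in \eqref{eq:def-f-g}: $n_{t+1}(\cdot)-n_t(\cdot)$ and $\muh_{t+1}$ versus $\muh_t$ differ only through the trajectory $\big((s_h^t,a_h^t,r_h^t,s_{h+1}^t)\big)_{h=1}^H$ of episode $t$. Since the stage index $h$ is part of the triple and each stage $h$ is visited exactly once per episode, for every $(s,a,h)$ we have
\[
n_{t+1}(s,a,h)-n_t(s,a,h)=\indp{s_h^t=s,a_h^t=a}\in\{0,1\},
\qquad
n_{t+1}(s'\mid s,a,h)-n_t(s'\mid s,a,h)=\indp{s_h^t=s,a_h^t=a,s_{h+1}^t=s'} .
\]
Moreover, $\muh_{t+1}(s,a,h)=\muh_t(s,a,h)$ whenever $(s,a,h)$ is not visited in episode $t$. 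I would state these facts first, since everything else follows from them.

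For \eqref{eq:ratio-g}, I take the ratio of exponentials, which turns the ratio into the exponential of a difference of sums. All terms with zero count increment cancel. What remains is exactly $\eta\sum_{h}\log\mdqp(s_{h+1}^t\mid s_h^t,a_h^t,h)$, which is the claimed identity. Positivity of the transitions of $\mdqp$ guarantees that every logarithm is finite, so no $0\cdot\infty$ conventions are needed.

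For \eqref{eq:ratio-f}, fix a visited triple $x=(s_h^t,a_h^t,h)$ and write $d_k\ceq\muh_k(x)-\mdqr(x)$. I would use the algebraic split
\[
n_{t+1}(x)\,d_{t+1}^2-n_t(x)\,d_t^2=\bigl(n_{t+1}(x)-n_t(x)\bigr)d_t^2-n_{t+1}(x)\bigl(d_t^2-d_{t+1}^2\bigr)=d_t^2-n_{t+1}(x)\bigl(d_t^2-d_{t+1}^2\bigr).
\]
Multiplying by $-\eta/2$ and summing over $h$ produces $-\frac{\eta}{2}\sum_h d_t^2+\frac{\eta}{2}\sigma_{t+1}(\mdqr)$, with $\sigma_{t+1}$ as in \eqref{eq:def-sigma}. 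Unvisited triples contribute zero because both their counts and their empirical means are unchanged.

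The "probability one" qualifier only covers degenerate events. For example, $\muh$ at unvisited pairs is defined by convention, and the rewards are Gaussian, so ties are irrelevant. The only genuine obstacle is making the off-by-one indexing of $n_t$ and $\muh_t$ consistent with the paper's conflicting definitions. I would fix the convention stated above once and note that it is the one under which $f_t^\eta g_t^\eta$ is proportional to the posterior $\nu_t^\eta$.
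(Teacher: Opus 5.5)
Your proposal is correct and follows the paper's proof. For $g_t^\eta$, the unchanged transition counts cancel, and each observed transition $(s_h^t,a_h^t,h,s_{h+1}^t)$ contributes one factor $\mdqp(s_{h+1}^t\mid s_h^t,a_h^t,h)^{\eta}$; for $f_t^\eta$, the paper uses $n_{t+1}=n_t+1$ at the visited triples to regroup the exponent into $-\frac{\eta}{2}\bigl(\muh_t-\mdqr\bigr)^2$ plus the $\frac{\eta}{2}\sigma_{t+1}(\mdqr)$ term, which is exactly your algebraic split, and your explicit count-indexing convention is the one the paper adopts implicitly.
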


\begin{proof}
 Noting that any transition $(s'\mid s,a,h)$ not observed at the end of episode $t$ satisfies $n_t(s'\mid  s,a,h) = n_{t+1}(s' \mid  s, a, h)$, the definitions in \myeqref{eq:def-f-g} give 
	\begin{eqnarray*} 
\frac{g_{t+1}^\eta(\mdqp)}{g_t^\eta(\mdqp)} &=&  \expb{ \sum_{h} \eta \cdot n_{t+1}(s_{h+1}^{t}| s_h^{t},a_h^{t}, h) \log \mdqp(s_{h+1}^{t}|s_h^{t},a_h^{t}, h) - \sum_{h} \eta \cdot n_{t}(s_{h+1}^{t} \mid  s_h^{t},a_h^{t}, h) \log \mdqp(s_{h+1}^{t}|s_h^{t},a_h^{t},  h)}, 	\\
&\overset{(a)}{=}& \expb{ \sum_h  \eta \log \mdqp(s_{h+1}^{t}\mid s_h^{t},a_h^{t},  h) }, \\&=&  \expb{  - \eta\sum_h   \log \frac1{\mdqp(s_{h+1}^{t} \mid  s_h^{t},a_h^{t}, h)} },  
\end{eqnarray*} 
where $(a)$ follows since the observed transition $(s_h^t, a_h^t, s_{h+1}^t)$
contributes exactly $+1$ to the count.

\smallskip

The proof of the second statement of the lemma follows by noting that 
\begin{eqnarray*}
\frac{f_{t+1}^\eta (\mdqr)}{f_t^\eta(\mdqr)} &=&  \expb{ - \sum_{h} \frac{\eta \cdot n_{t+1}(s_h^{t}, a_h^{t}, h)}{2} (\muh_{t+1}(s_h^t,a_h^t, h ) - \mdqr(s_h^t,a_h^t, h))^2 + \right. \\ && \qquad \qquad \left.  \sum_{h} \frac{\eta \cdot n_t(s_h^{t}, a_h^{t}, h)}{2} (\muh_{t}(s_h^t,a_h^t, h) - \mdqr(s_h^t,a_h^t, h))^2}, 	\\
&\overset{(b)}{=}& \exp\lb  \sum_{h} \frac{\eta \cdot n_{t+1}(s_h^{t}, a_h^{t}, h)}{2}\Bigl[(\muh_{t}(s_h^t,a_h^t, h) - \mdqr(s_h^t,a_h^t, h))^2 -(\muh_{t+1}(s_h^t,a_h^t, h ) - \mdqr(s_h^t,a_h^t, h))^2 \Bigr] - \right . \\ && \qquad  \qquad \left. \sum_{h} \frac{\eta}{2} (\muh_{t}(s_h^t,a_h^t, h) - \mdqr(s_h^t,a_h^t, h))^2\rb, 
\end{eqnarray*}
where $(b)$ follows since $n_{t+1}(s_h^t, a_h^t, h) = n_{t}(s_h^t, a_h^t, h) +1$. Indeed, since $(s_h^t, a_h^t)$ is visited at stage $h$ during episode $t$, the count increases by exactly $1$. 

With the definition of $\sigma_{t+1}$, we have proven the claimed statement. 
\end{proof}

\subsection{Posterior Density Ratio and Stochastic Hedge Loss}
\medskip

 We introduce further notation related to the posterior density. As the transition and rewards are independent, it is a simple exercise to show that the posterior density is simply the product of $g_t^{\eta_t}$ and $f_t^{\eta_t}$.  

For $\eta >0$, we introduce $\lambda^\eta$ defined on $\Mdp$ as  
\begin{equation}
\label{eq:def-lbd}
	\lambda_t(\mdqr,\mdqp) \ceq - \frac1\eta \log \lb f_t^\eta(\mdqr ) g_t^\eta(\mdqp) \rb \:, 
\end{equation}
for an MDP $\mdq \ceq (\mdqr, \mdqp)$, and we define a density \wrt canonical measure of $\Mdp$ proportional to 
$$f_t^\eta(\mdqr) g_t^\eta(\mdqp) \cdot \indp{(\mdqr, \mdqp) \in \alt(\empmdp_t)} $$
whose normalizing constant is 
\begin{equation}
	\Lambda^\eta_t \ceq  \int f_t^\eta(\mdqr) g_t^\eta(\mdqp) \cdot \indp{(\mdqr,\mdqp) \in \alt(\empmdp_t)} \diff \mdqr  \diff \mdqp = \int \myexp{ - \eta \lambda_t(\mdqr, \mdqp) } \cdot \indp{(\mdqr,\mdqp) \in \alt(\empmdp_t)} \diff \mdqr \diff \mdqp \:,
\end{equation}
where we recall that $\alt(\empmdp_t) \C \Mdp$ and $\diff\mdqr \diff \mdqp$ denotes the standard (product) measure on this space.

\medskip
We denote by $\nu^\eta_{t}(\cdot)$ the distribution over $\Mdp$ with product density $f^\eta_t \cdot g^\eta_t$ and by $\nu_{t}^\eta(\cdot \mid \cX)$ its truncattion to the set $\cX \subseteq \Mdp$.  

\medskip
Given $\cH_{t-1}$, the probability of sampling an MDP $\mdq \ceq (\mdqr, \mdqp)$ from  $\nu^\eta_{t}(\cdot \mid \alt(\empmdp_t))$ is proportional to $$\myexp{ - \eta \lambda_t(\mdqr, \mdqp) } \cdot \indp{(\mdqr, \mdqp) \in \alt(\empmdp_t)}.$$ 
 
 \medskip
 
 \begin{lemma}
 \label{lem:ratio-normalizing-constants}
 Let $\eta>0$. If $\hat\pi_t = \hat \pi_{t+1}$ then, we have 
 		$$\log\frac{\Lambda_{t+1}^{\eta }}{\Lambda_t^{\eta }} \leq   \frac12 \log \bE_{(\mdqr, \mdqp)\sim \nu_{t}^\eta(\cdot \mid \alt(\empmdp_t))}\lsb \myexp{ - 2 \eta \cdot \ell^\alpha_{t}(\mdqr,\mdqp) } \rsb  +\frac12 \log\bE_{(\mdqr, \mdqp)\sim \nu_{t}^\eta(\cdot \mid \alt(\empmdp_t))} \lsb\myexp{ \eta \cdot \sigma_{t+1}(\mdqr) }  \rsb \:.$$
 \end{lemma}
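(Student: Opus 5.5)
The plan is to write $\Lambda^\eta_{t+1}/\Lambda^\eta_t$ as an expectation under the truncated posterior $\nu^\eta_t(\cdot\mid\alt(\empmdp_t))$ and then split the two multiplicative factors with Cauchy--Schwarz.

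\textbf{Step 1: express the ratio as an expectation.} Since $\hat\pi_t=\hat\pi_{t+1}$, the sets $\alt(\empmdp_t)$ and $\alt(\empmdp_{t+1})$ coincide. Both are the set of MDPs in which the common empirical optimal policy is not optimal, which is how $\alt$ is used by the sampler. Hence
\[
\frac{\Lambda^\eta_{t+1}}{\Lambda^\eta_t}=\frac{\int f^\eta_{t+1}g^\eta_{t+1}\,\indp{(\mdqr,\mdqp)\in\alt(\empmdp_t)}\,\diff\mdqr\,\diff\mdqp}{\Lambda^\eta_t}
=\bE_{(\mdqr,\mdqp)\sim\nu^\eta_t(\cdot\mid\alt(\empmdp_t))}\lsb \frac{f^\eta_{t+1}(\mdqr)}{f^\eta_t(\mdqr)}\cdot\frac{g^\eta_{t+1}(\mdqp)}{g^\eta_t(\mdqp)}\rsb .
\]
This identity holds wherever $f^\eta_t g^\eta_t>0$. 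For the Dirichlet part the density may vanish on a boundary, but that set has Lebesgue measure zero, so it does not affect the integrals.

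\textbf{Step 2: plug in the one-step density ratios.} By Lemma~\ref{lem:ratio-f-g}, the product inside the expectation equals
\[
\exp\Bigl\{-\eta\sum_h\Bigl[\tfrac12\bigl(\muh_t-\mdqr\bigr)^2(s_h^t,a_h^t,h)+\log\tfrac{1}{\mdqp(s_{h+1}^t\mid s_h^t,a_h^t,h)}\Bigr]\Bigr\}\cdot\exp\Bigl\{\tfrac{\eta}{2}\sigma_{t+1}(\mdqr)\Bigr\}.
\]
Clipping is monotone: $\log\frac1{\mdqp}\geq\log\frac1{\maxp{\mdqp}{\myexp{-t^\alpha}}}$. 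So the first factor is at most $\exp\{-\eta\,\ell^\alpha_t(\mdqr,\mdqp)\}$, where $\ell^\alpha_t$ is the clipped loss defined in Theorem~\ref{thm:inf-no-regret}. This is the one point that needs care: it is exactly where the clipped loss $\ell^\alpha_t$ replaces the unclipped log-likelihood, and the inequality goes in the right direction. Both factors are positive, so the expectation is finite or $+\infty$, and the bound is trivially true in the latter case.

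\textbf{Step 3: Cauchy--Schwarz.} Apply Cauchy--Schwarz under $\nu^\eta_t(\cdot\mid\alt(\empmdp_t))$, using $\bE[XY]\le\sqrt{\bE[X^2]\,\bE[Y^2]}$ with $X=e^{-\eta\ell^\alpha_t}$ and $Y=e^{\eta\sigma_{t+1}/2}$. This gives $\bE[X^2]=\bE[e^{-2\eta\ell^\alpha_t}]$ and $\bE[Y^2]=\bE[e^{\eta\sigma_{t+1}}]$. Taking logarithms yields the claimed inequality with the two factors of $\tfrac12$.

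The only genuinely delicate point, beyond bookkeeping, is the equality of truncation sets in Step 1. If $\alt$ were defined through the full set of optimal policies, ties in $\empmdp_t$ versus $\empmdp_{t+1}$ could break it. I would handle this by noting that $\alt(\empmdp_t)$ depends on $\empmdp_t$ only through $\hat\pi_t$, as in the algorithm's rejection step.
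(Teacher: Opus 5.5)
Your proof is correct and follows essentially the same route as the paper: you write $\Lambda^\eta_{t+1}/\Lambda^\eta_t$ as an expectation under $\nu^\eta_t(\cdot\mid\alt(\empmdp_t))$, substitute Lemma~\ref{lem:ratio-f-g}, use $\ell^\alpha_t\le l_t$, and apply Cauchy--Schwarz; the paper clips after Cauchy--Schwarz rather than before, which is immaterial. The one difference is how you justify $\alt(\empmdp_t)=\alt(\empmdp_{t+1})$: the paper observes that the absolute-continuity requirement $\empmdp_t\ll\mdq$, whose support condition does change when new transitions are observed, only removes a Lebesgue-null set, whereas you argue through $\hat\pi_t$ alone, which covers the policy part and, as you point out, also the tie issue that the paper glosses over.
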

 
 \begin{proof}
 First, we note that given and MDP $\mdp$, the set of MDPs $\mdp'$ such that $\mdp$ is not absolutely continuous w.r.t. $\mdp'$ has empty interior. Therefore $\alt(\empmdp_t) = \alt(\empmdp_{t+1})$ (up to a null set) whenever $\hat\pi_t = \hat \pi_{t+1}$. We have 

\begin{align*}
\log\frac{\Lambda_{t+1}^{\eta}}{\Lambda_t^{\eta}}
  &= \log\frac{
      \int f_{t+1}^\eta(\mdqr)\, g_{t+1}^\eta(\mdqp) 
      \cdot \indp{(\mdqr,\mdqp)\in\alt(\empmdp_t)}\,\diff\mdqr\,\diff\mdqp
    }{
      \int f_t^\eta(\mdqr)\, g_t^\eta(\mdqp) 
      \cdot \indp{(\mdqr,\mdqp)\in\alt(\empmdp_t)}\,\diff\mdqr\,\diff\mdqp
    } \\[4pt]
  &= \log \bE_{(\mdqr,\mdqp)\sim\nu_t^\eta(\cdot\mid\alt(\empmdp_t))}
     \left[\frac{f_{t+1}^\eta(\mdqr)}{f_t^\eta(\mdqr)}
           \cdot\frac{g_{t+1}^\eta(\mdqp)}{g_t^\eta(\mdqp)}\right] \\[4pt]
  &= \log \bE_{(\mdqr,\mdqp)\sim\nu_t^\eta(\cdot\mid\alt(\empmdp_t))}
     \left[\expb{
       -\frac{\eta}{2}\sum_h\left[
         \bigl(\muh_t(s_h^t,a_h^t,h)-\mdqr(s_h^t,a_h^t,h)\bigr)^2
         + 2\log\frac{1}{\mdqp(s_{h+1}^t\mid s_h^t,a_h^t,h)}
       \right]
       +\frac{\eta}{2}\,\sigma_{t+1}(\mdqr)
    }\right] \\[4pt]
  &\overset{(a)}{\leq}
     \frac{1}{2}\log\bE_{(\mdqr,\mdqp)\sim\nu_t^\eta(\cdot\mid\alt(\empmdp_t))}
     \left[\expb{
       -\eta\sum_h\left[
         \bigl(\muh_t(s_h^t,a_h^t,h)-\mdqr(s_h^t,a_h^t,h)\bigr)^2
         +2\log\frac{1}{\mdqp(s_{h+1}^t\mid s_h^t,a_h^t,h)}
       \right]
    }\right] \\
  &\quad
    +\frac{1}{2}\log\bE_{(\mdqr,\mdqp)\sim\nu_t^\eta(\cdot\mid\alt(\empmdp_t))}
     \!\lsb \myexp{\eta\cdot\sigma_{t+1}(\mdqr)}\rsb ,
\end{align*} 
where $(a)$ follows from the Cauchy--Schwarz inequality 
$\log\bE[XY]\leq\tfrac{1}{2}\log\bE[X^2]+\tfrac{1}{2}\log\mathbb{E}[Y^2]$.

\bigskip 

Intuitively, everything proceeds as if we were running continual exponential weights 
with learning rate $\eta$ and stochastic loss
\begin{equation}
\label{eq:def-loss}
l_{t}(\mdqr, \mdqp) \ceq \sum_{h=1}^H \lsb 
  \frac{\bigl(\muh_t(s_h^t,a_h^t,h) - \mdqr(s_h^t,a_h^t,h)\bigr)^2}{2}  
  + \log \frac{1}{\mdqp(s_{h+1}^t \mid s_h^t, a_h^t, h)}\rsb. 
\end{equation}

\medskip 
However, to control the magnitude of the stochastic loss, which might be unbounded due to low transition probabilities, we work with the clipped loss
\begin{equation}
\label{eq:def-trunc-loss}
\ell^{\alpha}_{t}(\mdqr, \mdqp) \ceq \sum_{h=1}^H \lsb 
  \frac{\bigl(\muh_t(s_h^t,a_h^t,h) - \mdqr(s_h^t,a_h^t,h)\bigr)^2}{2} 
  + \log \frac{1}{\maxp{\mdqp(s_{h+1}^t \mid s_h^t,a_h^t,h)}{
    \myexp{-t^\alpha}}}\rsb,
\end{equation}
which satisfies $\ell^\alpha_{t}(\mdqr,\mdqp) \leq l_{t}(\mdqr,\mdqp)$ 
by definition. Combining with the calculations above, we have with 
probability $1$,
\begin{align*}
\log\frac{\Lambda_{t+1}^{\eta}}{\Lambda_t^{\eta}}
  &\leq \frac{1}{2}\log\bE_{(\mdqr,\mdqp)\sim\nu_t^\eta(\cdot\mid\alt(\empmdp_t))}
    \!\lsb \myexp{-2\eta\,\ell^\alpha_{t}(\mdqr,\mdqp)}\rsb +  \frac{1}{2}\log\bE_{(\mdqr,\mdqp)\sim\nu_t^\eta(\cdot\mid\alt(\empmdp_t))}
    \!\lsb \myexp{\eta\,\sigma_{t+1}(\mdqr)} \rsb.
\end{align*}
\end{proof}

\smallskip\noindent
We further define
\begin{equation}
\label{eq:def-phi}
\varphi_{t+1}^\eta \ceq 
  \frac{1}{2}\log\bE_{(\mdqr,\mdqp)\sim\nu_t^\eta(\cdot\mid\alt(\empmdp_t))}
  \!\lsb \myexp{\eta\,\sigma_{t+1}(\mdqr)}\rsb ,
\end{equation}
which will be later upper bounded in Lemma~\ref{lem:phi-bound}.

\subsection{Hedge with Decreasing Learning Rate} 
We use a decreasing learning rate schedule $(\eta_t)_{t\geq 1}$ rather than a 
constant one, since a constant rate would require the doubling trick and 
discarding of past data. 
Throughout, we assume $(\eta_t)_{t\geq 1}$ is non-increasing and satisfies $\sum_{t=1}^T\eta_t = o(T)$. For generality, we assume $\eta_t = t^{-\varsigma}$. 

\begin{lemma}
\label{lem:hedge-regret}
Let $T_0 \leq T$ and assume the event $\cE_{\sect}^{T_0}(T) \ceq \{\forall\, T_0 \leq t \leq T,\ \hat \pi_t = \hat \pi_{T_0})\}$ holds. Define
\[
  \psi_{t+1}^\alpha \ceq \frac{\eta_t C_{t,\alpha}^2}{2} 
  + \eta_t^{-1}\varphi_{t+1}^{\eta_t},
  \qquad
  \varPsi_T^\alpha(T_0) \ceq \sum_{t=T_0}^{T-1} \psi_{t+1}^\alpha,
\]
where $C_{t,\alpha} \ceq H\!\left(\tfrac{1}{2}+t^\alpha\right)$ and 
$\varphi_{t+1}^{\eta_t}$ is defined in \eqref{eq:def-phi}. Then
\begin{align*}
\sum_{t=T_0}^{T-1}
  \bE_{(\mdqr,\mdqp)\sim\nu_t^{\eta_t}(\cdot\mid\alt(\empmdp_t))}
  \!\bigl[\ell_t^\alpha(\mdqr,\mdqp)\bigr]
  \;\leq\;
  &-\frac{1}{\eta_T}\log\Lambda_T^{\eta_T}
  + \varPsi_T^\alpha(T_0)
  + \frac{1}{\eta_{T_0}}\log\Lambda_{T_0}^{\eta_{T_0}}\; +  \\
  & \sum_{t=T_0}^{T-1}
    \left[
      \frac{1}{\eta_{t}}\log\Lambda_{t+1}^{\eta_{t}}
      - \frac{1}{\eta_{t+1}}\log\Lambda_{t+1}^{\eta_{t+1}}
    \right].
\end{align*}
\end{lemma}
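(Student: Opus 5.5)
The plan is to run the standard continuous-exponential-weights potential argument on the normalizing constants $\Lambda_t^{\eta}$, one round at a time, and then telescope. A final monotonicity step in the learning rate will put the correction term into exactly the form in the statement.

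\textbf{Step 1 (one-round inequality).} Fix $t\in[T_0,T-1]$. On $\cE_{\sect}^{T_0}(T)$ we have $\hat\pi_t=\hat\pi_{t+1}$, so Lemma~\ref{lem:ratio-normalizing-constants} applies with $\eta=\eta_t$ and gives
\[
\log\frac{\Lambda_{t+1}^{\eta_t}}{\Lambda_t^{\eta_t}}\leq \tfrac12\log\bE_{\nu_t^{\eta_t}(\cdot\mid\alt(\empmdp_t))}\bigl[\myexp{-2\eta_t\ell_t^\alpha}\bigr]+\varphi_{t+1}^{\eta_t}.
\]
The clipped loss satisfies $0\leq \ell_t^\alpha\leq H(\tfrac12+t^\alpha)=C_{t,\alpha}$. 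The quadratic part is at most $1/2$ because $\muh_t$ and $\mdqr$ lie in $(0,1)$, and the log part is at most $t^\alpha$ by the clipping. Hoeffding's lemma with $\lambda=-2\eta_t$ then yields
\[
\tfrac12\log\bE[\myexp{-2\eta_t\ell_t^\alpha}]\leq-\eta_t\bE[\ell_t^\alpha]+\tfrac{\eta_t^2C_{t,\alpha}^2}{4}\leq -\eta_t\bE[\ell_t^\alpha]+\tfrac{\eta_t^2C_{t,\alpha}^2}{2}.
\]
Dividing by $\eta_t$ and rearranging gives
\[
\bE[\ell_t^\alpha]\leq \psi_{t+1}^\alpha+\eta_t^{-1}\log\Lambda_t^{\eta_t}-\eta_t^{-1}\log\Lambda_{t+1}^{\eta_t}.
\]

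\textbf{Step 2 (telescoping).} I sum the one-round inequality over $t=T_0,\dots,T-1$. In each term I add and subtract $\eta_{t+1}^{-1}\log\Lambda_{t+1}^{\eta_{t+1}}$, which makes the main part telescope to $\eta_{T_0}^{-1}\log\Lambda_{T_0}^{\eta_{T_0}}-\eta_T^{-1}\log\Lambda_T^{\eta_T}$. What remains is the correction
\[
D_t\ceq \eta_{t+1}^{-1}\log\Lambda_{t+1}^{\eta_{t+1}}-\eta_t^{-1}\log\Lambda_{t+1}^{\eta_t}.
\]
The stated bound has $-D_t$ in place of $D_t$. It therefore suffices to show $D_t\leq 0$, because then $D_t\leq -D_t$.

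\textbf{Step 3 (sign of the correction; the main obstacle).} I need the map $\eta\mapsto \eta^{-1}\log\Lambda_{t+1}^{\eta}$ to be non-decreasing in $\eta$, since $\eta_{t+1}\leq\eta_t$. The first observation is that $\lambda_{t+1}\geq 0$: its reward part is a sum of squares, and $-\sum n\log\mdqp\geq 0$ because $\mdqp\leq1$.

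Next, let $\mu$ be the reference measure restricted to $\alt(\empmdp_{t+1})$, and let $m$ be its total mass. The reward cube has volume $1$ and each simplex factor has volume at most $1$, so $m\leq 1$. Writing $\bar\mu=\mu/m$, I get
\[
\eta^{-1}\log\Lambda_{t+1}^\eta=\eta^{-1}\log m+\log\|\myexp{-\lambda_{t+1}}\|_{L^\eta(\bar\mu)}.
\]
The second term is non-decreasing in $\eta$ by Lyapunov's (Jensen's) inequality for $L^p$ norms under a probability measure. The first term is also non-decreasing in $\eta$ because $\log m\leq 0$. Hence $D_t\leq 0$, and replacing $D_t$ by $-D_t\geq D_t$ gives exactly the displayed inequality.

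The delicate points are checking that the total mass satisfies $m\leq 1$ under the paper's canonical measure on $\Mdp$, and that the truncation sets coincide up to null sets, so that the same restricted measure is used at both learning rates. The latter is already established in the proof of Lemma~\ref{lem:ratio-normalizing-constants}.
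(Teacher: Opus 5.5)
Your proof is correct, and where it departs from the paper it is the more careful of the two.

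Steps~1--2 match the paper's argument: Lemma~\ref{lem:ratio-normalizing-constants}, Hoeffding's lemma, summation, and the add-and-subtract telescoping. The paper then simply asserts that substituting back and rearranging ``gives the claimed inequality''. As you noticed, the rearrangement actually produces the correction $\sum_t D_t=-\sum_t\bigl[\tfrac{1}{\eta_t}\log\Lambda_{t+1}^{\eta_t}-\tfrac{1}{\eta_{t+1}}\log\Lambda_{t+1}^{\eta_{t+1}}\bigr]$. This has the opposite sign to the statement. So the displayed bound does not follow from the paper's rearrangement alone; one also needs each bracket in the statement to be nonnegative.

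Your Step~3 supplies exactly this fact. The monotonicity of $\eta\mapsto\eta^{-1}\log\Lambda_{t+1}^{\eta}$ is the content of Lemma~\ref{lem:pos-derv}, which you obtain from Lyapunov's inequality instead of by differentiating. You combine it with the same split into $f_{t+1}(\eta)+\eta^{-1}\log\vol(\alt(\empmdp_{t+1}))$ that the paper itself uses in Lemma~\ref{lem:ZT-bound}. Your concern about $m\le 1$ is settled by the paper's own convention $\vol(\triangle)=1/(S-1)!$, used in Lemmas~\ref{lem:dirichlet_bound} and~\ref{lem:concentr}, which gives $m\le\vol(\Mdp)\le 1$.

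Your route also gives something beyond the statement. Before the final weakening you hold the sharper bound with $+\sum_t D_t\le 0$, so the correction term can simply be dropped. In the stated form the correction is nonnegative and must be bounded from above downstream. The paper does this in Lemma~\ref{lem:ZT-bound} using $f_{t+1}(\eta_t)-f_{t+1}(\eta_{t+1})\ge 0$, which points the wrong way for an upper bound; your version makes that lemma unnecessary.

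One caveat is shared with the paper rather than specific to you. The range bound $\ell_t^\alpha\le C_{t,\alpha}$ treats $\muh_t$ as lying in $[0,1]$. This is not automatic for an empirical mean of Gaussian rewards.
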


\begin{proof}

Thanks to Lemma~\ref{lem:ratio-normalizing-constants} we have 
$$ \log\frac{\Lambda_{t+1}^{\eta}}{\Lambda_t^{\eta}} \leq \frac{1}{2}\log\bE_{(\mdqr,\mdqp)\sim\nu_t^\eta(\cdot\mid\alt(\empmdp_t))}
    \!\lsb \myexp{-2\eta\,\ell^\alpha_{t}(\mdqr,\mdqp)}\rsb  + \varphi_{t+1}^\eta$$
    where $\varphi_{t+1}^\eta$ is defined in \myeqref{eq:def-phi}. 
 \medskip
 Next, we note that $\ell_t^\alpha \in (0, (\frac12 + t^\alpha) \cdot H)$, and we define $C_{t, \alpha} \ceq H \, (\frac12 + t^\alpha)$. By Hoeffding's lemma we have for any $\eta$, 
 \begin{align}
 \label{eq:hoeff-lemma-applied}
 	\bE_{(\mdqr,\mdqp)\sim\nu_t^\eta(\cdot\mid\alt(\empmdp_t))}
    \!\lsb \myexp{-2\eta\,\ell^\alpha_{t}(\mdqr,\mdqp)}\rsb \leq \expb{-2\eta \bE_{(\mdqr,\mdqp)\sim\nu_t^\eta(\cdot\mid\alt(\empmdp_t))}
    \!\lsb \ell^\alpha_{t}(\mdqr,\mdqp)\rsb + \frac{\eta^2 C_{t, \alpha}^2}{2} }.   
 \end{align}   
    
Therefore, 
\begin{equation}
\label{eq:hedge-one-step}
\frac{1}{\eta_{t}}\log\frac{\Lambda_{t+1}^{\eta_{t}}}{\Lambda_t^{\eta_{t}}}
  \leq
  -\bE_{(\mdqr,\mdqp)\sim\nu_t^{\eta_{t}}(\cdot\mid\alt(\empmdp_t))}
   \!\bigl[\ell_{t}^\alpha(\mdqr,\mdqp)\bigr]
  + \underbrace{\frac12 \eta_{t}C_{t, \alpha}^2
    + \eta_{t}^{-1}\varphi^{\eta_{t}}_{t+1}}_{\displaystyle\psi^\alpha_{t+1}}. 
\end{equation}
Assuming $\cE_{\sect}^{T_0}(T)$ and  summing \eqref{eq:hedge-one-step} from $t = T_0$ to $T-1$ gives
\begin{equation}
\label{eq:sum-lhs}
\sum_{t=T_0}^{T-1}\frac{1}{\eta_t}\log\frac{\Lambda_{t+1}^{\eta_t}}{\Lambda_{t}^{\eta_t}}
  \leq
  -\sum_{t=T_0}^{T-1}
   \bE_{(\mdqr,\mdqp)\sim\nu_t^{\eta_t}(\cdot\mid\alt(\empmdp_{t}))}
   \!\bigl[\ell_t^\alpha(\mdqr,\mdqp)\bigr]
  + \underbrace{\sum_{t=T_0}^{T-1}\psi^\alpha_{t+1}}_{\varPsi^\alpha_T(T_0)}.
\end{equation}
The left-hand side is not directly telescoping. We decompose it as

\begin{align*}
\sum_{t=T_0}^{T-1}\frac{1}{\eta_t}\log\frac{\Lambda_{t+1}^{\eta_t}}{\Lambda_{t}^{\eta_t}}
  &= \sum_{t=T_0}^{T-1}
    \left[
      \frac{1}{\eta_t}\log\Lambda_{t+1}^{\eta_t}
      - \frac{1}{\eta_t}\log\Lambda_{t}^{\eta_t}
    \right] \\
  &= \sum_{t=T_0}^{T-1}
    \left[
      \frac{1}{\eta_{t+1}}\log\Lambda_{t+1}^{\eta_{t+1}}
      - \frac{1}{\eta_{t}}\log\Lambda_{t}^{\eta_{t}}
    \right]
    + \sum_{t=T_0}^{T-1}
    \left[
      \frac{1}{\eta_{t}}\log\Lambda_{t+1}^{\eta_{t}}
      - \frac{1}{\eta_{t+1}}\log\Lambda_{t+1}^{\eta_{t+1}}
    \right] \\
  &= \frac{1}{\eta_T}\log\Lambda_T^{\eta_T}
    - \frac{1}{\eta_{T_0}}\log\Lambda_{T_0}^{\eta_{T_0}}
    + \sum_{t=T_0}^{T-1}
    \left[
      \frac{1}{\eta_{t}}\log\Lambda_{t+1}^{\eta_{t}}
      - \frac{1}{\eta_{t+1}}\log\Lambda_{t+1}^{\eta_{t+1}}
    \right],
\end{align*}
where the last step uses that the first sum telescopes. Substituting back into \myeqref{eq:sum-lhs} and rearranging gives the claimed inequality.
\end{proof}

\subsection{Best Model in Hindsight}
The idea here is to show a lower bound on this quantity when there is enough mass around the best model. While such bounds are standard to derive for Hedge on convex sets (with convex loss), in our case, although the loss is convex, the set $\alt(\empmdp_t)$ is non-convex and a priori not a simple union of convex sets. 

\begin{lemma}
\label{lem:utl_lower_bound} Let $\gamma \ceq \frac{1}{6TSAH(H+1)}$. We have 
\begin{align*}
	 \frac{1}{\eta_{T}} \log\Lambda_{T}^{\eta_{T}}  \geq -\inf_{(\mdqr, \mdqp) \in \alt(\empmdp_T)}\sum_{t=1}^{T-1}\sum_h \lsb \frac{(\muh_{T}(s_h^t,a_h^t, h ) - \mdqr(s_h^t,a_h^t, h))^2}{2} + \log \frac{1}{\mdqp(s_{h+1}^t, \mid s_h^t, a_h^t,  h)}\rsb  \\  \frac{2(SAH+S^2AH) \log \gamma}{\eta} -  \gamma \frac {TH}2 \log (e S^2T) + \frac{\log  \vol(\alt(\empmdp_T) \cap \Mdp^{1/S\cdot \sqrt{T}})}{\eta}  -  O(1)  \:.
\end{align*}
\end{lemma}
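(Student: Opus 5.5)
The plan is to write $\Lambda_T^{\eta_T}$ as an integral of $\myexp{-\eta_T\lambda_T}$ over $\alt(\empmdp_T)$. We then lower bound it by restricting the integral to a small, explicitly parametrised neighbourhood of a near-minimiser of the cumulative log-loss.

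\textbf{Step 1: rewrite $\lambda_T$ as the comparator loss.} By definition,
\[
\lambda_T(\mdqr,\mdqp)=\sum_{s,a,h}\frac{n_T(s,a,h)}{2}\bigl(\muh_T(s,a,h)-\mdqr(s,a,h)\bigr)^2-\sum_{s,a,h,s'}n_T(s'\mid s,a,h)\log\mdqp(s'\mid s,a,h).
\]
Expanding the counts as sums over the observed transitions $(s_h^t,a_h^t,s_{h+1}^t)$, $t\le T-1$, shows that $\lambda_T(\mdqr,\mdqp)=L_T(\mdqr,\mdqp)$. Here $L_T$ is exactly the bracketed sum appearing in the infimum of the statement, with $\muh_T$ frozen. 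Up to the $O(1)$ coming from the normalisation of the truncated Gaussian and Dirichlet factors, $\frac1\eta\log\Lambda_T^\eta=\frac1\eta\log\int_{\alt(\empmdp_T)}\myexp{-\eta L_T}$. Note also that $L_T$ is convex in $(\mdqr,\mdqp)$.

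\textbf{Step 2: restrict to a shrunk copy of $\alt$.} Fix $\mdq^\star=(\mdqr^\star,\mdqp^\star)\in\alt(\empmdp_T)$ attaining the infimum up to an arbitrarily small slack. Consider the affine map
\[
\Psi_\gamma:\mdq'\mapsto(1-\gamma)\mdq^\star+\gamma\mdq',
\]
restricted to $\mdq'\in\alt(\empmdp_T)\cap\Mdp^{1/S\cdot\sqrt T}$.
\begin{itemize}
\item \emph{Loss on the image.} By convexity, $L_T(\Psi_\gamma(\mdq'))\le(1-\gamma)L_T(\mdq^\star)+\gamma L_T(\mdq')$. Since every transition of $\mdq'$ is at least $1/(S\sqrt T)$ and rewards lie in $(0,1)$, we get $L_T(\mdq')\le\frac{TH}{2}\log(eS^2T)$. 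This is the source of the $-\gamma\frac{TH}{2}\log(eS^2T)$ term.
\item \emph{Volume of the image.} $\Psi_\gamma$ is affine with Jacobian $\gamma^{SAH+S(S-1)AH}$ on the product of intervals and simplices. Its image therefore has volume $\gamma^{\dim}\vol(\alt(\empmdp_T)\cap\Mdp^{1/S\cdot\sqrt T})$, which yields $\frac{\dim\cdot\log\gamma}{\eta}+\frac{\log\vol(\cdot)}{\eta}$. The factor $2(SAH+S^2AH)$ is a crude upper bound on the dimension, kept with room to spare; the extra factor is also used in Step 3.
\end{itemize}
Multiplying the two bounds and taking $\frac1\eta\log$ gives the claimed inequality, provided the image lies in $\alt(\empmdp_T)$.

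\textbf{Step 3: the main obstacle.} The difficulty is that $\alt(\empmdp_T)$ is not convex, so $\Psi_\gamma(\mdq')$ need not belong to it. I would handle this through the choice $\gamma=\frac{1}{6TSAH(H+1)}$. Write $\bar\pi$ for an optimal policy of $\mdq^\star$, so that $\hat\pi_T$ is strictly suboptimal there, i.e.\ $V_0^{\bar\pi}(\mdq^\star)>V_0^{\hat\pi_T}(\mdq^\star)$. First use the value difference lemma (Lemma~\ref{lem:val-diff-lem}) to bound
\[
\bigl|V_0^\pi(\Psi_\gamma(\mdq'))-V_0^\pi(\mdq^\star)\bigr|\le H(H+1)\,\|\mdq'-\mdq^\star\|_1\,\gamma\le 2SAH(H+1)\gamma,
\]
uniformly over $\pi$. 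Then choose $\mdq^\star$ not on the boundary but with a gap of at least $1/(3T)$ between $\bar\pi$ and $\hat\pi_T$. Moving from the true infimum to such a point costs only $O(1)$ in $L_T$: shifting toward a fixed interior alternative by a $1/T$-fraction changes $L_T$ by $O(1)$, since $L_T=O(T\log T)$ on the relevant region and the clipping keeps it bounded. With this margin, $\bar\pi$ still beats $\hat\pi_T$ on the whole image, so the image lies in $\alt(\empmdp_T)$.

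I expect this margin/continuity argument to be the delicate step. If it fails, I would fall back on replacing $\alt(\empmdp_T)$ by its intersection with $\{V_0^{\bar\pi}-V_0^{\hat\pi_T}\ge 1/(3T)\}$, a set on which the mixing argument is clean, and absorb the discrepancy into the $O(1)$ term.
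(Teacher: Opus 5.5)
Your Steps 1--2 match the paper's architecture: convexity of $\lambda_T$, the affine contraction with Jacobian $\gamma^{\dim}$, and the bound $\frac{TH}{2}\log(eS^2T)$ on $\Mdp^{1/S\cdot\sqrt T}$. You also correctly locate the difficulty in the non-convexity of $\alt(\empmdp_T)$.

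\textbf{The gap is in Step 3.} You assert, but do not prove, that a near-minimiser exists with start-state margin $V_0^{\bar\pi}-V_0^{\hat\pi_T}\gtrsim 1/T$ at $O(1)$ extra cost, and the proposed mechanism does not deliver it.
\begin{itemize}
\item Convexity controls the \emph{cost} of moving a $1/T$-fraction toward a fixed interior alternative, but not the \emph{gain} in the gap. Policy values are not affine along a segment on which transition kernels are mixed, so the gap need not grow like $1/T$, and it can even close. This happens, e.g., when $\bar\pi$'s advantage at $\mdq^\star$ comes from reaching a high-value region that mixing toward uniform transitions dilutes, while the reward shift only helps at the stages where $\bar\pi$ and $\hat\pi_T$ disagree.
\item If you mix only rewards, where values are affine, the performance-difference lemma shows the gain is at most $\theta H D$, with $D=\sum_h\Pr^{\hat\pi_T}_{\mdq^\star}(\bar\pi_h(s_h)\neq\hat\pi_{T,h}(s_h))$. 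This quantity has no lower bound. Forcing a gap $c/T$ then requires $\theta\gtrsim c/(THD)$, and the resulting cost bound $\theta\cdot O(TH)=O(c/D)$ is not uniformly $O(1)$.
\item Your fallback (intersecting $\alt(\empmdp_T)$ with the margin set) just restates the same unproved claim that the infimum moves by only $O(1)$.
\item Minor: if $L_T=O(T\log T)$ at the alternative, a $1/T$-mixture costs $O(\log T)$, not $O(1)$.
\end{itemize}

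\textbf{The missing idea is a \emph{state-wise} margin, which does not depend on occupancies.} The paper proceeds as follows.
\begin{itemize}
\item It takes $\mdq_T^\star\in\argmin_{\closure(\alt(\empmdp_T))}\lambda_T$ and a deterministic $\tilde\pi\neq\hat\pi_T$ that is globally optimal there.
\item It applies Lemma~\ref{lem:build_easier} with $\veps=1/T$. This bottom-up construction shifts rewards by at most $2^{H-h+1}\veps$. Where the reward range does not allow a large enough shift, it instead mixes $\mdqp(\cdot\mid s,a,h)$ toward $\mdqp(\cdot\mid s,\tilde\pi_h(s),h)$ with weight $\gamma_{s,a,h}=O(2^H\veps)$. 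The outcome is that $\tilde\pi$ has a $Q$-gap of at least $\veps$ at every $(s,h)$.
\item Since $\mdqp'\geq(1-\gamma_{s,a,h})\mdqp_T^\star$, this raises $\lambda_T$ by at most $TH[4^H\veps^2+2^H\veps+\log\frac{1}{1-\gamma_1}]=O(1)$.
\item Lemmas~\ref{lem:non-isolated} and~\ref{lem:convex_vol} show that a uniform $Q$-gap $\Delta$ survives every $\ell_1$-perturbation of size $\Delta/(2(H+1))$. Since $\gamma\|\mdq-\mdq'\|_1\leq 3\gamma SAH$, this gives $(1-\gamma)\mdq'+\gamma\alt(\empmdp_T)\subset\alt(\empmdp_T)$ for $\gamma=\frac{1}{6TSAH(H+1)}$. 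That is where the statement's choice of $\gamma$ comes from.
\end{itemize}
With this centre, the rest of your argument goes through.

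\textbf{Your constants do not close as written.} The quantity $\|\mdq'-\mdq^\star\|_1$ can be as large as $3SAH$. By Lemma~\ref{lem:val-diff-lem}, each value then moves by up to $3SAH^2\gamma=\frac{H}{2(H+1)T}$, so a pairwise margin of $1/(3T)$ is not preserved. Even with your own bound $2SAH(H+1)\gamma=\frac{1}{3T}$ per policy, the pairwise gap can move by $\frac{2}{3T}$. You would need a margin of order $1/T$ in any case.
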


\begin{proof}
	The proof of this result relies on a lower bound on the inflated posterior over  $\alt(\empmdp_T)$. Simplifying notation, let $\eta \ceq \eta_T$ and we have  
	\begin{eqnarray}
	\label{eq:appx-ed-ffg-k}
		\Lambda^\eta_T = \int \myexp{ - \eta \lambda_T(\mdqr, \mdqp) } \indp{\mdqr,\mdqp) \in \alt(\empmdp_T)} \diff\mdqr \diff\,\mdqp \:,
	\end{eqnarray}
	where $\lambda_T^\eta$ is a convex function as defined earlier. We prove that the posterior puts the majority of its mass around the mode. First, we need to prove that there is enough volume around the best model
	\begin{equation}
	\label{eq:appx-best-model}
		(\mdqr_T^\star, \mdqp_T^\star) \in \argmin_{(\mdqr, \mdqp) \in \closure({\alt(\empmdp_T)})}  \lambda_T(\mdqr, \mdqp),
	\end{equation}
	so that the integral in \myeqref{eq:appx-ed-ffg-k} will concentrate around the mode. We define $\mdq_T^\star = (\mdqr_T^\star, \mdqp_T^\star)$. 
	\medskip
	\noindent
	
	First, note that this argmin is well defined as $\closure({\alt(\empmdp_{T})})$ is a compact set. Because the instance in \eqref{eq:appx-best-model} might be close to the boundary of ${\alt(\empmdp_{T})}$, we build another instance around $\mdq_T^\star$ which is still contained in the alternative set ${\alt(\empmdp_{T})}$, and with a non-negligible enclosing volume around it. 
	
	\medskip 
	\noindent 
	Because $\mdq_T^\star$ belongs to the closure of the alternative set, there exists a deterministic policy $\tilde \pi \in \Pi_\mathrm{det} \backslash \{ \hat\pi_t \}$ such that $\tilde \pi$ is globally optimal in $\mdq_T^\star$. 
	
Next, we invoke Lemma~\ref{lem:build_easier} which shows that there exists some MDP 
 $\mdq' \ceq (\mdqr', \mdqp')$ such that $\tilde \pi$ is the unique global optimal and start-state policy, and its transition and reward kernels satisfy 
 \begin{equation}
 \label{eq:appx-lkolo-om}
 	\left\{ \begin{array}{ll}
 		\Delta(\mdq') &\geq \frac1T  \\ 
 		\mdqp'(\cdot \mid s,a, h) &\ceq  (1-\gamma_{s,a,h}) \mdqp_T^\star(\cdot \mid s,a, h) + \gamma_{s,a,h} \mdqp^\star_T (\cdot \mid s,\pi_h(s), h) \\
		\lvert \mdqr^\star_T (s,a, h) - \mdqr'(s,a, h)\rvert  & \leq d_{s,a, h} \ceq 2^{H-h+1} \veps 
 	\end{array}\right., 
 \end{equation}
 
 where $\gamma_{s,a,h} \in \left[  0, \frac{(1 + 2^{H-h}) \veps}{1 -  \lp 2^{H-h+1}  + 3 \rp	\veps} \right]$. 
 
In particular, $\mdq' \in \alt(\empmdp_{T})$. Next, we relate $\Lambda^\eta_T $ to the (near optimal) $\lambda^\eta_T(\mdqr', \mdqp') $. Thanks to Lemma~\ref{lem:convex_vol}, for $\gamma \ceq \frac{1}{6TSAH(H+1)}$, and calling $\mdq_T \ceq (\mdqr_T, \mdqp_T)$ the instance built by this lemma. We then have 

\begin{equation} 
	(1-\gamma)\mdq' + \gamma \alt(\empmdp_T) \subset  \alt(\empmdp_T).
\end{equation} 
Therefore, 
	\begin{align}
	\label{eq:mylasteq}
	\begin{split}
	\log\Lambda^\eta_{T} &=  \log \int \myexp{ - \eta \lambda_{T}(\mdqr, \mdqp ) }  \indp{(\mdqr, \mdqp ) \in \alt(\empmdp_T)} \diff \mdqr \diff \mdqp  \\
	&\geq  \log \int_{(1-\gamma) \mdq' + \gamma \alt(\empmdp_T)}  \myexp{ - \eta \lambda_{T}(\mdqr, \mdqp) } \diff \mdqr \diff \mdqp  \\
	&\geq \log \int_{  \alt(\empmdp_T) } \gamma^{S^2AH + SAH}\exp\lb - \eta  \lambda_{T}((1-\gamma) \cdot \bigl(\mdqr', \mdqp'\bigr) + \gamma\cdot \bigl(\mdqr, \mdqp ) \bigr) \rb \diff \mdqr \diff \mdqp  \\
	&\geq \log \lp \gamma^{S^2AH +SAH} \myexp{- \eta  (1-\gamma)\cdot \lambda_{T}(\mdqr', \mdqp') } \rp + \log \int_{  \alt(\empmdp_T)} \myexp{ - \eta  \gamma \cdot \lambda_{T}(\mdqr, \mdqp) }  \diff \mdqr \diff \mdqp \\
	&=  -\eta (1-\gamma)\lambda_{T}(\mdqr', \mdqp') +  (S^2AH + SAH) \log \gamma  + \log \int_{ \alt(\empmdp_T)} \myexp{ - \eta \gamma \lambda_{T}(\mdqr, \mdqp ) } \diff \mdqr  \diff \mdqp 	\end{split} 
\end{align}
which follows by convexity of $\lambda_{T}$. Combining the results above, in particular in \myeqref{eq:appx-lkolo-om}, 
	\begin{eqnarray*}
		\lambda_T(\mdqr', \mdqp') &\leq&    \sum_{t=1}^{T-1}\sum_h \lsb \frac{(\muh_{T}(s_h^t,a_h^t, h ) - \mdqr'(s_h^t,a_h^t, h))^2}{2} + \log \frac{1}{(1-\gamma_h)\mdqp_T^\star(s_{h+1}^t\mid s_h^t, a_h^t,  h)}\rsb  \\
		&\leq&   \sum_{t=1}^{T-1} \sum_h \lsb \frac{(\muh_{T}(s_h^t,a_h^t, h) - \mdqr_T^\star(s_h^t,a_h^t, h))^2}{2} + \log \frac{1}{(1-\gamma_h)\mdqp_T^\star(s_{h+1}^t \mid s_h^t, a_h^t, h)}\rsb   \\
		&& + TH  \lsb  4^H \veps^2  + 2^H \veps \rsb  \\
		&\leq&  \lambda_T(\mdqr_T^\star, \mdqp_T^\star) +  TH  \lsb  4^H \veps^2  + 2^H \veps + \log \frac{1}{1-\gamma_1}\rsb \quad \text{(monotonicity of $h \mapsto \gamma_h $)}\:. 
	\end{eqnarray*}

Therefore  \begin{equation}
	\label{eq:lo-pm-fd}
		\lambda_T(\mdqr', \mdqp') \leq \lambda_T(\mdqr_T^\star, \mdqp_T^\star) + TH  \lsb  4^H \veps^2  + 2^H \veps + \log \frac{1}{1-\gamma_1}\rsb
	\end{equation}

 It remains to control the residual integral $$ \log \int_{ \alt(\mdp)} \myexp{ - \eta \gamma \lambda_{T}(\mdqr, \mdqp) }  \diff \mdqr  \diff \mdqp.$$
We note that if this is too small the bound above will be meaningless. Also, observe that the integraand is unbounded, (due to the transition probabilities) hence to control this quantity we introduce $\Mdp^{1/S\cdot \sqrt{T}}$ to be the set ofs MDPs for which each transition is at least $1/S\cdot \sqrt{T}$. We have  
\begin{eqnarray*}
	 \int_{ \alt(\mdp)} \myexp{ - \eta \gamma  \lambda_{T}(\mdqr, \mdqp) } \diff \mdqr  \diff \mdqp  &\geq&  \int_{\alt(\mdp)\cap \Mdp^{1/\sqrt{T}} } \myexp{ - \eta\gamma \lambda_{T}(\mdqr, \mdqp) } \diff \mdqr  \diff \mdqp \\
	 &\geq& \int_{ \alt(\empmdp_T) \cap \Mdp^{1/S\cdot \sqrt{T}} } \exp\lb  - \eta \gamma \sum_{s,a,h}n_t(s,a, h) \bigl(\frac12  + \frac12 \log (S^2T) \bigr)\rb  \diff \mdqr  \diff \mdqp \\
	 &=&  \myexp{ - \eta \gamma \frac {TH}2 \log (e S^2T) } \, \vol(\alt(\empmdp_T) \cap \Mdp^{1/S\cdot \sqrt{T}})
\end{eqnarray*}

Combining the above displays yields, 
\begin{equation}
	\log \int_{ \alt(\mdp) } \myexp{- \eta\gamma \lambda_{T}(\mdqr, \mdqp)} \diff \mdqr \diff \mdqp \geq - \eta \gamma \frac {TH}2 \log (e S^2T) + \log  \vol(\alt(\empmdp_T)\cap \Mdp^{1/S\cdot \sqrt{T}}).  
\end{equation}

Plugging-in this back into \myeqref{eq:mylasteq} yields  
$$ \frac{1}{\eta}\log\Lambda_{T} \geq  - (1-\gamma) \lambda_{T}(\mdqr', \mdqp') +  \frac{2(SAH+S^2AH) \log \gamma }{\eta} - \gamma \frac{\log (e S^2T)}{2}  + \frac{\log  \vol(\alt(\empmdp_T) \cap \Mdp^{1/S\cdot \sqrt{T}})}{\eta} $$
and combining with \myeqref{eq:lo-pm-fd} yields  
\begin{align}
\begin{split}
	&\frac{1}{\eta}\log\Lambda^\eta_{T} \geq \\ &  - \lambda _T(\mdqr_T^\star, \mdqp_T^\star) +  \frac{2(SAH+S^2AH) \log \gamma}{\eta}  -\gamma \frac {TH}2 \log (e S^2T) + \frac{\log  \vol(\alt(\empmdp_T) \cap \Mdp^{1/S \cdot \sqrt{T}})}{\eta}  -  \\ &TH  \lsb  4^H \veps^2  + 2^H \veps + \log \frac{1}{1-\gamma_1}\rsb. 
	\end{split}
\end{align}

Plugging $\veps = 1/T$  in the above, using the the inequality $\log(1+x) \leq x$ and recalling that $\gamma_1 \in \left[  0, \frac{(1 + 2^{H-1})}{T -  \lp 2^{H}  + 3 \rp	} \right] $ yields 
$$ H  \lsb  \frac{4^H}{T}   + 2^H  + T\frac{\gamma_1}{1-\gamma_1}\rsb = O(1).$$

Rewriting the terms above, we have 
$$ \frac{1}{\eta}\log\Lambda^\eta_{T} \geq  - \lambda_T(\mdqr_T^\star, \mdqp_T^\star) -  \frac{2(SAH+S^2AH) \log \gamma}{\eta} -  \gamma \frac {TH}2 \log (e S^2T) + \frac{\log  \vol(\alt(\empmdp_T) \cap \Mdp^{1/S\cdot \sqrt{T}})}{\eta}  -  O(1) $$
which achieves to prove the claimed result. 
\end{proof}

\subsection{Technical and Concentration Lemmas}

We show a series of lemmas to complete the analysis of the conditional posterior resampling. 

\begin{lemma}
\label{lem:ZT-bound}
We have 
$$ \sum_{t=T_0}^{T-1}\left[
  \frac{1}{\eta_t}\log\Lambda_{t+1}^{\eta_t}
  - \frac{1}{\eta_{t+1}}\log\Lambda_{t+1}^{\eta_{t+1}}
\right] \leq \sum_{t=T_0}^{T-1}\left[\frac{\log\vol(\alt(\empmdp_{t+1}))}{\eta_t} - \frac{\log\vol(\alt(\empmdp_{t+1}))}{\eta_{t+1}}\right]. $$ 
\end{lemma}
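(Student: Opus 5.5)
The plan is to prove the inequality termwise. It suffices to show, for each $t\in[T_0,T-1]$ and with $A\ceq\alt(\empmdp_{t+1})$, $\eta\ceq\eta_t\geq\eta'\ceq\eta_{t+1}$ (the schedule is non-increasing), that
\[
\frac1\eta\log\Lambda_{t+1}^{\eta}-\frac1{\eta'}\log\Lambda_{t+1}^{\eta'}\;\leq\;\Bigl(\frac1\eta-\frac1{\eta'}\Bigr)\log\vol(A).
\]
The reason this is tractable is that $\lambda_{t+1}$ in \eqref{eq:def-lbd} does not depend on $\eta$. Indeed $f^\eta_{t+1}g^\eta_{t+1}=\exp\{-\eta\,\lambda_{t+1}\}$, where $\lambda_{t+1}$ is the unscaled negative log-likelihood: a sum of squares and of $-\log\mdqp\geq0$ terms. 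Hence $\lambda_{t+1}\geq0$, and $X\ceq e^{-\lambda_{t+1}}\in(0,1]$. Writing $U$ for the uniform law on $A$, this gives $\Lambda_{t+1}^{\eta}=\vol(A)\,\bE_U[X^\eta]$.

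Substituting this into the per-step claim, the $\log\vol(A)$ contributions on the left reproduce exactly the right-hand side. The claim therefore reduces to
\[
\log M_\eta(X)\;\leq\;\log M_{\eta'}(X),\qquad M_\eta(X)\ceq\bigl(\bE_U[X^\eta]\bigr)^{1/\eta},
\]
that is, to monotonicity of the power mean of $X$ under $U$ as $\eta$ decreases from $\eta_t$ to $\eta_{t+1}$. The tools would be Jensen's inequality, or equivalently H\"older's inequality $\int_A X^{\eta'}\leq(\int_A X^{\eta})^{\eta'/\eta}\vol(A)^{1-\eta'/\eta}$. Summing the per-step inequality over $t$ then gives the lemma directly, because the right-hand side of the lemma is exactly the sum of the per-step right-hand sides.

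\textbf{Main obstacle: the power-mean step.} Lyapunov's inequality says that $\eta\mapsto M_\eta(X)$ is non-decreasing, so $\eta\geq\eta'$ yields $M_\eta\geq M_{\eta'}$, which is the wrong direction. The same H\"older computation, once rearranged, produces the reverse of the per-step claim. I would first check whether the reduction above lost something. In particular, I would check whether the base measure in $\Lambda$ should be the restriction to $\Mdp^{1/S\sqrt T}$, as in Lemma~\ref{lem:utl_lower_bound}, or whether $\vol$ is meant with a normalisation making $\log\vol(A)\geq0$. If neither rescues the sign, the honest conclusion is that the inequality holds only up to the Jensen gap $\log M_{\eta_t}-\log M_{\eta_{t+1}}$. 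I would then bound that gap by $(1/\eta_{t+1}-1/\eta_t)\sup_A\lambda_{t+1}$ restricted to the region carrying the mass, and carry this as an extra error term into Theorem~\ref{thm:inf-no-regret}.
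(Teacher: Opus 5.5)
Your reduction is the paper's own argument. The paper's $f_{t+1}(\eta)=\frac1\eta\log\int\rho_{t+1}e^{-\eta\lambda_{t+1}}$ is exactly your $\log M_\eta(X)$, and Lemma~\ref{lem:pos-derv} is your Lyapunov monotonicity. The obstacle you found is genuine, and it lies in the statement, not in your reasoning. The left-hand side equals the right-hand side plus $\sum_{t=T_0}^{T-1}[f_{t+1}(\eta_t)-f_{t+1}(\eta_{t+1})]$. The paper's closing observation is that each bracket is $\geq 0$, since $f_{t+1}$ is non-decreasing and $\eta_t\geq\eta_{t+1}$. That proves the \emph{reverse} inequality, so the final ``which concludes the proof'' is a sign slip. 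Moreover, $f_{t+1}'(\eta)=\eta^{-2}\KL(\nu_\eta\,\|\,\nu_\rho)>0$ unless $\lambda_{t+1}$ is a.e.\ constant on $\alt(\empmdp_{t+1})$, which it is not here. With the strictly decreasing schedule $\eta_t\propto t^{-\varsigma}$, the stated $\leq$ is therefore false. Neither rescue you considered can change this. The $\log\vol$ terms cancel identically whatever the normalisation, and any other base measure gives the same power-mean monotonicity.

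The repair is not an extra error term but a second sign correction upstream, in Lemma~\ref{lem:hedge-regret}. There, \eqref{eq:sum-lhs} bounds $\sum_t\frac1{\eta_t}\log(\Lambda_{t+1}^{\eta_t}/\Lambda_t^{\eta_t})$ from above by $-\sum_t\bE[\ell_t^\alpha]+\varPsi_T^\alpha(T_0)$. Inserting the telescoping identity and solving for $\sum_t\bE[\ell_t^\alpha]$ produces $-\sum_t\bigl[\frac1{\eta_t}\log\Lambda_{t+1}^{\eta_t}-\frac1{\eta_{t+1}}\log\Lambda_{t+1}^{\eta_{t+1}}\bigr]$, not the $+$ that is printed. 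So the proof of Theorem~\ref{thm:inf-no-regret} actually needs a \emph{lower} bound on this sum. That is the present lemma with $\geq$, which your reduction gives in one line, and the two sign slips cancel in that proof.

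Your Jensen-gap fallback is therefore unnecessary, and as formulated it would not close anyway. First, $\sup_{\alt(\empmdp_{t+1})}\lambda_{t+1}=+\infty$ because of the $-\log\mdqp$ terms. Second, even where the tempered posterior puts its mass, $\lambda_{t+1}$ is of order $t$. Then $(\eta_{t+1}^{-1}-\eta_t^{-1})\sup\lambda_{t+1}$ is of order $t^{\varsigma}$ per step, which sums to order $T^{1+\varsigma}$ rather than $o(T)$.
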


\begin{proof}
 Let 
\[
\rho_t(\mdqr,\mdqp) \ceq 
  \frac{1}{\vol(\alt(\empmdp_t))}
  \indp{(\mdqr,\mdqp)\in\alt(\empmdp_t)}
\]
be the uniform measure on $\alt(\empmdp_t)$. By definition of $\Lambda_t^\eta$, \[
\frac{1}{\eta}\log\Lambda_t^\eta 
  = \frac{1}{\eta}\log\!\int \rho_t(\mdqr,\mdqp)
    \myexp{-\eta\,\lambda_t(\mdqr,\mdqp)}\diff \mdqr\,\diff \mdqp
  + \frac{\log\vol(\alt(\empmdp_t))}{\eta}.
\]
Introducing $f_t(\eta) \ceq \frac{1}{\eta}\log\!\int\rho_t(\mdqr,\mdqp)
\myexp{-\eta\,\lambda_t(\mdqr,\mdqp)}\,\diff \mdqr\,\diff \mdqp$, we decompose
\begin{align*}
&\sum_{t=T_0}^{T-1}\left[
  \frac{1}{\eta_t}\log\Lambda_{t+1}^{\eta_t}
  - \frac{1}{\eta_{t+1}}\log\Lambda_{t+1}^{\eta_{t+1}}
\right] =  \\ & \sum_{t=T_0}^{T-1}\left[f_{t+1}(\eta_t) - f_{t+1}(\eta_{t+1})\right]
  + 
    \sum_{t=T_0}^{T-1}\left[\frac{\log\vol(\alt(\empmdp_{t+1}))}{\eta_t} - \frac{\log\vol(\alt(\empmdp_{t+1}))}{\eta_{t+1}}\right].
\end{align*}
By Lemma~\ref{lem:pos-derv}, $f_{t+1}$ is non-decreasing. Since 
$(\eta_t)_t$ is non-increasing, $\eta_t \geq \eta_{t+1}$ for all $t$, 
so $f_{t+1}(\eta_t) - f_{t+1}(\eta_{t+1}) \geq 0$, 
which concludes the proof.
\end{proof} 

\begin{lemma}
\label{lem:phi-bound}
Define  $C_{t,\alpha} \ceq H\!\left(\tfrac{1}{2}+t^\alpha\right)$ and assume $\eta_t \propto t^{-\varsigma}$, with $\varsigma > 2\alpha$. 
There exists deterministic $f_\varPsi$ such that the event 
$$ \cE_{\sect}^{\varPsi}(T) \ceq \lb \varPsi_T^\alpha(T_0) \ceq \sum_{t=T_0}^{T-1} \lsb \frac{\eta_t C_{t,\alpha}^2}{2} 
+ \eta_t^{-1}\varphi_{t+1}^{\eta_t} \rsb \leq f_\varPsi(T) \rb $$
holds with probability $1- O(1/T^2)$. Moreover, $f_\varPhi$ is given in the proof and satisfies $f_\varPsi = o(T)$
\end{lemma}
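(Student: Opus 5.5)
The sum $\varPsi_T^\alpha(T_0)$ has two pieces, and I will bound each separately. The first piece is deterministic. With $\eta_t \propto t^{-\varsigma}$ and $C_{t,\alpha}^2 = O(H^2 t^{2\alpha})$,
\[
\sum_{t\le T}\frac{\eta_t C_{t,\alpha}^2}{2} = O\bigl(H^2 T^{1+2\alpha-\varsigma}\bigr),
\]
which is $o(T)$ exactly because $\varsigma>2\alpha$. This is where the condition on $\varsigma$ enters.

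The real work is the second piece, $\eta_t^{-1}\varphi_{t+1}^{\eta_t}$ with $\varphi_{t+1}^{\eta} = \tfrac12\log \bE_{\nu_t^{\eta}(\cdot\mid\alt(\empmdp_t))}[e^{\eta\sigma_{t+1}(\mdqr)}]$. I will first compute $\sigma_{t+1}$ explicitly. Write $x=(s_h^t,a_h^t,h)$ and $n=n_{t+1}(x)$. Then $\muh_{t+1}(x)-\muh_t(x) = (r_h^t-\muh_t(x))/n$. Expanding the difference of squares gives
\[
(\muh_t-\mdqr)^2-(\muh_{t+1}-\mdqr)^2=(\muh_t-\muh_{t+1})(\muh_t+\muh_{t+1}-2\mdqr).
\]
Since $\mdqr\in(0,1)$, this yields
\[
|\sigma_{t+1}(\mdqr)| \le \sum_h |r_h^t-\muh_t(x)|\,\bigl(|\muh_t(x)|+|\muh_{t+1}(x)|+2\bigr).
\]
This bound is uniform in $\mdqr$, so $\varphi_{t+1}^{\eta}\le \tfrac{\eta}{2}\sup_{\mdqr}\sigma_{t+1}(\mdqr)$. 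Hence
\[
\eta_t^{-1}\varphi_{t+1}^{\eta_t}\le \tfrac12 \sup_{\mdqr}|\sigma_{t+1}(\mdqr)| =: X_t,
\]
a random variable that depends only on the data.

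Next I control $X_t$ with high probability. Rewards are $\cN(\mu,1)$ with $\mu\in(0,1)$. On a Gaussian concentration event of probability $1-O(1/T^2)$ (a union bound over $t\le T$ and $(s,a,h)$), every reward satisfies $|r_h^t|\le 1+\sqrt{2\log(SAHT^4)}$, and every empirical mean satisfies $|\muh_t|\le 2$. On this event $X_t = O(H\sqrt{\log T})$ uniformly in $t\le T$. Summing gives $\sum_t X_t=O(TH\sqrt{\log T})$, which is \emph{not} $o(T)$. So the crude sup bound is insufficient, and this is the main obstacle.

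To get $o(T)$ I will use the $\eta$ in the exponent more carefully instead of linearizing. Split $\sigma_{t+1}(\mdqr) = a_t + b_t\cdot\mdqr$ into a part independent of $\mdqr$ and a part linear in $\mdqr$. Here
\[
b_t = -2\sum_h(\muh_{t+1}-\muh_t)(x)=-2\sum_h (r_h^t-\muh_t(x))/n_{t+1}(x),
\]
and $a_t=\sum_h(\muh_{t+1}-\muh_t)(x)\,(\muh_t+\muh_{t+1})(x)$. Both are $O\bigl(\sqrt{\log T}\sum_h 1/n_{t+1}(x)\bigr)$, because the increment of the empirical mean is $(r-\muh_t)/n$. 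This is the key point: $n_{t+1}\sigma$ carried an explicit factor $n$ that cancels the $1/n$ increment, so $|\sigma_{t+1}|$ is only $O(\sqrt{\log T})$ per step. The leading term, however, is a martingale-like quantity. Concretely, $\muh_t(x)-\mdqr$ multiplied by $(r_h^t-\muh_t(x))$ has conditional mean close to $(\mu-\muh_t)(\muh_t-\mdqr)$, and the posterior puts $\mdqr$ within $O(1/\sqrt{\eta_t n_t})$ of $\muh_t$ on most of its mass.

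I will therefore bound $\log\bE[e^{\eta\sigma}]\le \eta\,\bE[\sigma]+\eta^2 O(\log T)$ via Hoeffding's lemma, using $|\sigma|=O(\sqrt{\log T})$. Dividing by $\eta_t$ leaves $\bE_{\nu}[\sigma_{t+1}]+O(\eta_t\log T)$. The term $\sum_t\eta_t\log T=o(T)$ is immediate. The posterior mean $\bE_\nu[\sigma_{t+1}]$ then decomposes as a martingale difference in the reward noise (summed by Azuma to $O(\sqrt{T\log T})$ w.p.\ $1-1/T^2$) plus a bias term of order $\sum_h |\mu-\muh_t|\,|\muh_t-\bE_\nu\mdqr|$. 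I expect this bias term to be the delicate step, because the conditioning on $\alt(\empmdp_t)$ can pull $\bE_\nu\mdqr$ away from $\muh_t$ by $O(1)$. My first attempt will bound it by $O(\sqrt{\log T/ n_t(x)})$ using the concentration of $\muh_t$, and then sum $\sum_t\sum_h w\,n_t^{-1/2}=O(\sqrt{T})$ by the elliptic-potential argument of Proposition~\ref{prop:regret}. I will define $f_\varPsi$ as the sum of all these explicit deterministic bounds, which is $o(T)$.
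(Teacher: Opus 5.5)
Your proposal is correct and follows essentially the paper's route. The paper likewise drops the nonpositive square in $\sigma_{t+1}$, applies Hoeffding's lemma only to the $\mdqr$-dependent part (its $\kappa_{t+1}$, with range $\sqrt{\phi_{t+1}}$, which produces the $\sum_t\eta_t\phi_{t+1}$ term of Lemma~\ref{lem:phi-conc}), and splits the remaining posterior-mean term $\sum_h 2(\muh_t-r_h^t)(\muh_t-\bE_\nu\mdqr)$ into a reward-noise martingale (Azuma) plus an estimation bias (concentration of $\muh_t$ and a $\sum_t 1/n_t$ bound), via Lemmas~\ref{lem:J-bound} and~\ref{lem:D-bound}; the only difference is that those lemmas treat the $\mdqr$-free part $\mathscr{J}_{t+1}$ and the linear part separately, whereas you merge them.

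One slip to clean up: your $a_t,b_t$ are missing the factor $n_{t+1}(x)$, so they are $O(\sqrt{\log T})$ per step, not $O(\sqrt{\log T}\sum_h 1/n_{t+1})$. This is harmless, since the Hoeffding range you actually use is the correct $O(\sqrt{\log T})$. Likewise, $|\muh_t|\le 2$ need not hold at small counts, but this does not matter because only $|r_h^t-\muh_t|=O(\sqrt{\log T})$ enters that range.
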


\begin{proof}
We bound each term in $\varPsi_T^\alpha(T_0)$ separately.
For the first term, we observe that 
\[
\sum_{t=T_0}^{T-1} \eta_t C_{t,\alpha}^2 
  \leq O\! \lp   \sum_{t=1}^{T-1} t^{2\alpha} t^{-\varsigma} \rp = o(T),
\]
when $\varsigma > 2\alpha $. 

\medskip\noindent
To control the second term in $\varPsi$, recalling the definition of $\varphi_{t+1}^{\eta_t}$ from \eqref{eq:def-phi} and $\sigma_{t+1}(\mdqr)$ from \eqref{eq:def-sigma}, we have  
\begin{align*}
\sigma_{t+1}(\mdqr) 
  &\leq \sum_h 2\,n_{t+1}(s_h^t,a_h^t,h)
    \bigl(\muh_t(s_h^t,a_h^t,h)-\mdqr(s_h^t,a_h^t,h)\bigr)
    \bigl(\muh_t(s_h^t,a_h^t,h)-\muh_{t+1}(s_h^t,a_h^t,h)\bigr) \\
  &= \underbrace{\sum_h 2\,n_{t+1}(s_h^t,a_h^t,h)\,
      \muh_t(s_h^t,a_h^t,h)
      \bigl(\muh_t(s_h^t,a_h^t,h)-\muh_{t+1}(s_h^t,a_h^t,h)\bigr)
    }_{\mathscr{J}_{t+1}} \\
  &\quad
   -\underbrace{\sum_h 2\,n_{t+1}(s_h^t,a_h^t,h)\,
      \mdqr(s_h^t,a_h^t,h)
      \bigl(\muh_t(s_h^t,a_h^t,h)-\muh_{t+1}(s_h^t,a_h^t,h)\bigr)
    }_{\kappa_{t+1}(\mdqr)}.
\end{align*}
The term $\mathscr{J}_{t+1}$ is independent of $(\mdqr,\mdqp)$ and is 
controlled by Azuma--Hoeffding (see Lemma~\ref{lem:J-bound} below).
For $\kappa_{t+1}(\mdqr)$, applying Hoeffding's lemma with the bound 
$|\kappa_{t+1}(\mdqr)| \leq \sqrt{\phi_{t+1}}$, where
\[
\phi_{t+1} \ceq \left(
  \sum_h 2\,n_{t+1}(s_h^t,a_h^t,h)
  \bigl|\muh_t(s_h^t,a_h^t,h)-\muh_{t+1}(s_h^t,a_h^t,h)\bigr|
\right)^{\!2},
\]
gives
\[
\eta_t^{-1}\log\bE_{(\mdqr,\mdqp)\sim\nu_t^{\eta_t}
  (\cdot\mid\alt(\empmdp_t))}
  \!\left[e^{-\eta_t\kappa_{t+1}(\mdqr)}\right]
\leq
-\bE_{(\mdqr,\mdqp)\sim\nu_t^{\eta_t}
  (\cdot\mid\alt(\empmdp_t))}
  \!\left[\kappa_{t+1}(\mdqr)\right]
+ \frac12 \eta_t\phi_{t+1}.
\]
Summing over $t$ and applying concentration to both $\mathscr{J}_{t+1}$ 
and $\phi_{t+1}$ (via Lemmas~\ref{lem:J-bound}, ~\ref{lem:phi-conc}, and ~\ref{lem:D-bound}) 
shows that $\sum_{t=K}^{T-1}\eta_t^{-1}\varphi_{t+1}^{\eta_t} = o(T)$ 
on an event with probability larger than $1-O(1/T^2)$. The conclusion then follows. 
\end{proof}

 \begin{lemma}
\label{lem:J-bound}

There exists a deterministic function $f_{\mathscr{J}}$ such that the event 
\[
\cE_{\sect}^{\mathscr{J}}(T) \lb \mathscr{J}_T \ceq \sum_{t=1}^{T}\sum_{h=1}^H 
  2\,n_{t+1}(s_h^t,a_h^t,h)\,
  \muh_t(s_h^t,a_h^t,h)
  \bigl(\muh_t(s_h^t,a_h^t,h)-\muh_{t+1}(s_h^t,a_h^t,h)\bigr)
  \leq f_{\mathscr{J}}(T) \rb, 
\]
\end{lemma}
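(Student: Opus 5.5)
The plan is to remove the factor $n_{t+1}$ with an exact algebraic identity, and then split $\mathscr{J}_T$ into a martingale part and an estimation-error part. We aim to show that $\mathscr{J}_T \leq f_{\mathscr{J}}(T)$ with probability at least $1-O(1/T^2)$, where $f_{\mathscr{J}}(T) = O\bigl(H\sqrt{SAT}\,\mathrm{polylog}(T)\bigr) = o(T)$.

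\emph{Identity.} During episode $t$, each pair $(s_h^t,a_h^t,h)$ is visited exactly once at stage $h$. Hence $n_{t+1}(s_h^t,a_h^t,h)=n_t(s_h^t,a_h^t,h)+1$ and
\[
\muh_{t+1}(s_h^t,a_h^t,h)=\frac{n_t(s_h^t,a_h^t,h)\,\muh_t(s_h^t,a_h^t,h)+r_h^t}{n_{t+1}(s_h^t,a_h^t,h)}.
\]
Rearranging gives the exact relation
\[
n_{t+1}(s_h^t,a_h^t,h)\bigl(\muh_t-\muh_{t+1}\bigr)(s_h^t,a_h^t,h)=\muh_t(s_h^t,a_h^t,h)-r_h^t .
\]
This also holds at a first visit, with the convention that $\muh_t=0$ when $n_t=0$. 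Abbreviating $x_h^t\ceq(s_h^t,a_h^t,h)$, we therefore have
\[
\mathscr{J}_T=\sum_{t\le T}\sum_h 2\muh_t(x_h^t)\bigl(\muh_t(x_h^t)-\mu(x_h^t)\bigr)+\sum_{t\le T}\sum_h 2\muh_t(x_h^t)\bigl(\mu(x_h^t)-r_h^t\bigr).
\]

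\emph{Martingale term.} The quantity $\muh_t(x_h^t)$ depends only on the past and on the trajectory up to stage $h$. Conditionally on this information, $r_h^t-\mu(x_h^t)$ is $\cN(0,1)$. So the second sum is a martingale whose increments are $1$-sub-Gaussian, scaled by $2|\muh_t|$. I would work on the reward concentration event $\cE^\mu_{\sect}(T)$ used in Section~\ref{appx:sufficient_explore}, on which $|\muh_t|\le 1+\sqrt{2\beta^\mu(T,1/T^2)}$. To keep the martingale structure intact, I would replace $\muh_t$ by its clipping at this level, which is still predictable. A sub-Gaussian Azuma--Hoeffding (or Freedman) inequality then bounds this sum by $O\bigl(\sqrt{TH\,\beta^\mu(T,1/T^2)\log T}\bigr)$ with probability $1-O(1/T^2)$. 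On $\cE^\mu_{\sect}(T)$ the clipping is inactive, so the bound also applies to the original sum.

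\emph{Bias term.} On $\cE^\mu_{\sect}(T)$ we have $|\muh_t-\mu|(x)\le\sqrt{2\beta^\mu(T,1/T^2)/\max(1,n_t(x))}$, so each summand is at most $O\bigl(\beta^\mu(T,1/T^2)/\sqrt{\max(1,n_t(x_h^t))}\bigr)$. Consider a fixed triplet $x$. Its successive visits occur at counts $n_t(x)=0,1,2,\dots$, so its contribution is at most $\sum_{k\le n_{T}(x)}1/\sqrt{\max(1,k)}\le 1+2\sqrt{n_T(x)}$. Summing over all triplets and applying Cauchy--Schwarz with $\sum_x n_T(x)\le TH$ gives a bound of $O\bigl(\beta^\mu(T,1/T^2)\sqrt{SAH\cdot TH}\bigr)$. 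This counting argument is deterministic and needs no exploration guarantee.

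Adding the two bounds gives a deterministic $f_{\mathscr{J}}(T)=O\bigl(H\sqrt{SAT}\,\beta^\mu(T,1/T^2)\sqrt{\log T}\bigr)=o(T)$, valid on an event of probability $1-O(1/T^2)$, because $\beta^\mu$ is logarithmic. I expect the main obstacle to be the martingale step. The rewards are unbounded, and $\muh_t$ is not a priori bounded, so a plain Azuma inequality does not apply directly. The clipping argument above, or a sub-Gaussian self-normalized martingale bound, should resolve this. The same identity will also be reused for $\phi_{t+1}$ in Lemma~\ref{lem:phi-conc}.
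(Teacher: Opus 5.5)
Your proposal is correct and follows essentially the paper's route. You use the same identity $n_{t+1}(\muh_t-\muh_{t+1})(x_h^t)=\muh_t(x_h^t)-r_h^t$ and the same split of $\mathscr{J}_T$ into a bias term, controlled on the reward concentration event, and a Gaussian martingale term, controlled by Azuma--Hoeffding. The differences are minor and in your favour. You bound the bias by per-triplet counting, $\sum_k 1/\sqrt{\max(1,k)}\le 1+2\sqrt{n}$, where the paper uses Cauchy--Schwarz plus the elliptic potential. Your clipping of $\muh_t$ at $1+\sqrt{2\beta^\mu(T,1/T^2)}$ makes rigorous a step the paper glosses over: it treats $\muh_t$ as lying in $(0,1)$ and plugs a random variance proxy into Azuma.
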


holds with probability at least $1- O(1/T^2)$. Moreover $f_{\mathscr{J}}$ is given in the proof and satisfies $f_{\mathscr{J}} = o(T)$. 

\begin{proof}
Define $S_t(s,a,h) \ceq \sum_{k \in \lbrack t-1\rbrack} \indp{s_h^k=s,\,a_h^k=a}\,R_h^k$,
where $R_h^k \mid s_h^k,a_h^k \sim \mu(s_h^k,a_h^k,h)+\zeta_h^k$ and
$\zeta_h^k\sim\mathcal{N}(0,1)$ is independent noise. A direct computation gives
\begin{align*}
n_{t+1}(s_h^t,a_h^t,h)
\bigl(\muh_t(s_h^t,a_h^t,h)-\muh_{t+1}(s_h^t,a_h^t,h)\bigr)
&= S_t(s_h^t,a_h^t,h) - S_{t+1}(s_h^t,a_h^t,h) + \muh_t(s_h^t,a_h^t,h) \\
&= \muh_t(s_h^t,a_h^t,h) - \mu(s_h^t,a_h^t,h) + \zeta_h^{t},
\end{align*}
so that
\begin{equation}
\label{eq:J-decomp}
\mathscr{J}_T = 
  \underbrace{2\sum_{t=1}^T\sum_{h=1}^H 
    \muh_t(s_h^t,a_h^t,h)
    \bigl(\muh_t(s_h^t,a_h^t,h)-\mu(s_h^t,a_h^t,h)\bigr)
  }_{(I)}
  +\underbrace{2\sum_{t=1}^T\sum_{h=1}^H
    \muh_t(s_h^t,a_h^t,h)\,\zeta_h^{t}
  }_{(II)}.
\end{equation}

\medskip\noindent
\textbf{Term (II).} Since $\muh_t$ is 
$\mathcal{H}_{t-1}$-measurable, $(s_h^t,a_h^t,h)$ is independent of $\zeta_h^{t}$ which is itself independent of $\mathcal{H}_{t-1}$, and centered. Therefore, the partial sums form a martingale 
with subgaussian increments. By Azuma--Hoeffding's inequality, the event
\[
\left\{
  (II) \leq 2\sqrt{2\log(T^2)
  \sum_{t=1}^T\sum_{h=1}^H\muh_t(s_h^t,a_h^t,h)^2}
\right\}
\]
holds with probability at least $1-1/T^2$.

\medskip\noindent
\textbf{Term (I).} By Cauchy--Schwarz,
\begin{align*}
(I) 
  &\leq 2\sum_{h=1}^H
    \sqrt{\sum_{t=1}^T 
      \frac{\muh_t(s_h^t,a_h^t,h)^2}{\maxp{1}{n_t(s_h^t,a_h^t,h)}}
    }
    \cdot
    \sqrt{\sum_{t=1}^T
      \maxp{1}{n_t(s_h^t,a_h^t,h)}
      \bigl(\muh_t(s_h^t,a_h^t,h)-\mu(s_h^t,a_h^t,h)\bigr)^2
    } \\
  &\leq 2\sum_{h=1}^H
    \sqrt{\sum_{t=1}^T 
      \frac{\muh_t(s_h^t,a_h^t,h)^2}{\maxp{1}{n_t(s_h^t,a_h^t,h)}}
    }
    \cdot\sqrt{T\,\beta^\mu(T,1/T^2)},
\end{align*}
where the second inequality uses the self-normalized concentration event
$$ \cE_{\sect}^{\mu}(T) \ceq \lb \forall t \leq T, \: \sum_{s,a, h} n_t(s,a, h) \KL(\hat R_h^t(s,a) \;\|\; R_h(s,a)) \leq  \beta^{\mu}(T, 1/T^2) \rb, $$
which holds with probability $1-1/T^2$. Since rewards are bounded in $(0,1)$, the elliptic potential yields 
\[
\sum_{t=1}^T 
  \frac{\muh_t(s_h^t,a_h^t,h)^2}{\maxp{1} {n_t(s_h^t,a_h^t,h)}}
  \leq 4\log(T+1),
\]
giving $(I) \leq 4H\sqrt{2T\,\beta^\mu(T,1/T^2)\log(T+1)}$.

\medskip\noindent Combining the bounds on (I) and (II), and using that
$\sum_{t=1}^T\sum_h\muh_t(s_h^t,a_h^t,h)^2 \leq HT$
together with $\beta^\mu(T,1/T^2)= O(\log T)$, we obtain
$\mathscr{J}_T \leq O\!\left(\sqrt{T\log T}\right) = o(T)$
on an event holding with probability at least $1-1/T^2$.
\end{proof}

 \begin{lemma}
\label{lem:phi-conc}
There exists a deterministic sequence $\phi(T) = o(T)$ such that the event
\[
\mathcal{E}_{\sect}^{\phi}(T) \ceq \left\{
\sum_{t=1}^{T}\sum_{h=1}^H \eta_t
  \Bigl[2\,n_{t+1}(s_h^t,a_h^t,h)
  \bigl|\muh_t(s_h^t,a_h^t,h)-\muh_{t+1}(s_h^t,a_h^t,h)\bigr|
  \Bigr]^2 \leq \phi(T)
\right\}
\] holds with probability at least $1-O(1/T^2)$, where $\phi(T)$ is given explicitly in the proof.
\end{lemma}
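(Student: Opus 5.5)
The starting point is the same identity used in the proof of Lemma~\ref{lem:J-bound}. Since $n_{t+1}(s_h^t,a_h^t,h)=n_t(s_h^t,a_h^t,h)+1$ whenever $(s_h^t,a_h^t)$ is visited at stage $h$ of episode $t$, the running-mean update gives
\[
n_{t+1}(s_h^t,a_h^t,h)\bigl(\muh_t(s_h^t,a_h^t,h)-\muh_{t+1}(s_h^t,a_h^t,h)\bigr)=\muh_t(s_h^t,a_h^t,h)-r_h^t .
\]
Each summand is therefore simply $4\eta_t\bigl(\muh_t(s_h^t,a_h^t,h)-\mu(s_h^t,a_h^t,h)-\zeta_h^t\bigr)^2$, with $\zeta_h^t\sim\cN(0,1)$ the reward noise. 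Using $(x+y)^2\leq 2x^2+2y^2$, we bound the sum in $\cE_{\sect}^{\phi}(T)$ by
\[
8\sum_{t\leq T}\sum_h \eta_t\bigl(\muh_t-\mu\bigr)^2(s_h^t,a_h^t,h)+8\sum_{t\leq T}\sum_h\eta_t(\zeta_h^t)^2 ,
\]
and we treat the two pieces separately.

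\textbf{Estimation term.} The plan is to control this piece deterministically. Because $\muh_t$ and $\mu$ both lie in $[0,1]$ (after truncation), $(\muh_t-\mu)^2\leq 1$, so the term is at most $H\sum_{t\leq T}\eta_t=O(HT^{1-\varsigma})=o(T)$, using $\eta_t\propto t^{-\varsigma}$ with $\varsigma>0$. If we want a sharper bound, we can instead work on $\cE_{\sect}^{\mu}(T)$, where $(\muh_t-\mu)^2\leq 2\beta^\mu(T,1/T^2)/\maxp{1}{n_t}$. The elliptic potential (Lemma~\ref{lem:elliptic-potential}) then gives $O(SAH\log T\,\beta^\mu(T,1/T^2))$. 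The crude bound already suffices, so no additional event is needed here.

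\textbf{Noise term.} Write $(\zeta_h^t)^2=1+\xi_h^t$, where $\xi_h^t$ is centered sub-exponential and, conditionally on $\cH_{t-1}$ and the stage-$h$ state--action pair, independent of the past. The deterministic part contributes $H\sum_t\eta_t=o(T)$. For the fluctuation, the weights $\eta_t\leq 1$ are deterministic, so $\sum_{t,h}\eta_t\xi_h^t$ is a martingale with sub-exponential increments. A Bernstein inequality for $\chi^2$-type martingale increments (or Laurent--Massart, since the $\zeta_h^t$ are in fact i.i.d.\ standard normals) yields
\[
\sum_{t,h}\eta_t\xi_h^t\leq C\Bigl(\sqrt{\textstyle\sum_{t,h}\eta_t^2\,\log T^2}+\log T^2\Bigr)=O\bigl(\sqrt{HT\log T}\bigr)
\]
with probability at least $1-1/T^2$. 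This step is the main point needing care: the increments are unbounded, so plain Azuma does not apply, and we must invoke the sub-exponential version. The independence of the noise from the choice of the visited pair, already used in Lemma~\ref{lem:J-bound}, is what makes that version applicable.

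\textbf{Conclusion.} Putting the pieces together, we set
\[
\phi(T)\ceq 16H\sum_{t\leq T}\eta_t+8C\Bigl(\sqrt{2HT\log T^2}+\log T^2\Bigr),
\]
which is $o(T)$ because $\varsigma>0$. The event $\cE_{\sect}^{\phi}(T)$ then holds with probability at least $1-O(1/T^2)$.
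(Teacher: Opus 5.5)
Your estimation-term bound does not hold. The quantity $\hat\mu_t(s,a,h)$ is the raw empirical mean of $\mathcal{N}(\mu(s,a,h),1)$ rewards, so it is not confined to $[0,1]$, and $(\hat\mu_t-\mu)^2$ can exceed $1$. The truncation to $[0,1]$ in the paper applies to the posterior samples, not to $\hat\mu_t$. You also cannot repair this by clipping $\hat\mu_t$. The running-mean identity $n_{t+1}(\hat\mu_t-\hat\mu_{t+1})=\hat\mu_t-r_h^t$ that you start from holds only for the unclipped average.

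So the claim that ``no additional event is needed'' is wrong, and the $\phi(T)$ you write down is not justified. The fallback you sketch is the correct argument, and it is exactly what the paper does. On $\mathcal{E}^{\mu}(T)$ one has $(\hat\mu_t-\mu)^2\le 2\beta^\mu(T,1/T^2)/\max(1,n_t)$. For each $(s,a,h)$, the terms $1/\max(1,n_t(s_h^t,a_h^t,h))$ sum to a harmonic series, so the estimation term is $O(SAH\,\beta^\mu(T,1/T^2)\log T)$. This costs one more $1/T^2$ in probability and adds that term to $\phi(T)$, which stays $o(T)$. Alternatively, for $n_t\ge 1$, $\hat\mu_t-\mu$ is an average of past noise variables, so any uniform bound on the $|\zeta_h^k|$ also controls it.

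With that repair, your proof differs from the paper's only in the noise term. The paper bounds every $|\zeta_h^t|$ by $\sqrt{2\log(2SAHT^3)}$ using a Gaussian tail and a union bound. This gives $16H\log(2SAHT^3)\sum_t\eta_t$, which is $o(T)$ because $\eta_t$ decays polynomially.

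You instead concentrate the weighted $\chi^2$ sum with Laurent--Massart. This is legitimate because the $\zeta_h^t$ are i.i.d.\ standard normal and the weights $\eta_t$ are deterministic. You get $8H\sum_t\eta_t+O(\sqrt{HT\log T})$.

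Your bound is slightly sharper, and only its mean term $8H\sum_t\eta_t$ uses $\sum_t\eta_t=o(T)$. The paper's route is more elementary, and its uniform noise event can double as the control on $|\hat\mu_t-\mu|$ noted above.
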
 

\begin{proof}
Using the identity established in the proof of Lemma~\ref{lem:J-bound},
\[
n_{t+1}(s_h^t,a_h^t,h)
\bigl(\muh_t(s_h^t,a_h^t,h)-\muh_{t+1}(s_h^t,a_h^t,h)\bigr)
= \muh_t(s_h^t,a_h^t,h) - \mu(s_h^t,a_h^t,h) + \zeta_h^{t},
\]
and the inequality $(x+y)^2 \leq 2x^2+2y^2$, we obtain
\begin{equation}
\label{eq:phi-sq-bound}
\Bigl[2\,n_{t+1}(s_h^t,a_h^t,h)
  \bigl|\muh_t(s_h^t,a_h^t,h)-\muh_{t+1}(s_h^t,a_h^t,h)\bigr|
\Bigr]^2
\leq
8\bigl(\muh_t(s_h^t,a_h^t,h)-\mu(s_h^t,a_h^t,h)\bigr)^2
+ 8\bigl(\zeta_h^{t}\bigr)^2.
\end{equation}

\medskip\noindent

Using the self-normalized concentration event
$$ \cE_{\sect}^{\mu}(T) \ceq \lb \forall t \leq T, \: \sum_{s,a, h} n_t(s,a, h) \KL(\hat R_h^t(s,a) \;\|\; R_h(s,a)) \leq  \beta^{\mu}(T, 1/T^2) \rb, $$
it follows 
\begin{align*}
\sum_{t=1}^T\bigl(\muh_t(s_h^t,a_h^t,h)-\mu(s_h^t,a_h^t,h)\bigr)^2
  &= \sum_{t=1}^T
    \frac{\maxp{1}{n_t(s_h^t,a_h^t,h)}
      \bigl(\muh_t(s_h^t, a_h^t, h)-\mu(s_h^t, a_h^t, h)\bigr)^2}
    {\max\{1,n_t(s_h^t,a_h^t,h)\}} \\
  &\leq \sum_{t=1}^T
    \frac{2\beta^\mu(T,1/T^2)}{\maxp{1}{n_t(s_h^t,a_h^t,h)}},
\end{align*}
which thus holds with probability at least $1-1/T^2$.

For the $\chi^2$ terms, by a union bound over $SAHT$ events, the following 
\[
\mathcal{E}_{\sect}^\zeta(T) \ceq \left\{\forall\,t\leq T,\;\forall\,(s,a,h), \;
  |\zeta_h^{t}| \leq \sqrt{2\log(2SAHT^3)}\right\}
\]
holds with probability at least $1-1/T^2$.

\medskip\noindent
On $\mathcal{E}_{\sect}^\mu(T)\cap\mathcal{E}_{\sect}^\zeta(T)$, 
which holds with probability at least $1-2/T^2$, substituting both bounds 
into \eqref{eq:phi-sq-bound}, summing over $t$ and $h$, and applying the  
elliptic potential bound
$\sum_{t=1}^T \frac{1}{\maxp{1}{n_t(s_h^t,a_h^t,h)}} \leq 4\log(T+1)$
gives
\begin{align*}
\sum_{t=1}^{T}\sum_{h=1}^H \eta_t
  \Bigl[2\,n_{t+1}&(s_h^t,a_h^t,h)
  \bigl|\muh_t-\muh_{t+1}\bigr|((s_h^t,a_h^t,h))\Bigr]^2 \\
  &\leq 16H\log(2SAHT^3)\sum_{t=1}^T\eta_t
  + 64H\,\beta^\mu(T,1/T^2)\log(T+1),
\end{align*}
which is $o(T)$ since $\sum_{t=1}^T\eta_t=o(T)$ and 
$\beta^\mu(T,1/T^2)=O(\log T)$.
\end{proof}

 \begin{lemma}
\label{lem:D-bound}
There exists a deterministic sequence $d(T) = o(T)$ such that the event
\[
\mathcal{E}^D_{\sect}(T) \ceq \left\{
\sum_{t=K}^{T}
  -\bE_{(\mdqr,\mdqp)\sim\nu_t^{\eta_t}(\cdot\mid\alt(\empmdp_t))}
  \!\bigl[\kappa_{t+1}(\mdqr)\bigr]
  \leq d(T)
\right\}
\]
holds with probability at least $1-1/T^2$, where $d(T)$ is given 
explicitly in the proof.
\end{lemma}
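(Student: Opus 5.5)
We need to bound $\sum_{t=K}^{T}-\bE_{\nu_t}[\kappa_{t+1}(\mdqr)]$, where
$$\kappa_{t+1}(\mdqr)=\sum_h 2\,n_{t+1}(s_h^t,a_h^t,h)\,\mdqr(s_h^t,a_h^t,h)\bigl(\muh_t-\muh_{t+1}\bigr)(s_h^t,a_h^t,h).$$
The plan is to reuse the identity from the proof of Lemma~\ref{lem:J-bound}:
$$n_{t+1}(s_h^t,a_h^t,h)\bigl(\muh_t-\muh_{t+1}\bigr)(s_h^t,a_h^t,h)=\muh_t(s_h^t,a_h^t,h)-\mu(s_h^t,a_h^t,h)+\zeta_h^t .$$
With it,
$$-\bE_{\nu_t}[\kappa_{t+1}(\mdqr)]=-2\sum_h \bar\mdqr_t(s_h^t,a_h^t,h)\bigl(\muh_t-\mu\bigr)(s_h^t,a_h^t,h)-2\sum_h \bar\mdqr_t(s_h^t,a_h^t,h)\,\zeta_h^t ,$$
where $\bar\mdqr_t(x)\ceq \bE_{(\mdqr,\mdqp)\sim\nu_t^{\eta_t}(\cdot\mid\alt(\empmdp_t))}[\mdqr(x)]\in(0,1)$. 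The sum thus splits into a bias term (I) and a noise term (II), mirroring Lemma~\ref{lem:J-bound}, with $\muh_t$ replaced by the posterior mean $\bar\mdqr_t$.

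\textbf{Term (II).} The key point is that $\bar\mdqr_t$ is $\cH_{t-1}$-measurable. The conditioning set $\alt(\empmdp_t)$, the inflation $\eta_t$ and the posterior $\nu_t^{\eta_t}$ all depend only on data up to episode $t-1$. Meanwhile $(s_h^t,a_h^t)$ is chosen by the (predictable) policy $\bpolmix_t$ and forced-exploration randomness, independently of the fresh noise $\zeta_h^t$. I would order the increments by the pair $(t,h)$ and use the filtration that includes $s_h^t,a_h^t$. Then $\bar\mdqr_t(s_h^t,a_h^t,h)\zeta_h^t$ is a martingale difference, conditionally $1$-subgaussian because $|\bar\mdqr_t|\le 1$. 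Azuma--Hoeffding then gives $(II)\le 2\sqrt{2HT\log(T^2)}$ with probability at least $1-1/T^2$.

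\textbf{Term (I).} Using $|\bar\mdqr_t|\le 1$ and Cauchy--Schwarz exactly as in Lemma~\ref{lem:J-bound},
$$(I)\le 2\sum_h\sqrt{\sum_t \frac{1}{\maxp{1}{n_t(s_h^t,a_h^t,h)}}}\;\sqrt{\sum_t \maxp{1}{n_t}\bigl(\muh_t-\mu\bigr)^2(s_h^t,a_h^t,h)} .$$
On $\cE^\mu_{\sect}(T)$ the second factor is at most $\sqrt{2T\beta^\mu(T,1/T^2)}$, and the elliptic potential bounds the first by $\sqrt{4\log(T+1)}$. Hence $(I)\le 4H\sqrt{2T\beta^\mu(T,1/T^2)\log(T+1)}$.

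Combining the two terms, I would set
$$d(T)\ceq 4H\sqrt{2T\beta^\mu(T,1/T^2)\log(T+1)}+2\sqrt{4HT\log T}=O(\sqrt{T}\log T)=o(T).$$
This holds on $\cE^\mu_{\sect}(T)$ intersected with the Azuma event. The stated probability $1-1/T^2$ then requires calibrating each of the two events at level $1/(2T^2)$, which only changes constants in $d(T)$.

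The main obstacle is justifying the martingale structure in (II). Because the truncation set $\alt(\empmdp_t)$ is random, one must make sure $\bar\mdqr_t$ involves no information from episode $t$. The paper's convention that $\empmdp_t$ is built from the first $t-1$ episodes is what makes this work, and I would state that convention explicitly. A secondary point is the fixed-$T$ self-normalized bound on $\cE^\mu_{\sect}(T)$, which applies uniformly over $t\le T$ since $\beta^\mu$ is monotone.
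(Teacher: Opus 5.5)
Your proposal is correct and follows the paper's proof essentially step for step. The identity from Lemma~\ref{lem:J-bound} splits $-\bE[\kappa_{t+1}]$ into a bias term and a noise term. The bias term is handled by Cauchy--Schwarz, the elliptic potential and the self-normalized reward event, and the noise term by Azuma--Hoeffding, using that the posterior mean $\bar{\mdqr}_t$ is $\cH_{t-1}$-measurable. Your explicit filtration over pairs $(t,h)$ and your recalibration to reach $1-1/T^2$ (the paper's proof actually gives $1-2/T^2$) are small tightenings. Note only that $\sum_t 1/\maxp{1}{n_t(s_h^t,a_h^t,h)}$ is really $O(SA\log T)$ rather than $4\log(T+1)$; the paper shares this imprecision, and it does not affect $d(T)=o(T)$.
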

\begin{proof}
Recalling the definition of $\kappa_{t+1}(\mdqr)$ from the proof of 
Lemma~\ref{lem:phi-bound} and introducing
\[
u_t(s,a,h) \ceq 
  \bE_{(\mdqr,\mdqp)\sim\nu_t^{\eta_t}(\cdot\mid\alt(\empmdp_t))}
  \!\bigl[\mdqr(s,a,h)\bigr],
\]
we have
\begin{align*}
D_T(T_0) 
  &= \sum_{t=T_0}^{T-1}\sum_{h=1}^H 2\,n_{t+1}(s_h^t,a_h^t,h)\,
    u_t(s_h^t,a_h^t,h)
    \bigl(\muh_{t+1}(s_h^t,a_h^t,h)-\muh_t(s_h^t,a_h^t,h)\bigr) \\
  &= \sum_{t=T_0}^{T-1}\sum_{h=1}^H 2
    \bigl[\mu(s_h^t,a_h^t,h)-\muh_t(s_h^t,a_h^t,h)-\zeta_h^{t}\bigr]
    u_t(s_h^t,a_h^t,h),
\end{align*}
where the second equality uses the identity from Lemma~\ref{lem:J-bound}.
We bound the two resulting terms separately.

\medskip\noindent
Since $u_t(s,a,h)\leq 1$, Cauchy--Schwarz and the 
elliptic potential (Lemma~\ref{lem:elliptic-potential}) give, on 
$$\cE_{\sect}^{\mu}(T) \ceq \lb \forall t \leq T, \: \sum_{s,a, h} n_t(s,a, h) \KL(\hat R_h^t(s,a) \;\|\; R_h(s,a)) \leq  \beta^{\mu}(T, 1/T^2) \rb,$$
the following 
\begin{align*}
\sum_{t=T_0}^{T-1}\sum_{h=1}^H
  \bigl(\mu-\muh_t\bigr)(s_h^t,a_h^t,h)\cdot u_t(s_h^t,a_h^t,h)
  &\leq \sqrt{\sum_{t,h}
    \frac{u_t(s_h^t,a_h^t,h)^2}{\maxp{1}{n_t(s_h^t,a_h^t,h)}}}
  \cdot\sqrt{\sum_{t,h}
    \maxp{1}{n_t(s_h^t, a_h^t, h)}\bigl(\mu-\muh_t \bigr)^2 (s_h^t, a_h^t, h)} \\
  &\leq \sqrt{4H\log(T+1)}\cdot\sqrt{2T\,\beta^\mu(T,1/T^2)}.
\end{align*}

\medskip\noindent
The sequence $(-\zeta_h^{t}\,u_t(s_h^t,a_h^t,h))_{t,h}$ is a martingale difference sequence. By Azuma--Hoeffding's inequality, the event
\[
\left\{
  \sum_{t=T_0}^{T-1}\sum_{h=1}^H -\zeta_h^{t+1}\,u_t(s_h^t,a_h^t,h)
  \leq \sqrt{2\log(T^2)\sum_{t,h} u_t(s_h^t,a_h^t,h)^2}
\right\}
\]
holds with probability at least $1-1/T^2$.

\medskip\noindent
Combining both bounds and using $u_t\leq 1$ gives,
on an event of probability at least $1-2/T^2$,
\[
D_T(T_0) \leq 2\sqrt{8HT\,\beta^\mu(T,1/T^2)\log(T+1)}
  + 2\sqrt{2HT\log(T^2)}
  = o(T),
\]
which defines the event $\cE_{\sect}^D(T)$.
\end{proof}

\subsection{Proof of Theorem~\ref{thm:inf-no-regret}}
We can now prove  Theorem~\ref{thm:inf-no-regret} with the previous intermediate results. 

Let us define 
\begin{equation}
	\cE_{\sect} (T) = \cE_{\ref*{appx:sufficient_explore}} (T) \cap \cE^\mu_{\sect} (T) \cap \cE_{\sect}^\phi(T) \cap \cE_{\sect}^D(T) \cap \cE_{\sect}^{\mathscr{J}}(T) \cap \cE_{\sect}^{\varPsi}(T),
\end{equation}
where $\cE_{\ref*{appx:sufficient_explore}} (T)$ is defined in Appendix~\ref{appx:sufficient_explore}, and the other events are defined above. By Lemma~\ref{lem:estimation}, there exists $k_0<\infty$ such that for $T\geq k_0$, when $\cE_{\ref*{appx:sufficient_explore}} (T)$ holds, 
$$ \hat\pi_t = \pi^\star, \quad \forall t \in (T^2/3, T).$$

Setting $T_0 \ceq T^{2/3}$ in Lemma~\ref{lem:hedge-regret}, $\cE_{\ref*{appx:sufficient_explore}} (T) \impl \cE_{\sect}^{T^{2/3}}(T)$, so that 
\begin{align*}
\sum_{t=T^{2/3}}^{T-1}
  \bE_{(\mdqr,\mdqp)\sim\nu_t^{\eta_t}(\cdot\mid\alt(\empmdp_t))}
  \!\bigl[\ell_t^\alpha(\mdqr,\mdqp)\bigr]
  \;\leq\;
  &-\frac{1}{\eta_T}\log\Lambda_T^{\eta_T}
  + \varPsi_T^\alpha(T^{2/3})
  + \frac{1}{\eta_{T^{2/3}}}\log\Lambda_{T^{2/3}}^{\eta_{T^{2/3}}}\; +  \\
  & \sum_{t=T^{2/3}}^{T-1}
    \left[
      \frac{1}{\eta_{t}}\log\Lambda_{t+1}^{\eta_{t}}
      - \frac{1}{\eta_{t+1}}\log\Lambda_{t+1}^{\eta_{t+1}}
    \right].
\end{align*} 

Thanks to Lemma~\ref{lem:phi-bound}, Lemma~\ref{lem:ZT-bound}, and Lemma~\ref{lem:utl_lower_bound}, and noting that $\cE_{\sect}(T)$ holds with probability at least $1-O(1/T^2)$ concludes the proof.

 \section{SADDLE-POINT CONVERGENCE}
\label{appx:saddle_point_convergence}

In this section, we prove the saddle-point convergence result that 
characterizes the optimality of the sampling rule in terms of posterior 
contraction. We work on the intersection of the following concentration 
events:
\begin{align}
\label{eq:conc-events}
\mathcal{E}_{\sect}^p(T) &\ceq \left\{\forall\,t\leq T:\;
  \sum_{s,a,h} n_t(s,a,h) \KL\!\left(\hat{p}_t(\cdot\mid s,a,h)\,\|\,p(\cdot\mid s,a,h)\right)
  \leq \beta^p(T,1/T^2)\right\}, \nonumber\\
  \mathcal{E}_{\sect}^\mu(T) &\ceq \left\{\forall\,t\leq T:\; \sum_{s,a,h} n_t(s,a,h)
  \KL\!\left(\hat{R}_h^t(s,a)\,\|\,R_h(s,a)\right)
  \leq \beta^\mu(T,1/T^2)\right\}, \nonumber\\ \cE_{\sect}^{\mu+p}(T) &\ceq \left\{\forall\,t\leq T:\;
  \sum_{s,a,h} n_t(s,a,h) \KL\!\left(\hat{R}_h^t(s,a)\otimes\hat{p}_h^t(\cdot\mid s,a)
  \,\|\,R_h(s,a)\otimes p(\cdot\mid s,a, h)\right) \leq \beta^{\mu+p}(T,1/T^2)\right\}.
\end{align}

Explicit thresholds $\beta^p,\beta^\mu,\beta^{\mu+p}$ ensuring high-probability 
coverage follow from self-normalized concentration (see e.g.~\citet{kaufmann_mixture_2021}). Each event holds with probability 
at least $1-1/T^2$, so their intersection holds with probability at least $1-3/T^2$.

\medskip\noindent
We recall the log-likelihood ratio between a model $(\mdqr,\mdqp)$ and the 
empirical MDP $\empmdp_t$:
\begin{equation}
\label{eq:def-glr}
\Upsilon_t(\mdqr,\mdqp) \ceq \sum_{s,a,h} n_t(s,a,h)\left[
  \frac{\bigl(\mdqr(s,a,h)-\muh_t(s,a,h)\bigr)^2}{2}
  +\sum_{s'}\hat{p}_t(s'\mid s,a,h)
   \log\frac{\hat{p}_t(s'\mid s,a,h)}{\mdqp(s'\mid s,a,h)}
\right].
\end{equation}

The $\glr$ (Generalized Likelihood Ratio) is defined as 

\begin{equation}
	\glr(T) \ceq \inf_{(\mdqr, \mdqp)\in \alt(\empmdp_T)} {\Upsilon}_T(\mdqr, \mdqp)
\end{equation}

The main result of this section is the following.

\begin{proposition}
\label{prop:saddle-point}
Let $\gamma\in(0,1)$, $\alpha\in(0,\frac{1}{2})$, and $\varsigma > 2\alpha$ and $\eta_t = O(t^{-\varsigma})$. There exists $k_0<\infty$ such that for any $T > k_0$,
\[
\glr(T) \geq T \cdot \Gamma_\mdp - O(\log T)- O(T^{1+\alpha - \gamma}) - o(T) - O\! \lp T^{ \alpha+\frac{\gamma+1}2} \sqrt {\log T}\rp\:,
\]
holds with probability at least $1- {O}(1/T^2)$.
\end{proposition}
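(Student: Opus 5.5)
We will prove the bound by chaining three facts: the no-regret property of the conditional posterior sampler (Theorem~\ref{thm:inf-no-regret}), the no-regret property of the exploration learner (Proposition~\ref{prop:guarantees-regmin}), and the sufficient-exploration and estimation results of Appendix~\ref{appx:sufficient_explore}. We work on the intersection of the concentration events \eqref{eq:conc-events}, the event $\cE_{\ref*{appx:sufficient_explore}}(T)$ and the event of Theorem~\ref{thm:inf-no-regret}. This intersection holds with probability $1-O(1/T^2)$. We take $T\geq k_0$, so that Lemma~\ref{lem:estimation} gives $\hat\pi_t=\pi^\star$ for all $t\in(T^{2/3},T)$. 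In particular, all sets $\alt(\empmdp_t)$ coincide with $\alt(\empmdp_T)$ on this range.

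\textbf{Step 1 (from $\glr$ to cumulative observed losses).} Expand $\Upsilon_T(\mdqr,\mdqp)$ as a sum over observed transitions. Since $\muh_T$ minimizes the quadratic part and $\hat p_T$ is the maximum-likelihood estimate, we get
\[
\Upsilon_T(\mdqr,\mdqp)=\sum_{t<T}\sum_h\Bigl[\tfrac{(\muh_T-\mdqr)^2}{2}+\log\tfrac{1}{\mdqp(s_{h+1}^t\mid s_h^t,a_h^t,h)}\Bigr]-\sum_{t<T}\sum_h\log\tfrac{1}{\hat p_T(s_{h+1}^t\mid s_h^t,a_h^t,h)}.
\]
Taking the infimum over $\alt(\empmdp_T)$ and applying Theorem~\ref{thm:inf-no-regret} lower-bounds $\glr(T)$ by
\[
\sum_{t=T^{2/3}}^{T-1}\bE_{\nu_t^{\eta_t}(\cdot\mid\alt(\empmdp_t))}[\ell^\alpha_t]-(\text{empirical log-likelihood term})-f(T).
\]

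\textbf{Step 2 (from realized to expected losses, then to $\loss_t$).} The posterior sample $\mdq_t$ is drawn exactly from $\nu_t^{\eta_t}(\cdot\mid\alt(\empmdp_t))$, so the expected loss $\bE[\ell^\alpha_t]$ can be replaced by $\ell_t^\alpha(\mdq_t)$. The conditional expectation of $\ell_t^\alpha(\mdq_t)$ given $\cH_{t-1}$ and the trajectory law $\bpolmix_t$ equals $\sum_{s,a,h}w_h^{\bpolmix_t}(s,a)\,\loss_t(s,a,h)$, once the subtracted empirical entropy term is paired with it. Concretely, each term $\log\frac{1}{\mdqp}-\log\frac{1}{\hat p}$ becomes, in expectation, the clipped KL in \eqref{eq:main-loss-def}. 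The error from replacing $\hat p_T$ by $\hat p_t$ and $p$ by $\hat p_t$ is controlled by the events $\cE^p$ and $\cE^\mu$ together with the elliptic potential (Lemma~\ref{lem:elliptic-potential}), giving an $O(\log T)$-type contribution. The martingale difference between realized and expected losses has increments bounded by $C_{t,\alpha}=O(t^\alpha)$. Azuma--Hoeffding, applied with the effective variance on the $t^{1-\gamma}$ scale inherited from forced exploration, then yields the term $O(T^{\alpha+\frac{\gamma+1}{2}}\sqrt{\log T})$. Episodes with $Z_t=1$ contribute at most $\sum_t t^{-\gamma}C_{t,\alpha}=O(T^{1+\alpha-\gamma})$; removing them replaces $\bpolmix_t$ by $\bpol_t$. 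The initial segment $t\le T^{2/3}$ is absorbed into $o(T)$.

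\textbf{Step 3 (saddle point).} Proposition~\ref{prop:guarantees-regmin} gives, up to $O(\beta^pT^{\alpha+1/2})=o(T)$,
\[
\sum_t\sum_{s,a,h}w_h^{\bpol_t}\loss_t\geq \sum_t\sum_{s,a,h}w_h^{\star}\loss_t,
\]
where $w^\star\in\Omega_\mdp$ is the optimal allocation in \eqref{eq:game}. Each $\mdq_t\in\alt(\empmdp_t)=\alt_{\pi^\star}$. Hence
\[
\sum_{s,a,h}w^\star_h\loss_t\geq\inf_{\mdq\in\alt(\mdp)}\sum w^\star_h\KL(\empmdp_t\|\mdq)-\text{clip error}.
\]
We then replace $\empmdp_t$ by $\mdp$ using continuity of the map $\mdp\mapsto\inf_{\alt}\sum w\KL$ near $\mdp$, together with $\|\empmdp_t-\mdp\|_1\to0$ at a polynomial rate by Lemma~\ref{lem:forced_exp}. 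This gives $\Gamma_\mdp-o(1)$ per round, so summing over $t$ yields $T\Gamma_\mdp-o(T)$.

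The main obstacle is this last continuity step. The infimum is over the non-compact alternative set, and the KL terms are unbounded near zero transition probabilities. The clipping at $e^{-t^\alpha}$ must be shown to cost only $o(1)$ per round against $w^\star$. To handle this, we will restrict the infimum to alternatives with transitions bounded below: a near-minimizer far from the boundary costs at least a constant times $t^\alpha$ and cannot be optimal. On the compact remaining set we then use uniform continuity.
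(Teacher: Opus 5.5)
Your Steps 1--2 follow the paper's chain: Theorem~\ref{thm:inf-no-regret}, then Lemmas~\ref{lem:expected_to_random_loss}, \ref{lem:pulls-to-w}, \ref{lem:true_trans_to_emp_trans}, \ref{lem:borne_max_loss} and~\ref{lem:control_ratio_prob}. Your Step 3 takes a genuinely different route, and it works once made precise.

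\textbf{How the paper does Step 3.} The paper keeps $w\in\Omega_\mdp$ arbitrary and makes two quantitative moves.
\begin{itemize}
\item Lemma~\ref{lem:emp-to-real} swaps $(\muh_t,\hat p_t)$ for $(\mu,p)$ inside $\loss_t$, at a cost of order $t^\alpha\|\hat p_t-p\|_1$ per round.
\item Lemma~\ref{lem:make_inf_appear} removes the clipping by perturbing the sample itself. From $\mdq_t$ it builds, via Lemmas~\ref{lem:build_easier} and~\ref{lem:incr-trans}, a nearby alternative in which the rival policy has a gap of order $1/t$. Its transitions are mixed with the uniform kernel, so they exceed $\myexp{-t^\alpha}$. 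The cost is $O(1/t)$ per round, which sums to the $O(\log T)$ term.
\end{itemize}

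\textbf{How you do it.} You fix $w^\star$, handle clipping by a dichotomy on $\mdq_t$, and pass from $\empmdp_t$ to $\mdp$ by continuity of $M'\mapsto\inf_{\mdq\in\alt(\mdp)}\sum_{s,a,h}w^\star_h(s,a)\KL(M'\Vert\mdq)_{s,a,h}$. Your obstacle paragraph conflates two separate facts; state them apart.
\begin{itemize}
\item \emph{Clipping.} Suppose $\mdqp_t(s'\mid s,a,h)<\myexp{-t^\alpha}$ for some $(s,a,h,s')$ with $w^\star_h(s,a)\hat p_t(s'\mid s,a,h)>0$. This product is eventually at least half the smallest positive $w^\star_h(s,a)p(s'\mid s,a,h)$. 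Then $\sum w^\star\loss_t\gtrsim t^\alpha$, which exceeds the bounded infimum. Otherwise clipping is inactive on every charged coordinate, so $\sum w^\star\loss_t=\sum w^\star\KL(\empmdp_t\Vert\mdq_t)$. Either way the clip error is zero for large $t$.
\item \emph{Continuity.} Near-minimizers of the infimum have transitions bounded below on $\mathrm{supp}\,p(\cdot\mid s,a,h)$ at $w^\star$-charged triplets. On that set the KL is equicontinuous in $M'$, uniformly in $\mdq$. No compactness is needed, and $\alt(\mdp)$ is open anyway. Convergence $\empmdp_t\to\mdp$ is only needed on charged, hence reachable, triplets.
\end{itemize}

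\textbf{Trade-off.} Your route is shorter and avoids both the gap-enlarging construction and the $t^\alpha$ factor. It gives only an unquantified, instance-dependent $o(T)$, which the statement allows. The paper's route is constructive and gives explicit rates.

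\textbf{Corrections to Step 2.} None of these breaks the conclusion.
\begin{itemize}
\item The trajectory-conditional expectation weights the log-ratio by $p$, not $\hat p_t$. The $p\to\hat p_t$ swap costs $\sum_t\sum_{s,a,h}w_h^{\bpolmix_t}(s,a)\|p-\hat p_t\|_1(t^\alpha+\log t)$. By the elliptic potential this is $O(T^{\alpha+\frac12}\sqrt{\log T})$, not $O(\log T)$. The paper bounds it by $O(T^{\alpha+\frac{\gamma+1}{2}}\sqrt{\log T})$ using $n_t\gtrsim t^{1-\gamma}$, and this is where that term comes from.
\item Azuma--Hoeffding with increments $O(t^\alpha)$ simply gives $O(T^{\alpha+\frac12}\sqrt{\log T})$. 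Forced exploration plays no role in it.
\item Both errors are absorbed by the stated terms.
\item Your $O(\log T)$ claim is right for the $\hat p_T$-versus-$\hat p_t$ swap, but via a counting argument on the observed successor sequence, not via concentration.
\item $\log\hat p_t(s_{h+1}^t\mid s_h^t,a_h^t,h)=-\infty$ at the first observation of a transition. Likewise $\sum_{s'}p(s'\mid s,a,h)\log\hat p_t(s'\mid s,a,h)=-\infty$ while some positive-probability successor is still unseen. So your pairing ``$\log\frac{1}{\mdqp}-\log\frac{1}{\hat p}$'' is undefined there. Use the paper's $\maxp{\hat p_t}{\veps}$ device, which is harmless since $\hat p_t\log\maxp{\hat p_t}{\veps}\geq\hat p_t\log\hat p_t$. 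Alternatively, split off the at most $S^2AH$ first observations, each costing at most $\log T$.
\end{itemize}
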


\begin{proof}
 We note that 
\begin{eqnarray*}
	\glr(T) &\ceq& \inf_{(\mdqr, \mdqp)\in \alt(\empmdp_T)} {\Upsilon}_T(\mdqr, \mdqp)\:, \\
	&=& \inf_{(\mdqr, \mdqp)\in \alt(\empmdp_T)} \sum_{s,a,h} n_T(s,a, h)\lsb  \frac{(\mdqr_h(s,a) - \muh_T(s,a, h))^2}{2} + \sum_{s'} \hat p_T(s'\mid s,a,h)\log \frac{1}{\mdqp(s'\mid s,a,h)}\rsb \\ && +  \sum_{s,a,h}\sum_{s'}n_T(s,a)\hat p_T(s'\mid s,a,h)\log {\hat p_T(s'\mid s,a,h)}\:.
\end{eqnarray*}

\medskip 
We recall that in the equation above, for unobserved transitions, $0\cdot\log 0 = 0$. Next, since $n_T(s,a, h)\hat p_T(s'\mid s,a,h) = n_T(s'\mid s,a,h)$ (and this is always true, even at initialization), we have 
$$\sum_{s,a,h} \sum_{s'} n_T(s,a, h) \hat p_T(s' \mid s,a,h)\log \frac{1}{\mdqp(s'\mid s,a, h)}   = \sum_{t \in \lbrack T-1\rbrack} \sum_h \log \frac{1}{\mdqp(s_{h+1}^t \mid s_h^t,a_h^t, h)}.$$ 

\medskip\noindent Similarly we have 
$$ \sum_{s,a,h} n_T(s,a,h)  \frac{\bigl(\mdqr(s,a, h) - \muh_T(s,a,h)\bigr)^2}{2} = \sum_{t \in \lbrack T-1\rbrack} \sum_h \frac{(\mdqr(s_h^t,a_h^t, h) - \muh_T(s_h^t,a_h^t, h))^2}{2}.$$

Combining these yields
\begin{align*}
	\Upsilon_T(\mdqr,\mdqp) = \sum_{t \in \lbrack T-1\rbrack} \lsb \sum_{h} \frac{(\mdqr(s_h^t,a_h^t, h) - \muh_T(s_h^t,a_h^t, h))^2}{2} + \log \frac{1}{\mdqp(s_{h+1}^t \mid s_h^t,a_h^t,h)}\rsb + \sum_{t \in \lbrack T-1\rbrack}\sum_h \log \hat p_T(s_{h+1}^t\mid s_h^t, a_h^t,h) \:.
\end{align*}

Moreover, the above is well-defined since $\forall t<T,  p_T(s_{h+1}^t\mid s_h^t, a_h^t,h) >0$.

\medskip

Next, we invoke Theorem~\ref{thm:inf-no-regret} using the regret properties of the conditional posterior sampling algorithm, we have with probability larger than $1-O(1/T^2)$, 
\begin{align}
\begin{split}
&\inf_{(\mdqr, \mdqp)\in \alt(\empmdp_T)} \sum_{t\in \lbrack T-1 \rbrack } \sum_h \lsb \frac{\bigl(\mdqr(s_h^t,a_h^t, h) - \muh_T(s_h^t,a_h^t, h)\bigr)^2}{2} + \log \frac{1}{\mdqp(s_{h+1}^t\mid s_h^t,a_h^t, h)} \rsb \geq \\  & \sum_{t=T^{2/3} }^{T-1} \bE_{(\mdqr ,\mdqp )\sim \nu_t^{\eta_t}(\cdot \mid \alt(\empmdp_t))}\bigl[  \ell^\alpha_t(\mdqr ,\mdqp)\bigr] \; - \; o(T)\:,
\end{split}
	\end{align}
	which yields 
	\begin{align}
	\label{eq:glr-lb1}	
		&\glr(T)\geq \sum_{t = 1}^{T-1} \bE_{(\mdqr ,\mdqp)\sim \nu_t^{\eta_t}(\cdot \mid \alt(\empmdp_t))}\lsb  \ell^\alpha_t(\mdqr ,\mdqp)\rsb + \sum_{t\in \lbrack T-1 \rbrack} \sum_h \log {\hat p_T(s_{h+1}^t\mid s_h^t,a_h^t,h)} - o(T), 
		\end{align} 
where, since $\ell^\alpha_t \in (0, H(\frac12 + t^{\alpha}))$ and $\alpha < \frac12$, the first $T^{2/3}$ terms of the sum contribute for $O(T^{\frac{2+2\alpha}{3}}) = o(T)$. 
		
		\medskip\noindent
By Lemma~\ref{lem:expected_to_random_loss}, 
with probability at least $1-O(1/T^2)$, we have
\begin{equation}
\label{eq:random-to-expected}
\sum_{t \in \lbrack T-1\rbrack}\ell_t^\alpha(\mdqr_t,\mdqp_t)
\leq \sum_{t \in \lbrack T-1\rbrack}
  \bE_{(\mdqr,\mdqp)\sim\nu_t^{\eta_t}(\cdot\mid\alt(\empmdp_t))}
  \!\bigl[\ell_t^\alpha(\mdqr,\mdqp)\bigr]
  + \sqrt{2\log(T^2)\sum_{t=1}^T C_{t,\alpha}^2}, 
\end{equation}
with $C_{t, \alpha} = H(\frac12 + t^{\alpha})$. We recall  that $\mdq_t \ceq (\mdqr_t, \mdqp_t)$ is the challenger MDP sampled at time $t$ in the pseudocode of Algorithm~\ref{alg:repit}. 

Substituting \eqref{eq:random-to-expected} into \eqref{eq:glr-lb1} and recalling the definition of $\ell_t^\alpha$ from 
\eqref{eq:def-trunc-loss}, we obtain
\begin{align}
\glr(T)
  &\geq \sum_{t \in \lbrack T-1 \rbrack}\sum_{h=1}^H\left[
    \frac{\bigl(\muh_t(s_h^t,a_h^t,h)-\mdqr_t(s_h^t,a_h^t,h)\bigr)^2}{2}
    +\log\frac{1}{\maxp{
      \mdqp_t(s_{h+1}^t\mid s_h^t,a_h^t,h)}{\myexp{-t^\alpha}
    }}
  \right] \nonumber\\
  &\quad+\sum_{t \in \lbrack T-1 \rbrack}\sum_{h=1}^H
    \log\hat{p}_T(s_{h+1}^t\mid s_h^t,a_h^t,h)
  - o(T) \nonumber \\
  &= \sum_{t \in \lbrack T-1 \rbrack} \sum_{h=1}^H\left[
    \frac{\bigl(\muh_t(s_h^t,a_h^t,h)-\mdqr_t(s_h^t,a_h^t,h)\bigr)^2}{2}
    +\log\frac{ \maxp{\hat{p}_t(s_{h+1}^t\mid s_h^t,a_h^t,h)}{\veps}} 
      {\maxp{
        \mdqp_t(s_{h+1}^t\mid s_h^t,a_h^t,h)}{\myexp{-t^\alpha}
      }}
  \right] \nonumber\\
  &\quad+\sum_{t \in \lbrack T-1 \rbrack}\sum_{h=1}^H
    \log\frac{\hat{p}_T(s_{h+1}^t\mid s_h^t,a_h^t,h)}
             {\maxp{\hat{p}_t(s_{h+1}^t\mid s_h^t,a_h^t,h)}{\veps}}
  - o(T), \label{eq:glr-lb3}
\end{align}
where the equality rearranges the logarithms by adding and subtracting 
$\maxp{\hat{p}_t(s_{h+1}^t\mid s_h^t,a_h^t,h)}{\veps}$.

\medskip

Thanks to Lemma~\ref{lem:pulls-to-w}, 
with probability larger than $1-O(1/T^2)$.

\begin{align}
\label{eq:pulls-to-w}
\begin{split}
  &\sum_{t=1}^{T-1}\sum_{s,a,h} w_h^{\bpolmix_t}(s,a)
    \left[
      \frac{\bigl(\muh_t(s,a,h)-\mdqr_t(s,a,h)\bigr)^2}{2}
      +\sum_{s'} p(s'\mid s,a, h)
       \log\frac{\maxp{\hat{p}_t(s'\mid s,a,h)}{\veps}}
         {\maxp{\mdqp_t(s'\mid s,a,h)}{\myexp{-t^\alpha}}}
    \right] \\
  &-\sum_{t=1}^{T-1}\sum_{h=1}^H\left[
      \frac{\bigl(\muh_t(s_h^t,a_h^t,h)-\mdqr_t(s_h^t,a_h^t,h)\bigr)^2}{2}
      +\log\frac{\maxp{\hat{p}_t(s_{h+1}^t\mid s_h^t,a_h^t,h)}{\veps}}
        {\maxp{\mdqp_t(s_{h+1}^t\mid s_h^t,a_h^t,h)}{\myexp{-t^\alpha}}}
    \right]
  \leq \\ & \sqrt{2\sum_{t=1}^{T-1} 
    H^2(t^\alpha + \tfrac12)^2 \log(T^2)}
    \end{split}
\end{align}

Combining \eqref{eq:glr-lb3} with \eqref{eq:pulls-to-w} gives

\begin{align}
\label{eq:glr-lb4}
\begin{split}
&\glr(T) \geq \sum_{t=1}^{T-1}\sum_{s,a,h} w_h^{\bpolmix_t}(s,a)\left[
    \frac{\bigl(\muh_t(s,a,h)-\mdqr_t(s,a,h)\bigr)^2}{2}
    +\sum_{s'} p(s'\mid s,a,h)
     \log\frac{\maxp{\hat{p}_t(s'\mid s,a,h)}{\veps}}
       {\maxp{\mdqp_t(s'\mid s,a,h)}{\myexp{-t^\alpha}}}
  \right] \\
  &\quad +\sum_{t=1}^{T-1}\sum_{h=1}^H
    \log\frac{\hat{p}_T(s_{h+1}^t\mid s_h^t,a_h^t,h)}
             {\maxp{\hat{p}_t(s_{h+1}^t\mid s_h^t,a_h^t,h)}{\veps}} - o(T). 
  \end{split}
\end{align}

\medskip
Next, we relate the true transition $p(s'\mid s,a,h)$ to the empirical 
$\hat{p}_t(s'\mid  s,a,h)$ in the logarithm above. 

We then invoke Lemma~\ref{lem:true_trans_to_emp_trans}, to show that with probability at least $1-O(1/T^2)$, 
\begin{align}
  \label{eq:true-to-emp-trans}
\begin{split}
\sum_{t=1}^{T-1}\sum_{s,a,h} w_h^{\bpolmix_t}(s,a)
  \sum_{s'} p(s'\mid  s,a,h)
  \log\frac{\maxp{\hat{p}_t(s'\mid s,a,h)}{\veps}}
    {\maxp{\mdqp_t(s'\mid s,a,h)}{\myexp{-t^\alpha}}}
  &\geq\; \\
  \sum_{t=1}^{T-1}\sum_{s,a,h} w_h^{\bpolmix_t}(s,a)
  \sum_{s'}\hat{p}_t(s'\mid s,a,h)
  \log\frac{\maxp{\hat{p}_t(s'\mid s,a,h)}{\veps}}
    {\maxp{\mdqp_t(s'\mid s,a,h)}{\myexp{-t^\alpha}}}&
  - O\! \lp T^{\alpha + \frac{\gamma +1}{2}} \sqrt {\log T}  \rp - o(T)
  \end{split}
\end{align}
Substituting \eqref{eq:true-to-emp-trans} into \eqref{eq:glr-lb4} and noting that $$ \hat{p}_t(s'\mid s,a,h) \log\maxp{\hat{p}_t(s'\mid s,a,h)}{\veps} \geq \hat{p}_t(s'\mid s,a,h) \log\hat{p}_t(s'\mid s,a,h)$$
gives 
\begin{align}
\glr(T)
  &\geq \sum_{t=1}^T\sum_{s,a,h} w_h^{\tilde\rho^t}(s,a)\left[
    \frac{\bigl(\muh_t(s,a,h)-\mdqr_t(s,a,h)\bigr)^2}{2}
    +\sum_{s'}\hat{p}_t(s'\mid s,a,h)
     \log\frac{\maxp{\hat{p}_t(s' \mid s,a,h)}{\veps}}
       {\maxp{\mdqp_t(s' \mid s,a,h)}{\myexp{-t^\alpha}}}
  \right] \nonumber\\
  &\quad+\sum_{t=1}^T\sum_{h=1}^H
    \log\frac{\hat{p}_T(s_{h+1}^t\mid s_h^t,a_h^t,h)}
             {\maxp{\hat{p}_t(s_{h+1}^t\mid s_h^t,a_h^t,h)}{\veps}}
  - O\! \lp T^{\alpha + \frac{\gamma +1}{2}} \sqrt {\log T} \rp - o(T)
. \label{eq:glr-lb5}
\end{align}

\medskip

Recalling the definition of the reward kernel
\begin{equation}
\label{eq:def-loss-kernel}
\loss_t(s,a,h) \ceq
  \frac{\bigl(\muh_t(s,a,h)-\mdqr_t(s,a,h)\bigr)^2}{2}
  +\sum_{s'}\hat{p}_t(s,a,s',h)
   \log\frac{\hat{p}_t(s' \mid s,a,h)}
     {\maxp{\mdqp_t(s'\mid s,a,h}{\myexp{-t^\alpha}}},
\end{equation}
and substituting into \eqref{eq:glr-lb5}, we obtain, with probability $1- O(1/T^2)$ 
\begin{align}
\glr(T)
  &\geq \sum_{t=1}^T\sum_{s,a,h} w_h^{\bpolmix_t}(s,a)\,\loss_t(s,a,h)
  +\sum_{t=1}^T\sum_{h=1}^H
    \log\frac{\hat{p}_T(s_{h+1}^t\mid s_h^t,a_h^t,h)}
             {\maxp{\hat{p}_t(s_{h+1}^t\mid s_h^t,a_h^t,h)}{\veps}}
  - {O}\!\left(T^{\alpha+\frac{\gamma+1}{2}} \sqrt {\log T} \right)
  - o(T). \label{eq:glr-lb6}
\end{align}
Decomposing the mixed policy $\bpolmix_t = (1-\varepsilon_t)\bpol_t
+\varepsilon_t c_t$,
\begin{align}
\sum_{t=1}^{T-1}\sum_{s,a,h} w_h^{\bpolmix_t}(s,a)\,\loss_t(s,a,h)
  &= \sum_{t=1}^{T-1}\sum_{s,a,h} w_h^{\bpol_t}(s,a)\,\loss_t(s,a,h)
  + \sum_{t=1}^{T-1}\sum_{s,a,h}
    \veps_t\bigl[w_h^{c_t}(s,a)-w_h^{\bpol_t}(s,a)\bigr]
    \loss_t(s,a,h).
  \label{eq:mixture-decomp}
\end{align}

By Lemma~\ref{lem:borne_max_loss}, the correction term satisfies
\begin{equation}
\label{eq:mixture-correction}
\left|\sum_{t=1}^{T-1}\sum_{s,a,h}
  \varepsilon_t\bigl[w_h^{c_t}(s,a)-w_h^{\bpol_t}(s,a)\bigr]
  \loss_t(s,a,h)\right|
\leq \frac{2}{1+\alpha-\gamma}\,T^{1+\alpha-\gamma}.
\end{equation}
Substituting \eqref{eq:mixture-decomp} and \eqref{eq:mixture-correction} 
into \eqref{eq:glr-lb6},
\begin{equation}
\label{eq:glr-lb7}
\glr(T) \geq \sum_{t=1}^T\sum_{s,a,h} w_h^{\bpol_t}(s,a)\,\loss_t(s,a,h)
  +\sum_{t=1}^{T-1}\sum_{h=1}^H
    \log\frac{\hat{p}_T(s_{h+1}^t\mid s_h^t,a_h^t,h)}
             {\hat{p}_t(s_{h+1}^t\mid s_h^t,a_h^t,h)}
  - {O}\!\left(T^{1+\alpha-\gamma}\right)
  - {O}\!\left(T^{\alpha+\frac{\gamma+ 1}{2}} \sqrt {\log T} \right)
  - o(T).
\end{equation}
By Lemma~\ref{lem:control_ratio_prob}, with probability larger than $1-O(1/T^2)$, 
\begin{equation} 
\label{eq:ratio-prob-bound}
\left|\sum_{t=1}^{T-1}\sum_{h=1}^H
  \log\frac{\hat{p}_T(s_{h+1}^t\mid s_h^t,a_h^t,h)}
           {\maxp{\hat{p}_t(s_{h+1}^t\mid s_h^t,a_h^t,h)}{\veps}}
\right|
\leq {O}\!\left(T^{\alpha+\frac{\gamma+ 1}{2}} \sqrt {\log T} / \veps\right)
\end{equation}
Combining \eqref{eq:glr-lb7} and \eqref{eq:ratio-prob-bound} gives 
\begin{equation}
\label{eq:glr-lb8}
\glr(T) \geq \sum_{t=1}^{T-1}\sum_{s,a,h} w_h^{\bpol_t}(s,a)\,\loss_t(s,a,h)
  - {O}\!\left(T^{1+\alpha-\gamma}\right)
  - {O}\!\left(T^{\alpha+\frac{\gamma+ 1}{2}} \sqrt {\log T} \right)
  - o(T).
\end{equation}

\medskip

Using the no-regret property of the online RL learner (Proposition~\ref{prop:guarantees-regmin}), we have for $\alpha \in (0,\frac12)$ 
\begin{eqnarray*}
	\glr(T) &\geq&  \sup_{w \in \Omega_\mdp}  \sum_{t\in \lbrack T-1 \rbrack}\sum_{s,a, h} w_h(s,a) \loss_t(s,a, h) - O\!(T^{1+\alpha - \gamma})  - O\!\lp T^{ \alpha + \frac{\gamma +1}2 } \sqrt {\log T} \rp  - o(T).
\end{eqnarray*}
\medskip
\noindent
Next, we invoke Lemma~\ref{lem:emp-to-real} to relate the empirical quantities in  $\loss_t$ to their actual values. Indeed Lemma~\ref{lem:emp-to-real} shows that on an event that holds with probability $1-O(1/T^2)$ 
\begin{align}
\begin{split}
& \sum_{t=1}^{T-1}\sum_{s,a,h} w_h(s,a)\,\loss_t(s,a,h)
  \geq \\ & \sum_{t=1}^{T-1}\sum_{s,a,h} w_h(s,a)
  \left[
    \frac{\bigl(\mdqr_t(s,a,h)-\mu(s,a,h)\bigr)^2}{2}
    +\sum_{s'} p(s'\mid s,a,h)
     \log\frac{p(s'\mid s,a,h)}
       {\maxp{\mdqp_t(s'\mid s,a,h)}{\myexp{-t^\alpha}}}
  \right]
  + o(T)
  \end{split}	
\end{align}

Therefore, 
\begin{eqnarray*}
	\glr(T) &\geq& \sup_{w \in \Omega_\mdp}  \sum_{t=1}^{T-1} \sum_{s,a,h} w_h(s,a) \lsb  \frac{\bigl(\mdqr_t(s, a, h) - \mu(s,a, h)\bigr)^2}{2}  +  \sum_{s'} p(s'\mid s,a,h) \log \frac{p(s' \mid s,a,h)}{\maxp{\mdqp_t(s'\mid s,a, h)}{\myexp{-t^{\alpha}}}} \rsb \\&&- O(T^{1+\alpha - \gamma}) - o(T) - O\!\lp T^{\alpha+\frac{\gamma + 1}2} \sqrt {\log T} \rp
\end{eqnarray*}

\medskip
The final step to conclude the proof invokes Lemma~\ref{lem:make_inf_appear} to prove that there exists $k_0<\infty$ such that for $T\geq k_0$, we have for any $w\in \Omega_\mdp$
\begin{align*}
\sum_{s,a, h} w_h(s,a) \lsb \frac{\bigl( \mdqr_t(s, a, h) - \mu(s,a, h) \bigr)^2}{2}   +  \sum_{s'} p(s'\mid s,a,h) \log \frac{p(s'\mid s,a,h)}{\maxp{\mdqp_t(s'\mid s,a, h)}{ \myexp{-t^{\alpha}}}}\rsb  \geq \\
\inf_{(\mdqr , \mdqp) \in \alt(\mdp)}\sum_{s,a, h} w_h(s,a) \lsb \frac{\bigl(\mdqr(s, a, h) - \mu(s,a, h) \bigr)^2}{2}  +  \sum_{s'} p(s'\mid s,a,h) \log \frac{p(s'\mid s,a,h)}{\mdqp(s'\mid s,a, h) }\rsb - O(1/t).
\end{align*}

We then have 
\begin{eqnarray*}
	\glr(T) &\geq&  \sup_{w \in \Omega_\mdp}  \sum_{t=1}^{T-1} \inf_{(\mdqr , \mdqp) \in \alt(\mdp)}\sum_{s,a, h} w_h(s,a) \lsb \frac{(\mdqr(s, a, h) - \mu(s,a, h))^2}{2}  +  \sum_{s'} p(s'\mid s,a,h) \log \frac{p(s' \mid s,a,h)}{\mdqp_h(s,a, s') }\rsb \\&& - O(\log T)- O(T^{1+\alpha - \gamma}) - o(T) - O\!\lp T^{ \alpha+\frac{\gamma +1}2} \sqrt {\log T}\rp \\
	&=& T \cdot \Gamma_\mdp - O(\log T)- O(T^{1+\alpha - \gamma}) - o(T) - O\! \lp T^{ \alpha+\frac{\gamma+1}2} \sqrt {\log T}\rp\:,
\end{eqnarray*}
which concludes the proof.

\end{proof}

\subsection{Concentration Lemmas}
We now prove the concentrations lemmas used in this section. 

\begin{lemma}
\label{lem:true_trans_to_emp_trans}
There exists $k_0<\infty$ such that for  $T\geq k_0$, with probability $1 - O(1/T^2)$, the following holds: 
\begin{align*}
&\sum_{t=1}^T\sum_{s,a,h} w_h^{\bpolmix_t}(s,a)\sum_{s'}
  p(s'\mid s,a, h)\log\frac{\hat{p}_t(s'\mid s,a,h)}
    {\maxp{\mdqp_t(s'\mid s,a,h)}{\myexp{-t^\alpha}}}
  \geq \\ &
\sum_{t=1}^T\sum_{s,a,h} w_h^{\bpolmix_t}(s,a)\sum_{s'}
  \hat{p}_t(s'\mid s,a,h)\log\frac{\hat{p}_t(s'\mid s,a,h)}
    {\maxp{\mdqp_t(s'\mid s,a,h)}{\myexp{-t^\alpha}}}
  - O\! \lp T^{\alpha + \frac{\gamma +1}{2}} \sqrt {\log T} \rp - o(T).
\end{align*}
\end{lemma}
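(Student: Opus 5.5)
The difference between the two sides of the statement is
\[
D_T \ceq \sum_{t=1}^T\sum_{s,a,h} w_h^{\bpolmix_t}(s,a)\sum_{s'}\bigl(p(s'\mid s,a,h)-\hat p_t(s'\mid s,a,h)\bigr)\, L_t(s,a,h,s'),
\qquad
L_t(s,a,h,s') \ceq \log\frac{\hat p_t(s'\mid s,a,h)}{\maxp{\mdqp_t(s'\mid s,a,h)}{\myexp{-t^\alpha}}}.
\]
I will lower-bound $D_T$ by splitting $L_t$ into two pieces, $L_t = \log \hat p_t(s'\mid s,a,h) + \log\bigl(1/\maxp{\mdqp_t(s'\mid s,a,h)}{\myexp{-t^\alpha}}\bigr)$, and treating each separately.

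\emph{Challenger piece.} The second piece is bounded: it lies in $[0,t^\alpha]$. By H\"older, its contribution is at least $-t^\alpha \norm{p(\cdot\mid s,a,h)-\hat p_t(\cdot\mid s,a,h)}_1$.
\begin{itemize}
\item On the event $\cE^p(T)$, Pinsker gives $\norm{p-\hat p_t}_1 \leq \sqrt{2\beta^p(T,1/T^2)/n_t(s,a,h)}$.
\item For triplets in $\cZ_t$ (as defined in \eqref{eq:eq-ref-zt}), the event $\cE^{\mathrm{cnt}}$ gives $n_t\geq \bar n_t/4$.
\item The elliptic-potential argument of \eqref{eq:elliptic-sqrt} then bounds $\sum_t\sum_{s,a,h} w_h^{\bpolmix_t}(s,a)/\sqrt{\maxp{1}{n_t}} = O(\sqrt{T})$.
\end{itemize}
This yields a loss of $O(T^{\alpha+1/2}\sqrt{\log T})$, which is dominated by $T^{\alpha+(\gamma+1)/2}\sqrt{\log T}$. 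Triplets outside $\cZ_t$ cost at most $O(T^\alpha\log T)$, exactly as in the bound \eqref{eq:term-outside-Z}.

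\emph{Empirical piece.} The first piece contributes $\sum_{s'}(p-\hat p_t)\log\hat p_t$. It is unbounded only where $\hat p_t$ is tiny. Write
\[
\sum_{s'}(p-\hat p_t)\log\hat p_t \;=\; -\KL(p\Vert\hat p_t) + \bigl(H(\hat p_t)-H(p)\bigr),
\]
where $H$ denotes entropy, using $\sum_{s'} p\log\hat p_t = -H(p) - \KL(p\Vert\hat p_t)$.
\begin{itemize}
\item The entropy difference is controlled by continuity of entropy in $\ell_1$, $|H(\hat p_t)-H(p)|\lesssim \norm{p-\hat p_t}_1\log(S/\norm{p-\hat p_t}_1)$, and is summed exactly as above, giving $O(\sqrt{T}\,\mathrm{polylog}\,T)=o(T)$.
\item The delicate term is $\KL(p\Vert\hat p_t)$: it is in the reverse direction from the concentration event, and it is infinite whenever some $s'$ with $p(s'\mid s,a,h)>0$ has not yet been observed.
\end{itemize}

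\emph{Main obstacle: the reverse KL.} I will use Lemma~\ref{lem:forced_exp}. For $t\in(T^{2/3},T)$ and every reachable triplet, it gives $n_t(s,a,h)\gtrsim W_h(s,a)\,t^{1-\gamma}$. Then, on a Bernstein/Chernoff event of probability $1-O(1/T^2)$, every $s'$ in the support of $p$ has $\hat p_t(s'\mid s,a,h)\geq p(s'\mid s,a,h)/2$ once $T\geq k_0$; this is where the threshold $k_0$ enters. With the ratio $p/\hat p_t$ bounded, the $\chi^2$ bound applies:
\[
\KL(p\Vert\hat p_t)\lesssim \sum_{s'}\frac{(p-\hat p_t)^2}{p} \lesssim \frac{\log T}{n_t} \lesssim \frac{\log T}{t^{1-\gamma}}.
\]
Summed over $t\leq T$, this is $O(T^{\gamma}\log T)=o(T)$.

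\emph{Early episodes and unreachable triplets.} For $t\leq T^{2/3}$, I will not use concentration at all. There I bound the whole term crudely by $O(t^\alpha + \log(1/\hat p_t))$, using that observed transitions have $\hat p_t\geq 1/t$ and handling unobserved support points via the inflated prior convention. This gives $O(T^{2/3}(T^\alpha+\log T))=o(T)$ because $\alpha<1/2$. Unreachable triplets have $w_h^{\bpolmix_t}(s,a)=0$ and contribute nothing.

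Collecting the four bounds and taking a union bound over the events gives the claimed statement.
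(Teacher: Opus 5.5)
The step that fails is your treatment of the early episodes $t\le T^{2/3}$.

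\textbf{The empirical piece can be $-\infty$.} You bound it there by $O(t^\alpha+\log(1/\hat p_t))$. But for a triplet that has been visited, $\hat p_t(s'\mid s,a,h)=n_t(s'\mid s,a,h)/n_t(s,a,h)$ is exactly $0$ for every successor $s'$ with $p(s'\mid s,a,h)>0$ that has not yet been observed. The left-hand side then contains $p(s'\mid s,a,h)\log 0=-\infty$.

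\textbf{There is no prior convention for $\hat p_t$.} The uniform Dirichlet prior only enters the posterior. The empirical kernel is set to uniform only when $n_t(s,a,h)=0$.

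\textbf{This happens with probability bounded away from zero.} Since $\bpol_t$ (AdaHedge weights) puts positive mass on every action, $w_h^{\bpolmix_t}(s,a)>0$ at such a triplet. The event that episode $1$ visits a reachable triplet with at least two successors (so that at $t=2$ its $\hat p_2$ is a point mass) has probability bounded below independently of $T$. So the left-hand side is $-\infty$ with non-vanishing probability. No argument can deliver the inequality with probability $1-O(1/T^2)$ as literally stated. The gap is in what is being claimed for small $t$, not only in your technique.

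\textbf{The paper's proof has the same hole.} It is hidden in the claim $|L_t|\le t^\alpha$, which holds only for the challenger part $-\log\maxp{\mdqp_t}{\myexp{-t^\alpha}}\in[0,t^\alpha]$. The paper's $O(T^{(2+2\alpha)/3})$ bound on the first $T^{2/3}$ episodes relies on it as well.

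\textbf{The fix is the $\maxp{\hat p_t}{\veps}$ version.} What is actually invoked downstream, in \eqref{eq:true-to-emp-trans} within Proposition~\ref{prop:saddle-point}, has $\maxp{\hat p_t}{\veps}$ in the numerator. For that version $|L_t|\le t^\alpha+\log(1/\veps)$, and the paper's one-line argument goes through:
\begin{itemize}
\item H\"older gives a per-term loss of $(t^\alpha+\log(1/\veps))\lVert p-\hat p_t\rVert_1$.
\item For $t\in(T^{2/3},T)$, Lemma~\ref{lem:forced_exp} together with $\cE^p$ gives $\lVert p-\hat p_t\rVert_1=O(\sqrt{t^{\gamma-1}\log T})$.
\item The early episodes cost $O(T^{2/3}(T^\alpha+\log(1/\veps)))=o(T)$.
\end{itemize}
With this, your KL/entropy split becomes unnecessary. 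So either prove the $\maxp{\hat p_t}{\veps}$ version, or restrict the sum to $t>T^{2/3}$, where your proof already works.

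\textbf{Your middle-range argument is correct.} For $t\in(T^{2/3},T)$ it is sound, and more careful than the paper's. There is a shortcut, though. On your event $\hat p_t\ge p/2$ on $\mathrm{supp}\,p$, one has $|\log\hat p_t|\le\log(2/p_{\min})$ on the support, while off the support both $p$ and $\hat p_t$ vanish. So the empirical piece is at most $\log(2/p_{\min})\lVert p-\hat p_t\rVert_1$ directly, with no need for the reverse-KL/$\chi^2$ and entropy-continuity steps.
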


\begin{proof}
For fixed $(t,s,a,s',h)$, letting 
$L_t(s,a,s') \ceq \log\frac{\hat{p}_t(s,a,s',h)}
{\maxp{\mdqp_t(s'\mid s,a,h)}{\myexp{-t^\alpha}}}$,
\[
\bigl|p_h(s,a,s') - \hat{p}_t(s,a,s',h)\bigr|\cdot|L_t(s,a,s')|
\leq \|\hat{p}_t(\cdot\mid s,a,h)-p_h(\cdot\mid s,a,h)\|_1\cdot t^\alpha,
\]
since $|L_t|\leq t^\alpha$. Thanks to Lemma~\ref{lem:forced_exp}, there exists $k_0<\infty$, such that for $T\geq k_0$ 
for all $t\in (T^{2/3}, T)$ and all reachable $(s,a,h)$, if $\cE_{\sect}^p(T)$ holds, then 
\[
\|\hat{p}_t(\cdot\mid s,a,h)-p(\cdot\mid s,a,h)\|_1
  \leq O \! \lp  \sqrt{t^{\gamma-1} \beta^p(t^2, 1/t^3) }\rp ,
\]
so the per-$(t,s,a,h)$ error is at most $O \! \lp  \sqrt{t^{2\alpha + \gamma-1} \beta^p(t^2, 1/t^3) }\rp$.
The first $T^{2/3}$ terms contributes for $O(T^{\frac{2+2\alpha}{3}}) = o(T)$ (as $\alpha \in (0, \frac12)$). Summing over $t$, $s$, $a$, $h$ and weighting by $w_h^{\bpolmix_t}$, the error term is at most 
\[
O\! \lp T^{\alpha + \frac{\gamma +1}{2}} \sqrt {\beta^p(T^2, 1/T^3)} \rp. 
\]
 The  conclusion follows as $\beta^p(T^2, 1/T^3) = O(\log T)$. 
\end{proof}

\begin{lemma}
\label{lem:borne_max_loss}
We have 
\[
\left|\sum_{t=1}^T\sum_{s,a,h}
  \varepsilon_t\biggl[w_h^{c_t}(s,a)-w_h^{\bpol_t}(s,a)\biggr]
  \loss_t(s,a,h)
\right|
\leq O(T^{1-\gamma + \alpha}).
\]
\end{lemma}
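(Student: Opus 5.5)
The plan is a direct term-by-term bound, using only that the weights are probability masses and that the reward kernel grows polynomially. I will first bound $\loss_t$ pointwise. In the definition \eqref{eq:loss-kernel-online}, the reward part satisfies $\frac12(\mdqr_t-\muh_t)^2\leq \frac12$, since both $\mdqr_t(s,a,h)$ and $\muh_t(s,a,h)$ are (or can be clipped to be) in $[0,1]$. For the transition part, $\sum_{s'}\hat p_t(s'\mid s,a,h)\log \hat p_t(s'\mid s,a,h)\leq 0$. Also $\log\bigl(1/\maxp{\mdqp_t(s'\mid s,a,h)}{\myexp{-t^\alpha}}\bigr)\leq t^\alpha$, and it is nonnegative. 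Hence $0\leq \loss_t(s,a,h)\leq \frac12+t^\alpha$, which is exactly the range recorded after \eqref{eq:loss-kernel-online}.

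Next I will use that visitation probabilities sum to one at each stage. For any policy $\rho$ and any $h$, $\sum_{s,a}w_h^\rho(s,a)=1$. So the absolute value of the difference satisfies
\[
\Bigl|\sum_{s,a,h}\bigl[w_h^{c_t}(s,a)-w_h^{\bpol_t}(s,a)\bigr]\loss_t(s,a,h)\Bigr|\leq \sum_{h}\max_{s,a}\loss_t(s,a,h)\leq H\bigl(\tfrac12+t^\alpha\bigr).
\]
Here I use that both weighted sums are nonnegative and each is at most $H\max\loss_t$, so their difference is bounded by the larger one. No cancellation between the two policies is needed.

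Finally I will sum over $t$ with $\varepsilon_t=t^{-\gamma}$:
\[
\sum_{t=1}^T t^{-\gamma}H\bigl(\tfrac12+t^\alpha\bigr)\leq H\sum_{t=1}^T t^{\alpha-\gamma}+\tfrac H2\sum_{t=1}^T t^{-\gamma}.
\]
Since $\alpha<\gamma<1$, the exponent $\alpha-\gamma$ lies in $(-1,0)$, so the integral comparison $\sum_{t\le T}t^{\alpha-\gamma}\leq 1+\int_1^T x^{\alpha-\gamma}\,dx\leq \frac{T^{1+\alpha-\gamma}}{1+\alpha-\gamma}+1$ applies. The second sum is $O(T^{1-\gamma})$, which is dominated by the first. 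Together these give $O(T^{1-\gamma+\alpha})$, matching the constant $\frac{2}{1+\alpha-\gamma}$ quoted in \eqref{eq:mixture-correction} up to the factor $H$ absorbed in $O(\cdot)$.

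There is no real obstacle here. The only point needing care is the pointwise upper bound on $\loss_t$: the clipping at $\myexp{-t^\alpha}$ controls the $\log(1/\mdqp_t)$ term, and dropping the negative entropy term is legitimate. If the $\hat p_{t-1}$ appearing in \eqref{eq:loss-kernel-online} is a typo for $\hat p_t$, this is immediate. Otherwise, I would add the term $\sum_{s'}\hat p_t\log\hat p_{t-1}\leq 0$ whenever $\hat p_{t-1}\le 1$, which also holds.
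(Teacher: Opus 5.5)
This is essentially the paper's proof: a pointwise bound $|\loss_t(s,a,h)|=O(t^\alpha)$, unit visitation mass at each stage, and an integral comparison for $\sum_t t^{\alpha-\gamma}$; your per-episode factor $H$ is more careful than the paper's claim $\sum_{s,a,h}|w_h^{c_t}(s,a)-w_h^{\bpol_t}(s,a)|\le 2$, which should read $2H$.

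One slip: $0\le\loss_t$ does not follow from the facts you list. The vector $\maxp{\mdqp_t(\cdot\mid s,a,h)}{\myexp{-t^\alpha}}$ need not sum to one, so the transition term is not a KL divergence and can be negative. Your ``difference of two nonnegative sums'' step needs a two-sided bound, so use $\loss_t\ge\sum_{s'}\hat p_t\log\hat p_t\ge-\log S$ instead; this gives $|\loss_t|\le\tfrac12+t^\alpha+\log S$ and the same $O(T^{1+\alpha-\gamma})$. For the same reason your fallback for a literal $\hat p_{t-1}$ fails: $\sum_{s'}\hat p_t\log\hat p_{t-1}=-\infty$ whenever $\hat p_{t-1}(s'\mid s,a,h)=0<\hat p_t(s'\mid s,a,h)$.
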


\begin{proof}
Since $|\loss_t(s,a,h)|\leq \frac{1}{2}+t^\alpha$ for all $(s,a,h)$
and $\veps_t = t^{-\gamma}$, the triangle inequality gives
\begin{align*}
\left|\sum_{t=1}^T\sum_{s,a,h}
  \varepsilon_t\bigl[w_h^{c_t}(s,a)-w_h^{\bpol_t}(s,a)\bigr]
  \loss_t(s,a,h)\right|
  &\leq \sum_{t=1}^T\varepsilon_t
    \left|\sum_{s,a,h}
      \bigl[w_h^{c_t}(s,a)-w_h^{\bpol_t}(s,a)\bigr]
      \loss_t(s,a,h)\right| \\
  &\leq 2\sum_{t=1}^T t^{-\gamma}\!\left(t^\alpha+\frac{1}{2}\right) \\
  &\leq O(T^{1-\gamma + \alpha}),
\end{align*}
where the second inequality uses that 
$\sum_{s,a,h}|w_h^{c_t}(s,a)-w_h^{\bpol_t}(s,a)|\leq 2$, 
and the last step bounds the sums by integrals.
\end{proof}

\begin{lemma}
\label{lem:emp-to-real}
There  exists $k_0 < \infty$ such that for $T\geq k_0$, for any valid state-action allocation 
$w\in\Omega_\mdp$, the following ,
\begin{align}
\begin{split}
& \sum_{t=1}^T\sum_{s,a,h} w_h(s,a)\,\loss_t(s,a,h)
  \geq \\ & \sum_{t=1}^T\sum_{s,a,h} w_h(s,a)
  \left[
    \frac{\bigl(\mdqr_t(s,a,h)-\mu(s,a,h)\bigr)^2}{2}
    +\sum_{s'} p(s'\mid s,a,h)
     \log\frac{p(s'\mid s,a,h)}
       {\maxp{\mdqp_t(s'\mid s,a,h)}{\myexp{-t^\alpha}}}
  \right]
  + O \!\lp T^{\alpha + \tfrac{1+ \gamma }{2}} \sqrt{\log T} \rp
  \end{split}	
\end{align}
holds with probability larger than $1- O(1/T^2)$. 
\end{lemma}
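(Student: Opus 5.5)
We compare the two sides of Lemma~\ref{lem:emp-to-real} term by term for each fixed $t$ and $(s,a,h)$, using the concentration events of Appendix~\ref{appx:sufficient_explore} together with the lower bound on counts from Lemma~\ref{lem:forced_exp}. We split $\loss_t(s,a,h)$ into its reward part and its transition part and handle each difference separately. The allocation $w$ is arbitrary, but it only enters through nonnegative weights with $\sum_{s,a}w_h(s,a)=1$ for each $h$. It therefore suffices to bound the per-$(s,a,h)$ error \emph{uniformly} over reachable triplets, and then sum over $h$ and $t$. Unreachable triplets have $w_h(s,a)=0$ for every $w\in\Omega_\mdp$, so they contribute nothing.

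\textbf{Reward part.} Expand
\[
\tfrac12(\muh_t-\mdqr_t)^2-\tfrac12(\mu-\mdqr_t)^2=\tfrac12(\muh_t-\mu)(\muh_t+\mu-2\mdqr_t).
\]
Since all means lie in $(0,1)$, this is at least $-|\muh_t-\mu|$ in absolute value terms, i.e. the difference is bounded below by $-|\muh_t-\mu|$. On $\cE^\mu_{\sect}(T)$ we have $|\muh_t-\mu|\le\sqrt{2\beta^\mu(T,1/T^2)/n_t(s,a,h)}$. For $t\in(T^{2/3},T)$ and $T\ge k_0$, Lemma~\ref{lem:forced_exp} gives $n_t(s,a,h)\gtrsim W_h(s,a)t^{1-\gamma}$. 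The error is thus $O(\sqrt{t^{\gamma-1}\log T})$, and summing over $t$ yields $O(T^{(1+\gamma)/2}\sqrt{\log T})$.

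\textbf{Transition part.} Write $L_t(s')=\log\bigl(1/\maxp{\mdqp_t(s'\mid s,a,h)}{\myexp{-t^\alpha}}\bigr)\in[0,t^\alpha]$. The difference splits as
\[
\sum_{s'}\hat p_t\log\hat p_t-\sum_{s'}p\log p\;+\;\sum_{s'}(\hat p_t-p)(s')L_t(s').
\]
The second piece is bounded by $t^\alpha\|\hat p_t-p\|_1$. On $\cE^p_{\sect}(T)$, Pinsker's inequality and the count lower bound give $\|\hat p_t-p\|_1=O(\sqrt{t^{\gamma-1}\log T})$, exactly as in Lemma~\ref{lem:true_trans_to_emp_trans}. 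The first piece is a difference of entropies, $H(p)-H(\hat p_t)$. We control it by continuity of entropy on the finite simplex, $|H(p)-H(q)|\le \|p-q\|_1\log(S/\|p-q\|_1)$ (Fannes-type). This gives $O(\sqrt{t^{\gamma-1}\log T}\,\log t)$, which is dominated by the clipped-log piece. Summing over $t\in(T^{2/3},T)$ and over $h$ gives $O(T^{\alpha+(1+\gamma)/2}\sqrt{\log T})$.

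\textbf{Early rounds and assembly.} For the first $T^{2/3}$ rounds, both bracketed quantities are bounded by $H(\tfrac12+t^\alpha)+H\log S$. Their total contribution is therefore $O(T^{(2+2\alpha)/3})$, which is absorbed into the stated rate since $\alpha<\tfrac12$ and $\gamma>0$. The concentration events hold simultaneously with probability $1-O(1/T^2)$ by a union bound. The resulting one-sided inequality carries the error term with a minus sign, i.e. the $O(\cdot)$ in the statement is understood as a signed remainder.

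\textbf{Main obstacle.} The delicate step is the transition part. We need the clipping at $\myexp{-t^\alpha}$ so that $L_t$ is bounded, and we need to handle $s'$ with $p(s'\mid s,a,h)=0$. In that case $\hat p_t(s')=0$ almost surely, so the term vanishes. We also need the entropy-continuity bound near the simplex boundary, which is what forces the extra $\log t$ factor, still absorbed in the rate.
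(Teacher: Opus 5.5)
Your proof is essentially the paper's, with one different step. Both use the same split of $\loss_t$ into reward and transition parts, and the same concentration events combined with the $t^{1-\gamma}$ count bound of Lemma~\ref{lem:forced_exp} on $t\in(T^{2/3},T)$. Your reward algebra, $\tfrac12(\muh_t-\mu)^2+(\muh_t-\mu)(\mu-\mdqr_t)\ge-|\muh_t-\mu|$, is the paper's inequality $(\mdqr_t-\muh_t)^2\ge(\mdqr_t-\mu)^2+(\muh_t-\mu)^2-2|\muh_t-\mu|$ rearranged. It only needs $|\mu-\mdqr_t|\le1$, not $\muh_t\in(0,1)$, which can fail for Gaussian rewards.

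The real difference is the transition term. The paper applies a tangent-line bound to the convex map $x\mapsto x\log(x/m)$ at $x=p(s'\mid s,a,h)$. This produces a factor $1+t^\alpha+|\log p(s'\mid s,a,h)|$, which is finite only because terms with $p=0$ are discarded by sign and $|\log p|$ is treated as an instance constant. You instead split exactly into an entropy difference plus $\sum_{s'}(\hat p_t-p)L_t$, and control the entropy difference by a Fannes/Cover--Thomas continuity bound (valid once $\|\hat p_t-p\|_1\le\tfrac12$). Your route is uniform in the smallest positive transition probability, at the cost of a harmless $\log t$ factor in a lower-order term.

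One claim is wrong: the early-round contribution $O(T^{2(1+\alpha)/3})$ is not always absorbed into $T^{\alpha+\frac{1+\gamma}{2}}\sqrt{\log T}$. Absorption holds only when $2\alpha+3\gamma\ge1$, which does not follow from $\alpha<\tfrac12$, $\gamma>0$. For example, $\alpha=0.1$, $\gamma=0.15$ (which also satisfies $\alpha<\gamma$) gives exponents $0.733>0.675$. Without a count lower bound, the deviation bounds $\sqrt{\beta/n_t}$ are useless. Lemma~\ref{lem:forced_exp} provides that count bound only on $(T^{2/3},T)$ at confidence $1-O(1/T^2)$, because the forced-exploration optimism needs $t^3\ge T^2$. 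So before $T^{2/3}$ you only have the crude $O(t^\alpha)$ per-round bound. In general your argument therefore proves the inequality with an extra $-O(T^{2(1+\alpha)/3})=-o(T)$ remainder.

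The paper's proof has the same defect in a less visible form: it sums the $(T^{2/3},T)$ bound over all $t\le T$ without comment. The extra $o(T)$ is harmless where the lemma is used, since Proposition~\ref{prop:saddle-point} already carries an $o(T)$ term. You should still state that remainder explicitly rather than claim it is absorbed.
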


\begin{proof}
Since $w_h(s,a)=0$ for any unreachable $(s,a,h)$, we may restrict 
attention to reachable triplets throughout. For each $(s,a,s',h)$, define
\[
C_t(s,a,s',h) \ceq
  \hat{p}_t(s'\mid s,a,h)\log\frac{\hat{p}_t(s'\mid s,a,h)}
    {\maxp{\mdqp_t(s'\mid s,a,h)}{\myexp{-t^\alpha}}}
  - p(s'\mid s,a,h)\log\frac{p(s'\mid s,a,h)}
    {\maxp{\mdqp_t(s'\mid s,a,h)} {\myexp{-t^\alpha}}}.
\]
If $p(s'\mid s,a,h)=0$ then $C_t(s,a,s',h)\geq 0$ and the term only 
improves the bound. For $p(s'\mid s,a,h)>0$, the function 
$x\mapsto x\log\frac{x}{\varepsilon}$ is convex, so
\begin{align*}
C_t(s,a,s',h)
  &\geq \left(1+\log\frac{p_h(s,a,s')}
      {\maxp{\mdqp_t(s'\mid s,a,h)}{\myexp{-t^\alpha}}}\right)
    \bigl(\hat{p}_t(s'\mid s,a,h)-p(s'\mid s,a,h)\bigr) \\
  &\geq -\bigl(1+t^\alpha+|\log p(s'\mid s,a,h)|\bigr)
    \bigl|\hat{p}_t(s'\mid s,a,h)-p(s'\mid s,a,h)\bigr|.
\end{align*}

 Thanks to Lemma~\ref{lem:forced_exp}, there exists $k_0<\infty$, such that for $T\geq k_0$ 
for all $t\in (T^{2/3}, T)$ and all reachable $(s,a,h)$, if $\cE_{\sect}^p(T)$ holds, then 
\[
\|\hat{p}_t(\cdot\mid s,a,h)-p(\cdot\mid s,a,h)\|_1
  \leq O \! \lp  \sqrt{t^{\gamma-1} \beta^p(t^2, 1/t^3) }\rp ,
\]
so
\[
C_t(s,a,s',h) 
  \geq -\bigl(1+t^\alpha+|\log p_h(s,a,s')|\bigr)
    O\!\left(\sqrt{ t^{\gamma-1} \beta^p(t^2, 1/t^3) }\right).
\]

Applying a similar resaoning to the rewards (bounded in $(0,1)$) yield 
$$ \bigl(\mdqr_t(s,a,h)- \muh_t(s,a,h)\bigr)^2 \geq  \bigl(\mdqr_t(s,a,h) -  \mu(s,a,h)\bigr)^2 +  \bigl( \muh_t(s,a,h) - \mu(s,a,h)\bigr)^2 -  2 \bigl | \muh_t(s,a,h) - \mu(s,a,h) \bigr|, $$
which, again, due to Lemma~\ref{lem:forced_exp} yields 

$$ \bigl | \muh_t(s,a,h) - \mu(s,a,h) \bigr| \leq O\!\left(\sqrt{ t^{\gamma-1} \beta^p(t^2, 1/t^3) }\right).$$
 
Summing over $t$, $s$, $a$, $s'$, $h$ and weighting by $w_h(s,a)$,
\begin{align*}
	& \sum_{t=1}^T\sum_{s,a,h} w_h(s,a)\,\loss_t(s,a,h)
  \geq \\  &\sum_{t=1}^T\sum_{s,a,h} w_h(s,a)
  \left[
    \frac{\bigl(\mdqr_t(s,a,h)-\mu(s,a,h)\bigr)^2}{2}
    +\sum_{s'} p(s'\mid s,a,h)
     \log\frac{p(s'\mid s,a,h)}
       {\maxp{\mdqp_t(s'\mid s,a,h)}{\myexp{-t^\alpha}}}
  \right] -  \sum_{t \leq T} t^{\alpha} O\!\left(\sqrt{ t^{\gamma-1} \beta^p(t^2, 1/t^3) }\right). 
\end{align*}

Further noting that 
$$ \sum_{t \leq T} t^{\alpha} O\!\left(\sqrt{t^{\gamma-1} \beta^p(t^2, 1/t^3) }\right) \leq O \!\lp T^{\alpha + \tfrac{1+ \gamma }{2}} \sqrt{\log T} \rp$$ 
concludes the proof. 
\end{proof}

\begin{lemma}
\label{lem:pulls-to-w}
Fix $\veps \in (0,1)$. The event
\begin{align}
\mathcal{E}_w(T) \ceq \Biggl\{
  &\sum_{t=1}^T\sum_{s,a,h} w_h^{\bpolmix_t}(s,a)
    \left[
      \frac{\bigl(\muh_t(s,a,h)-\mdqr_t(s,a,h)\bigr)^2}{2}
      +\sum_{s'} p(s'\mid s,a, h)
       \log\frac{\maxp{\hat{p}_t(s'\mid s,a,h)}{\veps}}
         {\maxp{\mdqp_t(s'\mid s,a,h)}{\myexp{-t^\alpha}}}
    \right] \nonumber\\
  &-\sum_{t=1}^T\sum_{h=1}^H\left[
      \frac{\bigl(\muh_t(s_h^t,a_h^t,h)-\mdqr_t(s_h^t,a_h^t,h)\bigr)^2}{2}
      +\log\frac{\maxp{\hat{p}_t(s_{h+1}^t\mid s_h^t,a_h^t,h)}{\veps}}
        {\maxp{\mdqp_t(s_{h+1}^t\mid s_h^t,a_h^t,h)}{\myexp{-t^\alpha}}}
    \right]
  \leq \\ & \sqrt{2\sum_{t=1}^T 
    H^2(t^\alpha + \tfrac12)^2 \log(T^2)}
\Biggr\} 
\end{align}
holds with probability at least $1-1/T^2$.
\end{lemma}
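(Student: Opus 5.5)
The plan is a direct martingale concentration argument (Azuma--Hoeffding) applied to the per-episode difference between the conditional expectation of the observed loss and its realized value. Let $\cG_{t}$ denote the $\sigma$-algebra generated by $\cH_{t-1}$ together with the internal randomness used at episode $t$ before the rollout: the challenger samples $\mdq_{t,1},\mdq_{t,2},\dots$, the forced-exploration draw $(\tilde s^t,\tilde a^t,\tilde h^t)$ and the coin $Z_t$. Define
\[
X_t \ceq \sum_{h=1}^H\left[\frac{\bigl(\muh_t(s_h^t,a_h^t,h)-\mdqr_t(s_h^t,a_h^t,h)\bigr)^2}{2}+\log\frac{\maxp{\hat p_t(s_{h+1}^t\mid s_h^t,a_h^t,h)}{\veps}}{\maxp{\mdqp_t(s_{h+1}^t\mid s_h^t,a_h^t,h)}{\myexp{-t^\alpha}}}\right].
\]
The quantities $\muh_t,\hat p_t,\mdqr_t,\mdqp_t$ are all $\cG_t$-measurable. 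Only the trajectory $(s_h^t,a_h^t,s_{h+1}^t)_h$ is random given $\cG_t$, and it is generated by $\bpolmix_t$ in $\mdp$.

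\textbf{Step 1 (identify the conditional mean).} Condition on $\cG_t$ and then on $(s_h^t,a_h^t)$. Since $s_{h+1}^t\sim p(\cdot\mid s_h^t,a_h^t,h)$, the log term has conditional mean $\sum_{s'}p(s'\mid s,a,h)\log\frac{\maxp{\hat p_t(s'\mid s,a,h)}{\veps}}{\maxp{\mdqp_t(s'\mid s,a,h)}{\myexp{-t^\alpha}}}$. Averaging over $(s_h^t,a_h^t)$, whose law given $\cG_t$ is $w_h^{\bpolmix_t}$, then shows that $\bE[X_t\mid\cG_t]$ is exactly the $t$-th summand of the first sum in $\cE_w(T)$. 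The mixture over $Z_t$ causes no issue: either we include $Z_t$ in $\cG_t$, in which case $\bpolmix_t$ is a fixed policy, or we average, in which case visitation is linear in the mixture weights.

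\textbf{Step 2 (boundedness).} The reward part of each stage lies in $[0,\tfrac12]$ because $\muh_t,\mdqr_t\in[0,1]$. For the log part, $\log\maxp{\hat p_t}{\veps}\in[\log\veps,0]$ and $-\log\maxp{\mdqp_t}{\myexp{-t^\alpha}}\in[0,t^\alpha]$. Hence $X_t$ lies in an interval of length $c_t\le H(t^\alpha+\tfrac12+\log\tfrac1\veps)$. Because $\veps$ is fixed, $\log\tfrac1\veps\le t^\alpha+\tfrac12$ for all $t$ beyond a constant. The finitely many early rounds contribute only an $O(1)$ change to $\sum_t c_t^2$, which is negligible against $\sum_t H^2(t^\alpha+\tfrac12)^2$. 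So $c_t\le 2H(t^\alpha+\tfrac12)$, up to these early rounds.

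\textbf{Step 3 (Azuma--Hoeffding).} The increments $D_t\ceq\bE[X_t\mid\cG_t]-X_t$ form a martingale difference sequence with respect to the filtration $(\cG_{t+1})$, and each $D_t$ is supported on an interval of length $c_t$. By Azuma--Hoeffding,
\[
\bP\Bigl(\sum_{t\le T}D_t\ge x\Bigr)\le\exp\Bigl(-\tfrac{2x^2}{\sum_t c_t^2}\Bigr).
\]
Choosing $x=\sqrt{\tfrac12\sum_t c_t^2\log(T^2)}$ makes the right-hand side equal to $1/T^2$. With $c_t^2\le 4H^2(t^\alpha+\tfrac12)^2$, this $x$ is at most $\sqrt{2\sum_t H^2(t^\alpha+\tfrac12)^2\log(T^2)}$, which is the threshold in $\cE_w(T)$.

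The only delicate point is Step 2: the $\log\tfrac1\veps$ contribution to the range, which is absorbed as described. The martingale structure itself is standard once every non-trajectory quantity is placed in $\cG_t$.
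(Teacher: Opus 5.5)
Your proposal is correct and follows essentially the same route as the paper. Both arguments condition on the history augmented with the sampled challenger, identify the conditional mean of the per-episode loss as the $w^{\bpolmix_t}$-weighted expectation under $p$, and apply Azuma--Hoeffding with per-round range at most $2H(t^\alpha+\tfrac12)$. Your explicit treatment of the $\log\tfrac{1}{\veps}$ contribution to the range is more careful than the paper's, which simply asserts $|\Psi_t|\le t^\alpha+\tfrac12$. Your argument yields the stated constant once $T$ exceeds an $\veps$-dependent threshold, because the factor-two slack in $c_t\le 2H(t^\alpha+\tfrac12)$ eventually absorbs the early rounds; this large-$T$ regime is all the saddle-point argument uses.
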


\begin{proof}
Let us define, for each $t$ and $(s,a,s',h)$,
\[
\Psi_t(s,a,s',h) \ceq
  \frac{\bigl(\muh_t(s,a,h)-\mdqr_t(s,a,h)\bigr)^2}{2}
  +\log\frac{\maxp{\hat{p}_t(s'\mid s,a,h)}{\veps}}{\maxp{\mdqp_t(s'\mid s,a,h)}{\myexp{-t^\alpha}}},
\]
and note that $\lvert \Psi_t(s,a,s',h) \rvert \leq t^\alpha +\tfrac12 $. Since $\muh_t$ 
is $\cH_{t-1}$-measurable and $(\mdqr_t,\mdqp_t)$ depends only on $\cH_{t-1}$ plus additional independent randomization, we 
augment the filtration to $\wt{\cH}_{t-1} \ceq 
\cH_{t-1}\cup\{\mdqr_t,\mdqp_t\}$. Conditioned on $\widetilde{\mathcal{H}}_{t-1}$, the only source of randomness is the 
trajectory at episode $t$, generated under $\bpolmix_t$, so
\[
\bE\!\left[
  \sum_{h=1}^H\Psi_t(s_h^t,a_h^t,s_{h+1}^t,h)
  \,\middle|\,\widetilde{\mathcal{H}}_{t-1}
\right]
= \sum_{h=1}^H\sum_{s,a,s'} w_h^{\bpolmix_t}(s,a)\,
  p(s'\mid  s,a,h)\,\Psi_t(s,a,s',h).
\]
Therefore the process
\[
m_t \ceq \sum_{k=1}^t\left[
  \sum_{h=1}^H\Psi_k(s_h^k,a_h^k,s_{h+1}^k,h)
  -\sum_{h=1}^H\sum_{s,a,s'} w_h^{\bpolmix_k}(s,a)\,
    p(s'\mid s,a,h)\,\Psi_k(s,a,s',h)
\right]
\]
is a martingale with, increments bounded in absolute value by $H (t^\alpha + \tfrac12)$.
By Azuma--Hoeffding's maximal inequality, 
\[
\Pr\!\left( m_{T-1} > \sqrt{2\sum_{t=1}^T 
    H^2(t^\alpha + \tfrac12)^2 \log(T^2)}
\right) \leq \frac{1}{T^2}, 
\]
which gives 
the claimed event $\mathcal{E}_w(T)$.
\end{proof}

\begin{lemma}
\label{lem:control_ratio_prob}
Let $\varepsilon > 0$. With probability at least $1-O(1/T^2)$, 
\[
\left|\sum_{t=1}^{T-1}\sum_{h=1}^H
  \log\frac{\hat{p}_T(s_{h+1}^t\mid s_h^t,a_h^t,h)}
    {\maxp{\hat{p}_t(s_{h+1}^t\mid s_h^t,a_h^t,h)}{\veps }}
\right|
\leq O \!\lp T^{\alpha + \tfrac{1+ \gamma }{2}} \sqrt{\log T} / \veps \rp.
\]
\end{lemma}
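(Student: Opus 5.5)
We bound each summand $\log\bigl(\hat p_T(s_{h+1}^t\mid s_h^t,a_h^t,h)/\maxp{\hat p_t(s_{h+1}^t\mid s_h^t,a_h^t,h)}{\veps}\bigr)$ pointwise. The plan is to split the episodes into an early block $t\leq T^{2/3}$ and a late block $t\in(T^{2/3},T)$, as in Lemma~\ref{lem:true_trans_to_emp_trans}, and to work on $\cE_{\sect}^p(T)$ intersected with the event of Lemma~\ref{lem:forced_exp}; this intersection has probability $1-O(1/T^2)$.

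\textbf{Early block.} Any observed transition satisfies $\hat p_T(s_{h+1}^t\mid s_h^t,a_h^t,h)\geq 1/n_T(s_h^t,a_h^t,h)\geq 1/T$. The denominator lies in $[\veps,1]$. Hence each summand has absolute value at most $\log T+\log(1/\veps)$. Summing over $t\leq T^{2/3}$ and $h\leq H$ gives $O(HT^{2/3}\log(T/\veps))$, which is absorbed in the claimed rate.

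\textbf{Late block.} We use $|\log x-\log y|\leq |x-y|/\min(x,y)$. Here $y=\maxp{\hat p_t}{\veps}\geq\veps$. The triplet $(s_h^t,a_h^t,h)$ was visited, so it is reachable, and $s_{h+1}^t$ lies in the support of $p(\cdot\mid s_h^t,a_h^t,h)$. By Lemma~\ref{lem:forced_exp}, $n_t\gtrsim W_h(s,a)\,t^{1-\gamma}$. On $\cE_{\sect}^p(T)$, Pinsker then gives $\|\hat p_t-p\|_1=O(\sqrt{t^{\gamma-1}\log T})$, and the same bound holds for $\hat p_T$.

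Next we handle the numerator $x=\hat p_T$. Let $p_{\min}>0$ be the smallest positive transition probability of $\mdp$. For $T\geq k_0$ we have $\hat p_T\geq p_{\min}/2$. We may assume $\veps\leq p_{\min}/2$, since otherwise the denominator is at least $\veps$ and the difference is still controlled. Then $\maxp{\hat p_t}{\veps}$ differs from $\hat p_t$ only when $\hat p_t<\veps$, which forces $|\hat p_t-p|\geq p_{\min}/2$ and cannot happen for large $t$. Combining, each late summand is at most
\[
\frac{|\hat p_T-\hat p_t|}{\veps}\leq \frac{\|\hat p_T-p\|_1+\|\hat p_t-p\|_1}{\veps}=O\bigl(\sqrt{t^{\gamma-1}\log T}/\veps\bigr).
\]
Summing over $t\leq T$ gives $O(T^{(1+\gamma)/2}\sqrt{\log T}/\veps)$. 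This is at most the stated $O(T^{\alpha+\frac{1+\gamma}{2}}\sqrt{\log T}/\veps)$, and the extra $T^\alpha$ is simply slack.

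\textbf{Main obstacle.} The difficulty is the uniform lower bound on the arguments of the logarithm, i.e.\ handling small empirical probabilities. It is resolved by two facts: only observed transitions enter the sum, and once $n_t$ grows polynomially the empirical probabilities are bounded below by $p_{\min}/2$. The constants depending on $W_h$ and $p_{\min}$ are absorbed into $O(\cdot)$ and $k_0$.
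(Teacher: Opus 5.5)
The sentence ``We may assume $\veps\le p_{\min}/2$, since otherwise the denominator is at least $\veps$ and the difference is still controlled'' is where the proof breaks. This reduction is not without loss of generality; it is an extra hypothesis.

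Suppose $0<p(s'\mid s,a,h)<\veps$ for some reachable triple. On your good event, $\hat p_t(s'\mid s,a,h)\to p(s'\mid s,a,h)<\veps$, so for all large $t$ the denominator equals $\veps$. Meanwhile the numerator $\hat p_T(s'\mid s,a,h)$ stays close to $p(s'\mid s,a,h)$. The difference $|\hat p_T-\veps|$ therefore tends to $\veps-p(s'\mid s,a,h)>0$, so it is not controlled. Each such summand tends to the nonzero constant $\log\bigl(p(s'\mid s,a,h)/\veps\bigr)$.

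Such summands occur $n_T(s'\mid s,a,h)$ times. This is of order $T$ whenever $(s,a,h)$ is visited linearly often, e.g.\ at $h=1$ from $s_{\mathrm{init}}$ if every action reaches some state with probability $q<\veps$. All these terms have the same sign, so they do not cancel. In the regime $2\alpha+\gamma<1$, where the claimed bound is sublinear, the sum is then of order $T$ in absolute value. So the statement as written is false for such $\veps$, and no argument can close this case.

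The paper's proof has the same hole. It splits on whether $\hat p_t\ge\veps$. In the first case it uses multiplicative closeness of $\hat p_t$ and $\hat p_T$ to $p$, which implicitly requires $p$ bounded below, just like your $p_{\min}$. In the second case it only proves the one-sided bound $\log(\hat p_T/\veps)\le O\bigl(\sqrt{t^{\gamma-1}\log T}\bigr)/\veps$, not a bound on the absolute value.

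The correct repair is your own argument with the hypothesis stated explicitly: assume $\veps\le p_{\min}/2$. Then the case $\hat p_t<\veps$ never occurs for $t>T^{2/3}$ and $T$ large, and your late-block bound $O\bigl(T^{(1+\gamma)/2}\sqrt{\log T}\bigr)$ goes through. This restriction is harmless downstream. In Proposition~\ref{prop:saddle-point}, $\veps$ is only an auxiliary constant introduced by adding and subtracting $\maxp{\hat p_t}{\veps}$, so it can be chosen below $p_{\min}$.

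Separately, your early block costs $O(T^{2/3}\log T)$. This is not absorbed by $T^{\alpha+\frac{1+\gamma}{2}}\sqrt{\log T}$ unless $2\alpha+\gamma\ge 1/3$. The paper also only obtains $o(T)$ for that block, which is all that is used later, but you should not claim the stated rate.
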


\begin{proof}
Since $\hat{p}_T(s_{h+1}^t\mid s_h^t,a_h^t,h)\geq 1/T > 0$ (the 
transition is observed at episode $t\leq T$) and 
$\maxp{\hat{p}_t}{\varepsilon}\geq\varepsilon > 0$, every logarithm is 
finite. By the triangle inequality,
\[
\left|\sum_{t=1}^{T-1}\sum_{h=1}^H
  \log\frac{\hat{p}_T(s_{h+1}^t\mid s_h^t,a_h^t,h)}
    {\maxp{\hat{p}_t(s_{h+1}^t\mid s_h^t,a_h^t,h)}{\varepsilon}}
\right|
\leq \sum_{t=1}^{T-1}\sum_{h=1}^H
  \left|\log\frac{\hat{p}_T(s_{h+1}^t\mid s_h^t,a_h^t,h)}
    {\maxp{\hat{p}_t(s_{h+1}^t\mid s_h^t,a_h^t,h)}{\varepsilon}}\right|.
\]
For  $T_0 = o(T)$, the contribution from $t < T_0$ is ${o}(T)$ since $\maxp{\hat{p}_t}{\varepsilon}\geq\varepsilon$, 
giving $|\log(\hat{p}_T/\varepsilon)|\leq\log(1/\varepsilon)$ per term, times $T_0$.

Thanks to Lemma~\ref{lem:forced_exp}, there exists $k_0<\infty$, such that for $T\geq k_0$ 
for all $t\in (T^{2/3}, T)$ and all  $(s,a,h)$, if $\cE_{\sect}^p(T)$ holds, then 
\[
\|\hat{p}_t(\cdot\mid s,a,h)-p(\cdot\mid s,a,h)\|_1
  \leq O \! \lp  \sqrt{t^{\gamma-1} \beta^p(t^2, 1/t^3) }\rp\:.
\]

We assume $t\geq T^{2/3}$. We split into two cases.

\medskip\noindent
In the first case, assume $\hat{p}_t(s_{h+1}^t\mid s_h^t,a_h^t,h)\geq\varepsilon$.
Then $\maxp{\hat{p}_t}{\varepsilon}=\hat{p}_t$, and since $\gamma<\frac{1}{2}$
both $\hat{p}_T$ and $\hat{p}_t$ lie in 
$[(1-t^{\gamma-1/2})p_h,\,(1+t^{\gamma-1/2})p_h]$, so
\[
\left|\log\frac{\hat{p}_T}{\hat{p}_t}\right|
\leq \log\frac{1+ O \! \lp  \sqrt{t^{\gamma-1} \beta^p(t^2, 1/t^3) }\rp}{1- O \! \lp  \sqrt{t^{\gamma-1} \beta^p(t^2, 1/t^3) }\rp}
\leq O \! \lp  \sqrt{t^{\gamma-1} \beta^p(t^2, 1/t^3) }\rp
\]

\medskip\noindent
In the second case, assuming $\hat{p}_t(s_{h+1}^t\mid s_h^t,a_h^t,h)<\varepsilon$, 
then $\maxp{\hat{p}_t}{\varepsilon}=\varepsilon$. Since $\hat{p}_t < \varepsilon$,
the concentration bound gives
\[
p(s_{h+1}^t\mid s_h^t,a_h^t,h) 
  \leq \hat{p}_t(s_{h+1}^t\mid s_h^t,a_h^t,h) + O \! \lp  \sqrt{t^{\gamma-1} \beta^p(t^2, 1/t^3) }\rp 
  < \varepsilon + O \! \lp  \sqrt{t^{\gamma-1} \beta^p(t^2, 1/t^3) }\rp .
\]
Applying the concentration bound again to $\hat{p}_T$, yields 
\[
\hat{p}_T(s_{h+1}^t\mid s_h^t,a_h^t,h)
  \leq p(s_{h+1}^t\mid s_h^t,a_h^t,h) + O \! \lp  \sqrt{T^{\gamma-1} \beta^p(T^2, 1/T^3) }\rp
  \leq \varepsilon + O \! \lp  \sqrt{t^{\gamma-1} \beta^p(t^2, 1/t^3) }\rp.
\]
Therefore, using $\log(1+x)\leq x$,
\begin{align*}
&\log\frac{\hat{p}_T(s_{h+1}^t\mid s_h^t,a_h^t,h)}{\varepsilon}
  \leq \\& \log\frac{\varepsilon+O \! \lp  \sqrt{t^{\gamma-1} \beta^p(t^2, 1/t^3) }\rp}{\varepsilon}
  = \log\!\left(1+\frac{O \! \lp  \sqrt{t^{\gamma-1} \beta^p(t^2, 1/t^3) }\rp}{\varepsilon}\right)
  \leq \\ & \frac{O \! \lp  \sqrt{t^{\gamma-1} \beta^p(t^2, 1/t^3) }\rp}{\veps}
  = O \! \lp  \sqrt{t^{\gamma-1} \beta^p(t^2, 1/t^3) } / \veps \rp,
\end{align*}
where the last step uses that $\veps$ is a fixed positive constant.

\medskip\noindent
In both cases the contribution per $(t,h)$ is  $O \! \lp  \sqrt{t^{\gamma-1} \beta^p(t^2, 1/t^3) }\rp$. Summing gives 
\[
\sum_{t=T_0}^{T-1}\sum_{h=1}^H
  \left|\log\frac{\hat{p}_T}{\maxp{\hat{p}_t}{\varepsilon}}\right|
\leq O \!\lp T^{\alpha + \tfrac{1+ \gamma }{2}} \sqrt{\log T} / \veps \rp
\]
\end{proof}

\begin{lemma}
\label{lem:expected_to_random_loss}
The event
\begin{equation}
\label{eq:conv-event}
	\mathcal{E}_{\sect}^{\mathrm{conv}}(T) \ceq \left\{
  \sum_{t\in \lbrack T-1 \rbrack}
  \left[
    \ell_t^\alpha(\mdqr_t,\mdqp_t)
    - \bE_{(\mdqr,\mdqp)\sim\nu_t^{\eta_t}(\cdot\mid\alt(\empmdp_t))}
    \!\bigl[\ell_t^\alpha(\mdqr,\mdqp)\bigr]
  \right]
  \leq \sqrt{2\log(T^2)\sum_{t=1}^T C_{t,\alpha}^2}
\right\}
\end{equation}
holds with probability at least $1-1/T^2$, where 
$C_{t,\alpha} \ceq H\!\left(\tfrac{1}{2}+t^\alpha\right)$.
In particular, the right-hand side is $O\!\left(\sqrt{T^{1+2\alpha}\log T}\right)$,
which is $o(T)$ whenever $\alpha < \frac{1}{2}$.
\end{lemma}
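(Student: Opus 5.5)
The natural route is a martingale argument with Azuma--Hoeffding, in the same way as Lemma~\ref{lem:pulls-to-w}. Define the increments
\[
X_t \ceq \ell_t^\alpha(\mdqr_t,\mdqp_t) - \bE_{(\mdqr,\mdqp)\sim\nu_t^{\eta_t}(\cdot\mid\alt(\empmdp_t))}\!\bigl[\ell_t^\alpha(\mdqr,\mdqp)\bigr],
\]
and work with a filtration in which the trajectory of episode $t$ is revealed, but the challenger $\mdq_t$ is not yet drawn. Let $\cG_{t}$ be the $\sigma$-algebra generated by $\cH_{t}$ and the challengers $\mdq_1,\dots,\mdq_{t-1}$.

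The first step is to check that $X_t$ has zero conditional mean given $\cG_t$. The loss $\ell^\alpha_t$ depends only on $\muh_t$ (which is $\cH_{t-1}$-measurable) and on the observed transitions $(s_h^t,a_h^t,s_{h+1}^t)_h$ of episode $t$. The sampling distribution $\nu_t^{\eta_t}(\cdot\mid\alt(\empmdp_t))$ is also $\cH_{t-1}$-measurable.

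The key observation is that the rejection-sampling loop of Algorithm~\ref{alg:repit} returns an exact draw from the truncated inflated posterior $\nu_t^{\eta_t}(\cdot\mid\alt(\empmdp_t))$. Its internal randomness is independent of the episode-$t$ trajectory given $\cH_{t-1}$, because the rollout depends on $\cH_{t-1}$, $Z_t$ and the environment, but not on $\mdq_t$. The exploration policy update uses $\mdq_t$ only after the rollout.

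Hence, conditionally on $\cG_t$, $\mdq_t$ is distributed as $\nu_t^{\eta_t}(\cdot\mid\alt(\empmdp_t))$, and $\bE[X_t\mid\cG_t]=0$. Moreover, $X_t$ is $\cG_{t+1}$-measurable, so $(X_t)$ is a martingale difference sequence with respect to $(\cG_{t+1})_t$. One subtlety must be recorded: the rejection loop terminates almost surely only if $\nu_t^{\eta_t}(\alt(\empmdp_t))>0$. I would note that this holds because $\alt(\empmdp_t)$ has nonempty interior and the inflated posterior has a positive density on $\Mdp$.

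The second step is to bound the increments. Each term of $\ell^\alpha_t$ satisfies $0\le\frac{(\muh_t-\mdqr)^2}{2}\le\frac12$ and $0\le\log\frac{1}{\maxp{\mdqp}{\myexp{-t^\alpha}}}\le t^\alpha$. Hence $\ell^\alpha_t\in[0,C_{t,\alpha}]$, and $X_t$ lies in an interval of length $C_{t,\alpha}$ once the predictable conditional mean is subtracted.

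The conditional version of Hoeffding's lemma then gives $\bE[e^{\lambda X_t}\mid\cG_t]\le e^{\lambda^2C_{t,\alpha}^2/8}$. Azuma--Hoeffding with the deterministic weights $C_{t,\alpha}$ yields
\[
\Pr\Bigl(\sum_{t\le T-1}X_t> x\Bigr)\le\exp\bigl(-2x^2/\textstyle\sum_t C_{t,\alpha}^2\bigr).
\]
Choosing $x=\sqrt{2\log(T^2)\sum_{t}C_{t,\alpha}^2}$ gives failure probability at most $T^{-8}\le1/T^2$, which is more than enough.

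Finally, $\sum_{t\le T}C_{t,\alpha}^2\le H^2T(\frac12+T^\alpha)^2=O(T^{1+2\alpha})$. So the right-hand side is $O(\sqrt{T^{1+2\alpha}\log T})$, which is $o(T)$ precisely when $2\alpha<1$.

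The main obstacle is the first step: arguing cleanly that the sampled challenger is conditionally an exact draw from the truncated distribution and is independent of the trajectory it is scored on. This is where the timing of the algorithm (sample, roll out, then update) and the almost-sure termination of the resampling loop must be made explicit.
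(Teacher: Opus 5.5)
Your proposal is correct and follows the same route as the paper: you build a martingale from $\ell_t^\alpha(\mdqr_t,\mdqp_t)$ minus its mean under $\nu_t^{\eta_t}(\cdot\mid\alt(\empmdp_t))$, control the increments by $C_{t,\alpha}$, and apply Azuma--Hoeffding. Your filtration $\mathcal{G}_t$ is more careful than the paper's, because it reveals the episode-$t$ trajectory before the challenger is drawn; the paper conditions on $\cH_{t-1}$, which glosses over the fact that $\ell_t^\alpha$ also depends on the episode-$t$ transitions. In addition, your range-based Hoeffding bound gives failure probability $T^{-8}$ rather than $T^{-2}$.
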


\begin{proof}
Recalling the definition of $\ell_t^\alpha$ from \eqref{eq:def-trunc-loss},
its increments satisfy $\ell_t^\alpha(\mdqr,\mdqp)\in(0, C_{t,\alpha})$
for all $(\mdqr,\mdqp)$. Conditioned on $\mathcal{H}_{t-1}$, the only 
source of randomness in $\ell_t^\alpha(\mdqr_t,\mdqp_t)$ is the sample 
$(\mdqr_t,\mdqp_t)\sim\nu_t^{\eta_t}(\cdot\mid\alt(\empmdp_t))$, so
\[
\bE\!\left[\ell_t^\alpha(\mdqr_t,\mdqp_t)\mid\mathcal{H}_{t-1}\right]
= \bE_{(\mdqr,\mdqp)\sim\nu_t^{\eta_t}(\cdot\mid\alt(\empmdp_t))}
  \!\bigl[\ell_t^\alpha(\mdqr,\mdqp)\bigr].
\]
Therefore the process
\[
m_t \ceq \sum_{k=1}^{t}\left[
  \ell_k^\alpha(\mdqr_k,\mdqp_k)
  - \bE_{(\mdqr,\mdqp)\sim\nu_k^{\eta_k}(\cdot\mid\alt(\empmdp_k))}
  \!\bigl[\ell_k^\alpha(\mdqr,\mdqp)\bigr]
\right]
\]
is a martingale with increments bounded in $(-C_{t,\alpha}, C_{t,\alpha})$.
By Azuma--Hoeffding's maximal inequality,
\[
\Pr\!\left(m_{T-1} > \sqrt{2\log(T^2)\sum_{t=1}^T C_{t,\alpha}^2}\right)
\leq \frac{1}{T^2},
\]
which gives the claimed event. 
\end{proof}

The result below proves that the clipping of the transition probabilities on the sampled instance has a negligible impact on the loss of the regret minimizer. 

\begin{lemma} There exists a constant $k_0<\infty$ such that for all $t\geq k_0$ 
\label{lem:make_inf_appear}
\begin{align*}
\sum_{s,a, h} w_h(s,a) \lsb \frac{\bigl( \mdqr_t(s, a, h) - \mu(s,a, h)\bigr)^2}{2}   +  \sum_{s'} p(s'\mid s,a,h) \log \frac{p(s'\mid s,a,h)}{\maxp{\mdqp_t(s'\mid s,a, h)}{ \myexp{-t^{\alpha}}}}\rsb  \geq \\
\inf_{\mdq \ceq (\mdqr , \mdqp) \in \alt(\mdp)}\sum_{s,a, h} w_h(s,a) \lsb \frac{(\mdqr(s, a, h) - \mu(s,a, h))^2}{2}  +  \sum_{s'} p(s'\mid s,a,h) \log \frac{p(s'\mid s,a,h)}{\mdqp_h(s'\mid s,a, h) }\rsb - O(1/t) , 
\end{align*}
holds with probability $1$. 
\end{lemma}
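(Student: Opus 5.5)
The plan is to exhibit, for every $t$ large enough, an explicit model in $\alt(\mdp)$ whose unclipped weighted KL against $\mdp$ exceeds the clipped quantity on the left-hand side by at most $O(1/t)$. Taking the infimum over $\alt(\mdp)$ then gives the claim.

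\textbf{Step 1: the challenger lies in $\alt(\mdp)$.} The challenger $\mdq_t=(\mdqr_t,\mdqp_t)$ is, by construction of Algorithm~\ref{alg:repit}, an MDP in which $\hat\pi_t$ is not optimal. On the event of Appendix~\ref{appx:sufficient_explore}, Lemma~\ref{lem:estimation} gives $\hat\pi_t=\pi^\star$ for $t$ beyond some $k_0$. Hence $\pi^\star\notin\Pi^\star(\mdq_t)$. Since Dirichlet samples have full support almost surely, $\mdp\ll\mdq_t$, so $\mdq_t\in\alt(\mdp)$. This is where the constant $k_0$ comes from.

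\textbf{Step 2: replace clipping by mixing.} Clipping does not produce a probability kernel, so I replace it by a mixture. Set $\lambda_t\ceq S\myexp{-t^\alpha}$ and
$$\mdqp'_t(\cdot\mid s,a,h)\ceq(1-\lambda_t)\,\mdqp_t(\cdot\mid s,a,h)+\lambda_t\,\mathrm{Unif}(\cS).$$
Each entry then satisfies $\mdqp'_t\geq e^{-t^\alpha}$ and $\mdqp'_t\geq(1-\lambda_t)\mdqp_t$, so $\mdqp'_t\geq(1-\lambda_t)\maxp{\mdqp_t}{\myexp{-t^\alpha}}$. Consequently
$$\sum_{s'}p\log\frac{p}{\mdqp'_t}\leq\sum_{s'}p\log\frac{p}{\maxp{\mdqp_t}{\myexp{-t^\alpha}}}+\log\frac1{1-\lambda_t}.$$
Since $\log\frac{1}{1-\lambda_t}=O(S e^{-t^\alpha})=O(1/t)$ and $\sum w_h(s,a)\leq H$, the model $(\mdqr_t,\mdqp'_t)$ achieves the right-hand side objective up to the allowed $O(1/t)$. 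The reward part of the objective is untouched.

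\textbf{Step 3: keep the modified model in $\alt(\mdp)$ (main obstacle).} The mixture moves transitions by at most $2\lambda_t$ in $\ell_1$. Values therefore move by $O(H^2\lambda_t)$ by the value difference lemma (Lemma~\ref{lem:val-diff-lem}). If $\mdq_t$ sits very close to the boundary of $\alt(\mdp)$, this could make $\pi^\star$ optimal again.

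\begin{itemize}
\item First I would restore a margin. Take a deterministic policy $\tilde\pi\neq\pi^\star$ that is optimal in $\mdq_t$. Raise the rewards $\mdqr_t(s,\tilde\pi_h(s),h)$ along $\tilde\pi$ by an amount $d=O(H^2\lambda_t)$, in the spirit of Lemma~\ref{lem:build_easier}, so that $\tilde\pi$ beats $\pi^\star$ by more than the value perturbation. If a raised reward would leave $(0,1)$, lower the competing actions' rewards instead.
\item This changes the quadratic reward term by $O(d)$ per triplet, and the total extra cost is $O(H\cdot H^2\lambda_t)=O(1/t)$.
\item If this direct margin argument fails, the fallback is to use that the objective is continuous on models with transitions bounded below by $e^{-t^\alpha}$. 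Its infimum over $\alt(\mdp)$ then equals its infimum over the closure, and an arbitrarily small further perturbation can be absorbed into the $O(1/t)$.
\end{itemize}

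The resulting model lies in $\alt(\mdp)$ and satisfies the desired inequality, which concludes.
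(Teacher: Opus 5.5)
Your Steps 1--2 are sound, and the overall plan is the paper's. The paper first applies Lemma~\ref{lem:build_easier} (with $\veps=1/t$) to give a competing deterministic $\tilde\pi$ a margin. It then applies Lemma~\ref{lem:incr-trans} to mix with the uniform kernel so that the clipping becomes inactive. Tying the mixing weight to the clip level, $\lambda_t=S\myexp{-t^\alpha}$, is a nice touch: the cost becomes exponentially small. Keeping the margin well above $H^2\lambda_t$ is also exactly the scale separation that Lemma~\ref{lem:incr-trans} requires ($\gamma<\Delta/(3H)$), which the paper's choice $\gamma=\veps=1/t$ does not quite respect.

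The gap is in Step 3, which you correctly identify as the crux but do not close. As written, you create a start-state margin (``$\tilde\pi$ beats $\pi^\star$ by more than the value perturbation''), and that cannot work. Raising the rewards along $\tilde\pi$ by $d$ increases $V_0^{\tilde\pi}-V_0^{\pi^\star}$ only by $d$ times the expected number of stages at which $\pi^\star$ disagrees with $\tilde\pi$ under $\mdqp_t$. That expectation can be arbitrarily small, since the disagreement states may be reached with tiny probability in $\mdq_t$. Meanwhile the mixture perturbs $V_0^{\tilde\pi}$ and $V_0^{\pi^\star}$ by order $H^2\lambda_t$ irrespective of that probability.

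The missing idea is a \emph{per-state} margin. Take $\tilde\pi$ greedy with respect to the optimal $Q$-function of $\mdq_t$, so all its action gaps are nonnegative. Raising $\mdqr_t(s,\tilde\pi_h(s),h)$ by $d$ at \emph{every} $(s,h)$ gives the exact identity $Q^{\tilde\pi,d}_h(s,a)=Q^{\tilde\pi}_h(s,a)+d(H-h)+d\,\indp{a=\tilde\pi_h(s)}$. Hence every action gap of $\tilde\pi$ grows by exactly $d$. The mixture moves each $Q^{\tilde\pi}_h(s,a)$ by $O(H^2\lambda_t)$, so a suitable $d=O(H^2\lambda_t)$ keeps $\tilde\pi$ strictly greedy everywhere. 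The final kernel has full support, and $\pi^\star,\tilde\pi$ must differ somewhere since their values differ in $\mdq_t$. The performance-difference lemma then gives $V_0^{\pi^\star}<V_0^{\tilde\pi}$, i.e.\ membership in $\alt(\mdp)$.

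Both of your fallbacks are flawed.

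\emph{Lowering competitors at some states.} Lowering the competitors' rewards at some $(s,h)$ while raising $\tilde\pi$'s at others destroys the identity above. The increment of $V^{\tilde\pi}_{h+1}$ is then no longer constant across next states. A competitor whose successors are followed by raised rewards can gain up to $d(H-h)$ on $\tilde\pi$, so gaps may shrink. Use a uniform operation instead. One option is to first shrink all rewards $r\mapsto(1-2d)r+d$; this multiplies every action gap by $1-2d>0$, keeps rewards in $(0,1)$ after the raise, and costs $O(d)$ per triplet. The other is to invoke Lemma~\ref{lem:build_easier} directly with $\veps$ a large multiple of $H^2\lambda_t$, which handles boundary rewards by mixing transitions toward $\tilde\pi$'s.

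\emph{Continuity.} Even granting that the infimum over $\alt(\mdp)$ equals the infimum over its closure, the mixed model need not lie in that closure. A perturbation of size $\lambda_t$ can make $\pi^\star$ strictly optimal when $\mdq_t$ sits within $\lambda_t$ of the boundary.

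Finally, $k_0$ should come from the construction, e.g.\ $S\myexp{-t^\alpha}<1$ together with the $O(\cdot)$ requirements. Obtaining $\mdq_t\in\alt(\mdp)$ from Lemma~\ref{lem:estimation} only holds on a high-probability event. Treat it as the standing hypothesis under which the lemma is used in Proposition~\ref{prop:saddle-point}, as the paper implicitly does.
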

\begin{proof}
	To ease notation, we introduce 
	$$ d^\alpha_{w}((\mdqr_t, \mdqp_t), (\mdpr, \mdpp)) \, \ceq \,\sum_{s,a,h} w_h(s,a) \lsb  \frac{(\mdqr_t(s, a, h) - \mu(s,a, h))^2}{2}   +  \sum_{s'} p(s'\mid s,a,h) \log \frac{p(s'\mid s,a,h)}{\maxp{\mdqp_t(s'\mid s,a, h)} {\myexp{-t^\alpha}}} \rsb \:.$$
	Since $\mdq_t \ceq (\mdqr_t, \mdqp_t) \in \alt(\mdp)$ there exists a deterministic policy $\tilde \pi \neq \pi^\star$ such that $\tilde \pi$ is a global optimal policy in $\mdq_t$.

We invoke Lemma~\ref{lem:build_easier}, which provides an MDP $\mdp' \ceq (\tilde p, \tilde \mu)$ for which $\tilde \pi$ is the only global optimal policy and there is a unique optimal action per stage and state. Assuming $\frac{1}{t}\leq \frac13 \frac{1}{1+2^{H-1}}$ i.e., $t \geq 3(1+ 2^{H-1})$ then we have by the construction in Lemma~\ref{lem:build_easier} with $\veps  = 1/t$, \begin{eqnarray*}
		\left \{  \begin{array}{ll}
		\tilde p_h(\cdot \mid s,a) &\ceq  (1-\gamma_{s,a,h}) \mdqp_t(\cdot \mid s,a, h) + \gamma_{s,a,h} \mdqp_t (\cdot \mid s,\pi_h(s), h) \\
		\lvert \tilde \mu(s,a, h) - \mdqr_t(s,a, h)\rvert  & \leq d_{s,a, h} \ceq 2^{H-h+1} / t
		\end{array} \right \}
	\end{eqnarray*}
	where  $\gamma_{s,a,h} \in \lsb 0, \frac{(1 + 2^{H-h}) \veps}{1 -  \lp 2^{H-h+1}  + 3 \rp	\veps} \rsb$. Therefore, we have 
	
\begin{eqnarray*}
	d^\alpha_{w}((\mut, \tilde p), (\mdpr, p)) &\leq& \sum_{s,a,h} w_h(s,a) \lsb  \frac{(\mut(s, a, h) - \mu(s,a, h))^2}{2} +  \sum_{s'} p(s'\mid s,a,h) \log \frac{p(s'\mid s,a,h)}{\maxp{(1-\gamma_h)\mdqp_t(s'\mid s,a, h)} {\myexp{-t^{\alpha}}}} \rsb \:\\
	&\leq&  d^\alpha_{w}((\mdqr_t, \mdqp_t), (\mdpr, \mdpp)) + \sum_{s,a,h} w_h(s,a)  \lsb \frac{4^{H-h+1}}{2t^2 } + \frac{2^{H-h+1}}{t}\log \frac{1}{1- \gamma_h } \rsb \\
	&\leq& d^\alpha_{w}((\mdqr_t, \mdqp_t), (\mdpr, \mdpp)) + \frac{4^H}{2t^2} + \frac{2^H}{t} + \frac{\gamma_1}{1- \gamma_1} \quad \text{(by monotonicity of $h \mapsto \gamma_h$ and $\log (1+x) \leq x$)} \\
	&\leq& d^\alpha_{w}((\mdqr_t, \mdqp_t), (\mdpr, \mdpp))  + O(1/t). 
\end{eqnarray*}

\medskip\noindent

Now we invoke Lemma~\ref{lem:incr-trans} to build an instance $M'_\gamma \ceq (\mut, \\tilde p^\gamma)$ close to $(\mut, \tilde p)$, but for which $\tilde \pi$ is still the unique optimal policy and such that each transition probability is larger than some $\gamma$. We invoke Lemma~\ref{lem:incr-trans} with $\gamma = 1/t$. This instance is obtained by a mixture of $(\mut, \tilde p)$ with $(\mut, u)$ with weights $1-\gamma, \gamma$, where $u$ is the uniform transition kernel. 

We then have thanks to $\log(1+x) \leq x$, 
\begin{eqnarray*}
	d^\alpha_{w}((\mut, \tilde p^\gamma), (\mdpr, \mdpp)) &\leq& \sum_{s,a,h} w_h(s,a) \lsb  \frac{(\mut(s, a, h) - \mu(s,a, h))^2}{2} +  \sum_{s'} p(s'\mid s,a,h) \log \frac{p(s'\mid s,a,h)}{\maxp{(1-\gamma) \tilde  p(s'\mid s,a, h) }{ \myexp{-t^{\alpha}}}} \rsb \: \\
	&\leq& d^\alpha_{w}((\tilde \mdpr, \tilde \mdpp), (\mdpr, \mdpp)) + \frac{\gamma}{1-\gamma}  \\
	&=& d^\alpha_{w}((\tilde \mdpr, \tilde \mdpp ), (\mdpr, \mdpp)) + \frac{1/t}{1-1/t} \\
	&=& d^\alpha_{w}((\mut, \tilde p), (\mdpr, \mdpp)) + O(1/t)\:.
\end{eqnarray*}
Thus, combining the two displays above, we have 
\begin{eqnarray*}
	d^\alpha_{w}((\mdqr_t, \mdqp_t), (\mu, p)) \geq  d^\alpha_{w}((\mut, \tilde  p^\gamma), (\mu, p)) - O(1/t). 
\end{eqnarray*}
However, the transition probabilities in $\tilde p^\gamma$ are larger than $\gamma/S$ since $\tilde p^\gamma$ is the mixture with the uniform transition kernel with weight $\gamma$. Thus, assuming $t$ is such that  $1/(t S)> \myexp{-t^\alpha}$, we have for all $(s,a,h,s')$
$$ \tilde p^\gamma(s' \mid s,a, h) \geq  \frac{1}{t S} \geq \myexp{-t^\alpha}$$
Therefore, 
\begin{eqnarray*}
	d^\alpha_{w}((\tilde \mu, \tilde p^\gamma), (\mu, p)) &=& \sum_{s,a,h} w_h(s,a) \lsb  \frac{(\mdqr_h(s, a) - \mu(s,a, h))^2}{2} +  \sum_{s'} p(s'\mid s,a,h) \log \frac{p(s'\mid s,a,h)}{\tilde p^\gamma(s' \mid s,a, h)} \rsb\:. 
\end{eqnarray*}

Thus, since Lemma~\ref{lem:incr-trans} ensures that $\tilde \pi \neq \pi^\star$ is the unique  optimal policy for each stage and state, we have 
$ (\tilde\mu, \tilde p^\gamma) \in \alt(\mdp)$. Putting the above results together, we have 
\begin{eqnarray*}
d^\alpha_{w}((\mdqr_t, \mdqp_t), (\mdpr, \mdpp)) &\geq&	\sum_{s,a,h} w_h(s,a) \lsb  \frac{(\mdqr(s, a, h) - \mu(s,a, h))^2}{2} +  \sum_{s'} p(s'\mid s,a,h) \log \frac{p(s'\mid s,a,h)}{\tilde p^\gamma(s'\mid s,a, h)} \rsb - O(1/t) \\
&\geq& \inf_{(\mdqr, \mdqp) \in \alt(\mdp)}\sum_{s,a, h} w_h(s,a) \lsb \frac{(\mdqr(s, a, h) - \mu(s,a, h))^2}{2}  +  \sum_{s'} p(s'\mid s,a,h) \log \frac{p(s'\mid s,a,h)}{\mdqp(s'\mid s,a, h) }\rsb - O(1/t). 
\end{eqnarray*}
 \end{proof}

\section{POSTERIOR CONVERGENCE}
 \label{appx:posterior_convergence}

 In this section, we prove the results related to the posterior contraction rate. First, we prove the lower bound on the posterior anti-concentration rate. 
 More precisely, we prove the result below. 

 \postLbdMain* 

Let $\lp \bpol_t\rp_{t\geq 1}$ be the sequence of exploration policies used by an adaptive algorithm, i.e.,  at episode $t$, the learner selects a policy $\bpol_t$ depending on past observations summarized in $\cH_{t-1}$.  

We that recall  
\begin{equation}
\label{eq:appx-def-glr}
	\Upsilon_t(\mdqr, \mdqp)  \ceq   \sum_{s,a,h} n_t(s,a,h) \lsb \frac{\bigl(\muh_t(s,a, h) - \mdqr(s,a, h)\bigr )^2}{2} + \sum_{s'} \hat p_t(s'\mid s,a,h) \log \frac{\hat p_t(s'\mid s,a,h)}{\mdqp(s'\mid s,a,h)}\rsb \:,
\end{equation}
 is the log-likelihood ratio between the empirical MDP $\empmdp_t \ceq (\hat\mu_t, \hat p_t)$ and the MDP  $\mdq \ceq (\mdqr, \mdqp) \in \Mdp$.
 
 \subsection{Asymptotic Limit of Likelihood Ratio}
 
 Before proving the result above, we show the following intermediate proposition. 
 \begin{proposition}
\label{prop:upbd-glr}
For any adaptive algorithm for best policy identification, with probability 
one,
\[
\limsup_{t\to\infty}\;\frac{1}{t} \lsb 
  \inf_{(\mdqr,\mdqp)\in\alt(\mdp)}\Upsilon_t(\mdqr,\mdqp)\rsb 
  \leq \Gamma_\mdp.
\]
\end{proposition}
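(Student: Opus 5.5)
Take the normalized empirical allocation $\bar w_t(s,a,h) \ceq n_t(s,a,h)/t$. Its marginals are consistent with the true dynamics up to a martingale error that vanishes as $t\to\infty$. The idea is to plug a fixed alternative into the infimum, compare the empirical log-likelihood ratio with its population counterpart, and then optimize over the alternative. Fix any $\mdq=(\mdqr,\mdqp)\in\alt(\mdp)$ with all transitions bounded away from zero. Since $\mdq$ is admissible in the infimum,
\[
\tfrac1t\inf_{\mdq'\in\alt(\mdp)}\Upsilon_t(\mdq')\le \tfrac1t\Upsilon_t(\mdq)=\sum_{s,a,h}\bar w_t(s,a,h)\,\KL(\empmdp_t\Vert \mdq)_{s,a,h}.
\]

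\textbf{Step 1 (consistency on the visited support).} By the strong law for martingales, or the self-normalized bounds giving events $\cE^p,\cE^\mu$ with probability $1-1/T^2$ summed over $T$ via Borel--Cantelli, we get almost surely
\[
\sum_{s,a,h}\bar w_t(s,a,h)\,\bigl|\KL(\empmdp_t\Vert\mdq)_{s,a,h}-\KL(\mdp\Vert\mdq)_{s,a,h}\bigr|\to 0 .
\]
Split the triplets into two groups. For pairs with $n_t\to\infty$, $\hat p_t\to p$ and $\muh_t\to\mu$, and $\KL(\cdot\Vert\mdq)$ is continuous because $\mdqp$ is bounded below. Pairs with bounded $n_t$ carry weight $O(1/t)$ and a bounded KL. For pairs with unbounded but slowly growing counts, the error is weighted by $\bar w_t$ and times the deviation $O(\sqrt{\log n/n})$, so it vanishes uniformly.

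\textbf{Step 2 (the allocation is asymptotically feasible).} Write $\bar w_t=\tfrac1t\sum_{k\le t} w^{\rho_k}+\xi_t$. Here $\xi_t$ is a bounded-increment martingale divided by $t$, so $\xi_t\to0$ almost surely by Azuma and Borel--Cantelli. The average $\tfrac1t\sum_k w^{\rho_k}$ lies in $\Omega_\mdp$, since $\Omega_\mdp$ is convex (the Carath\'eodory/mixture argument already used in the proof of Theorem~\ref{thm:lbd-sc}). Hence $\limsup_t \tfrac1t\inf\Upsilon_t\le \limsup_t \sup_{w\in\Omega_\mdp}\sum w\,\KL(\mdp\Vert\mdq)$ for every fixed nice $\mdq$.

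\textbf{Step 3 (exchanging sup and inf).} This is the main obstacle. Steps 1--2 only give $\limsup\le \inf_{\mdq}\sup_w$, whereas $\Gamma_\mdp=\sup_w\inf_\mdq$. To close the gap I would avoid the minimax swap. Along the realized path, take a subsequence $t_j$ with $\bar w_{t_j}\to w_\infty\in\Omega_\mdp$, which exists by compactness and Step 2. Then choose $\mdq$ nearly minimizing $\sum w_\infty\KL(\mdp\Vert\cdot)$ over $\alt(\mdp)$. Using Lemma~\ref{lem:incr-trans} and the perturbation of Lemma~\ref{lem:build_easier}, I would replace it by an alternative with transitions at least $\gamma/S$, losing only $O(\gamma)$. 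Step 1 applied along $t_j$ then gives a limit of at most $\inf_{\mdq}\sum w_\infty\KL+O(\gamma)\le\Gamma_\mdp+O(\gamma)$.

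Two uniformity issues remain. First, Step 1 must hold simultaneously for the random $\mdq$ chosen after seeing $w_\infty$. The convergence in Step 1 is uniform over the class $\{\mdq:\mdqp\ge\gamma/S\}$, because $\log(1/\mdqp)\le\log(S/\gamma)$, so this is fine. Second, the $\limsup$ must be realized by such subsequences. Taking $t_j$ as a subsequence achieving the $\limsup$ and then extracting a further convergent subsequence handles this. Letting $\gamma\to0$ concludes.
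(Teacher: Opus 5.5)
Your proof is correct, but it handles the key limit differently from the paper. You run into the $\sup$/$\inf$ ordering because you fix $\mdq$ before letting $t\to\infty$. The paper avoids this at each fixed $T$ by making the empirical-to-true replacement \emph{uniform} over the growing truncated class $\alt(\mdp,T^{1/4})$ of alternatives with transitions at least $e^{-T^{1/4}}$; there the log-ratios are at most $T^{1/4}$, so the concentration error is still $o(T)$. It then replaces $n_T$ by $\sum_{t\le T}w^{\bpol_t}$ via Azuma, and bounds the infimum, which is taken against the current allocation, by $u_T\ceq\sup_{w\in\Omega_\mdp}\inf_{\alt(\mdp,T^{1/4})}\sum w\,\KL(\mdp\Vert\cdot)$.

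The difficulty reappears in the paper's final claim $u_T\downarrow\Gamma_\mdp$, which is asserted without proof. That claim needs exactly the kind of compactness argument you give: upper semicontinuity in $w$ of an infimum of linear functions, compactness of $\Omega_\mdp$, and approximation of alternatives by full-support ones. Your pathwise argument instead extracts a convergent subsequence of realized allocations and then commits to one alternative with a \emph{fixed} floor on its transitions. You therefore only need consistency of $\empmdp_t$, with no rates and no growing truncation, and the compactness step is carried out directly on the allocations. This is cleaner and in effect supplies the step the paper leaves implicit. In exchange, the paper's route yields an explicit $o(T)$ remainder on a high-probability event at every $T$, in the same form as the finite-$T$ bounds used elsewhere in the paper; yours is purely asymptotic.

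One simplification: you do not need Lemmas~\ref{lem:build_easier} and~\ref{lem:incr-trans} to obtain a full-support alternative. Since $\pi^\star$ is the unique optimal policy of $\mdp$, the set $\alt(\mdp)$ is relatively open in $\Mdp$. It is cut out by the strict inequality ``value of $\pi^\star$ in $\mdq$ $<$ optimal value in $\mdq$'', together with the finitely many conditions $\mdqp(s'\mid s,a,h)>0$ for $s'$ in the support of $p(\cdot\mid s,a,h)$. Hence mixing the transitions of a near-minimizer with the uniform kernel at a small weight $\gamma$ keeps it in $\alt(\mdp)$. This costs at most $H\log\frac{1}{1-\gamma}$ in $\sum_{s,a,h} w_\infty(s,a,h)\,\KL(\mdp\Vert\cdot)_{s,a,h}$.
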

 \begin{proof}

 First, let us justify that the empirical frequency vector $(\tfrac{n_t(s, a, h)}{t-1})_{s, a,h}$ is an almost valid state--action allocation for $\mdp$. 
 
 For this, let us define the $t$-indexed stochastic process 
 \begin{eqnarray*}
 	m_t({s,a, h}) &\ceq&  \sum_{k=1}^t \lsb  \indp{ s_h^k =s, a_h^k=a } - w_h^{\bpol_k}(s,a ) \rsb, \\
 	&=& n_{t+1}(s,a, h ) - \sum_{k\leq t} w_h^{\bpol_k}(s,a )\:, 
 \end{eqnarray*} where $w^{\bpol_t}$ is the state--action visitation measure induced by $\bpol_t$ on $\mdp$. Note that $(m_t({s,a,h}))_t$ is a martingale and its increments are bounded in $(-1,1)$. Thanks to Azuma-Hoeffding, the event 
 
\begin{equation}
\label{eq:good-evt-op-xc}
\cE_{\sect}^{s,a,h}(T) \ceq \lb  \forall \; t \; \leq  T, \Bigl | n_{t+1}(s,a,h) - \sum_{k\leq t} w_h^{\bpol_k}(s,a ) \Bigr | \leq \sqrt{2T \log(2T^2 SAH)}\rb 	
\end{equation}
holds with probability at least $1-1/(SAH T^2)$ so that 
\begin{equation}
\cE_{\sect}(T) \ceq \bigcap_{s,a,h} \cE_{\sect}^{ s,a,h}(T),  
\end{equation}
holds with probability at least $1-1/T^2$. Therefore, thanks to Borell-Cantelli's lemma, there exits $\wt {T}$ such that with probability 1, for $T\geq \wt {T}, \cE_{\sect}( T)$ holds which implies that for any $s,a,h$ and $T\geq \wt T$, 
\begin{equation}
	\Bigl | \tfrac{n_{T+1}(s,a,h)}{T} -  \frac1T \sum_{t\leq T} w_h^{\bpol_t}(s,a)  \Bigr | \leq \sqrt{\frac{2 \log(2T^2 SAH)}{T}}\:, 
\end{equation}
and further remark that, as the space of valid-action allocation is convex (see e.g.,\citet{wagenmaker22b}), $\frac1T \sum_{t\in \lbrack T \rbrack} w_h^{\bpol_t}$ is a valid state-action allocation for $\mdp$.  

\bigskip 

Since transition probabilities may be arbitrarily small, the KL divergence in $\Upsilon_t$ is potentially unbounded. To handle this, we restrict to alternatives with bounded transitions. Concretely, observe that the model obtained by setting all transitions to uniform and all rewards to zero on states visited by the optimal policy of $\mdp$ lies in $\alt(\mdp)$, as  $\mdp$ is trivially absolutely continuous with respect to it. Therefore, for any $\varepsilon > \log S$, the restricted alternative set
\[
\alt(\mdp,\varepsilon) \ceq \left\{
  (\mdqr,\mdqp)\in\alt(\mdp) : \forall\,(s,a,h,s'),\;
  \mdqp(s'\mid s,a,h)\geq \myexp{-\varepsilon}
\right\}
\]
is non-empty. 
\medskip\noindent
Since $\alt(\mdp,\varepsilon)\subset\alt(\mdp)$,
\begin{align}
\label{eq:glr-restricted}
\begin{split}
 &\inf_{(\mdqr,\mdqp)\in\alt(\mdp)}\Upsilon_t(\mdqr,\mdqp)
  \leq \\ & \inf_{(\mdqr,\mdqp)\in\alt(\mdp,T^{1/4})}
    \sum_{s,a,h} n_T(s,a,h)\left[
      \frac{\bigl(\muh_T(s,a,h)-\mdqr(s,a,h)\bigr)^2}{2}
      +\sum_{s'}\hat{p}_T(s'\mid s,a,h)
       \log\frac{\hat{p}_T(s'\mid s,a,h)}{\mdqp(s'\mid s,a,h)}
    \right].
    \end{split}
\end{align}

\medskip 
\noindent
By definition of $\alt(\mdp,T^{1/4})$, every $(\mdqr,\mdqp)$ in this set satisfies
\begin{equation}
\label{eq:bounded-log}
\forall\,(s,a,h,s'):\quad
\log\frac{1}{\mdqp(s'\mid s,a,h)} \leq T^{1/4},
\end{equation}
which allows concentration arguments to be applied in the next step.

\medskip\noindent

We define the event $\cE_{\sect}^{\mu+p}(T) \ceq \cE_{\sect}^\mu(T)\cap\cE_{\sect}^p(T)$, where
\begin{align}
\cE_{\sect}^p(T) &\ceq \left\{\forall\,t\leq T:\;
  \sum_{s,a,h} n_t(s,a,h)
  \KL\!\left(\hat{p}_t(\cdot\mid s,a,h)\,\|\,p(\cdot\mid s,a,h)\right)
  \leq\beta^p(T,1/T^2)\right\}, \\
\cE_{\sect}^\mu(T) &\ceq \left\{\forall\,t\leq T:\;
  \sum_{s,a,h} n_t(s,a,h)
  \KL\!\left(\hat{R}_h^t(s,a)\,\|\,R_h(s,a)\right)
  \leq\beta^\mu(T,1/T^2)\right\}. 
\end{align}

\medskip On $\mathcal{E}^\mu(T)$ we have,
\begin{align}
\label{eq:reward-dev}
	\begin{split}
\frac{\bigl(\muh_T(s,a,h)-\mdqr(s,a,h)\bigr)^2}{2} &\leq \frac{(\mu(s,a, h) - \mdqr(s,a, h))^2}{2} + \frac{(\mu(s,a,h) - \muh_T(s,a, h))^2}{2} + \llvert \mu(s,a, h) - \muh_T(s,a, h) \rrvert
\\
  &\leq \frac{\bigl(\mu(s,a,h)-\mdqr(s,a,h)\bigr)^2}{2}
  + \frac{\beta^\mu(T,1/T^2)}{n_T(s,a,h)}
  + \sqrt{\frac{2\beta^\mu(T,1/T^2)}{n_T(s,a,h)}}.
	\end{split}
\end{align}

\medskip\noindent

Next, for any $q\in\triangle$, we let $\cS_T^+\ceq\{s':\hat{p}_T(s'\mid s,a,h)>0\}$.
For $s'\notin\cS_T^+$, the convention $0\log 0=0$ means the left-hand side contributes $0$. By convexity of $x\mapsto x\log{x}$,

\begin{align*}
&\hat{p}_T(s' \mid s,a,h)\log\frac{\hat{p}_T(s'\mid s,a,h)}{q(s')}
  - p(s'\mid s,a,h)\log\frac{p(s'\mid s,a,h)}{q(s')}
  \leq \\ & \left(1+\log\frac{\hat{p}_T(s'\mid s,a,h)}{q(s')}\right)
    \bigl(\hat{p}_T(s'\mid s,a,h)-p(s' \mid s,a,h)\bigr),
    \quad s'\in\cS_T^+.	
\end{align*}

Summing over $s'\in\cS_T^+$ yields  
\begin{align}
\label{eq:conv-trans}
\begin{split}
& \sum_{s'}\hat{p}_T(s' \mid s,a,h)\log\frac{\hat{p}_T(s'\mid s,a,h)}{q(s')}
  \leq \\ & \sum_{s'} p(s' \mid s,a,h)\log\frac{p(s'\mid s,a,h)}{q(s')}
  + \sum_{s'\in\cS_T^+}
    \left(1+\log\frac{\hat{p}_T(s'\mid s,a,h)}{q(s')}\right)
    \bigl(\hat{p}_T(s'\mid s,a,h)-p(s'\mid s,a,h)\bigr).
    \end{split}
\end{align}
We note that for $s'\in\cS_T^+$, either the transition $s'$ has been observed at least once from $(s,a,h)$, or $(s,a,h)$ has never been visited and $\hat p_T(s' \mid s,a,h) = 1/S$, by initialization. In all cases  $\hat{p}_T(s'\mid s,a,h)\geq \min (1/T, 1/S)$.
Therefore, 
$$
\left|1+\log\frac{\hat{p}_T(s'\mid s,a,h)}{q(s')}\right| \leq  1+\log(ST).
$$ 
Substituting into \eqref{eq:conv-trans} and using the triangle inequality,
\begin{equation}
\label{eq:trans-dev-final}
\sum_{s'}\hat{p}_T(s'\mid s,a,h)\log\frac{\hat{p}_T(s'\mid s,a,h)}{q(s')}
  \leq \sum_{s'} p(s'\mid s,a,h)\log\frac{p(s'\mid s,a,h)}{q(s')}
  + \bigl(1+\log(ST)\bigr)
    \left\|\hat{p}_T(\cdot\mid s,a,h)-p(\cdot\mid s,a,h)\right\|_1.
\end{equation}

Applying Pinsker's inequality on $\cE_{\sect}^p(T)$,
\begin{equation}
\label{eq:trans-dev}
\sum_{s'}\hat{p}_T(s' \mid s,a,h)\log\frac{\hat{p}_T(s' \mid s,a,h)}{q(s')}
  \leq \sum_{s'} p(s' \mid s,a,h)\log\frac{p(s' \mid s,a,h)}{q(s')}
  + \bigl(1+\log(ST)\bigr)
    \sqrt{\frac{2\beta^p(T,1/T^2)}{n_T(s,a,h)}}.
\end{equation}

\medskip\noindent
Substituting \eqref{eq:reward-dev} and 
\eqref{eq:trans-dev} into \eqref{eq:glr-restricted}, applying 
Cauchy--Schwarz to the error terms, and using 
$\sum_{s,a,h}n_T(s,a,h)=  TH$,
\begin{align}
\label{eq:glr-conc}
\begin{split}
&\inf_{(\mdqr,\mdqp)\in\alt(\mdp)}\Upsilon_t(\mdqr,\mdqp) \leq \\ 
  &\inf_{(\mdqr,\mdqp)\in\alt(\mdp,T^{1/4})}
    \sum_{s,a,h} n_T(s,a,h)\left[
      \frac{\bigl(\mu(s,a,h)-\mdqr(s,a,h)\bigr)^2}{2}
      +\sum_{s'} p(s'\mid s,a,h)\log\frac{p(s'\mid s,a,h)}{\mdqp(s'\mid s,a,h)}
    \right]\;  +  \\
  &\quad  SAH \beta^\mu(T, 1/T^2) + \sqrt{2SAH^2T\,\beta^\mu(T,1/T^2)} 
    \; + \; \bigl(1+\log(ST)\bigr) \sqrt{2SAH^2T\,\beta^p(T,1/T^2)}.
  \end{split}
\end{align}
Since $\beta^\mu$ and $\beta^p$ are at most logarithmic in $T$, the three error terms in \eqref{eq:glr-conc} are all $o(T)$.

\medskip\noindent

\bigskip 
We then invoke \myeqref{eq:good-evt-op-xc}, which proves that on the event $\cE_{\sect}(T) $ 
we have 
\begin{align*}
	\inf_{(\mdqr, \mdqp) \in \alt(\mdp, T^{1/4})} &\sum_{s,a,h} n_T(s,a, h) \lsb \frac{(\mu(s,a, h) - \mdqr(s,a,h))^2}{2} + \sum_{s'} p(s'\mid s,a,h) \log \frac{ p(s'\mid s,a,h)}{\mdqp(s'\mid s,a,h)}\rsb \leq  \\
	& \inf_{(\mdqr, \mdqp) \in \alt(\mdp, T^{1/4})}\sum_{s,a,h} \sum_{t\in \lbrack T \rbrack} w^{\bpol_t}_h(s,a) \lsb \frac{(\mu(s,a, h) - \mdqr(s,a, h))^2}{2} + \sum_{s'} p(s'\mid s,a,h) \log \frac{p(s'\mid s,a,h)}{\mdqp(s'\mid s,a,h)}\rsb \\&+ SAH \sqrt{2T \log(2T^2 SAH)} \bigl ( \frac12 + T^{1/4}\bigr), 
\end{align*}
\medskip\noindent
Combining with \myeqref{eq:glr-conc} yields  
\begin{align*}
&\inf_{(\mdqr,\mdqp)\in\alt(\mdp)}\Upsilon_t(\mdqr,\mdqp) \leq  \\& 	\inf_{(\mdqr, \mdqp) \in \alt(\mdp, T^{1/4})}\sum_{s,a,h} \sum_{t\in \lbrack T -1 \rbrack} w^{\bpol_t}_h(s,a) \lsb \frac{(\mu_h(s,a) - \mdqr_h(s,a))^2}{2} + \sum_{s'} p(s'\mid s,a,h) \log \frac{ p(s'\mid s,a,h)}{\mdqp(s' \mid s,a,h)}\rsb + \notag\\& \lsb (1+ \log(ST))  \sqrt{ {2 SAH^2T  \beta^p(T, 1/T^2)}} \rsb \notag\\& +\lsb { SAH \beta^\mu(T, 1/T^2)} + \sqrt{{ 2SAH^2T \beta^\mu(T, 1/T^2)}}\rsb + SAH \sqrt{2T \log(2T^2 SAH)} \bigl( \frac12 + T^{1/4}\bigr) \:. 
\end{align*}
\medskip 
\noindent 
Next, noting that $\beta^\mu, \beta^p$'s dependency on $T$ is at most logarithmic (see e.g. \citet{pmlr-v201-tirinzoni23a}), the second order term in the equation above is $o(T)$, which we rewrite as: on the event $\cE_\sect(T)$, 

\begin{align*}
	&\inf_{(\mdqr,\mdqp)\in\alt(\mdp)}\Upsilon_t(\mdqr,\mdqp) \leq  \\ &\inf_{(\mdqr, \mdqp) \in \alt(\mdp, T^{1/4})}\sum_{s,a,h} \sum_{t\in \lbrack T-1\rbrack} w^{\bpol_t}_h(s,a) \lsb \frac{\bigl (\mu(s,a, h) - \mdqr(s,a, h)\bigr)^2}{2} + \sum_{s'} p(s'\mid s,a,h) \log \frac{ p(s'\mid s,a,h)}{\mdqp(s'\mid s,a,h)}\rsb   \; + \; o(T)\\
	&=  (T-1) \cdot \inf_{(\mdqr, \mdqp) \in \alt(\mdp, T^{1/4})}\sum_{s,a,h} \frac1{T-1} \sum_{t\in \lbrack T-1 \rbrack} w^{\bpol_t}_h(s,a) \lsb \frac{\bigl(\mu(s,a, h) - \mdqr(s,a, h)\bigr)^2}{2} + \right. \\& \left. \sum_{s'} p(s'\mid s,a,h) \log \frac{ p(s'\mid s,a,h)}{\mdqp_h(s,a,s')}\rsb + o(T) \\
	&\leq (T-1) \; \cdot \sup_{w \in \Omega_\mdp} \inf_{(\mdqr, \mdqp) \in \alt(\mdp, T^{1/4})}\sum_{s,a,h}w_h(s,a) \lsb \frac{(\mu(s,a, h) - \mdqr(s,a, h))^2}{2} + \sum_{s'} p(s'\mid s,a,h) \log \frac{ p(s'\mid s,a,h)}{\mdqp(s'\mid s,a,h)}\rsb  + o(T)
\end{align*}
which follows as, from the convexity of $\Omega_\mdp$ we have $\frac1{T-1} \sum_{t=1}^{T-1} w_{\cdot}^{\bpol_t}\in \Omega_\mdp$. 
\medskip
\noindent
Finally, remark that the sequence $(u_T)_{T\geq 1}$ defined by 
$$ u_T \,\ceq\, \sup_{w \in \Omega_\mdp} \inf_{(\mdqr, \mdqp) \in \alt(\mdp, T^{1/4})}\sum_{s,a,h}w_h(s,a) \lsb \frac{\bigl(\mu(s,a, h) - \mdqr(s,a, h)\bigr)^2}{2} + \sum_{s'} p(s' \mid s,a,h) \log \frac{ p(s'\mid s,a,h)}{\mdqp(s'\mid s,a,h)}\rsb $$
is non-increasing and $\inf\lb u_T : T\geq 1\rb = \Gamma_\mdp$. Thus, noting  $\bP(\cE_{\sect}(T) \cap \cE^{\mu +p}_{\sect}(T)) \geq 1- O(1/T^2)$, thanks to Borel-Cantelli's lemma, we have with probability one,  
\begin{equation*}
	\limsup_{T\to \infty} \frac1T \lsb \inf_{(\mdqr,\mdqp)\in\alt(\mdp)}\Upsilon_T(\mdqr,\mdqp)  \rsb  \leq \Gamma_\mdp. 
\end{equation*}
\end{proof} 

\subsection{Proof of Theorem~\ref{thm:opt_post_contra}}

\begin{proof}
The result follows from Proposition~\ref{prop:upbd-glr}, 
Lemma~\ref{lem:utl_lower_bound}, and an anti-concentration bound on 
the posterior. We proceed in two steps: first expressing the posterior 
probability of $\alt(\mdp)$ in terms of the GLR, then lower-bounding it.

\medskip\noindent
On can  verify that the posterior probability of $\alt(\mdp)$ is proportional to
\begin{align}
\label{eq:post-alt}
\Pr_{\mdq\sim\nu_t\mid\cH_{t-1}}
  \!\left(\mdq \in\alt(\mdp)\right)
  \propto \int_{\alt(\mdp)}\myexp{ 
    - \Upsilon_t(\mdqr, \mdqp) } \diff \mdqr \,\diff \mdqp.
\end{align}

Defining
\begin{equation}
\label{eq:def-Lambda}
\Lambda_t \ceq \int_{\alt(\mdp)}\myexp{ 
    - \Upsilon_t(\mdqr, \mdqp) } \diff \mdqr \,\diff \mdqp ,
\end{equation}
we obtain
\begin{equation}
\label{eq:post-ratio}
\Pr_{\mdq \sim\nu_{t}\mid\cH_{t-1}}
  \!\left(\mdq \in\alt(\mdp)\right)
  = \frac{\Lambda_t}
    {\displaystyle\int_{\Mdp}
      \myexp{-\Upsilon_t(\mdqr,\mdqp)}\diff \mdqr \,\diff \mdqp}.
\end{equation}

\medskip\noindent

Since $\Upsilon_t$ is non-negative (it is a sum of KL divergences and 
squared differences) and $\Mdp $ is bounded,
\[
\int_{\Mdp}\myexp{-\Upsilon_t(\mdqr,\mdqp)}\diff \mdqr \,\diff \mdqp
  \leq \vol(\Mdp).
\]
Substituting into \eqref{eq:post-ratio},
\begin{equation}
\label{eq:post-lb}
\Pr_{\mdq \sim\nu_{t}\mid\cH_{t-1}}
  \!\left(\mdq \in\alt(\mdp)\right)
  \geq \frac{\Lambda_t}{\vol(\Mdp)}.
\end{equation}

\medskip
\noindent

By Lemma~\ref{lem:utl_lower_bound} (with $\eta=1$), with probability one,
\begin{align}
\log\Lambda_t
  &\geq -\inf_{(\mdqr,\mdqp)\in\alt(\mdp)}
    \sum_{k=1}^{t-1}\sum_{h=1}^H\left[
      \frac{\bigl(\muh_t(s_h^k,a_h^k,h)-\mdqr(s_h^k,a_h^k,h)\bigr)^2}{2}
      +\log\frac{\hat{p}_t(s_{h+1}^k\mid s_h^k,a_h^k,h)}
               {\mdqp(s_{h+1}^k \mid s_h^k,a_h^k,h)}
    \right] - o(t) \nonumber\\
  &= -{
      \inf_{(\mdqr,\mdqp)\in\alt(\mdp)}\sum_{s,a,h} n_t(s,a,h)
      \left[
        \frac{\bigl(\muh_t(s,a,h)-\mdqr(s,a,h)\bigr)^2}{2}
        +\sum_{s'}\hat{p}_t(s'\mid s,a,h)
         \log\frac{\hat{p}_t(s'\mid s,a,h)}{\mdqp(s'\mid s,a,h)}
      \right]
    }
    - o(t).
  \label{eq:Lambda-lb}
\end{align}

\medskip\noindent
Combining \eqref{eq:post-lb} and \eqref{eq:Lambda-lb},
\[
-\log\Pr_{\mdq\sim\nu_{t}\mid\cH_t}
  \!\left(\mdq \in\alt(\mdp)\right)
  \leq \inf_{(\mdqr,\mdqp)\in\alt(\mdp)}\Upsilon_t(\mdqr,\mdqp)  + \log\vol(\Mdp) - o(t).
\]
Since $\log\vol(\Mdp)= O(1)$, dividing by $t$ and taking 
the $\limsup$, Proposition~\ref{prop:upbd-glr} gives, with probability one,
\[
\limsup_{t\to\infty}
  -\frac{1}{t}\log\Pr_{\mdq \sim\nu_t\mid\cH_{t-1}}
  \!\left(\mdq \in\alt(\mdp)\right)
  \leq \limsup_{t\to\infty}\frac{1}{t}\, \lsb \inf_{(\mdqr,\mdqp)\in\alt(\mdp)}\Upsilon_t(\mdqr,\mdqp) \rsb 
  \leq \Gamma_\mdp,
\]
which completes the proof of Theorem~\ref{thm:opt_post_contra}.
\end{proof}

We now prove the main result of this section, showing that\algoname{} reaches the optimal posterior contraction rate with probability one. 
\postMain*
\begin{proof}
	Thanks to Proposition~\ref{prop:saddle-point},  with probability at least $1-O(1/t^2)$, for $t$ large enough  
	\begin{eqnarray*} 
	\glr(t) \geq t \cdot \Gamma_\mdp - O(\log t)- O(t^{1+\alpha - \gamma}) - o(t) - O(t^{ \alpha+\frac{\gamma +1}2} \sqrt{\log t}). 	
	\end{eqnarray*}
	
	\medskip\noindent

	Thanks to Lemma~\ref{lem:concentr} we have for
	$$ \bP_{\wt \mdp \sim \nu_{t} \mid \cH_{t-1}}(\wt \mdp \in \alt(\mdp)) \leq h(t) \myexp{- \glr(t))}, $$
	where $h$ is polynomial in $t$, where due to Lemma~\ref{lem:forced_exp}, we will have $\alt(\empmdp_t) = \alt(\mdp)$ (up to a zero measure set), for $t$ large enough. 
	 
	Therefore, with probability $1-O(1/t^2)$,
	$$ \log \bP_{\wt \mdp \sim \nu_{t} \mid \cH_{t-1}}(\wt \mdp \in \alt(\mdp)) \leq -t \Gamma_\mdp  + \log h(t) +  O(\log t)+ O(t^{1+\alpha - \gamma}) + o(t) + O(t^{ \alpha+\frac{\gamma +1}2} \sqrt{\log t}), $$
then, thanks to Borel-Cantelli's lemma, with probability one, 
	$$\liminf_{t\to \infty}-\frac 1t \log \bP_{\wt \mdp \sim \nu_{t} \mid \cH_{t-1}}(\wt \mdp \in \alt(\mdp)) \geq \Gamma_\mdp, $$
	which combined with Theorem~\ref{thm:opt_post_contra} completes the proof. 
	\end{proof}

\section{SAMPLE COMPLEXITY GUARANTEES}
\label{appx:sample_complexity}

In this section, we establish sample complexity guarantees regardless of the algorithm's correctness for the specified threshold, showing that when $B(t,\delta)$ is correctly calibrated, the posterior sampling rule combined with the sampling rule of \algoname{} attains the optimal expected sample complexity.  

\scMain*
\begin{proof} Let $(\cE(t))_{t\geq 1}$ be some events that will be explicit below. 

	We have  
	\begin{eqnarray*}
		\bE_\mdp[\tau_\delta] &\leq& 1 + \sum_{t=1}^\infty \bP_\mdp(\tau_\delta>t)\:, \\
		&\leq& 1 + \sum_{t=1}^\infty \bP_\mdp(\exists \;k\,\leq\, B(t, \delta) : \hat \pi_t  \notin \Pi^\star(\wt\mdp_{t, k}) )\:,\\
		&\leq& 1 + \sum_{t=1}^\infty \bE_\mdp \lsb \indp{ \cE(t) } \bP_\mdp(\exists \; k\, \leq \, B(t, \delta) : \hat \pi_t  \notin \Pi^\star(\wt\mdp_{t, k}) | \cH_{t-1}) \rsb + \sum_{t=1}^\infty \bP_\mdp(\cE(t)^c)\:,\\ 
		&\leq& 1 +  \sum_{t=1}^\infty \bP_\mdp(\cE(t)^c) +\sum_{t=1}^\infty \bE_\mdp \lsb B(t, \delta)\indp{\cE(t)} \bP_{\wt \mdp \sim \nu^{\eta_t}_t|\cH_{t-1}}\!\lp \hat \pi_t  \notin \Pi^\star(\wt\mdp)\rp \rsb,
		\end{eqnarray*}
		which follows since conditionally on $\cH_{t-1}$, $\wt\mdp_{t, 1},\wt\mdp_{t, 2}, \dots $ are i.i.d. samples from distribution with conditional density $\nu^{\eta_t}_t(\cdot)$.  
		\medskip \noindent
		
		Thus taking $k_0$ such that $\hat \pi_t = \pi^\star$ for any $t\geq k_0$, we have 
		\begin{eqnarray*}
		\bE_\mdp[\tau_\delta]  &\leq& 1 + k_0 + \sum_{t=1}^\infty \bP_\mdp(\cE(t)^c) +\sum_{t=1}^\infty \bE \lsb B(t, \delta)\bP_{\wt \mdp \sim \nu^{\eta_t}_t |\cH_{t-1}}\lp \wt\mdp \in \alt(\mdp)  \rp \rsb	\:. \end{eqnarray*}
		\medskip
		\noindent 
		Next, thanks to Lemma~\ref{lem:concentr} we have 
		$$ \bP_{\wt \mdp \sim \nu^{\eta_t}_t | \cH_{t-1}}(\wt\mdp \in \alt(\mdp)  ) \leq f(t) \myexp{ - \eta_{t} \glr(t) }, $$ where $f$ is at most polynomial in $t$. 
		
	By Proposition~\ref{prop:saddle-point}, there exists an event $\cE(t)$, which holds with probability $1-O(1/t^2)$ such that we have 
		$$ \glr(t) \geq \Gamma_\mdp \cdot  t - h(t)$$ where $h$ is a deterministic sublinear function of $t$ satisfying $h(t) = o(t)$.  
		
		\medskip
		\noindent
		Combining the above results, we have 
		$$ \indp {\cE(t)}\bP_{\wt \mdp \sim \nu^{\eta_t}_t | \cH_{t-1}}(\wt\mdp \in \alt(\mdp)  ) \leq f(t) \exp \lp - \Gamma_\mdp t \eta_t +  \eta_t h(t) \rp.$$
	Thus we define 
	$$ T(\delta) \ceq \sup\lb t\geq 1 : { -\log(B(t, \delta)) - \log f(t) + \eta_t t \cdot  \Gamma_\mdp } - \eta_t h(t) \leq \log(t\log(t)^{1+\sigma})\rb\:,$$ so we have 
	\begin{eqnarray}
	\label{eq:zz-oo}
		\bE_\mdp[\tau_\delta] 
		&\leq& T_0 + T(\delta) +   \sum_{t>2} \frac{1}{t\log(t)^{1+\sigma}},\end{eqnarray} 
		where $T_0 <\infty$ is a constant. 
		
	We note  that $$ \sum_{t>2} \frac{1}{t\log(t)^{1+\sigma}} < \infty$$
	for $\sigma>0$.
	\medskip
	\noindent
	
Therefore, it remains to control the quantity $T(\delta)$ and for this remark that 
\begin{eqnarray*}
	T(\delta) &=& \sup \lb t \geq 1 \middle | t\Gamma_\mdp \eta_t    - \log(B(t, \delta)) - \log f(t)  - \eta_t h(t) \leq \log(t\log(t)^{1+\sigma} ) \rb, \\
	&=& \sup \lb t\geq 1 \middle |  t \leq    \Gamma_\mdp^{-1} \lp \frac{1}{\eta_t}\log(B(t, \delta)) + \frac{1}{\eta_t}\log f(t)  + h(t) - \frac{1}{\eta_t}\log(t\log(t)^{1+\sigma} ) \rp \rb, 
\end{eqnarray*}
Now we remark in the above the leading term in $\delta$ dependency is the term in $\frac{1}{\eta_t}\log(B(t, \delta))$. 
\end{proof}

\paragraph{Bounding $T(\delta)$}
First note that $f$ is at most polynomial and $h(t)$ is sublinear in $t$, as well as $\frac{1}{\eta_t}$ is sublinear in $t$. Thus, $$ q(t) \;\ceq\; \frac{1}{\eta_t}\log B(t,\delta) \;+\; \frac{1}{\eta_t}\log f(t) \;+\; h(t) \;-\; \frac{1}{\eta_t}\log\!\bigl(t\log (t)^{1+\sigma}\bigr) $$ 
is sublinear in $t$. Hence, there exists $\veps \in (0, 1)$ such that $q(t) = o_{t\rightarrow \infty}(t^{\veps})$. Next, observe that  $q(\log(1/\delta)^{1/\veps}) = o(\log(1/\delta))$. Thus, we have for $t_\delta = \log(1/\delta)^{1/\veps}$ 
\begin{eqnarray*}
\limsup_{\delta \to 0} \frac{\frac{1}{\eta_{t_\delta}} \log(B(t_\delta, \delta)) + q(t_\delta) }{\log \frac 1\delta}  \leq 1. 
\end{eqnarray*} 
Let us introduce 
$$ b(t_\delta) \ceq {\frac{1}{\eta_{t_\delta}} \log(B(t_\delta, \delta)) + q(t_\delta)} \:.$$
Let $\delta_{\min} \in (0, 1)$ be defined as 
\begin{eqnarray*}
\delta_{\min} &\ceq &\inf \left\{ \delta \in (0, 1) \mid b(t_\delta) > \log(1/\delta)^{1/\veps} \Gamma_\mdp \right\}	\\
&=& \inf \left\{ \delta \in (0, 1) \mid \Gamma_\mdp^{-1}b(t_\delta) > t_\delta \right\}\:, 
\end{eqnarray*}
which is well defined as $\limsup_{\delta \to 0}\frac{b(t_\delta)}{\log \frac 1\delta} \leq 1$ and $\veps \in (0, 1)$. Let $T_{\max} = \log(1/\delta_{\min})^{1/\veps}$. For all $t\geq T_{\max}$, there exists $(0, 1) \ni \delta' \leq \delta_{\min}$  such that $\lfloor t_{\delta'} \rfloor = t$ and $\Gamma_\mdp^{-1} b(t_{\delta'}) < t_{\delta'}$. Therefore, for all $\delta\leq \delta_{\min}$
\begin{equation}
\label{eq:zz-mm}
	T(\delta) \leq \log(1/\delta)^{1/\veps}\:. 
\end{equation}
Moreover, by definition $T(\delta) \leq \Gamma_\mdp^{-1} b(T(\delta))$ and $b$ is increasing for small $\delta$, it follows that 
\begin{eqnarray}
\label{eq:zz-pp}
	T(\delta) \leq \Gamma_\mdp^{-1} \cdot b(\log(1/\delta)^{1/\veps}).
\end{eqnarray}
Combining \eqref{eq:zz-oo}, \eqref{eq:zz-mm} and \eqref{eq:zz-pp}, it follows that for all $\delta \leq \delta_{\min}$, 
\begin{eqnarray}
	\bE_\mdp[\tau_\delta] &\leq& \Gamma_\mdp^{-1}\cdot b(\log(1/\delta)^{1/\veps}) + O(1). 
\end{eqnarray}
Finally, since $\limsup_{\delta \to 0} \frac{b(\log(1/\delta)^{1/\veps})}{\log \frac1\delta} \leq 1$, we conclude that 
\begin{equation*}
	\limsup_{\delta \rightarrow 0}\frac{\bE_\mdp[\tau_\delta]}{\log(1/\delta)} \leq \Gamma_\mdp^{-1}. 
\end{equation*}

\paragraph{Almost-sure upper bound}
The almost-sure bound on the sample complexity is derived from an application of Borel-Cantelli's lemma. Indeed, introducing for any $T$ the event 
$$ E(T) \ceq \lb \sum_{t=1}^T \indp{\tau_\delta > t}-  \sum_{t=1}^T \bP(\tau_\delta > t \mid \cH_{t-1}) \leq \sqrt{2T\log T^2}\rb\:,$$
we have when $E(T)$ holds 
\begin{eqnarray*}
	\min(\tau_\delta, T) &\leq& 1 + \sum_{t=1}^T \indp{ \tau_\delta > t }  \\
	&\leq& 1 + \sum_{t=1}^T \bP (\tau_\delta > t \mid \cH_{t-1}) + \sqrt{2T \log T^2} \\
	&\leq&  T(\delta) + \sqrt{2T \log T^2} + T_0 \:,  
\end{eqnarray*}
where the last inequality follows from the derivations above on the event $\cE(T)$ and holds with probability larger than $1-O(1/T^2)$. Next, assume $\cE(T)$ holds. 
Then, if $$T \geq T'(\delta) \ceq \sup\lb t\geq 1: t< T_0 + T(\delta) + \sqrt{2t\log t^2 }\rb\:,$$ and $E(T)$ holds then the stopping rule would trigger before episode $T$. Thanks to Azuma-Hoeffding $E(T)$ holds with probability at least $1-1/T^2$ for any $T\geq 1$.  Thus $E(T)^c$ holds with probability at most $1/T^2$, similarly for $\cE(T)^c$ and, as $\sum_{T\geq 1} \frac 1{T^2} <\infty$, we invoke Borel-Cantelli's lemma to justify that $\bP(\limsup_{T\to\infty} (E(T) \cap \cE(T))^c) = 0$ so with probability $1$, there exists $\wt T$ (possibly random) such that for $T\geq \wt T$, $E(T)\cap \cE(T)$ holds. 

\medskip\noindent
From the calculations above, we remark that the leading term in $T'(\delta)$ is at most of order $\log(1/\delta)$. Let $\delta_{\min}'$ be such that $\forall \;\delta \,\leq \,\delta_{\min}'$, $$\lceil \log(1/\delta)^{3/2} \rceil > T'(\delta), $$ which is well-defined. Let $\delta \leq \delta_{\min}'$ such that additionally $\lceil \log(1/\delta)^{3/2} \rceil > \wt T$. We then have 
\begin{eqnarray*}
	\tau_\delta &\leq& T_0 + T(\delta) + \sqrt{ 2\lceil \log(1/\delta)^{3/2} \rceil \log \lceil \log(1/\delta)^{3/2} \rceil^2}  \\
	&\leq& T_0 + T(\delta) +  \log(1/\delta)^{3/4} \sqrt{8\log (1+ \log(1/\delta)^{3/2})}\:.
\end{eqnarray*}
 Finally recalling that $$\limsup_{\delta \to 0} \frac{T(\delta)}{\log \frac1\delta} \leq \Gamma_\mdp^{-1}\:.$$ 
 We conclude by noting that 
 \begin{eqnarray*}
 	\limsup_{\delta \to 0} \frac{\tau_\delta}{\log \frac1\delta} &\leq& \limsup_{\delta \to 0} \frac{T(\delta)}{\log \frac1\delta} + \lim_{\delta \to 0} \frac{T_0 + \log(1/\delta)^{3/4} \sqrt{8\log (1+ \log(1/\delta)^{3/2})}}{\log \frac1\delta } \\
 	&\leq& \Gamma_\mdp^{-1} \:.
 \end{eqnarray*}

\section{CONCENTRATION RESULTS}
\label{appx:concentration}

\subsection{Self-noramlized Concentration}
We prove some concentration results used in other sections. We define the following concentration events of the self-normalized quantities related to the GLR
\begin{align*}
	\cE^p(\delta) &\ceq \lb \forall t \geq 1,\; \sum_{s,a, h} n_t(s,a,h)\KL(\hat p_t(\cdot \mid s, a,h) \;\|\; p(\cdot\mid s,a, h))\leq  \beta^p(t, \delta)\rb \\
	\cE^{\mu}(\delta) &\ceq \lb \forall t\geq 1,\; \sum_{s,a, h} n_t(s,a,h)\KL(\hat R_h^t(s,a)\;\|\; R_h(s,a))\leq  \beta^\mu(t, \delta)\rb \\
	\cE^{\mu+p}(\delta) &\ceq \lb \forall t\geq 1,\; \sum_{s,a, h} n_t(s,a, h) \KL(\hat R_h^t(s,a) \otimes \hat p_t(\cdot \mid s, a, h)\;\|\; R_h(s,a)\otimes p(\cdot\mid s,a, h)) \leq  \beta^{\mu +p}(t, \delta) \rb \end{align*}

It is possible to choose $\beta^p, \beta^{\mu}, \beta^{\mu+p}$ with logarithmic dependency to ensure that each event hold with probability $1-\delta$ (cf e.g.,  \citet{kaufmann_mixture_2021}). Since our focus is on characterizing the sample complexity in the asymptotic regime, we do not make these terms explicit. The interested reader may refer to \citet{kaufmann_mixture_2021} for explicit expressions of $\beta^p$, $\beta^{\mu}$, and $\beta^{\mu+p}$. 

Moreover, the event 
\begin{align*}
\cE^{\text{cnt}}(T)
&\ceq
\Bigl\{
\forall t \geq 1,\;\forall (s,a,h):\;
n_t(s,a,h)
\geq \tfrac{1}{2}\,\bar n_t(s,a,h) - \log(2SAH/\delta)
\Bigr\}.	
\end{align*}
holds with probability at least $1-\delta$, thanks to Lemma~\ref{lem:dan_cool}.

\begin{lemma}[\cite{NIPS2017_17d8da81}] 
\label{lem:dan_cool}
	Let $X_1, X_2, \dots$ be a sequence of Bernoulli random variables adapted to a filtration $\{ \cH_t\}_{t\geq 1}$ and such that for any $i$, 
	$\bP(X_i=1 \mid \cH_{i-1}) = P_i$. We have 
	$$ \bP \Big( \exists n \geq 1: \sum_{t\leq n} X_t  \leq \frac12 \sum_{t\leq n} P_t  - W \Big) \leq \exp(-W) \:.$$
\end{lemma}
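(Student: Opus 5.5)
The plan is an exponential supermartingale argument combined with Ville's maximal inequality, which gives the bound uniformly over all $n \geq 1$ with no union bound. Define, for $n \geq 0$,
$$
M_n \ceq \exp\Bigl( \sum_{t\leq n} \bigl( \tfrac12 P_t - X_t \bigr) \Bigr), \qquad M_0 = 1 .
$$
The event in the statement, $\exists\, n \geq 1 : \sum_{t\leq n} X_t \leq \frac12 \sum_{t\leq n} P_t - W$, is exactly the event $\exists\, n\geq 1 : M_n \geq e^{W}$. So it suffices to show that $(M_n)_{n\geq 0}$ is a nonnegative supermartingale with respect to $\{\cH_n\}$ and then apply Ville's inequality.

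\textbf{Supermartingale step.} I will use that $P_n$ is $\cH_{n-1}$-measurable, as implied by the definition $P_n = \bP(X_n = 1 \mid \cH_{n-1})$. Then
$$
\bE[M_n \mid \cH_{n-1}] = M_{n-1}\, e^{P_n/2}\, \bE\bigl[e^{-X_n} \mid \cH_{n-1}\bigr].
$$
Since $X_n$ is Bernoulli with conditional mean $P_n$, we get $\bE[e^{-X_n}\mid\cH_{n-1}] = 1 - P_n(1-e^{-1})$. By $1-x \leq e^{-x}$, this is at most $\exp(-(1-e^{-1})P_n)$, and since $1-e^{-1} \approx 0.632 \geq \frac12$, it is at most $e^{-P_n/2}$. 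Hence $\bE[M_n\mid\cH_{n-1}] \leq M_{n-1}$. This numerical comparison $1 - e^{-1} \geq \tfrac12$ is the only place where the constant $\tfrac12$ in the statement matters, and it is the one point to check carefully.

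\textbf{Maximal inequality step.} For a nonnegative supermartingale with $M_0 = 1$, Ville's inequality gives $\bP(\exists\, n : M_n \geq e^{W}) \leq e^{-W}$. To make this self-contained, I would apply optional stopping to the bounded stopping time $\tau \wedge N$, where $\tau$ is the first $n$ with $M_n \geq e^W$. This yields $e^{W}\,\bP(\tau \leq N) \leq \bE[M_{\tau\wedge N}] \leq 1$. Letting $N\to\infty$ and using monotone convergence concludes the proof.

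I do not expect a genuine obstacle here. The only subtleties are the measurability of $P_t$, which justifies pulling $e^{P_n/2}$ out of the conditional expectation, and the elementary inequality $1-e^{-1}\geq \tfrac12$.
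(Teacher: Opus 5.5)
Your proof is correct and follows essentially the standard argument for this result. The paper gives no proof of its own and only cites the source, whose argument uses the same exponential supermartingale $\exp\bigl(\sum_{t\le n}(\tfrac12 P_t - X_t)\bigr)$ with a maximal inequality for nonnegative supermartingales, and may differ from yours at most in how the one-step bound $\bE[e^{-X_n}\mid\cH_{n-1}]\le e^{-P_n/2}$ is obtained.
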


\subsection{Gaussian and Dirichlet Concentration}
\newcommand{\ent}{\mathrm{Ent}}

We leverage Lemma~\ref{lem:concentr-mvn} to prove concentration of distribution over MDPs in $\Mdp$.

\begin{lemma}[\citet{lu_mvn_upb}]
\label{lem:concentr-mvn}
	Let $X \sim \cN(\theta, \Sigma)$ be a multivariate normal vector in dimension $d$. For any convex set $C \C \bR^d$ 
	$$ \bP_{X \sim \cN(\theta, \Sigma)}(X \in C) \leq \frac12 \myexp{ - \inf_{\lambda \in C} \frac12{ \left\| \lambda - \theta\right\|^2_{\Sigma^{-1}}}}.$$
\end{lemma}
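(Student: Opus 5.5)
The plan is to reduce to a standard Gaussian and a single half-space, then use a one-dimensional tail bound. Throughout I assume $\Sigma$ is positive definite, so that $\|\cdot\|_{\Sigma^{-1}}$ is a norm.

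\emph{A needed restriction.} As literally stated, the bound must exclude the case $\theta \in \mathrm{int}(C)$: for $C=\bR^d$ the left-hand side is $1$ while the right-hand side is $\tfrac12$. I will therefore prove it under the assumption $\theta \notin \mathrm{int}(C)$. This is the regime in which the lemma is invoked, namely for alternative sets that do not contain the empirical model in their interior.

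\emph{Step 1: whitening.} Set $Y \ceq \Sigma^{-1/2}(X-\theta)\sim\cN(0,I_d)$ and $C' \ceq \Sigma^{-1/2}(C-\theta)$. The set $C'$ is convex, being an affine image of a convex set. The exponent transforms as
$$\inf_{\lambda\in C}\tfrac12\|\lambda-\theta\|^2_{\Sigma^{-1}} = \tfrac12 r^2, \qquad r \ceq \inf_{y\in C'}\|y\|_2 = \mathrm{dist}(0,C').$$
Hence it suffices to show $\bP(Y\in C')\le \tfrac12 e^{-r^2/2}$.

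\emph{Step 2: separating half-space.} Since $0\notin \mathrm{int}(C')$, the point $0$ can be separated from $\mathrm{cl}(C')$, and I split into two cases.
\begin{itemize}
\item If $r>0$, let $y^\star$ be the Euclidean projection of $0$ onto $\mathrm{cl}(C')$ and set $u=y^\star/r$. The projection (obtuse-angle) inequality gives $\langle u, y\rangle \ge r$ for every $y\in C'$.
\item If $r=0$, then $0$ lies on the boundary of $\mathrm{cl}(C')$ (or outside it). A supporting hyperplane gives a unit vector $u$ with $\langle u,y\rangle\ge 0$ on $C'$.
\end{itemize}
In both cases $C'\subseteq\{y:\langle u,y\rangle\ge r\}$. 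Because $\langle u,Y\rangle\sim\cN(0,1)$, this yields $\bP(Y\in C')\le \bar\Phi(r)$, where $\bar\Phi$ is the standard normal upper tail.

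\emph{Step 3: one-dimensional tail bound.} It remains to show $\bar\Phi(r)\le\tfrac12 e^{-r^2/2}$ for all $r\ge 0$. Define $g(r)=\tfrac12 e^{-r^2/2}-\bar\Phi(r)$. Then $g(0)=0$ and $g(r)\to 0$ as $r\to\infty$. Its derivative is
$$g'(r)=e^{-r^2/2}\Bigl(\tfrac{1}{\sqrt{2\pi}}-\tfrac r2\Bigr),$$
which is positive and then negative, changing sign once at $r=2/\sqrt{2\pi}$. So $g$ increases from $0$ and then decreases to $0$, which forces $g\ge 0$ on $[0,\infty)$.

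\emph{Main obstacle.} The only delicate point is the geometric one in Step 2, namely justifying the half-space containment, including at the boundary case $r=0$. This is exactly where the hypothesis $\theta\notin\mathrm{int}(C)$ enters, and why it cannot be dropped.
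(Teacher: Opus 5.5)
The paper gives no proof of this lemma: it is imported from \citet{lu_mvn_upb}, followed only by an unproved remark that it extends to truncated Gaussians. Your proposal is therefore a self-contained route, and it is correct. It has three steps: whitening, trapping $C$ in a half-space at Mahalanobis distance $r$ from $\theta$, and the scalar estimate $\bar\Phi(r)\le\tfrac12\myexp{-r^2/2}$ (your sign analysis of $g'$ is right). Together these yield the sharper bound $\bP(X\in C)\le\bar\Phi(r)$, which is attained by half-spaces; the stated inequality is a relaxation of it. You are also right that the statement as printed is false without a condition such as $\theta\notin\mathrm{int}(C)$ (take $C=\bR^d$). The paper's formulation omits this condition.

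Some smaller remarks follow.

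\emph{Step~2.} Passing from $0\notin\mathrm{int}(C')$ to a hyperplane supporting $\mathrm{cl}(C')$ at $0$ uses $\mathrm{int}(\mathrm{cl}(C'))=\mathrm{int}(C')$. This holds for convex sets in $\bR^d$ but fails in general (e.g.\ $\bR^d\setminus\{0\}$). State it explicitly, since it is exactly where convexity enters in the case $r=0$.

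\emph{Where the paper uses the lemma.} Your assertion that $\theta\notin\mathrm{int}(C)$ is the regime in which the paper invokes the lemma is not accurate. In Lemma~\ref{lem:concentr} the lemma is applied conditionally on a sampled transition kernel $P$, to the reward section $\cX_P$. For kernels far from the empirical one, the Gaussian mean can sit in the interior of $\cX_P$. With no hypothesis on $\theta$, your argument still gives $\bP(X\in C)\le\myexp{-r^2/2}$ for every convex $C$ (trivial when $r=0$). This factor-$2$ loss is harmless there, since only the exponent matters.

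\emph{Truncated Gaussians.} Neither your argument nor the lemma covers the truncated case the paper relies on. Conditioning on a convex set $K$ divides by $\bP(X\in K)$. Already for $d=1$, $K=[\theta,\infty)$ and $C=[\theta+r\sigma,\infty)$, the conditional probability is $2\bar\Phi(r)$, which exceeds $\tfrac12\myexp{-r^2/2}$ at $r=1$. What survives is a bound with an extra factor $1/\bP(X\in K)$.
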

 
 We note that from the  proof of Lemma~\ref{lem:concentr-mvn} in \citet{lu_mvn_upb}, the result extends to multivariate normals truncated to to a convex set. 
 \medskip
 
 We prove the following crucial concentration. 
 \medskip
\begin{lemma}
\label{lem:concentr}
	Let $P \ceq (P({s,a, h}))_{s,a, h}$ be a distribution whose marginals $(P({s,a, h}))_{s,a,h}$ are independent  $\dir((\alpha({s'\mid  s, a,h}) +1)_{s' \in \cS})$ and $R\ceq (R({s,a, h}))_{s,a,h}$ where each $R({s,a, h})$ is an independent $\cN(\mu({s,a, h }), \sigma^2(s,a, h)$, possibly truncated to a convex set $\cC$. Let $\cX \subset \Mdp$ such that for any $q \in \cP$, $\cX_q \ceq \lb \mdqr \in (0,1)^{SAH} : (\mdqr,q) \in \cX \rb$ is convex. Introducing $(\kappa_h)_h$ satisfying $\kappa(s'\mid s,a,h) \alpha(s,a, h) = {\alpha({s'\mid s, a,h})}\;, \forall (s,a,h,s')$,   we have 
\begin{align*}
	&\bP((R, Q) \in \cX)  \\ &\leq \frac 12 \lp \prod_{s,a h}\maxp{1}{ S \alpha(s,a, h)^{S-1} \myexp{-\ent(\kappa(\cdot \mid s,a,h))}} \rp \exp\Biggl\{  - \inf_{(\mdqr,q) \in \cX} \Biggl[ \sum_{s,a,h}\frac{(\mu(s,a, h) - \mdqr(s,a, h))^2}{2 \sigma^2({s,a, h })} + \\&   \quad  \sum_{s,a,h} \alpha(s,a, h) \KL(\kappa(\cdot \mid s,a,h) \| q(\cdot \mid s,a,h)) \Biggr] \Biggr\} \:. 
\end{align*}

\end{lemma}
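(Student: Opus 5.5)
We first condition on the transition component and use independence. Since $R$ and $Q$ are independent, we can write
\[
\bP((R,Q)\in\cX) \;=\; \bE_Q\!\bigl[\bP_R(R\in\cX_Q)\bigr].
\]
The integrand $\bP_R(R\in\cX_Q)$ is a Gaussian (possibly truncated to $\cC$) probability of a convex set. The Gaussian has diagonal covariance $\mathrm{diag}(\sigma^2(s,a,h))$, and $\cX_Q$ is convex by assumption. Lemma~\ref{lem:concentr-mvn}, in the extension to truncated normals noted after it, therefore gives
\[
\bP_R(R\in\cX_Q)\;\le\;\tfrac12\exp\Bigl\{-\inf_{\mdqr\in\cX_Q}\sum_{s,a,h}\frac{(\mu(s,a,h)-\mdqr(s,a,h))^2}{2\sigma^2(s,a,h)}\Bigr\}.
\]
Write $g(q)$ for this infimum. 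Only the Dirichlet integral $\tfrac12\,\bE_Q[e^{-g(Q)}]$ then remains, and it must be compared with $e^{-\inf_{(\mdqr,q)\in\cX}[g\text{-part}+\text{KL-part}]}$.

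Next we rewrite the Dirichlet density in KL form. For each $(s,a,h)$ write $\alpha=\alpha(s,a,h)=\sum_{s'}\alpha(s'\mid s,a,h)$ and $\kappa=\kappa(\cdot\mid s,a,h)$. The Dirichlet density of $q$ is $\frac{\Gamma(\alpha+S)}{\prod_{s'}\Gamma(\alpha\kappa(s')+1)}\prod_{s'}q(s')^{\alpha\kappa(s')}$. Using the identity
\[
\sum_{s'}\alpha\kappa(s')\log q(s') \;=\; -\alpha\,\KL(\kappa\|q)-\alpha\,\ent(\kappa),
\]
the density becomes $c_{s,a,h}\exp\{-\alpha\KL(\kappa\|q)\}$ with
\[
c_{s,a,h}=\frac{\Gamma(\alpha+S)}{\prod_{s'}\Gamma(\alpha\kappa(s')+1)}\,e^{-\alpha\ent(\kappa)}.
\]
Multiplying over $(s,a,h)$ gives
\[
\bE_Q[e^{-g(Q)}]=\prod c_{s,a,h}\int e^{-g(q)-\sum\alpha\KL(\kappa\|q)}\diff q .
\]
We then bound the integrand by $\exp\{-\inf_{(\mdqr,q)\in\cX}[\cdots]\}$. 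The integral contributes at most $\mathrm{vol}(\triangle)^{SAH}$, which is at most $1$ in the simplex's Lebesgue coordinates. This produces the exponential factor in the statement.

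The remaining work, and the main obstacle, is to bound the normalising constant by $\max(1,S\,\alpha^{S-1}e^{-\ent(\kappa)})$ up to harmless factors. We will try a Stirling-type argument, starting from the multinomial-coefficient bound
\[
\frac{\Gamma(\alpha+1)}{\prod_{s'}\Gamma(\alpha\kappa(s')+1)}\;\le\; e^{\alpha\ent(\kappa)}.
\]
This is the standard bound via the method of types, or via log-convexity of $\Gamma$ for non-integer arguments. It cancels the $e^{-\alpha\ent(\kappa)}$ factor, leaving $\Gamma(\alpha+S)/\Gamma(\alpha+1)\le(\alpha+S)^{S-1}$, which is polynomial in $\alpha$.

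The residual $e^{-\ent(\kappa)}$ and the factor $S$ in the statement suggest a more careful accounting. We would handle it with a Laplace-type comparison, or by splitting into the cases $\alpha\le 1$ and $\alpha\ge 1$. When $\alpha\le1$ we bound $c_{s,a,h}\le$ const, using only the fact that the density is a probability density and taking the maximum with $1$. When $\alpha\ge1$ we use the Stirling bound above, absorbing the polynomial constants into $S\alpha^{S-1}$. If this exact constant proves delicate, we would settle for any factor polynomial in $\alpha$, which is all that later uses (for example Lemma~\ref{thm:sc-main}, where $f$ need only be polynomial) require.
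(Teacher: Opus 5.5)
You follow the paper's proof through the first steps. You condition on $Q$, apply Lemma~\ref{lem:concentr-mvn} to the convex slice $\cX_Q$, write the $\mathrm{Dir}(1+\alpha\kappa)$ density as $c_{s,a,h}\,e^{-\alpha\KL(\kappa\|q)}$, bound the remaining integral by $\vol(\triangle)^{SAH}$ times the supremum of its integrand, and merge the infima as in Lemma~\ref{lem:double_inf}. You differ only at the normaliser: you compute $c_{s,a,h}$ exactly with Gamma functions, whereas the paper lower-bounds $1/c_{s,a,h}=\int_\triangle e^{-\alpha\KL(\kappa\|q)}\,dq$ via Lemma~\ref{lem:dirichlet_bound}. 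Either way the route needs the per-triple bound $c_{s,a,h}\vol(\triangle)\le\max\bigl(1,S\alpha^{S-1}e^{-\mathrm{Ent}(\kappa)}\bigr)$, and this is the gap in your proposal. You never establish it, and retreating to ``any polynomial factor'' proves a different statement. Your $\alpha\le 1$ argument also runs the wrong way. $c_{s,a,h}$ is the density's value at its mode $q=\kappa$, so normalisation only gives $c_{s,a,h}\vol(\triangle)\ge 1$, strictly for $\alpha>0$. A maximum with $1$ can therefore never absorb it.

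This gap cannot be closed, because the required bound is false. For $S=2$, $\alpha=1$ and $\kappa=(\tfrac12,\tfrac12)$, your own formula gives $c=\Gamma(3)e^{-\log 2}/\Gamma(3/2)^2=4/\pi$ with $\vol(\triangle)=1$, while the stated factor is $\max(1,2\cdot\tfrac12)=1$. More generally, for uniform $\kappa$ and any $\alpha\in(0,1]$ the stated factor is $1<c\,\vol(\triangle)$. The paper obtains $e^{-\mathrm{Ent}(\kappa)}$ only through a slip at the end of Lemma~\ref{lem:dirichlet_bound}. Combining $\int_\triangle e^{-\KL(q\|x)}\,dx=e^{\mathrm{Ent}(q)}\prod_i\Gamma(1+q_i)/d!$ with $\Gamma(1+q_i)\ge q_i^{q_i}$ yields $e^{\mathrm{Ent}(q)}\prod_i q_i^{q_i}/d!=1/d!$, not $e^{\mathrm{Ent}(q)}/d!$. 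For instance, with $d=2$, $n=1$ and uniform $q$, the lemma claims the integral is at least $1$, but it equals $\pi/4$.

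Your route gives the correct replacement, provided you keep $\vol(\triangle)=1/(S-1)!$ instead of bounding it by $1$. The multinomial bound holds for all real $\alpha\ge0$. It is the subadditivity of $x\mapsto\log\Gamma(x+1)-x\log x$, which vanishes at $0$ and is concave because the trigamma function satisfies $\psi'(x+1)<1/x$; log-convexity of $\Gamma$ is not the right justification. Hence, with no case split,
\[
c_{s,a,h}\vol(\triangle)\;\le\;\frac{1}{(S-1)!}\prod_{j=1}^{S-1}(\alpha+j)\;\le\;S\max(1,\alpha)^{S-1}.
\]
This is exactly what a corrected Lemma~\ref{lem:dirichlet_bound} gives, and it is polynomial, which is all Theorem~\ref{thm:sc-main} needs. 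So state and prove the lemma with this factor.

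Separately, the $\tfrac12$ you import from Lemma~\ref{lem:concentr-mvn} is not justified when the mean lies in the interior of $\cX_Q$. Take $\cX=\Mdp$, uniform $\kappa$, all $\alpha\le1$ and $\mu\in(0,1)^{SAH}$. The statement as written then asserts $\bP((R,Q)\in\Mdp)\le\tfrac12$, which fails once rewards are truncated to $[0,1]$ or $\sigma$ is small. For an untruncated Gaussian it suffices to drop the $\tfrac12$, since $\bP(R\in C)\le e^{-\inf_C\frac12\|\lambda-\mu\|^2_{\Sigma^{-1}}}$ holds for every convex $C$.
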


\begin{proof}
We have $\cX \subset  \Mdp \ceq (0, 1)^{SAH} \times \triangle^{SAH}$  and by assumption, for any $q \in \triangle^{SAH}$, the set $\cX_q  = \lb r \in \cR : (r,q) \in \cX \rb$ is convex. Next, $P$ has the same distribution as a multivariate normal in dimension $SAH$ with independent marginals. Noting that $\bP((R, P) \in \cX) = \bE \lsb \bP((R, P) \in \cX \mid P) \rsb$, and, since $R,P$ are independent, leveraging Lemma~\ref{lem:concentr-mvn} we have 
	\begin{eqnarray}
		\bP((R, P) \in \cX) &=& \bE \lsb \bP((R, P) \in \cX   \big | P) \rsb \notag \\
		&\leq& \frac12 \bE \lsb  \exp\lb - \inf_{\mdqr \text{ s.t } (\mdqr, P) \in \cX} \sum_{s,a,h}\frac{(\mu(s,a, h) - \mdqr(s,a, h))^2}{2 \sigma^2({s,a, h })} \rb \rsb \label{eq:ze-xs-dc}
		\end{eqnarray}
which invokes Lemma~\ref{lem:concentr-mvn} since   given $P \in \cP$, $\cX_P$ as defined above is convex and $R$ is a multivariate normal vector. 

\medskip
\noindent
We recall that for $\alpha \in \bR^d$, the density of $\dir(\alpha)$ is proportional to 
$$ \myexp{ - \sum_{i} (\alpha_i - 1) \log \frac1{x_i}}  \:.$$

\medskip
Next, we have for any $(s,a,h)$, letting $q(\cdot \mid s,a ,h) \in \triangle$, 
\begin{align*}
\sum_{s'} \alpha(s' \mid  s,a,h)\log\frac{1}{q(s' \mid  s,a,h)} &= \alpha(s,a, h) 	\sum_{s'} \kappa(s'  \mid s,a, h)\log\frac{1}{q(s' \mid s,a,h)} \\
&= \alpha(s,a, h) 	\sum_{s'} \kappa(s'\mid s,a,h)\log\frac{\kappa(s' \mid s,a,h)}{q(s' \mid s,a,h)} - \\ &\quad  \alpha(s,a, h) \sum_{s'} \kappa(s'\mid s,a,h)\log{\kappa(s' \mid s,a,h)} \\
&= \alpha(s,a, h) 	\sum_{s'} \kappa(s' \mid s,a,h)\log\frac{\kappa(s' \mid s,a,h)}{q(s'\mid s,a,h)} -  \sum_{s'} \alpha(s'\mid s,a,h)\log{\kappa(s'\mid s,a,h)}\:, \\
&= \alpha(s,a, h)  \KL(\kappa(\cdot \mid s,a,h) \,\|\, q(\cdot \mid s,a,h)) - \sum_{s'} \alpha(s'\mid s,a,h)\log{\kappa(s' \mid s,a,h)}\:.
\end{align*}

Thus, the density of $\dir(1 + \alpha(\cdot \mid s,a,h))$ is also proportional to $$ \myexp{-\alpha(s,a, h)  \KL(\kappa(s'\cdot \mid s,a,h) \,\| \,q(s'\cdot \mid s,a,h)) )}.$$

Combining with Eq.\eqref{eq:ze-xs-dc} we have 
\begin{align*}
\bP((R, P) \in \cX) 		&\leq \frac1{2 N} \bigint_{\cP}  \exp\lb - \inf_{\mdqr \text{ s.t } (\mdqr, q) \in \cX} \sum_{s,a,h}\frac{(\mu(s,a, h) - \mdqr(s,a, h))^2}{2 \sigma^2({s,a, h })} \rb \cdot \\ & \exp\lb - \sum_{s,a,h} \alpha(s,a, h) \KL(\kappa(\cdot \mid s,a,h) \,\| \,q(\cdot\mid s,a,h)) \rb \diff q \:, 
\end{align*} 
where 
\begin{equation}
	N \ceq \bigint_{\cP}  \exp\lb  - \sum_{s,a,h} \alpha(s,a, h) \KL(\kappa(\cdot\mid s,a,h) \,\| \,q(\cdot \mid s,a,h)) \rb \diff q \:.
\end{equation}
Next, 
\begin{eqnarray*}
\bP((R, P) \in \cX) &\leq& \frac{\vol(\cP)}{2 N} \cdot\exp\lb  - \inf_{q \in \cP }  \lsb \inf_{\mdqr \text{ s.t. } (\mdqr, q) \in \cX} \sum_{s,a,h}\frac{(\mu(s,a, h) - \mdqr(s,a, h))^2}{2 \sigma^2({s,a,h })} + \right. \right.\\ && \left. \left.  \quad \sum_{s,a,h} \alpha_h(s,a) \KL(\kappa(\cdot \mid s,a,h) \,\|\, q(\cdot \mid s,a,h)) \rsb\rb, 
	\end{eqnarray*} 
then applying Lemma~\ref{lem:double_inf} yields 
\begin{align*}
	\inf_{q \in \cP}  \lsb \inf_{\mdqr \text{ s.t } (\mdqr, q) \in \cX} \sum_{s,a,h}\frac{(\mu(s,a,h) - \mdqr(s,a,h))^2}{2 \sigma^2({s,a , h })} + \sum_{s,a,h} \alpha(s,a, h) \KL(\kappa(\cdot \mid s,a,h) \,\| \,q(\cdot \mid s,a,h)) \rsb = \\
	\inf_{(\mdqr,q) \in \cX} \lsb \sum_{s,a,h}\frac{(\mu(s,a, h) - \mdqr(s,a,h))^2}{2 \sigma^2({s,a,h })} + \sum_{s,a,h} \alpha(s,a, h) \KL(\kappa(\cdot \mid s,a,h)\, \| \,q(\cdot \mid s,a,h)) \rsb\:,  
\end{align*}
thus we have 
\begin{eqnarray*}
	\bP((R, Q) \in \cX)  \leq \frac{\vol(\cP)}{2 N} \exp\lb - \inf_{(\mdqr,q) \in \cX} \lsb \sum_{s,a,h}\frac{(\mu(s,a, h) - \mdqr(s,a, h))^2}{2 \sigma^2({s,a,h })} + \sum_{s,a,h} \alpha(s,a, h) \KL(\kappa(\cdot \mid s,a,h) \,\|\, q(\cdot \mid s,a,h)) \rsb \rb, 
\end{eqnarray*}
We remark that $N$ rewrites as 
$$ \prod_{s,a,h} \lsb \bigint_{\triangle}  \myexp{  - \alpha(s,a, h) \KL(\kappa(\cdot \mid s,a,h) \| q) } \diff q  \rsb \:,$$
then we invoke Lemma~\ref{lem:dirichlet_bound} which yields

$$ \bigint_{\triangle}  \myexp{ - \alpha(s,a, h) \KL(\kappa(\cdot \mid s,a,h) \| q) } \diff q \geq \min \lb  \frac{\myexp{\ent(\kappa(\cdot \mid s,a,h))}{\alpha(s,a,h)}^{-(S-1)}}{S! }, \frac{1}{(S-1)!}\rb, $$
where $\ent(p)$ denotes the entropy. 

Combining the above displays and noting that $\vol(\triangle) = \frac{1}{(S-1)!}$ and $\vol(\cP) = (\vol(\triangle))^{SAH}$ yields 

\begin{align*}
	&\bP((R, Q) \in \cX)  \\ &\leq \frac 12 \lp \prod_{s,a h}\maxp{1}{ S \alpha(s,a, h)^{S-1} \myexp{-\ent(\kappa(\cdot \mid s,a,h))}} \rp \exp\Biggl\{  - \inf_{(\mdqr,q) \in \cX} \Biggl[ \sum_{s,a,h}\frac{(\mu(s,a, h) - \mdqr(s,a, h))^2}{2 \sigma^2({s,a, h })} + \\&   \quad  \sum_{s,a,h} \alpha(s,a, h) \KL(\kappa(\cdot \mid s,a,h) \| q(\cdot \mid s,a,h)) \Biggr] \Biggr\} \:. 
\end{align*}

\end{proof}

The following lemma bounds the normalizing constant of the posterior Dirichlet distribution when its density is expressed in terms of KL divergence.

    \begin{lemma}
    \label{lem:dirichlet_bound} Let $q\in \triangle$ (the probability simplex of $\bR^d$) and denote by $\ent(q) \ceq \sum_i - q_i \log {q_i}$ the entropy of $q$. 
    We have 
    	$$ \int_\triangle \myexp{ - n \KL(q\,\|\, x)} \diff x \geq \min \lb  \frac{\myexp{\ent(q)}n^{-(d-1)}}{d! }, \frac{1}{(d-1)!}\rb  \:.$$
    \end{lemma}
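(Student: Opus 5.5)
To lower bound $\int_\triangle \myexp{-n\KL(q\,\|\,x)}\diff x$ (Lebesgue measure on the $(d-1)$-dimensional simplex, parametrized by the first $d-1$ coordinates, so that $\vol(\triangle)=1/(d-1)!$), I would restrict the integral to a small simplex around $q$ on which the KL divergence is uniformly controlled. Writing $n\KL(q\|x) = -n\,\ent(q) + n\sum_i q_i\log\frac{1}{x_i}$, the integrand equals $\myexp{n\ent(q)}\prod_i x_i^{n q_i}$. This is $\myexp{n\ent(q)}$ times an unnormalized Dirichlet$(nq+1)$ density, so
\[
\int_\triangle \myexp{-n\KL(q\|x)}\diff x = \myexp{n\ent(q)}\frac{\prod_i \Gamma(nq_i+1)}{\Gamma(n+d)}.
\]
I would work from this exact identity rather than a volume argument.

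The key steps, in order, are as follows. First, I would establish the Dirichlet normalizing-constant identity above, which is standard. Second, I would lower bound $\prod_i\Gamma(nq_i+1)\,\myexp{n\ent(q)}$. Using $\Gamma(m+1)\geq (m/e)^m$ for real $m\ge 0$ (valid since $\log\Gamma(m+1)\ge m\log m - m$, a Stirling-type bound), I get
\[
\prod_i \Gamma(nq_i+1)\geq \prod_i (nq_i/e)^{nq_i} = n^n e^{-n}\myexp{-n\,\ent(q)}.
\]
The entropy factors therefore cancel, leaving a lower bound $n^n e^{-n}/\Gamma(n+d)$. Third, I would upper bound $\Gamma(n+d)=(n+d-1)!$ by $n!\,(n+1)^{d-1}\cdots$; more precisely $\Gamma(n+d)\le \Gamma(n+1)(n+d)^{d-1}$, and $\Gamma(n+1)\le e\sqrt{n}\,(n/e)^n$. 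This gives a bound of order $n^{-(d-1)}/\sqrt n$ up to constants.

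I expect the main obstacle to be that the stated bound has the form $\myexp{\ent(q)} n^{-(d-1)}/d!$ with a residual entropy factor and no $\sqrt n$ loss. The crude Stirling bound above loses a $\sqrt{n}$ and a factor $(1+d/n)^{d-1}$, so it must be sharpened. I would first try keeping the $\sqrt{2\pi m}$ factors: $\Gamma(m+1)\ge\sqrt{2\pi m}(m/e)^m$ for the coordinates with $nq_i\ge 1$. These factors offset the $\sqrt n$ from $\Gamma(n+1)$, and any leftover $q$-dependence can be absorbed into $\myexp{\ent(q)}\le d$. Coordinates with small $nq_i$ contribute $\Gamma\ge 0.88$, which accounts for the constants.

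If this careful bookkeeping fails, I would fall back to the regime split that the $\min$ in the statement suggests. When $n\le d$ or so, the integrand is at least $\myexp{-n\log(\cdot)}$ on a constant fraction of the simplex, which yields the $1/(d-1)!$-type branch. When $n$ is large, I would lower bound by integrating over the scaled simplex $\{x=(1-\lambda)q+\lambda y:y\in\triangle\}$ with $\lambda\asymp 1/n$. There $\KL(q\|x)\le \log\frac{1}{1-\lambda}\le 2\lambda$ by convexity, and the volume is $\lambda^{d-1}/(d-1)!$. This gives $\gtrsim n^{-(d-1)}/(d-1)!$ up to a constant. The $\myexp{\ent(q)}$ factor can then be recovered via $\myexp{\ent(q)}\le d$, turning $(d-1)!$ into $d!$.
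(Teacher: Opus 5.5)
There is a genuine gap, and it cannot be closed. None of your steps pins down the exact constants. Worse, with the normalization $\vol(\triangle)=1/(d-1)!$ that you and the paper both use, the stated inequality is false for $0<n\le 1$. Your own exact identity shows this. At $n=1$ it gives
\[
\int_\triangle e^{-\KL(q\|x)}\,\diff x=\frac{e^{\mathrm{Ent}(q)}}{d!}\prod_{i=1}^d\Gamma(1+q_i),
\]
while the right-hand side equals $e^{\mathrm{Ent}(q)}/d!$; the first term is the minimum because $e^{\mathrm{Ent}(q)}\le d$. Since $\Gamma(1+x)<1$ for $x\in(0,1)$, the claim fails for every $q$ that is not a vertex. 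For instance, with $d=2$ and $q=(1/2,1/2)$ the left side is $\pi/4<1$. More directly, for uniform $q$ and any $n\in(0,1]$ the right-hand side is $1/(d-1)!=\vol(\triangle)$. The integral cannot reach this value because $\KL(q\|x)>0$ for $x\neq q$.

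This is exactly where your plan breaks, at three points:
\begin{enumerate}
\item The ``$1/(d-1)!$-type branch'' is unattainable for any $n>0$.
\item The factors $\Gamma(1+nq_i)\ge 0.88$ cannot be ``absorbed'' into $e^{\mathrm{Ent}(q)}\le d$. At uniform $q$ that inequality is an equality and leaves no slack.
\item The scaled-simplex fallback only yields the bound up to a constant, which is precisely what is missing.
\end{enumerate}

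The paper's proof runs into the same wall. It uses a sharper version of your fallback. Convexity gives $\KL(q\|(1-\zeta)q+\zeta y)\le \zeta\KL(q\|y)$, so with $\zeta=1/n$ the integral is at least $n^{-(d-1)}\int_\triangle e^{-\KL(q\|y)}\,\diff y$ for $n\ge 1$. That reduction is valid. The paper then asserts that this last integral is at least $e^{\mathrm{Ent}(q)}/d!$, which is exactly the false $n=1$ case. Its chain of inequalities there runs the wrong way, since $q_i^{-q_i}\ge 1$. What $\Gamma(1+x)\ge x^x$ actually gives is $\prod_i\Gamma(1+q_i)\ge e^{-\mathrm{Ent}(q)}$. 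The entropy therefore cancels, exactly as in your second step.

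The correct statement is
\[
\int_\triangle e^{-n\KL(q\|x)}\,\diff x\ge \frac{\max\{1,n\}^{-(d-1)}}{d!}\quad\text{for all } n\ge 0,
\]
where for $n<1$ one uses $e^{-n\KL}\ge e^{-\KL}$. Your bound $n^ne^{-n}/\Gamma(n+d)$ is also a valid entropy-free substitute. Either version suffices where the lemma is used. In Lemma~\ref{lem:concentr}, the product of $\max\{1,\cdot\}$ factors in the bound is replaced by $\prod_{s,a,h}S\max\{1,\alpha(s,a,h)\}^{S-1}$. This is still polynomial in the counts, which is all that is needed downstream.
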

    
      \begin{proof}
    Below $\diff x$ denotes a Hausdorff-measure differential element and, (only) below, $\Gamma$ is the gamma function. 
    For $n=0$, the bound follows as $\int_\triangle  \diff x = \mathrm{Vol}(\triangle) = \frac{1}{\Gamma(d)} =  \frac{1}{(d-1) !}$. Next, we assume $n\geq 1$. Let $\zeta \in (0, 1)$ and observe that 
    	\begin{eqnarray*}
    		\int_\triangle \myexp{  - n \KL(q\,\|\,  x)} dx &\geq& \int_{(1-\zeta)q +\zeta \triangle} \myexp{  - n \KL(q \,\|\,  x)} \diff \diff x \\
    		&=& \zeta^{d-1} \int_{\triangle} \myexp{  - n \KL(q\,\|\, (1-\zeta)q +\zeta x)} \diff x \\
    		&\geq& \zeta^{d-1} \int_{\triangle} \myexp{  - n (1-\zeta) \KL(q\,\|\,q) - n\zeta \KL(q\,\|\, x) } \diff x
    	\end{eqnarray*}
    	which follows by convexity of $x\mapsto \KL(q\,\|\, x)$. Combining with $\KL(q\,\|\, q) =0$ and letting $\zeta = 1/n$ it follows that 
    	\begin{eqnarray*}
    		\int_\triangle \myexp{ - n \KL(q\,\|\,x)} dx &\geq& n^{-(d-1)} \int_\triangle \myexp{ - \KL(q\|  x)} \diff x.
    	\end{eqnarray*} 
    	Next, we bound the right-hand integral. We observe that 
    	\begin{eqnarray*}
    		\KL(q\|x) = \sum_i q_i \log \frac{q_i}{x_i}=  -\ent (q) - \sum_i q_i  \log {x_i}\:, 
    	\end{eqnarray*}
        where  $\ent(q) \ceq \sum_i - q_i \log {q_i}$. Combining above displays yield  
    	\begin{eqnarray*}
    		\int_\triangle \myexp{  - \KL(q\|x)} \diff x = \myexp{\ent (q)}\int_\triangle \prod_i x_i ^{q_i} \diff x 
    	\end{eqnarray*}
    	Next, observe that  $\int_\triangle \prod_i x_i ^{q_i} \diff x$ is the normalizing constant of $\mathrm{Dir}(1+q)$ thus	\begin{eqnarray*}
    		\int_\triangle \prod_i x_i ^{q_i} \diff x &=& \frac{\prod_i \Gamma(1+q_i)}{\Gamma(\sum_i 1+q_i)}\\
    		&=& \frac{\prod_i  \Gamma(1+q_i)}{\Gamma(d+1)} = \frac{\prod_i \Gamma(1+q_i)}{d ! }, 
    	\end{eqnarray*}
    	where $\Gamma$ is the gamma function. As proven in \citet{gamma_function},  
    	$\Gamma(1+x)\geq x^x$ for any $0\leq x\leq 1$. Combining the above displays, we have 
    	\begin{eqnarray*}
    		\int_\triangle \myexp{ - \KL(q \,\|\,x)} \diff x &\geq& \frac{\myexp{\ent (q)} }{d !}   \prod_i \frac{\Gamma(1+q_i)}{q_i^{q_i}}   \\
    		&\geq& \frac{\myexp{\ent (q)} }{d ! } \prod_i \frac{q_i^{q_i}}{q_i^{q_i}} = \frac{\myexp{\ent (q) }}{d! } \:. 
    	\end{eqnarray*}
    	Therefore 
    	\begin{equation*}
    	    \int_\triangle \myexp{  - n \KL(q\,\|\,  x)} \diff x \geq \frac{\myexp{\ent (q)}n^{-(d-1)}}{d! }\:.
        \qedhere
        \end{equation*}
    \end{proof}
    
\section{TECHNICAL RESULTS}
\label{appx:miscellaneous}
We state different technical results used in the other sections. 

The following result is known and appears in \citet{pmlr-v37-krichene15}; we include a proof here for completeness.  
\begin{lemma}
\label{lem:pos-derv}
	Let $\rho$ be a density wrt canonical measure of $\cX$ and $g : \cX \to \bR$
	$$ \eta  \mapsto \frac1\eta \log \int \rho(x) \exp(-\eta g(x)) \diff x $$
	is non-decreasing. 
\end{lemma}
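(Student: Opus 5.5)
The plan is to show that $F(\eta) \ceq \frac1\eta \log Z(\eta)$, with $Z(\eta)\ceq\int \rho(x)\exp(-\eta g(x))\,\diff x$, has nonnegative derivative on $(0,\infty)$. It is enough to prove $\eta F'(\eta) \geq 0$. This reduces to a Jensen/entropy inequality once we use the Gibbs distribution $\rho_\eta(x) \propto \rho(x) e^{-\eta g(x)}$. Throughout I assume $Z(\eta)$ is finite and differentiable in $\eta$ and that $g$ has finite mean under $\rho_\eta$. In our applications this holds, since $g=\lambda_t$ is bounded below and integrable on the bounded set $\Mdp$ (or its truncation), so differentiation under the integral is justified by dominated convergence.

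First compute $\frac{\diff}{\diff\eta}\log Z(\eta) = -\bE_{\rho_\eta}[g]$. Then
$$ F'(\eta) = -\frac{1}{\eta^2}\log Z(\eta) - \frac1\eta \bE_{\rho_\eta}[g] = -\frac{1}{\eta^2}\Bigl(\log Z(\eta) + \eta\,\bE_{\rho_\eta}[g]\Bigr). $$
The key identity is obtained by taking the log of the Gibbs density and integrating against $\rho_\eta$:
$$ \KL(\rho_\eta\,\|\,\rho) = \bE_{\rho_\eta}\Bigl[\log\tfrac{\rho_\eta}{\rho}\Bigr] = -\eta\,\bE_{\rho_\eta}[g] - \log Z(\eta). $$
Hence $F'(\eta) = \eta^{-2}\KL(\rho_\eta\|\rho) \geq 0$, and $F$ is non-decreasing.

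The main obstacle is regularity, namely justifying differentiation under the integral and finiteness of $\KL$. To avoid it, I would give a derivative-free argument in parallel. For $0<\eta_1<\eta_2$, set $p=\eta_2/\eta_1>1$. The function $u\mapsto u^{p}$ is convex, so Jensen's inequality under the probability measure $\rho$ gives
$$ \Bigl(\int\rho\, e^{-\eta_1 g}\Bigr)^{p} \leq \int \rho\, e^{-\eta_2 g}. $$
Taking logs and dividing by $\eta_2$ yields $\frac1{\eta_1}\log Z(\eta_1)\leq \frac1{\eta_2}\log Z(\eta_2)$. This argument requires no differentiability.

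The only remaining point is that $\rho$ must be a probability density. If $\rho$ is merely a density with respect to the canonical measure (for instance the uniform measure $\rho_t$ on $\alt(\empmdp_t)$ used in Lemma~\ref{lem:ZT-bound}), it is normalized by construction. If it were not, the statement would need the normalization, so I would state that assumption explicitly.
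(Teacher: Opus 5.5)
Your first argument is exactly the paper's proof: the paper also differentiates $\eta\mapsto\frac1\eta\log\Lambda_\eta$ and identifies the derivative as $\eta^{-2}\KL(\nu_\eta\,\|\,\nu_\rho)\geq 0$, where $\nu_\eta$ is the Gibbs distribution with density proportional to $\rho e^{-\eta g}$.

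Your second, derivative-free argument is a genuinely different and more elementary route. Writing $\frac1\eta\log\int\rho\, e^{-\eta g}=\log\|e^{-g}\|_{L^\eta(\rho)}$, the claim is just Lyapunov's inequality, i.e. monotonicity of $L^p$ norms on a probability space. Your Jensen step with the convex map $u\mapsto u^{\eta_2/\eta_1}$ proves this directly. What it buys is freedom from the regularity issues the paper passes over silently: differentiation under the integral sign, and finiteness of $\log\Lambda_\eta$ and of the KL term. It only uses $e^{-\eta_1 g}\geq 0$ and remains valid in the extended reals.

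Your remark on normalization is also correct, and it applies equally to the paper's proof. Nonnegativity of $\bE_{\nu_\eta}[\log(w_\eta/\rho)]$ requires $\rho$ to integrate to one. Without normalization the statement fails: with $g\equiv 0$ and $\int\rho=c>1$, the function equals $\frac1\eta\log c$, which is strictly decreasing. In the only use of the lemma (Lemma~\ref{lem:ZT-bound}), $\rho$ is the normalized uniform density on $\alt(\empmdp_{t+1})$, so the hypothesis holds there.
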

\begin{proof}
	Let $\Lambda_\eta \ceq \int \rho(x) \exp(-\eta g(x))\diff x$ and $f(\eta) = \frac{1}{\eta} \log \Lambda_\eta$ and let the density 
	$w_\eta(x) = \frac{\rho(x) \exp(-\eta g(x))}{\Lambda_\eta}$  and $X \sim w_\eta$. By derivation under the integral sign 	\begin{eqnarray*}
		f'(\eta) &=& - \frac{1}{\eta^2} \log \Lambda_\eta  + \frac{1}{\eta}\frac{\int -g(x) \rho(x)\exp \lp - \eta  g(x)  \rp \diff x }{\int\rho(x)\exp \lp - \eta  g(x)  \rp \diff x } \\
		&=&  - \frac{1}{\eta^2} \log \Lambda_\eta  + \frac{1}{\eta^2} \bE_{X}\lsb \log \exp(-\eta g(X))\rsb  \\
		&=& \frac 1 {\eta^2} \bE_{X}\lsb \log \frac{\exp(-\eta g(X))}{\Lambda_{\eta}}\rsb
			\\ 
			&=& \frac 1 {\eta^2} \bE_{X}\lsb \log \frac{w_\eta(X)}{\rho(X)}\rsb \\
			&=& \frac1{\eta^2} \KL(\nu_{\eta}, \nu_\rho) \geq 0\:,
\end{eqnarray*}  
where $\nu_{\eta}$ is the distribution with density $w_\eta$ and  $\nu_\rho$ the distribution with distribution $\rho$. 
\end{proof}

\begin{lemma}
\label{lem:conv_vol}
	For any mesaurable set $A \subset \Mdp$ there exists an aboluste constant $k_0<\infty$ such that for $t\geq k_0$, 
	$$ \vol(A \cap \Mdp^{1/S\cdot \sqrt{t}}) \geq \frac12 \vol(A)\:.$$
\end{lemma}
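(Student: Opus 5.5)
The idea is to show that the complement $A \setminus \Mdp^{1/S\cdot\sqrt{t}}$ has volume tending to $0$ as $t \to \infty$, and then to take $k_0$ as the first $t$ from which this volume drops below $\frac12\vol(A)$. Note that $k_0$ cannot be an absolute constant uniformly over all $A$, since $\vol(A)$ can be arbitrarily small. I read ``absolute'' as meaning deterministic, so $k_0$ may depend on $A$ but not on randomness; when $\vol(A)=0$ the claim is trivial and any $k_0$ works.

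Write $\epsilon_t \ceq 1/(S\sqrt t)$. By definition, $\Mdp \setminus \Mdp^{\epsilon_t}$ is the union over $(s,a,h,s')$ of the slabs
\[
\{(r,q) \in \Mdp : q(s'\mid s,a,h) < \epsilon_t\}.
\]
Since $\Mdp = \cR \times \triangle^{SAH}$ carries the product of Lebesgue measure on $(0,1)^{SAH}$ and the Hausdorff measure on each simplex, each slab has volume equal to $\vol(\triangle)^{SAH-1}$ times the measure of
\[
\{x \in \triangle : x_{s'} < \epsilon_t\}.
\]
This last set is $\triangle$ minus a scaled copy $\epsilon_t e_{s'} + (1-\epsilon_t)\triangle$. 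Its measure is therefore
\[
\vol(\triangle)\bigl(1-(1-\epsilon_t)^{S-1}\bigr) \le (S-1)\,\epsilon_t\,\vol(\triangle).
\]
A union bound over the $S^2AH$ indices then gives
\[
\vol\bigl(A \setminus \Mdp^{\epsilon_t}\bigr) \le \vol\bigl(\Mdp\setminus\Mdp^{\epsilon_t}\bigr) \le S^3AH\,\epsilon_t\,\vol(\Mdp) = \frac{S^2AH}{\sqrt t}\,\vol(\Mdp).
\]

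It follows that
\[
\vol(A\cap\Mdp^{\epsilon_t}) \ge \vol(A) - \frac{S^2AH\,\vol(\Mdp)}{\sqrt t},
\]
and this is at least $\frac12\vol(A)$ as soon as $t \ge k_0 \ceq \bigl(2S^2AH\vol(\Mdp)/\vol(A)\bigr)^2$. As a fallback that avoids any explicit constant, one can use continuity from below of measure, since $\Mdp^{\epsilon_t}\uparrow\Mdp$ minus the boundary, which is a null set.

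The main obstacle is conceptual rather than computational. When this lemma is applied to $A = \alt(\empmdp_T)$, that set is random, so the dependence of $k_0$ on $\vol(A)$ must be controlled. For $t$ large enough that $\hat\pi_t = \pi^\star$, Lemma~\ref{lem:estimation} shows that $\alt(\empmdp_t)$ coincides with the fixed set $\alt(\mdp)$ up to a null set. This set has positive volume because it contains an open neighbourhood of the modified instance built in Lemma~\ref{lem:build_easier}. That yields a deterministic $k_0$.
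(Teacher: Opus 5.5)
Your argument is correct, but your main route differs from the paper's. The paper's proof is exactly your fallback. It observes that $t\mapsto \vol(A\cap\Mdp^{1/S\cdot\sqrt t})$ is non-decreasing, bounded by $\vol(A)$, and has supremum $\vol(A)$, so it converges to $\vol(A)$ and eventually exceeds $\frac12\vol(A)$ (trivially so if $\vol(A)=0$).

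Your main argument instead quantifies the lost volume. The homothety $\{x\in\triangle: x_{s'}\ge\epsilon\}=\epsilon e_{s'}+(1-\epsilon)\triangle$, Bernoulli's inequality and a union bound over the $S^2AH$ coordinates give $\vol(A\cap\Mdp^{\epsilon_t})\ge\vol(A)-S^2AH\vol(\Mdp)/\sqrt t$, uniformly in $A$. This yields the explicit $k_0=\bigl(2S^2AH\vol(\Mdp)/\vol(A)\bigr)^2$.

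The paper's version is shorter, but it leaves two things implicit or loose:
\begin{itemize}
\item It asserts without justification that the supremum is $\vol(A)$, i.e.\ that the faces $\{q(s'\mid s,a,h)=0\}$ are null. Your computation proves this as a by-product.
\item Its $k_0$ depends on $A$ in an uncontrolled way. As you note, the word ``absolute'' in the statement is justified by neither proof.
\end{itemize}

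Your quantitative form buys more. The additive error is independent of $A$, so the lemma applies to a random set such as $\alt(\empmdp_T)$ as soon as you have a deterministic lower bound on its volume. Your final paragraph supplies that bound on the good event where Lemma~\ref{lem:estimation} gives $\hat\pi_t=\pi^\star$. The identification $\alt(\empmdp_t)=\alt(\mdp)$ up to a null set is an additional step the paper asserts, not something Lemma~\ref{lem:estimation} itself states.
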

\begin{proof}
	The sequence $a_t \ceq \vol(A \cap \Mdp^{1/S\cdot \sqrt{t}})$ is non-decreasing and upper bounded by $\vol(A)$, which is also its supremum, thus it converges to this value, and the conclusion follows from this observation. 
\end{proof}

\begin{lemma}
\label{lem:double_inf} 
	 Let $\Delta$ be a subset of $\cX \times \cY$ and $f :  \cS \to \bR$ be  such that for all $(x,y) \in \cX \times \cY, f(x,y) = g(x) + h(y)$ with $g: \cX \to \bR$ and $h: \cY \to \bR$.  Then 
 \[   \inf_{x \in \cX} \lsb g(x) + \inf_{y \in \cY \atop \text{s.t } (x, y) \in \Delta} h(y)\rsb  = \inf_{(x , y) \in \Delta}f(x , y) \]
	 \end{lemma}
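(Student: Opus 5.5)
The identity is a general fact about separable objectives over a coupled constraint set, so I will prove it by two inequalities, using only the definition of the infimum. I will read the statement with $f$ defined on $\cX\times\cY$ (the $\cS$ in the domain is a typo). For $x\in\cX$ I set $\Delta_x\ceq\{y\in\cY:(x,y)\in\Delta\}$ and $\phi(x)\ceq g(x)+\inf_{y\in\Delta_x}h(y)$. I adopt the standard convention $\inf\emptyset=+\infty$, so $\phi(x)=+\infty$ whenever $\Delta_x=\emptyset$. Since $g$ is real-valued, adding $g(x)$ to an infimum over $y$ is legitimate even when that infimum is $\pm\infty$. The claim then reads $\inf_{x\in\cX}\phi(x)=\inf_{(x,y)\in\Delta}f(x,y)$.

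\textbf{Step 1 ($\geq$).} Fix $x$ with $\Delta_x\neq\emptyset$; otherwise $\phi(x)=+\infty$ and there is nothing to show. For every $y\in\Delta_x$ we have $(x,y)\in\Delta$, hence
\[
g(x)+h(y)=f(x,y)\geq\inf_{\Delta}f .
\]
Taking the infimum over $y\in\Delta_x$ and using that $g(x)$ does not depend on $y$ gives $\phi(x)\geq\inf_\Delta f$. Taking the infimum over $x\in\cX$ yields the first inequality.

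\textbf{Step 2 ($\leq$).} Fix any $(x,y)\in\Delta$. Then $y\in\Delta_x$, so $\inf_{y'\in\Delta_x}h(y')\leq h(y)$. Adding $g(x)$ gives
\[
\inf_{x'\in\cX}\phi(x')\leq\phi(x)\leq g(x)+h(y)=f(x,y).
\]
Taking the infimum over $(x,y)\in\Delta$ gives the reverse inequality.

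The argument needs no compactness or attainment of infima, and both $\pm\infty$ values are handled by the conventions fixed at the start. If $\Delta=\emptyset$, both sides equal $+\infty$. The only step requiring any care is the empty-fiber convention for $\Delta_x$, and it is settled by the convention above, so I do not expect a real obstacle. In the paper's application inside Lemma~\ref{lem:concentr}, the roles are $x=q\in\cP$ and $y=\mdqr\in\cR$, with $g$ the Dirichlet KL term and $h$ the Gaussian quadratic term.
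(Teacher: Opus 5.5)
Your proof is correct and follows the same idea as the paper. The paper writes $\inf_{\Delta} f$ as the iterated infimum $\inf_{x\in\cX}\inf_{y:(x,y)\in\Delta}$, substitutes $f=g+h$, and pulls the $y$-independent term $g(x)$ out of the inner infimum; your two inequalities prove that iterated-infimum identity directly from the definition, and your convention $\inf\emptyset=+\infty$ for empty fibers makes explicit a point the paper leaves implicit.
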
 
\begin{proof}
	\begin{eqnarray*}
		\inf_{(x,y) \in \Delta} f(x, y) &=& \inf_{x\in \cX} \inf_{y \in \cY \atop \text{s.t } (x, y) \in \Delta}f(x, y) \\
		&=& \inf_{x\in \cX} \inf_{y \in \cY \atop \text{s.t } (x, y) \in \Delta}\lsb g(x) + h(y) \rsb \\
		&=& \inf_{x\in \cX} \lsb  g(x) +  \inf_{y \in \cY \atop \text{s.t } (x, y) \in \Delta} h(y) \rsb. 
	\end{eqnarray*}
\end{proof}

\begin{lemma}
\label{lem:dev-induc}
Let $\mdp,\mdq\in\Mdp$ satisfy $\|\mdp-\mdq\|_1\leq\gamma$.
Then for any $(s,a,h)\in\mathcal{S}\times\mathcal{A}\times[H]$ and any 
$\pi\in\Pi_{\mathrm{det}}$,
\[
\bigl|\tilde{Q}_h^\pi(s,a) - Q_h^\pi(s,a)\bigr| \leq (H-h+1)\,\gamma.
\]
\end{lemma}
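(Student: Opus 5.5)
The plan is a backward induction on $h$ that keeps the error at each stage separate, followed by a summation that uses the fact that $\Vert\cdot\Vert_1$ on $\Mdp$ is a \emph{global} $\ell_1$ norm. Because it sums the reward and transition discrepancies over all $(s,a,h)$, the per-stage errors draw on one shared budget $\gamma$. The $H$ factor from the value range will therefore multiply only the transition part of the budget, not $\gamma$ once per stage.

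\textbf{Step 1 (notation and one-step recursion).} Write $\mdp=(\mu,p)$ and $\mdq=(\tilde\mu,\tilde p)$. For $h\in[H]$ set
\[
\gamma^\mu_h \ceq \max_{s,a}\lvert \tilde\mu(s,a,h)-\mu(s,a,h)\rvert,\qquad \gamma^p_h \ceq \max_{s,a}\Vert \tilde p(\cdot\mid s,a,h)-p(\cdot\mid s,a,h)\Vert_1 .
\]
A maximum is at most the corresponding sum, so $\sum_h \gamma^\mu_h+\sum_h\gamma^p_h\leq \Vert\mdp-\mdq\Vert_1\leq\gamma$. Fix $\pi\in\Pi_\mathrm{det}$ and let $D_h\ceq \max_{s,a}\lvert\tilde Q_h^\pi(s,a)-Q_h^\pi(s,a)\rvert$, with $D_{H+1}=0$. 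From the Bellman equations $Q_h^\pi=\mu_h+p_hV_{h+1}^\pi$, and the same for $\mdq$, we have
\[
\tilde Q_h^\pi-Q_h^\pi=(\tilde\mu_h-\mu_h)+(\tilde p_h-p_h)\tilde V_{h+1}^\pi+p_h(\tilde V_{h+1}^\pi-V_{h+1}^\pi).
\]
The last term is bounded by $D_{h+1}$, because $\pi$ is deterministic and hence $\lvert\tilde V^\pi_{h+1}(s)-V^\pi_{h+1}(s)\rvert\leq D_{h+1}$.

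\textbf{Step 2 (centering the transition term).} Since $\tilde p_h-p_h$ sums to zero, we may subtract any constant $c$ from $\tilde V^\pi_{h+1}$ without changing the product. Taking $c$ equal to the midpoint of the range of $\tilde V^\pi_{h+1}$ gives
\[
\bigl\lvert(\tilde p_h-p_h)\tilde V^\pi_{h+1}\bigr\rvert\leq \tfrac12\Vert\tilde p_h-p_h\Vert_1\cdot\mathrm{span}(\tilde V^\pi_{h+1})\leq \tfrac{H-h}{2}\,\gamma^p_h .
\]
The span bound holds because rewards lie in $(0,1)$, so $\tilde V^\pi_{h+1}\in[0,H-h]$. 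Hence
\[
D_h\leq \gamma^\mu_h+\tfrac{H-h}{2}\gamma^p_h+D_{h+1}.
\]

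\textbf{Step 3 (unrolling against the shared budget).} Unrolling from $H$ down to $h$ and using $H-k\leq H-h$ for $k\geq h$ yields
\[
D_h\leq \sum_{k\geq h}\gamma^\mu_k+\tfrac{H-h}{2}\sum_{k\geq h}\gamma^p_k\leq \max\!\Bigl(1,\tfrac{H-h}{2}\Bigr)\Bigl(\sum_k\gamma^\mu_k+\sum_k\gamma^p_k\Bigr)\leq (H-h+1)\,\gamma .
\]
This is the claimed bound for every $(s,a)$.

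\textbf{Main obstacle.} The delicate point is Step 3. If instead one bounded each stage's error by $\gamma$ separately, the $(H-h)$ factor would compound into a factor $(H-h+1)^2$. The fix is to sum the per-stage maxima before invoking the single global budget, so the range factor applies once to $\sum_k\gamma^p_k$. The centering in Step 2 is also needed, because it halves the constant and keeps the final factor at most $H-h+1$.
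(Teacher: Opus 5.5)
Your proof is correct and is essentially the paper's argument. The paper unrolls the same one-step Bellman decomposition through the value-difference lemma, written as an expectation over trajectories of $\mdp$ rather than as your sup-norm recursion. It then bounds each occupancy probability by $1$ and charges the per-stage errors to the single global budget $\Vert\mdp-\mdq\Vert_1$, exactly as your stage-wise maxima do.

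One correction to your closing remark: the centering in Step~2 is not needed. The crude bound $\Vert\tilde V^\pi_{h+1}\Vert_\infty\leq H-h$, which is what the paper uses, already gives $D_h\leq\max(1,H-h)\,\gamma\leq(H-h+1)\,\gamma$. Centering only sharpens the constant.
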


\begin{proof}
Conditioning on $s_h=s$, $a_h=a$ and following $\pi$ thereafter,
the value difference lemma (Lemma~\ref{lem:val-diff-lem}) applied 
between $\mdp$ and $\mdq$ gives
\[
Q_h^\pi(s,a) - \tilde{Q}_h^\pi(s,a)
  = \bE_\mdp^\pi\!\left[
    \sum_{k=h}^H \delta_k(s_k,a_k)
    \,\middle|\, s_h=s,\,a_h=a
  \right],
\]
where
\[
\delta_k(s,a) \ceq
  \bigl(\mu(s,a, k)-\tilde\mu(s,a, k)\bigr)
  +\bigl(p_k(\cdot\mid s,a)-\tilde{p}_k(\cdot\mid s,a)\bigr)^\top
   \tilde{V}_{k+1}^\pi(\cdot).
\]
Since rewards lie in $(0,1)$, $\|\tilde{V}_{k+1}^\pi\|_\infty\leq H-k$,
so
\[
|\delta_k(s,a)|
  \leq |\mu(s,a, k)-\tilde\mu(s,a, k)|
  +(H-k)\|p_k(\cdot\mid s,a)-\tilde{p}_k(\cdot\mid s,a)\|_1
  \leq (H-k+1)\,\gamma_{s,a,k},
\]
where $\gamma_{s,a,k}\ceq|\mu(s,a, k)-\tilde\mu(s,a, k)|
+\|p_k(\cdot\mid s,a)-\tilde{p}_k(\cdot\mid s,a)\|_1$.
Therefore,
\begin{align*}
\bigl|Q_h^\pi(s,a)-\tilde{Q}_h^\pi(s,a)\bigr|
  &\leq \sum_{k=h}^H
    \bE_\mdp^\pi\!\left[|\delta_k(s_k,a_k)|\,\middle|\,
    s_h=s,\,a_h=a\right] \\
  &\leq \sum_{s',a'} \sum_{k=h}^H (H-k+1)
    \gamma_{s',a',k} \\
  &\leq (H-h+1)\,\|\mdp-\mdq\|_1 \\
  &\leq (H-h+1)\,\gamma.
\end{align*}
\end{proof}

\medskip
The purpose of the following lemma is to show that if $\mdq \in \alt(\mdp)$ does not have enough enclosing volume around it, then it is a difficult instance, i.e., the sup-optimality gaps are small. 

\begin{lemma}
\label{lem:non-isolated}
Let $\mdp \in \Mdp$ and let $\pi \in \Pi_\text{det}$ be its unique globally optimal policy.  
Denote by $\Delta^\pi(\mdp)$ the smallest value gap in $\mdp$ for policy $\pi$, and let $$ A^\pi \;\coloneqq\; \bigl\{\mdp' \in \Mdp : \pi \text{ is globally optimal in } \mdp'\bigr\}.$$
If the $\ell_1$-ball of radius $\tfrac{\delta}{2(H+1)}$ around $\mdp$ is not entirely contained in $A^\pi$, i.e.,
$$ \left\{\, \mdp' \in \Mdp : \left\| \mdp - \mdp'\right\|_1 \leq \frac{\delta}{2(H+1)} \,\right\} \;\not\C\; A^\pi,$$
then the smallest gap of $\mdp$ must satisfy $$ \Delta^\pi(\mdp) \;\leq\; \delta\:.$$ 
\end{lemma}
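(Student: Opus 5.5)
We argue by contraposition: if $\Delta^\pi(\mdp) > \delta$, then every $\mdp'$ with $\|\mdp - \mdp'\|_1 \leq \frac{\delta}{2(H+1)}$ lies in $A^\pi$. Since $\pi$ is deterministic, global optimality of $\pi$ in $\mdp'$ reduces to a finite set of inequalities. For every $(s,h)$ and every $a \neq \pi_h(s)$ we need
\[
Q'^{\star}_h(s,\pi_h(s)) \geq Q'^{\star}_h(s,a),
\]
where $Q'^{\star}$ is the optimal $Q$-function of $\mdp'$. It is enough to prove the stronger policy-evaluation statement $Q'^{\pi}_h(s,\pi_h(s)) \geq Q'^{\pi}_h(s,a)$ for all $(s,a,h)$. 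By the standard policy-improvement argument (backward induction on $h$), this makes $\pi$ greedy with respect to its own $Q$-function in $\mdp'$, and hence $Q'^{\pi} = Q'^{\star}$, so $\pi$ is globally optimal in $\mdp'$.

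The key quantitative input is Lemma~\ref{lem:dev-induc}. Applied to the pair $\mdp, \mdp'$ with $\gamma = \frac{\delta}{2(H+1)}$, it gives, for every $(s,a,h)$,
\[
\bigl|Q'^{\pi}_h(s,a) - Q^{\pi}_h(s,a)\bigr| \leq (H-h+1)\gamma \leq H\gamma < \frac{\delta}{2}.
\]
Since $\pi$ is the unique globally optimal policy of $\mdp$, we have $Q^\pi = Q^\star$. The gap then gives $Q^\pi_h(s,\pi_h(s)) - Q^\pi_h(s,a) = \Delta^\pi_h(s,a) \geq \Delta^\pi(\mdp) > \delta$ for $a \neq \pi_h(s)$. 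The triangle inequality yields
\[
Q'^{\pi}_h(s,\pi_h(s)) - Q'^{\pi}_h(s,a) > \delta - 2 \cdot \frac{\delta}{2} = 0 .
\]
Hence $\pi$ is strictly greedy with respect to $Q'^\pi$ at every $(s,h)$, so $\mdp' \in A^\pi$. This shows the ball is contained in $A^\pi$, which is the desired contradiction.

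Two things need care. First, the constant: Lemma~\ref{lem:dev-induc} gives $(H-h+1)\gamma \leq H\gamma = \frac{H\delta}{2(H+1)} < \frac{\delta}{2}$, so the radius $\frac{\delta}{2(H+1)}$ is exactly what is needed for the two-sided error to stay below $\delta$. Second, the boundary case: an $\mdp'$ on the boundary of $\Mdp$ (for example with zero transitions) could put absolute continuity into question. However, $A^\pi$ only asks for global optimality, not absolute continuity, so no extra condition is needed.

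The main obstacle is the passage from "greedy with respect to $Q'^\pi$" to "globally optimal". I would handle it by backward induction on $h$. At $h = H$, $Q'^\pi_H = Q'^\star_H$ because both equal the immediate reward. Assuming $V'^\pi_{h+1} = V'^\star_{h+1}$, we get $Q'^\pi_h = Q'^\star_h$. Greediness then gives $V'^\pi_h(s) = \max_a Q'^\star_h(s,a) = V'^\star_h(s)$, which closes the induction.
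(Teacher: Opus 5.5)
Your proposal is correct and takes essentially the same route as the paper. Both argue by contraposition and control $|\widetilde Q^\pi_h - Q^\pi_h|$ on the $\ell_1$-ball via the value-difference lemma. You invoke Lemma~\ref{lem:dev-induc} directly, which gives the constant more cleanly than the paper's inline recursion. Both then conclude that $\pi$ remains strictly greedy with respect to its own $Q$-function, and hence globally optimal by backward induction.
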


\begin{proof} 
Fix $\delta, \delta' \geq 0$.  
Assume that $\Delta^\pi(\mdp) > \delta'$, and let $\mdp' \in \Mdp$ satisfy $\| \mdp - \mdp'\|_1 \leq \delta$.  

Denote by $\wt Q_h$ and $\wt V_h$ the state--action and state value functions in $\mdp'$, and by $Q_h$ and $V_h$ their counterparts in $\mdp$. 

Then, for any action $a \neq \pi_h(s)$, we have
\begin{eqnarray*}
	\wt Q_{h}^\pi(s,\pi_h(s)) - \wt Q_h^\pi(s,a) &=& Q_h^\pi(s,\pi_h(s)) - Q_h^\pi(s,a)  + \lp \wt V_h^\pi(s)  - V_h^\pi(s)\rp + \lp Q_h^\pi(s,a) - \wt Q_h^\pi(s,a)\rp \\
	&\geq& \delta' + \lp \wt V_h^\pi(s)  - V_h^\pi(s)\rp + \lp Q_h^\pi(s,a) - \wt Q_h^\pi(s,a)\rp \\
	&\geq& \delta' - \delta - \lvert \wt V_h^\pi(s)  - V_h^\pi(s) \rvert  - \lvert \wt V_{h+1}^\pi(s)  - V_{h+1}^\pi(s) \rvert \\
	&\geq& \delta' -(2H-2h +2)\delta, 
\end{eqnarray*}	
where the last inequality follows by value  difference lemma  as $\| \mdp - \mdp'\|_1 \leq \delta$. Thus, it suffices to take $$\delta' \ceq  2(H+1)\delta $$ to guarantee that $\pi$ is the unique globally optimal policy in the MDP $\mdp'$. Indeed, for any other policy $\rho$, by induction, we have at step $h$, 
\begin{eqnarray*}
	V_h^\rho (s) &=& \mu(s, \rho_h(s), h) + p_hV^\rho_{h+1}(s, \rho_h(s)) \\
	&<& \mu(s, \rho_h(s), h) + p_hV^\pi_{h+1}(s, \rho_h(s)) \\
	&<& \mu(s, \pi_h(s), h) + p_hV^\pi_{h+1}(s, \pi_h(s)) = V_h^\pi(s), 
\end{eqnarray*}
which concludes the proof. 

\end{proof}

\bigskip \noindent

The goal of the next lemma is to show that from an instance $\mdp$, one can pick a policy $\pi \in \Pi^\star_\text{det}(\mdp)$ and build an easier instance $\mdp'$ (with larger value gaps) close to $\mdp$ and such that $\pi$ is still optimal in $\mdp'$. This will be used to show that if an instance $\mdp$ lies on the boundary of $\alt(\mdq)$ (so a difficult instance with nearly zero gap) of another instance $\mdq$, one can build an easier instance $\mdp'$ close to $\mdp$, which is still in $\alt(\mdq)$, showing that there is enough volume around any point in the interior of the alternative set of an MDP. 

\bigskip \noindent

\begin{lemma}
\label{lem:build_easier}
Let $\mdp\in\Mdp $ and let $\pi\in\Pi_{\mathrm{det}}$ be a globally 
optimal policy for $\mdp$. For any 
$\varepsilon\leq\frac{1}{3(1+2^{H-1})}$, there exists an MDP 
$\tilde\mdp\ceq (\tilde\mu,\tilde{p})$ and coefficients $(\gamma_{s,a,h})_{s,a,h}\in[0,1]$ such that for all $(s,a,h)$,
\begin{equation}
\label{eq:tilde-mdp-def}
\tilde{p}_h(\cdot\mid s,a) \ceq
  (1-\gamma_{s,a,h})p_h(\cdot\mid s,a)
  +\gamma_{s,a,h}\,p_h(\cdot\mid s,\pi_h(s)),
\qquad
|\mu(s,a, h)-\tilde\mu(s,a, h)|\leq d_{s,a,h}\ceq 2^{H-h+1}\veps,
\end{equation}
with
\[
\gamma_{s,a,h}\in\left[0,\;
  \frac{(1+2^{H-h})\varepsilon}{1-(2^{H-h+1}+3)\veps}
\right].
\]
Moreover, $\pi$ is strongly globally optimal in $\tilde\mdp$: for all 
$(s,h)$,
\[
\tilde{Q}_h^\pi(s,\pi_h(s)) - \max_{a\neq\pi_h(s)}\tilde{Q}_h^\pi(s,a)
  \geq \varepsilon.
\]
\end{lemma}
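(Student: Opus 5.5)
We construct $\tilde\mdp$ by backward induction over $h=H,H-1,\dots,1$, keeping the next-stage values under $\pi$ fixed before modifying stage $h$. At stage $h$ only two things change: the transitions of the suboptimal actions are pulled towards the transition of $\pi_h(s)$, and the rewards are perturbed. The induction hypothesis at stage $h$ is that $\pi$ has gap at least $\veps$ at all stages $k>h$ in the partially modified MDP. It also records a bound on how far the modified values $\tilde V^\pi_{k}$ have drifted from $V^\pi_{k}$. We aim for a drift of order $(2^{H-k+1}-1)\veps$, which doubles at each stage; this is where the $2^{H-h+1}$ factors in $d_{s,a,h}$ and $\gamma_{s,a,h}$ come from.

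\textbf{Step at stage $h$.} Fix $(s,h)$ and write $a^\pi=\pi_h(s)$. For the optimal action, raise the reward by setting $\tilde\mu(s,a^\pi,h)=\mu(s,a^\pi,h)+d$ and leave its transition unchanged. For each $a\neq a^\pi$, lower the reward by $d$ and set
\[
\tilde p_h(\cdot\mid s,a)=(1-\gamma)p_h(\cdot\mid s,a)+\gamma\, p_h(\cdot\mid s,a^\pi).
\]
Both modifications stay inside $\Mdp$: the simplex is convex, and the reward constraint $\mu\in(0,1)$ will need a small clipping argument or a one-sided shift if $\mu$ is near the boundary. The new Q-gap is then
\[
\tilde Q_h(s,a^\pi)-\tilde Q_h(s,a)=\bigl[\mu(a^\pi)-\mu(a)+2d\bigr]+(1-\gamma)\bigl[p_h(\cdot\mid s,a^\pi)-p_h(\cdot\mid s,a)\bigr]^\top\tilde V^\pi_{h+1}.
\]
In the original MDP, global optimality of $\pi$ gives $\mu(a^\pi)-\mu(a)+[p(a^\pi)-p(a)]^\top V_{h+1}\ge 0$. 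Replacing $V_{h+1}$ by $\tilde V_{h+1}$ costs at most twice the drift, namely $2(2^{H-h}-1)\veps$, using $\|p-p'\|_1\le 2$ and a sup-norm bound on the drift. The factor $(1-\gamma)$ costs at most $\gamma H$. Choosing $d=2^{H-h+1}\veps$ and a small enough $\gamma$ therefore leaves a gap of at least $\veps$. Solving the resulting linear inequality for $\gamma$ should give an admissible interval with upper endpoint of the stated form $\frac{(1+2^{H-h})\veps}{1-(2^{H-h+1}+3)\veps}$. If $\gamma=0$ already works, we take it.

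\textbf{Drift update.} Since the transitions under $\pi$ itself are untouched, $\tilde V^\pi_h(s)-V^\pi_h(s)=d+p_h(\cdot\mid s,a^\pi)^\top(\tilde V_{h+1}-V_{h+1})$. This gives drift$_h\le 2^{H-h+1}\veps+$drift$_{h+1}$, which sums geometrically to $(2^{H-h+2})\veps$. The constants must be tuned so that the recursion is consistent with the value of $d$ at the next stage down. Strong global optimality of $\pi$ then follows as in the end of the proof of Lemma~\ref{lem:non-isolated}: a gap of at least $\veps$ at every $(s,h)$, combined with backward induction, shows that every other policy is strictly worse.

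\textbf{Main obstacle.} The hardest step is the bookkeeping that makes $\gamma_{s,a,h}$ land exactly in the claimed interval with denominator $1-(2^{H-h+1}+3)\veps$. This requires a careful normalisation of the values: bounding $[p(a^\pi)-p(a)]^\top\tilde V$ by the sup-norm drift rather than by $H$, and likely using the fact that the mixing step scales the existing gap contribution by $(1-\gamma)$. The condition $\veps\le\frac{1}{3(1+2^{H-1})}$ is exactly what keeps this denominator positive and $\gamma\le 1$. I would first try to derive the interval directly from the inequality above. If the reward perturbation alone suffices, I would set $\gamma=0$ as the fallback; the constant factors may need to be adjusted to match the statement.
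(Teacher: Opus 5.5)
\textbf{The reward boundary is the gap.} Your backward induction and your formula for the stage-$h$ gap are correct. If rewards were unconstrained, your reward-only construction with $\gamma_{s,a,h}=0$ would already work. The problem is the case you postpone to ``a small clipping argument.'' Rewards must stay in $(0,1)$, and that boundary case is exactly what the transition mixing in the statement is for.

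Take $(s,h)$ with $\mu(s,\pi_h(s),h)=1-\eta$, $\mu(s,a,h)=\eta$ for some $\eta\ll\varepsilon$, and a tie $Q^\pi_h(s,\pi_h(s))=Q^\pi_h(s,a)$. The tie only requires $a$'s continuation value to exceed that of $\pi_h(s)$ by $1-2\eta$. Any admissible reward change at stage $h$ then adds at most $2\eta$ to the gap. The drift term $\bigl[p_h(\cdot\mid s,\pi_h(s))-p_h(\cdot\mid s,a)\bigr]^\top(\tilde V^\pi_{h+1}-V^\pi_{h+1})$ need not help. Your later boosts of $\pi$'s rewards are clipped at $1$ unevenly, so $a$'s successors can gain of order $2^{H-h}\varepsilon$ more than $\pi_h(s)$'s successor. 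So with $\gamma=0$ the gap can fall below $\varepsilon$, even below zero, and no one-sided shift repairs it. Your fallback fails exactly where the lemma has content.

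\textbf{The missing idea: mixing is a gain here, not a cost.} Let $X$ denote $\bigl[p_h(\cdot\mid s,a)-p_h(\cdot\mid s,\pi_h(s))\bigr]^\top\tilde V^\pi_{h+1}$. Your bound ``the factor $(1-\gamma)$ costs at most $\gamma H$'' throws away the sign of $X$. Mixing adds $\gamma X$ to the gap, and in the clipped regime $X$ is large and positive.

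The paper's proof splits into three cases:
\begin{itemize}
\item the gap is already at least $\varepsilon$, and $\gamma=0$;
\item a reward shift of size $\Omega_{h+1}(\varepsilon)+\varepsilon$ is available, and again $\gamma=0$;
\item both rewards are clipped.
\end{itemize}
In the last case $\mu(s,\pi_h(s),h)-\mu(s,a,h)\ge 1-2(\Omega_{h+1}(\varepsilon)+\varepsilon)$. Since the gap is below $\varepsilon$, this forces $X\ge 1-\varepsilon-2(\Omega_{h+1}(\varepsilon)+\varepsilon)$.

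The drift is one-sided, $0\le\tilde V^\pi_{h+1}-V^\pi_{h+1}\le\Omega_{h+1}(\varepsilon)$, so replacing $V$ by $\tilde V$ costs only $\Omega_{h+1}(\varepsilon)$, not twice that. The gap after mixing is therefore at least $-\Omega_{h+1}(\varepsilon)+\gamma\bigl(1-\varepsilon-2(\Omega_{h+1}(\varepsilon)+\varepsilon)\bigr)$. Solving for $\gamma$ gives a value inside the stated interval. The denominator $1-(2^{H-h+1}+3)\varepsilon$ comes from this lower bound on $X$, not from a $\gamma H$ loss.

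\textbf{Your constants also need fixing.} Shift $\pi$'s reward by $\Omega_{h+1}(\varepsilon)+\varepsilon=2^{H-h}\varepsilon$ rather than $2^{H-h+1}\varepsilon$, so that the drift obeys $\Omega_h=2\Omega_{h+1}+\varepsilon$. With your choice of $d$ the drift roughly doubles. The $\gamma$ this argument then produces no longer fits the stated interval, and its denominator can become nonpositive when $\varepsilon$ is near $\frac{1}{3(1+2^{H-1})}$.
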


\begin{proof}
We construct $\tilde\mdp$ bottom-up from stage $H$ to stage $1$. Throughout,
$Q,V$ (resp. $\tilde{Q},\tilde{V}$) denote value functions in $\mdp$ 
(resp. $\tilde\mdp$), and we maintain the induction hypothesis
\begin{equation}
\label{eq:ind-hyp}
0 \leq \tilde{V}_h^\pi(s) - V_h^\pi(s) \leq \Omega_h(\varepsilon)
  \ceq (2^{H-h+1}-1)\varepsilon,
\qquad\forall\,s.
\end{equation}

\medskip\noindent
\textbf{Base case $h=H$.}
If $\mu(s,\pi_H(s), H)-\mu(s,a, H)\leq\varepsilon$, set
\begin{equation}
\label{eq:reward-update}
\tilde\mu(s,a, H) \ceq \max(0,\mu(s,a, H)-\varepsilon),
\qquad
\tilde\mu(s,\pi_H(s), H) \ceq \min(1,\mu(s,\pi_H(s), H)+\varepsilon),
\end{equation}
and $\gamma_{s,a,H}=0$. This ensures 
$\tilde{Q}_H^\pi(s,\pi_H(s))-\tilde{Q}_H^\pi(s,a)\geq\varepsilon$ 
and $0\leq\tilde{V}_H^\pi(s)-V_H^\pi(s)\leq\varepsilon=\Omega_H(\varepsilon)$,
so \eqref{eq:ind-hyp} holds at $h=H$.

\medskip\noindent
\textbf{Induction step.}
Assume \eqref{eq:ind-hyp} holds at stage $h+1$. At stage $h$, for each 
$(s,a)$ we consider three cases.

\medskip\noindent
\textit{Case 1.} If
\begin{equation}
\label{eq:case1}
\mu(s,\pi_h(s), h)-\mu(s,a, h)
  + p_h\tilde{V}_{h+1}^\pi(s,\pi_h(s))
  - p_h\tilde{V}_{h+1}^\pi(s,a) \geq \varepsilon,
\end{equation}
set $\tilde{p}_h(\cdot\mid s,a)\ceq p_h(\cdot\mid s,a)$ and 
$\tilde\mu(s,a, h)\ceq\mu(s,a, h)$, i.e., $\gamma_{s,a,h}=0$. The 
gap is already at least $\varepsilon$ without any perturbation.

\medskip\noindent
\textit{Case 2.} If \eqref{eq:case1} does not hold but
\begin{equation}
\label{eq:case2}
\tilde\mu(s,\pi_h(s), h)-\tilde\mu(s,a, h)
  \geq \mu(s,\pi_h(s), h)-\mu(s,a, h)+\Omega_{h+1}(\varepsilon)+\varepsilon,
\end{equation}
set $\gamma_{s,a,h}=0$. Shifting rewards alone is sufficient to 
guarantee the gap.

\medskip\noindent
\textit{Case 3.} Otherwise, neither \eqref{eq:case1} nor \eqref{eq:case2} 
holds. Since \eqref{eq:case2} does not hold, we have 
\begin{equation}
\label{eq:mu-bounds}
\mu(s,\pi_h(s), h)+\Omega_{h+1}(\varepsilon)+\varepsilon \geq 1
\quad\text{and}\quad
\mu(s,a, h)-\Omega_{h+1}(\varepsilon)-\varepsilon \leq 0,
\end{equation}

Indeed, if any of the statements above above didn't hold, we would have Eq.\eqref{eq:case2}. This further implies 
	\begin{equation}
	\label{eq:op-ol-pm}
	\mu_h(s,\pi_h(s))  - \mu_h(s,a)  \geq 1- 2\lp \Omega_{h+1}(\veps) + \veps\rp	, 
	\end{equation}
and since \eqref{eq:case1} does not hold either, we have 
\begin{equation}
\label{eq:V-diff-bound}
p_h\tilde{V}_{h+1}^\pi(s,\pi_h(s)) - p_h\tilde{V}_{h+1}^\pi(s,a)
  \leq \varepsilon + 2(\Omega_{h+1}(\varepsilon)+\varepsilon) - 1.
\end{equation}
By the mixture definition of $\tilde{p}_h$ and the induction hypothesis,
\begin{align*}
\tilde{Q}_h^\pi(s,\pi_h(s)) - \tilde{Q}_h^\pi(s,a)
  &\geq Q_h^\pi(s,\pi_h(s)) - Q_h^\pi(s,a) - \Omega_{h+1}(\varepsilon) \\
  &\quad + \gamma\bigl(
      p_h\tilde{V}_{h+1}^\pi(s,a)
      - p_h\tilde{V}_{h+1}^\pi(s,\pi_h(s))
    \bigr) \\
  &\geq -\Omega_{h+1}(\varepsilon)
    + \gamma\bigl(1-\varepsilon-2(\Omega_{h+1}(\varepsilon)+\varepsilon)\bigr),
\end{align*}
where the first inequality uses $\pi\in\Pi_{\mathrm{det}}^\star(\mdp)$ 
(so $Q_h^\pi(s,\pi_h(s))-Q_h^\pi(s,a)\geq 0$) and \eqref{eq:ind-hyp},
and the second uses \eqref{eq:V-diff-bound}. Setting
\begin{equation}
\label{eq:gamma-def}
\gamma_{s,a,h} \ceq
  \frac{\varepsilon+\Omega_{h+1}(\varepsilon)}
       {1-\varepsilon-2(\Omega_{h+1}(\varepsilon)+\varepsilon)}
  = \frac{(1+2^{H-h})\varepsilon}{1-(2^{H-h+1}+3)\varepsilon}
\end{equation}
gives $\tilde{Q}_h^\pi(s,\pi_h(s))-\tilde{Q}_h^\pi(s,a)\geq\varepsilon$.
The condition $\gamma_{s,a,h}\in[0,1]$ requires 
$1-(2^{H-h+1}+3)\varepsilon\geq(1+2^{H-h})\varepsilon$, which holds 
whenever $\varepsilon\leq\frac{1}{3+2^{H-h}+2^{H-h+1}}$. This is 
satisfied for all $h\in[H]$ if $\varepsilon\leq\frac{1}{3(1+2^{H-1})}$.

\medskip\noindent
\textbf{Verifying the induction hypothesis at stage $h$.}
Since $\tilde{p}_h(\cdot\mid s,\pi_h(s))=p_h(\cdot\mid s,\pi_h(s))$ 
and $\tilde\mu(s,\pi_h(s), h)\leq\mu(s,\pi_h(s), h)+\varepsilon$,
\begin{align*}
\tilde{V}_h^\pi(s)
  &= \tilde\mu(s,\pi_h(s),h) + p_h\tilde{V}_{h+1}^\pi(s,\pi_h(s)) \\
  &\leq \mu(s,\pi_h(s),) + \varepsilon
    + \Omega_{h+1}(\varepsilon) + p_h V_{h+1}^\pi(s,\pi_h(s)) \\
  &= V_h^\pi(s) + \underbrace{2\Omega_{h+1}(\varepsilon)+\varepsilon}_{
      =\,\Omega_h(\varepsilon)},
\end{align*}
and, it is simple to verify that $\tilde{V}_h^\pi(s)\geq V_h^\pi(s)$ since both rewards and 
transitions are translated in favour of $\pi$. This confirms \eqref{eq:ind-hyp} 
at stage $h$, with the closed form $\Omega_h(\varepsilon)=(2^{H-h+1}-1)\varepsilon$.

\medskip\noindent
From the update rule \eqref{eq:reward-update} and the definition of 
$\Omega_{h+1}$,
\[
|\tilde\mu(s,a, h)-\mu(s,a, h)|
  \leq \Omega_{h+1}(\varepsilon)+\varepsilon
  = 2^{H-h+1}\varepsilon
  = d_{s,a,h},
\]
which holds for all $(s,a,h)$, completing the proof.
\end{proof}

\begin{lemma}
\label{lem:convex_vol}
	Let an MDP $\mdp \in \Mdp$ with a unique stage-optimal policy $\pi \in \Pi_\text{det}$. If $\pi$ is a global optimal policy in $\mdp$ and $\Delta^\pi(\mdp)>0$, then for any $\gamma\in \bigl(0, \frac{\Delta}{6SAH(H+1)} \bigr)$, $\pi$ is a global optimal policy for any MDP in the set 
	$(1-\gamma) \mdp + \gamma \Mdp$. 
\end{lemma}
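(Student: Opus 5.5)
The approach is a perturbation argument combined with Lemma~\ref{lem:dev-induc}. Write $\Delta \ceq \Delta^\pi(\mdp) > 0$ for the minimal value gap of $\pi$ in $\mdp$. Fix an arbitrary $\mdp'' \in \Mdp$ and set $\mdp_\gamma \ceq (1-\gamma)\mdp + \gamma \mdp''$, with the convex combination taken separately in rewards and transitions. This is again an element of $\Mdp$, because both $(0,1)^{SAH}$ and $\triangle^{SAH}$ are convex.

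The first step is to bound the $\ell_1$ distance between the two models. We have $\|\mdp_\gamma - \mdp\|_1 = \gamma \|\mdp'' - \mdp\|_1$. Each reward coordinate contributes at most $1$ and each transition coordinate contributes at most $2$, so
\[
\|\mdp'' - \mdp\|_1 \leq 3SAH,
\qquad\text{hence}\qquad
\|\mdp_\gamma - \mdp\|_1 \leq 3SAH\gamma < \frac{\Delta}{2(H+1)}.
\]
The last inequality follows from $\gamma < \Delta/(6SAH(H+1))$.

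The second step checks that the gaps survive. Lemma~\ref{lem:dev-induc} applies to the deterministic policy $\pi$ with $\|\mdp - \mdp_\gamma\|_1 \leq \Delta/(2(H+1))$. It gives, for every $(s,a,h)$,
\[
\bigl|Q^\pi_{\gamma,h}(s,a) - Q^\pi_h(s,a)\bigr| \leq (H-h+1)\frac{\Delta}{2(H+1)} < \frac{\Delta}{2}.
\]
For any $a \neq \pi_h(s)$ we therefore get
\[
Q^\pi_{\gamma,h}(s,\pi_h(s)) - Q^\pi_{\gamma,h}(s,a) > \Delta - 2\cdot\frac{\Delta}{2} = 0.
\]
So in $\mdp_\gamma$, at every stage and state, the action $\pi_h(s)$ is the unique maximizer of $Q^\pi_{\gamma,h}(s,\cdot)$. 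Equivalently, the argument is the contrapositive of Lemma~\ref{lem:non-isolated} with $\delta = \Delta$: the ball of radius $\Delta/(2(H+1))$ around $\mdp$ lies inside $A^\pi$.

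The third step turns the statement about $Q^\pi$ into global optimality. Once $\pi$ is greedy with respect to its own $Q$-function at every $(s,h)$, it satisfies the Bellman optimality equations, and backward induction from $h=H$ gives $V^\pi_{\gamma,h} = V^\star_{\gamma,h}$. The induction is the same one written at the end of the proof of Lemma~\ref{lem:non-isolated}. Hence $\pi$ is globally optimal, and in fact strongly so, in every element of $(1-\gamma)\mdp + \gamma\Mdp$.

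I expect the main obstacle to be bookkeeping rather than substance. The constant $2(H+1)$ in the radius has to be matched against the factor $3SAH$ from the diameter of $\Mdp$ in $\ell_1$, so that the stated range of $\gamma$ is enough. A looser diameter bound, or a looser propagation constant in Lemma~\ref{lem:dev-induc}, might only give $\gamma < \Delta/(c\,SAH(H+1))$ with some $c > 6$. If that happens, I would tighten the estimate using the convex-combination structure: the deviation is exactly $\gamma$ times a bounded quantity in each coordinate, and that can be pushed through the value difference lemma directly instead of through the worst-case $\ell_1$ bound.
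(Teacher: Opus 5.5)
Your proposal is correct and follows essentially the same route as the paper. You bound $\|(1-\gamma)\mdp + \gamma\mdp'' - \mdp\|_1 \leq 3\gamma SAH < \Delta/(2(H+1))$ and then apply (the contrapositive of) Lemma~\ref{lem:non-isolated}, which you unpack via Lemma~\ref{lem:dev-induc} and the Bellman-optimality induction to get a clean strict inequality. Your closing worry about constants is unnecessary: the $3SAH$ diameter and the factor $(H-h+1)\leq H < H+1$ already fit the stated range of $\gamma$ exactly.
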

\begin{proof}
Let $\pi \in \Pi_\text{det}$ be a global optimal policy in $\mdp$.  
From Lemma~\ref{lem:non-isolated} we have for $\veps \leq \Delta$ and for any
$\mdp' \in \bB_1(\mdp, \frac{\veps}{2(H+1)})$, $\pi$ is an optimal policy in $\mdp'$.  Now it suffices to take $\gamma$ such that $(1 - \gamma)\mdp  + \gamma \Mdp \C \bB_1(\mdp ,  \gamma')$. For this, observe that for any $\mdp' \in \Mdp$ we have 
\begin{eqnarray*}
	\left\| \mdp - ((1-\gamma)\mdp + \gamma \mdp') \right\|_1  &=& \gamma \left \| \mdp - \mdp'  \right\|_1 \\
	&\leq&   2 \gamma SAH.
\end{eqnarray*}

Thus we have $(1 - \gamma)\mdp  + \gamma \Mdp \C \bB_1(\mdp ,  3\gamma SAH)$ and taking  $\gamma$ such that $3\gamma SAH \leq \frac{\Delta}{2(H+1)}$ and using Lemma~\ref{lem:non-isolated} concludes the proof. 
\end{proof}

The goal of the following lemma is to, given an MDP $\mdp$, generate $\mdp' \in \Mdp$ such that the transition probabilities in $\mdp'$ are larger than some threshold while $\mdp$ and $\mdp'$ still have the same optimal policy. 
\begin{lemma}
\label{lem:incr-trans} 
	Let an MDP $\mdp \ceq (\mu, p)$ with optimal policy $\pi$ such that its minimum policy gap $\Delta^\pi(\mdp) >0$. Let $u$ denote the uniform transition kernel and define for any $\gamma \in (0, 1)$ the MDP $\mdp_\gamma =(\mu, (1-\gamma)p+  \gamma u)$. For $\gamma\in (0, \Delta_\mdp(\pi)/(3H)$, $\pi$ is state-wise striclty optimal in $\mdp_\gamma$.  
\end{lemma}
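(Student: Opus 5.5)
The plan is to reduce the statement to a perturbation bound on $Q$-functions, in the same spirit as Lemma~\ref{lem:non-isolated}. The mixture $\mdp_\gamma$ changes only the transitions. For every $(s,a,h)$,
\[
\|p_h(\cdot\mid s,a)-((1-\gamma)p_h(\cdot\mid s,a)+\gamma u)\|_1=\gamma\|p_h(\cdot\mid s,a)-u\|_1\leq 2\gamma .
\]
The per-triplet transition deviation is therefore at most $2\gamma$, and the rewards are unchanged. I will work with this per-triplet deviation rather than the global $\ell_1$ norm used in Lemma~\ref{lem:dev-induc}. The global norm would introduce a factor $SAH$ that the threshold $\Delta/(3H)$ does not allow.

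\textbf{Step 1: value perturbation.} Fix the deterministic policy $\pi$. Write $V,Q$ for its value functions in $\mdp$ and $V^\gamma,Q^\gamma$ for those in $\mdp_\gamma$. Apply the value difference recursion of Lemma~\ref{lem:val-diff-lem}, evaluated from an arbitrary start $(s,h)$, for the fixed policy $\pi$. The reward terms vanish, and each transition term is bounded by
\[
|(p_k-p^\gamma_k)V^\gamma_{k+1}(s,a)|\leq \gamma\,\mathrm{osc}(V^\gamma_{k+1})\leq \gamma (H-k),
\]
using rewards in $(0,1)$. Summing along the trajectory from stage $h+1$ onward gives $|V^\gamma_{h+1}(s)-V_{h+1}(s)|\leq \gamma H^2/2$. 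For the one-step comparison I only need the weaker bound $|V^\gamma_{h+1}-V_{h+1}|_\infty\leq \gamma H(H-h)$.

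\textbf{Step 2: gap preservation.} For $a\neq\pi_h(s)$, split
\[
Q^\gamma_h(s,\pi_h(s))-Q^\gamma_h(s,a)=\bigl[Q_h(s,\pi_h(s))-Q_h(s,a)\bigr]+\bigl[(p^\gamma_h-p_h)V^\gamma_{h+1}\bigr](s,\pi_h(s),a)+\bigl[p_h(V^\gamma_{h+1}-V_{h+1})\bigr](s,\pi_h(s),a).
\]
The first bracket is at least $\Delta^\pi(\mdp)$. The key observation is that the mixture enters both actions identically. Hence
\[
(p^\gamma_h-p_h)V^\gamma_{h+1}(s,\pi_h(s))-(p^\gamma_h-p_h)V^\gamma_{h+1}(s,a)=-\gamma\bigl(p_hV^\gamma_{h+1}(s,\pi_h(s))-p_hV^\gamma_{h+1}(s,a)\bigr),
\]
because the $uV^\gamma_{h+1}$ terms cancel. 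This is at least $-\gamma H$. The last bracket is a difference of two averages of $V^\gamma_{h+1}-V_{h+1}$, so it is bounded by twice its sup norm.

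\textbf{Step 3: close by backward induction.} I then run backward induction on $h$ with the induction hypothesis that $\pi$ is greedy at stages $h+1,\dots,H$ in $\mdp_\gamma$. Under this hypothesis $V^\gamma_{h+1}$ is the optimal value there, so strict positivity of the gaps at every stage yields strict state-wise optimality of $\pi$, exactly as in the final induction of Lemma~\ref{lem:non-isolated}.

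\textbf{Main obstacle.} The hard step is making the constants match the stated threshold $\gamma<\Delta/(3H)$. The naive bounds above give a total loss of order $\gamma H^2$, not $3H\gamma$. I would first try a tighter accounting: the drift of $V^\gamma_{h+1}-V_{h+1}$ itself equals $\gamma$ times $(u-p)$ applied to future values. Combining this with the cancellation in Step 2 should give a loss of at most $\gamma(H+2\cdot H)=3H\gamma$, counting oscillations of $V^\gamma$ bounded by $H$ rather than $H-h$ summed. If that accounting fails, I would accept a threshold of the form $\Delta/(cH^2)$. That weaker threshold suffices downstream, since in Lemma~\ref{lem:make_inf_appear} the lemma is only invoked with $\gamma=1/t\to0$.
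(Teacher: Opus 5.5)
\textbf{There is a genuine gap, and it cannot be closed as stated.} Your Steps~1--2 use the same decomposition as the paper: the original gap plus transition-perturbation terms, with the $u$-part cancelling between the two actions. Both steps are correct. However, the obstacle you flag is real. The threshold $\gamma<\Delta/(3H)$ is false for large $H$, so no tighter accounting will produce a loss of $3H\gamma$.

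\textbf{Where the paper's proof breaks.} The paper reaches $\Delta-3H\gamma$ by bounding $\lvert\wt V_h^\pi(s)-V_h^\pi(s)\rvert$ and $\lvert p_h(V^\pi_{h+1}-\wt V^\pi_{h+1})(s,a)\rvert$ by $H\gamma$ each. Your Step~1 shows these bounds are not available. Moreover, the oscillation of $V^\gamma_{h+1}-V_{h+1}$ can genuinely be $\Theta(\gamma H^2)$.

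\textbf{Counterexample.} Take $H=2m+1$, two actions, and states $s_0,\mathsf A,\mathsf B,z_1,\dots,z_N$. At $(s_0,1)$, $\pi$'s action moves deterministically to $\mathsf A$ with reward $\tfrac12+\Delta$, and the other action moves to $\mathsf B$ with reward $\tfrac12$. Every other $(s,h)$ is absorbing under both actions, and the non-$\pi$ action pays $\tfrac14$ less. Those gaps therefore equal $\tfrac14$ and are unchanged by the mixing, since both actions share the same transition. Elsewhere $\pi$'s action pays $\tfrac13$, except that $\mathsf B$ pays $\tfrac23$ at stages $2,\dots,m+1$ and $\mathsf A$ pays $\tfrac23$ at stages $m+2,\dots,H$. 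Then $V_2(\mathsf A)=V_2(\mathsf B)$, so $\Delta^\pi(\mdp)=\Delta$ for $\Delta\le\tfrac14$. As $N\to\infty$, a uniform jump lands in a neutral sink, and
\[
\wt V_2(\mathsf A)-\wt V_2(\mathsf B)\;\to\;\tfrac13\Bigl[\textstyle\sum_{j=m}^{2m-1}(1-\gamma)^j-\sum_{j=0}^{m-1}(1-\gamma)^j\Bigr]=-\frac{\bigl(1-(1-\gamma)^m\bigr)^2}{3\gamma},
\]
which is at most $-\gamma m^2/12$ when $\gamma m\le 1$. The gap at $(s_0,1)$ in $\mdp_\gamma$ is $\Delta+(1-\gamma)\bigl(\wt V_2(\mathsf A)-\wt V_2(\mathsf B)\bigr)$. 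With $\gamma=\Delta/(4H)$ this is negative once $H$ is a few hundred and $N$ is large. Intuitively, back-loaded reward is fragile to early jumps, while front-loaded reward is not.

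\textbf{Your fallback is the correct statement.} Your own bounds prove it with an explicit constant. Write $D_k=V^\gamma_k-V_k$. The recursion $D_k=p_kD_{k+1}+\gamma(u-p_k)V^\gamma_{k+1}$ gives $\lVert D_{h+1}\rVert_\infty\le\gamma(H-h)(H-h-1)/2$. The gap at stage $h$ therefore shrinks by at most $\gamma(H-h)+\gamma(H-h)(H-h-1)=\gamma(H-h)^2$. Hence $\gamma<\Delta/H^2$ suffices, and your Step~3 finishes the argument.

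\textbf{Correction to your downstream remark.} In Lemma~\ref{lem:make_inf_appear}, the MDP fed to this lemma comes from Lemma~\ref{lem:build_easier} with $\veps=1/t$. Its gap is therefore only guaranteed to be at least $1/t$, and $\gamma=1/t$ violates both the paper's condition and yours. Take $\gamma=1/(2H^2t)$ instead. The extra cost $\gamma/(1-\gamma)$ is still $O(1/t)$, and the floor $\gamma/S\ge \myexp{-t^\alpha}$ still holds for large $t$, so the downstream argument goes through.
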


\begin{proof}
Let $\gamma$ be fixed and simplify notation by letting $\mdq = \mdp_\gamma$, $\Delta_\mdp(\pi) = \Delta$, and define $\wt Q, \wt V$ the $Q$-function and value function in $\wt\mdp$. We have for any $a \neq \pi_h(s)$  
	\begin{eqnarray*}
		\wt Q_h^\pi(s,\pi_h(s)) - \wt Q_h^\pi(s, a )  &\geq& \Delta + \wt Q_h^\pi(s,\pi_h(s)) - Q_h^\pi(s, \pi_h(s)) + Q_h^\pi(s,a) - \wt Q_h^\pi(s,a) \\
		&=&  \Delta + \wt V_h^\pi(s) - V_h^\pi(s) + p_hV_{h+1}^\pi(s, a) - \tilde p_h\wt V_{h+1}^\pi(s,a ) \\
		&=& \Delta + \wt V_h^\pi(s) - V_h^\pi(s) + p_hV_{h+1}^\pi(s, a) - p_h\wt V_{h+1}^\pi(s,a ) + \gamma ( p_h - u)\wt V_{h+1}^\pi(s,a) \\
		&\geq& \Delta - 3H \gamma 
	\end{eqnarray*}
	so it suffices for $\gamma$ to be smaller than $\Delta/(3H)$ to guarantee the claimed property. 
\end{proof}

\bigskip 
The next lemma is proven in \citet{pmlr-v139-menard21a}. We restate it for completeness. 
\begin{lemma}
\label{lem:elliptic-potential}
	For a sequence $(u_t)_{t\geq 1} \in (0,1)^\bN$ and $U_t \ceq \sum_{l=1}^t u_l$ we have 
	$$ \sum_{t=1}^T \frac{u_{t+1}}{U_t \lor 1} \leq 4 \log(U_{T+1} +1).$$
\end{lemma}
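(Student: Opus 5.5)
The statement to prove is the last one shown, Lemma~\ref{lem:elliptic-potential}: for $u_t\in(0,1)$ and $U_t=\sum_{l\le t}u_l$,
\[
\sum_{t=1}^T \frac{u_{t+1}}{U_t\lor 1}\le 4\log(U_{T+1}+1).
\]
The plan is to compare each summand with an increment of $\log(1+U)$ and then telescope.

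The key elementary fact is that, since $u_{t+1}\le 1$, we have
\[
U_t\lor 1\ \ge\ \tfrac12(U_t+1)\ \ge\ \tfrac14(U_t+u_{t+1}+1)=\tfrac14(U_{t+1}+1).
\]
The first inequality holds because $\max(a,1)\ge (a+1)/2$. The second holds because $U_t+1\ge \tfrac12(U_t+1+u_{t+1})$ whenever $u_{t+1}\le U_t+1$, which is true since $u_{t+1}\le 1$. From this,
\[
\frac{u_{t+1}}{U_t\lor 1}\ \le\ 4\,\frac{u_{t+1}}{1+U_{t+1}}.
\]

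Next I use $x\le -\log(1-x)$ for $x\in[0,1)$, applied with $x=u_{t+1}/(1+U_{t+1})$. This is legitimate because $x<1$. Note that $1-x=(1+U_t)/(1+U_{t+1})$. Hence
\[
\frac{u_{t+1}}{1+U_{t+1}}\ \le\ \log(1+U_{t+1})-\log(1+U_t).
\]

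Summing from $t=1$ to $T$ telescopes to $4\bigl[\log(1+U_{T+1})-\log(1+U_1)\bigr]\le 4\log(U_{T+1}+1)$, using $U_1\ge 0$.

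The only delicate step is the first comparison. It requires both the convention $\lor 1$ and the bound $u_{t+1}\le 1$, which together keep the denominator $U_t\lor 1$ from being too small relative to $U_{t+1}$. Everything after that is routine.
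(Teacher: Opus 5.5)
Your proof is correct: the bound $U_t\lor 1\ge \tfrac14(1+U_{t+1})$ (which uses $u_{t+1}\le 1$), followed by $\frac{u_{t+1}}{1+U_{t+1}}\le \log(1+U_{t+1})-\log(1+U_t)$ and telescoping, gives exactly the constant $4$. The paper does not prove this lemma itself but defers to \citet{pmlr-v139-menard21a}, and your argument is the standard proof of this bound; your inequality $x\le-\log(1-x)$ is the same as the comparison $\frac{U_{t+1}-U_t}{1+U_{t+1}}\le\int_{U_t}^{U_{t+1}}\frac{\mathrm{d}x}{1+x}$.
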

 
\section{BEYOND EXACT POLICY IDENTIFICATION}
\label{sec:non_exact}
\label{appx:extension}

In this section, we discuss the extension of our algorithm to the setting with $(\veps, \delta)$-PAC policy identification with $\veps >0$. In this case, we let $\Pi_\veps^\star(\mdp)$ be the set of (Markov, deterministic and stochastic) policies $\pi$ such that 
$$ \max_{\pi'} V^{\pi'}_0 \leq V^\pi_0 + \veps \:.$$ The goal of the agent is then to identify a policy $\pi^\star \in \Pi_\veps^\star(\mdp)$  with high probability, i.e., given an error rate $\delta$, her final recommendation $\hat \pi^\tau$ at stopping time $\tau$ should satisfy $$ \bP_\mdp(\tau <\infty, \hat\pi_\tau \notin \Pi_\veps^\star(\mdp)) \leq \delta.$$ 

\citet{almarjani2023instanceoptimalityonlinepacreinforcement} derived a lower bound for this setting, which we state below. To introduce this lower bound, we introduce the following notation. For a policy $\pi$ and given $\veps\geq 0$ let $\alt_\veps^\pi(\mdp) \ceq \Bigl \{  \mdp' : \mdp \ll \mdp' \text{ and } \pi \notin \Pi^\star_\veps(\mdp') \Bigr\}$. In words, it is the set of $\mdp'$, for which $\mdp\ll \mdp'$ and where the $\pi$ is not a near-optimal policy. 

\begin{theorem}
Fix an $\mdp \in \Mdp$. For any adaptive stopping time $\tau$ for $(\veps, \delta)$-PAC policy identification 
	$$ \liminf_{\delta \to 0} \frac{\bE_\mdp[\tau]}{ \log \frac 1\delta} \geq \Gamma_\veps(\mdp) \:, $$ where 
	$$ \Gamma_\veps^{-1} (\mdp)\ceq \max_{\pi \in \Pi_\veps^\star(\mdp)} \max_{w \in \Omega_\mdp} \inf_{\mdp' \in \alt_\veps^\pi(\mdp)} \lsb \sum_{s,a,h} w_h(s,a) \KL(\mdp, \mdp')_{s,a,h} \rsb\:.$$
\end{theorem}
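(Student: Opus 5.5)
Note first that the displayed quantity is written as $\Gamma_\veps^{-1}(\mdp)$ while the conclusion is $\liminf \bE_\mdp[\tau]/\log\frac1\delta \geq \Gamma_\veps(\mdp)$. These are consistent: the right-hand side of the lower bound is the inverse of the max–max–inf information rate, exactly as in Theorem~\ref{thm:lbd-sc}. I would follow that proof, adding one step that pins down which near-optimal policy the algorithm effectively commits to.

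\textbf{Step 1: reduce to a single recommended policy.} For $\veps>0$ the set $\Pi_\veps^\star(\mdp)$ is infinite, since stochastic policies are allowed. The recommendation $\hat\pi_\tau$ is random and need not concentrate on a single policy. I would therefore use the $\delta$-PAC property through a change of measure applied to events of the form $\{\hat\pi_\tau \in \Pi_\veps^\star(\mdp)\}$, which has probability at least $1-\delta$ under $\mdp$. For any $\mdp'$ with $\mdp\ll\mdp'$ and $\Pi_\veps^\star(\mdp)\cap\Pi_\veps^\star(\mdp')=\emptyset$, this event has probability at most $\delta$ under $\mdp'$. The data-processing inequality with $\mathrm{kl}(1-\delta,\delta)\geq\log\frac1{2.4\delta}$ then gives $\sum_{s,a,h}\bE_\mdp[n_{\tau+1}(s,a,h)]\KL(\mdp\|\mdp')_{s,a,h}\geq\log\frac1{2.4\delta}$, exactly as in Lemma~\ref{lem:lower_bound}.

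\textbf{Step 2: pass to visitation frequencies.} Writing $\bE_\mdp[n_{\tau+1}]=\bE_\mdp[\tau]\,w$, I would show $w\in\Omega_\mdp$ by the Carathéodory argument used in the proof of Theorem~\ref{thm:lbd-sc}. This yields $\bE_\mdp[\tau]\sup_{w}\inf_{\mdp'}\sum w\KL\geq\log\frac1{2.4\delta}$, where the infimum ranges over the alternative set from Step~1.

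\textbf{Step 3: compare alternative sets.} This is the main obstacle. The set in Step~1 is $\bigcap_{\pi\in\Pi_\veps^\star(\mdp)}\alt_\veps^\pi(\mdp)$, which is \emph{smaller} than each $\alt_\veps^\pi(\mdp)$. Its infimum is therefore at least $\max_\pi\inf_{\alt_\veps^\pi}$, which is the wrong direction for the claimed bound.

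To repair this I would refine Step~1 per policy. Partition the possible outcomes of $\hat\pi_\tau$ into cells. On the event that the recommendation lands in a cell around a fixed $\pi$, compare against $\mdp'\in\alt_\veps^\pi(\mdp)$, under which that event must have probability at most $\delta$ up to a continuity slack. Take $\pi$ to be the cell of largest $\mdp$-probability, which is at least $(1-\delta)/N$ for a finite $\veps'$-net of $N$ cells. The kl bound then still gives $\log\frac1\delta-O(\log N)$.

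Letting $\delta\to0$ first and then shrinking the net, continuity of $\pi\mapsto\alt_\veps^\pi$ in the value $V_0^\pi$ yields
\[
\liminf_{\delta\to0}\frac{\bE_\mdp[\tau]}{\log\frac1\delta}\geq\Bigl(\max_{\pi\in\Pi^\star_\veps}\max_{w}\inf_{\alt_\veps^\pi}\textstyle\sum w\KL\Bigr)^{-1}.
\]
I expect the uniformity of this continuity argument near the boundary $V_0^\pi=\max V_0-\veps$ to be the delicate point. Following \citet{almarjani2023instanceoptimalityonlinepacreinforcement}, I would handle it by restricting to policies that are strictly $\veps$-optimal and taking limits.
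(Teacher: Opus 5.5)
The paper gives no proof of this statement; it is quoted from \citet{almarjani2023instanceoptimalityonlinepacreinforcement}. Your argument therefore has to stand on its own. Steps~1--2 are correct and, as you note, only give the weaker bound with $\bigcap_{\pi\in\Pi^\star_\veps(\mdp)}\alt^\pi_\veps(\mdp)$.

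The genuine gap is in Step~3, at ``the kl bound then still gives $\log\frac1\delta-O(\log N)$''. For an event $E$ with $\bP_\mdp(E)=p$ and $\bP_{\mdp'}(E)\le\delta$, the data-processing inequality gives
\[
\sum_{x}\bE_\mdp\bigl[n_{\tau+1}(x)\bigr]\KL(\mdp\|\mdp')_x\;\geq\;\mathrm{kl}(p,\delta)\;=\;p\log\frac1\delta+O(1),
\]
so you have dropped the factor $p$. Let $\Lambda_\tau$ be the log-likelihood ratio of the data up to $\tau$. By Jensen, $\log\frac p\delta=\log\frac1\delta-\log\frac1p$ lower-bounds only the \emph{conditional} quantity $\bE_\mdp[\Lambda_\tau\mid E]$. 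Turning a log-likelihood ratio into ``counts times KL'' uses Wald's identity, which holds for $\bE_\mdp[\Lambda_\tau]$ but not for $\bE_\mdp[\Lambda_\tau\mathbf{1}_E]$ when $E$ depends on the observations. With $p\approx 1/N$ you only obtain $\bE_\mdp[\tau]\gtrsim\frac1N\Gamma_\veps(\mdp)\log\frac1\delta$. This becomes vacuous as the net is refined, and is already lossy for the finite set of deterministic answers. Any argument that only sees the averaged allocation $w=\bE_\mdp[n_{\tau+1}]/\bE_\mdp[\tau]$ is structurally too weak. An algorithm may randomize over which $\veps$-optimal policy it verifies, so $w$ mixes several verification allocations and need not be informative against $\alt^\pi_\veps(\mdp)$ for any single $\pi$.

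What is missing is a localized, high-probability change of measure that controls the counts on each event $E_i=\{\hat\pi_\tau\in C_i\}$ separately. Keep $N$ fixed and define
$T_i^{-1}\ceq\max_{w\in\Omega_\mdp}\inf\bigl\{\sum_x w_x\KL(\mdp\|\mdp')_x:\mdp'\in\bigcap_{\pi'\in C_i}\alt^{\pi'}_\veps(\mdp)\bigr\}$. The goal is to show $\bP_\mdp\bigl(E_i,\ \tau\le(1-\epsilon)T_i\log\frac1\delta\bigr)\to0$ as $\delta\to0$.

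\begin{enumerate}
\item For a fixed such $\mdp'$, $\delta$-correctness and a change of measure give $\bP_\mdp\bigl(E_i\cap\{\Lambda_\tau(\mdp')\le(1-\frac\epsilon2)\log\frac1\delta\}\bigr)\le\delta^{\epsilon/2}$.
\item On $\{\tau\le(1-\epsilon)T_i\log\frac1\delta\}$, a maximal inequality shows that, with probability tending to one, $\Lambda_\tau(\mdp')$ is within $o(\log\frac1\delta)$ of $\sum_x\bar n_\tau(x)\KL(\mdp\|\mdp')_x$. Here $\bar n_t=\sum_{k\le t}w^{\rho_k}$, so $\bar n_t/t\in\Omega_\mdp$. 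Restrict to alternatives whose transitions are bounded below on the support of $p$, so that the increments are sub-Gaussian.
\item Pick $\mdp'$ from a finite net according to the realized $\bar n_\tau/\tau$, and take a union bound over the net.
\end{enumerate}

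Summing over the $N$ cells gives $\bP_\mdp\bigl(\tau<(1-\epsilon)\min_iT_i\log\frac1\delta\bigr)\to0$, hence $\liminf_{\delta\to0}\bE_\mdp[\tau]/\log\frac1\delta\ge(1-\epsilon)\min_iT_i$. Only after taking this limit do you refine the cells, and that is where your continuity concern enters. You need near-optimal alternatives for $\pi_i$ with a positive margin. These remain alternatives for every $\pi'\in C_i$ because $\pi\mapsto V_0^\pi(\mdp')$ is Lipschitz uniformly in $\mdp'$; continuity ``in the value $V_0^\pi$'' of $\mdp$ is not what is needed.
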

Intuitively, the lower bound states that the optimal sample complexity should scale with the easiest to identify policy among the $\veps$-optimal set of policies. From previous works in bandits, asymptotically matching this bound often required "sticking procedures'' to stick to an answer to verify. Indeed, for $\veps=0$ and $\lvert \Pi_\veps^\star(\mdp)\rvert = 1$, sticking at each episode, the algorithm's sampling rule is meant to collect evidence for assessing the true optimality of its current best policy guess $\hat\pi_t$. For $\veps>0$, it is likely that at episode $t$, $\lvert \what \Pi_{t, \veps} \rvert >1$. The challenge is then to pick which answer in this set the sampling rule should aim to verify. 

Following the approach developed in Appendix~\ref{appx:posterior_convergence} it's simple to prove that 

\begin{theorem}
\label{thm:opt_post_contra_veps}
Fix an MDP $\mdp \in \Mdp$ and let $\nu_t$ denote the posterior distribution introduced in Eq.\eqref{eq:posterior_def}. For any adaptive algorithm for best policy identification, with probability one,  
$$
    \limsup_{t \to \infty} 
    -\frac{1}{t} 
    \log \lp \min_{\pi\in \Pi_\veps^\star(\mdp)}\bP_{\wt\mdp \sim \nu_t \mid \cH_{t-1}}\!\bigl(\pi \notin \Pi_\veps^\star(\mdq)\bigr) \rp
    \;\leq\; \Gamma_\veps(\mdp).
$$
\end{theorem}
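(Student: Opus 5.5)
The plan is to repeat the argument of Theorem~\ref{thm:opt_post_contra} for each fixed $\veps$-optimal policy, and then take the best one. The $\min$ inside the logarithm turns into a $\max$ of the exponential rates:
\[
-\log \min_{\pi} P_\pi = \max_{\pi} \bigl(-\log P_\pi\bigr),
\]
and the index set $\Pi_\veps^\star(\mdp)$ does not depend on $t$. So it suffices to show, for each fixed $\pi\in\Pi_\veps^\star(\mdp)$ and with probability one,
\[
\limsup_{t\to\infty} -\tfrac1t \log \bP_{\wt\mdp\sim\nu_t\mid\cH_{t-1}}\bigl(\pi\notin\Pi^\star_\veps(\wt\mdp)\bigr)
\;\le\; \sup_{w\in\Omega_\mdp}\inf_{\mdp'\in\alt^\pi_\veps(\mdp)}\sum_{s,a,h} w_h(s,a)\KL(\mdp\|\mdp')_{s,a,h}.
\]
I would then take the maximum over $\pi$ to recover $\Gamma_\veps(\mdp)$, read as the max--max--inf quantity in the preceding theorem. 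There is a subtlety here: $\Pi_\veps^\star(\mdp)$ may be infinite, because it contains stochastic policies. I would first reduce to finitely many policies, namely the deterministic ones and the vertices of the relevant polytope in occupancy space. Alternatively, I would note that the per-$\pi$ bound holds simultaneously for all $\pi$, since it comes from a single almost-sure event that does not depend on $\pi$.

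\textbf{Step 1: the lower bound on posterior mass.} For fixed $\pi$, I write
\[
\bP_{\nu_t}\bigl(\alt^\pi_\veps\bigr)=\Lambda_t^\pi\Big/\int_{\Mdp}\myexp{-\Upsilon_t},
\qquad \Lambda_t^\pi \ceq \int_{\alt^\pi_\veps}\myexp{-\Upsilon_t},
\]
where $\alt^\pi_\veps$ is the set of models in which $\pi$ is not $\veps$-optimal. This set does not depend on the data. Because $\Upsilon_t\ge 0$, the denominator is at most $\vol(\Mdp)$. I then need the analogue of Lemma~\ref{lem:utl_lower_bound} with $\eta=1$, that is,
\[
\log\Lambda_t^\pi \;\ge\; -\inf_{\alt^\pi_\veps}\Upsilon_t - o(t).
\]
The set $\{\mdp' : V^{\pi}_0(\mdp')<\max_{\pi'}V^{\pi'}_0(\mdp')-\veps\}$ is open, since values are continuous in the parameters. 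So any near-minimizer $\mdp'$ of $\Upsilon_t$ can be moved slightly into the interior: increase the gap by $1/t$ through a small reward shift toward the better policy, in the spirit of Lemma~\ref{lem:build_easier}. Then the convex-mixture trick $(1-\gamma)\mdp'+\gamma\,\alt^\pi_\veps$ from Lemma~\ref{lem:convex_vol} stays inside the set for $\gamma$ polynomially small in $t$, and the volume terms contribute only $O(\log t)=o(t)$.

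\textbf{Step 2: the upper bound on the GLR.} I would reproduce Proposition~\ref{prop:upbd-glr} with $\alt(\mdp)$ replaced by $\alt^\pi_\veps(\mdp)$. The ingredients carry over unchanged: the martingale comparison of $n_t$ with $\sum_k w^{\rho_k}$, convexity of $\Omega_\mdp$, and the concentration events $\cE^\mu$, $\cE^p$. The only change is in the restriction to alternatives with transitions at least $\myexp{-T^{1/4}}$. That restriction needs $\alt^\pi_\veps(\mdp,T^{1/4})$ to be nonempty for large $T$ and its infimum to decrease to the unrestricted one. Both follow by mixing with the uniform kernel, as in Lemma~\ref{lem:incr-trans}, after first making $\pi$ strictly worse than $\veps$-optimal. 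Combining Steps 1 and 2 and applying Borel--Cantelli gives the per-$\pi$ bound.

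\textbf{Main obstacle.} The delicate step is Step 1, the volume and openness argument for $\alt^\pi_\veps$. The original lemmas rely on unique strongly-optimal policies and per-state gaps, whereas here the constraint is a start-state value inequality with slack $\veps$. I would handle it directly, avoiding Lemma~\ref{lem:build_easier}: choose a competing deterministic policy $\pi'$ with $V^{\pi'}_0(\mdp')\ge V^{\pi}_0(\mdp')+\veps$, and lower the rewards on the support of $\pi$ by $1/(tH)$. This perturbation costs $O(1)$ in $\Upsilon_t$, since each reward enters through $n_t\cdot O(1/t)$ terms. Lemma~\ref{lem:dev-induc} then gives an $\ell_1$-ball of radius $\Omega(1/t)$ around the perturbed model on which $\pi$ remains non-$\veps$-optimal, which is what the mixture argument needs.
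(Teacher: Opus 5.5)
Your Steps~1--2 follow the route the paper intends. Its entire proof is the remark that Appendix~\ref{appx:posterior_convergence} carries over with $\alt(\mdp)$ replaced by $\alt^\pi_\veps(\mdp)$, and for one fixed $\pi$ your steps are essentially sound.

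The gap is the passage to the minimum over $\Pi^\star_\veps(\mdp)$, which contains all $\veps$-optimal stochastic Markov policies and is infinite. Since $-\log\min_\pi P_\pi=\sup_\pi(-\log P_\pi)$, you need $\log P_\pi\geq -t\,\Gamma_\veps(\mdp)-o(t)$ for \emph{every} $\pi$, with an $o(t)$ that does not depend on $\pi$. Per-$\pi$ $\limsup$ bounds do not let you exchange $\limsup_t$ and $\sup_\pi$.

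Your first fix does not work. Restricting the minimum to deterministic policies (or to vertices of the occupancy polytope of $\mdp$) can only increase it, and a stochastic $\pi$ is not dominated by those policies. Write $V^\pi_0=\sum_d\lambda_d V^d_0$ as a model-independent mixture of deterministic policies $d$. Then $\alt^\pi_\veps(\mdp)$ contains only the intersection of the sets $\alt^d_\veps(\mdp)$, not their union, and the $d$ need not be $\veps$-optimal in $\mdp$.

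Your second fix is incomplete. The almost-sure event is $\pi$-free, but your remainders are not; they depend on $\pi$ in three ways:
\begin{itemize}
\item Through $\vol(\alt^\pi_\veps\cap\Mdp^{1/S\sqrt t})$, which enters because you mix with $\alt^\pi_\veps$.
\item Through the margin created by the reward shift. Your shift gives no margin at all for some stochastic $\pi$: if $\pi$ has full support, lowering the rewards on its support by $\delta$ lowers every policy's value by $H\delta$ and leaves $V^{\pi'}_0-V^\pi_0$ unchanged.
\item Through the speed at which $u^\pi_T\ceq\sup_{w\in\Omega_\mdp}\inf_{\alt^\pi_\veps(\mdp,T^{1/4})}\sum_{s,a,h}w_h(s,a)\KL(\mdp\Vert\cdot)_{s,a,h}$ decreases to its limit. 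Even for fixed $\pi$, this needs the limit in $T$ to commute with $\sup_w$, which pointwise convergence in $w$ does not give.
\end{itemize}

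This is repairable with your own tools, made uniform:
\begin{itemize}
\item Mix with the whole space, $(1-\gamma)\mdq'+\gamma\Mdp$, as in Lemma~\ref{lem:convex_vol}. The residual factor is then $\int_{\Mdp}e^{-\gamma\Upsilon_t}$, which does not depend on $\pi$.
\item Take $\pi'$ optimal in the near-minimizer $\mdq'$, and shift the rewards by $\delta$ along $\mathrm{sign}(w^{\pi'}-w^\pi)$, with occupancies computed in $\mdq'$ (they do not move when rewards move). This raises $V^{\pi'}_0-V^\pi_0$ by $\delta\|w^{\pi'}-w^\pi\|_1\geq\delta\veps$, since $\veps\leq V^{\pi'}_0-V^\pi_0\leq\|w^{\pi'}-w^\pi\|_1$ for rewards in $(0,1)$. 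The cost in $\Upsilon_t$ is at most $tH(\delta+\delta^2)$.
\item With $\delta=1/t$, the margin, the radius from Lemma~\ref{lem:dev-induc}, and $\gamma$ are all polynomial in $1/t$, uniformly in $\pi$.
\item In Step~2, apply the same shift, now with a slowly vanishing $\delta$, to an arbitrary point of $\alt^\pi_\veps(\mdp)$. Then mix its transitions with the uniform kernel at a weight of order $\delta\veps/H^2$. Since every $w\in\Omega_\mdp$ has total mass $H$, this gives $u^\pi_T\leq\sup_w\inf_{\alt^\pi_\veps(\mdp)}\sum_{s,a,h}w_h(s,a)\KL(\mdp\Vert\cdot)_{s,a,h}+o(1)$ uniformly in $\pi$ and $w$.
\end{itemize}
Hence $\sup_\pi u^\pi_T\to\Gamma_\veps(\mdp)$, and your argument closes.
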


\bigskip 

To state the modification of the Algorithm~\ref{alg:repit}, let a generic sticking procedure take as input the empirical $\wh\mdp$ and output a policy in $\what \Pi_{t, \veps}$. We require the procedure to be deterministic and stable, that is if the sticking recommends an answer at time $t$, then as long as this answer belongs to $\what \Pi_{t', \veps}$ for $t'\geq t$, it should keep recommending it. In case of ties, the sticking break rules with consistency, so that this defines a strict total order on $\Pi^\star(\mdp) $. 

\medskip

Because of the smooth forced exploration, we should have that on a good event $\what \Pi_{t, \veps} \to  \Pi_{\veps}^\star(\mdq) $ so that the sticking rule is simply a rule to select an answer to collect samples to verify it. We restate below the slight modification of Algorithm~\ref{alg:repit} to account for the $\veps>0$ case. Remark that compared to the initial algorithm, only Line~\ref{eq:line_stick} has changed, to pick an answer to verify among the set of empirically admissible answers.  

Intuitively, the sticking procedure transforms the problem into verifying a single answer, as opposed to verifying multiple correct answers.

As the algorithm sticks to an answer $\pi$, the conditional posterior sampling is now a no-regret learner on $\alt_\veps^\pi(\mdp)$. Following the same lines as in previous sections, one could derive the no-regret property of the conditional posterior sampling on $\alt_\veps^\pi(\mdp)$. Combining these observations, the following will hold with probability one 


\paragraph{A posterior-based stopping time for the multiple correct answers setting.} We propose the following stopping rule for the $(\veps, \delta)$-PAC policy identifcation with $\veps>0$.  
\begin{equation}
\label{eq:sto-mca}
\tau  \ceq \inf \lb t\geq 1 : \exists \pi \in \wh\Pi_{t, \veps} \text{ such that } \forall \; k\; \leq B(t, \delta), \pi \in \Pi^\star_\veps(\mdq_{t, k}) \rb . 
\end{equation}
In words, the stopping rule triggers when there exists an answer for which finding a challenger by conditional resampling takes more than $B(t,\delta)$ samples.  
Observe that for this stopping rule, we have 
\begin{eqnarray*}
	 \bP(\tau > t \mid \cH_{t-1})  &\leq&  \bP(\forall  \; \pi\, \in \,\wh\Pi_{t, \veps}, \exists\; k\, \leq\, B(t, \delta) :  \pi \notin \Pi^\star_\veps(\mdq_{t, k}) \mid \cH_{t-1} ) \\
	 &\leq& \min_{\pi \in \wh\Pi_{t,\veps}}  \bP(\exists\; k\, \leq \,B(t, \delta) :  \pi \notin \Pi^\star_\veps(\mdq_{t, k}) \mid \cH_{t-1} ) \\
	 &\leq& B(t,\delta) \min_{\pi \in \wh\Pi_{t, \veps}}  \bP(\mdq_{t} \in \alt^\pi_\veps(\empmdp_{t}) \mid \cH_{t-1} ) \:,  
\end{eqnarray*} 
where $\mdq_{t}, \mdq_{t, 1}, \ldots, \mdq_{t, k}  $ are \iid. 

By the concentration of the posterior distribution, we would have with probability one, 
\begin{eqnarray*}
	\bP(\tau > t \mid \cH_{t-1})  &\lessapprox &\wt O\!\lp B(t,\delta) \min_{\pi \in \wh\Pi_{t,\veps}} \exp \lb - \inf_{\mdq \in \alt^\pi_\veps(\empmdp_{t}) } \lsb \sum_{s,a,h} n_{t}(s,a, h) \KL(\empmdp_{t}\| \mdq)_{s,a,h}\rsb\rb \rp \\ 
	&\lessapprox &\wt O\!\lp B(t,\delta)  \exp \lb - \max_{\pi \in \wh\Pi_{t,\veps}}\inf_{\mdq \in \alt^\pi_\veps(\empmdp_{t}) } \lsb \sum_{s,a,h} n_{t}(s,a, h) \KL(\empmdp_{t}\| \mdq)_{s,a,h}\rsb\rb\rp
\end{eqnarray*} 
where $\tilde\cO$ hides constant and terms that are at most polynomial in $t$. As the empirical estimates converge, the term in the exponent will be equivalent (up to logarithmic terms in $t$)  
$$  \max_{\pi \in \Pi^\star_\veps(\mdp)}\inf_{\mdq \in \alt^\pi_\veps(\mdp) } \lsb \sum_{s,a,h} {n_{t}(s,a, h)} \KL(\mdp\| \mdq)_{s,a,h}\rsb $$

which is larger than evaluating it at any response computed by the sticking procedure. Combining the convergence of the algorithm with "sticking'' with the stopping time $\tau$, one can prove results similar to the previous section in terms of sample complexity. 
It is also possible to ensure that if the sampling rule guarantees saddle-point convergence, then plugging in this stopping rule will ensure optimality. We leave as an open question the design of a converging, computationally efficient sampling rule for the setting with $\veps > 0$.

Given $m$ posterior samples, checking if the stopping triggers require solving the feasibility problem 

 $$ \max_{ \pi \in \wh\Pi_{t,\veps}} \min_{k\leq m} (\wt V_{t,0}^{k, \pi} + \veps - \wt V_{t,0}^{k, \star})> 0 $$
 and solving this might ultimately require enumerating $\wh\Pi_{t, \veps}$, which is a major caveat.  For the computation of the stopping rule.

\begin{algorithm}[h!]
\DontPrintSemicolon
\LinesNumbered
\SetAlgoLined
\SetKwInOut{Input}{Require}
\SetKwInOut{Output}{Output}
\Input{Initial exploration $\bpol_1 = \{\bpol_1(\cdot | s, h)\}_{s, h}$; initial forced exploration $c_1$; mixing parameter $\gamma \in (0,1)$; clipping parameter $\alpha \in (0,\tfrac12)$; posterior inflation $(\eta_t)_{t\geq 1}$; initial MDP $\empmdp_{0} \in \Mdp$ (arbitrary); $\veps\geq 0$} 
\BlankLine
\For{episode $t = 1,2,\ldots$}{
\tcp*{Stick to an answer among current empirical $\veps$-best guesses}

  Compute $\hat\pi_{t} \gets \textsc{sticking}(\empmdp_{t}, \veps)$\; \label{eq:line_stick}
  \BlankLine
  \tcp{Computing a challenger MDP}
  \For{$k = 1,2,\ldots$}{\label{algo:sample-challenger}
    Sample candidate challenger $\mdq_{t, k} \sim \nu^{\eta_t}_{t}$\;
    Compute the optimal $Q$-function $\;\wt Q_{t}^{k,\star}$ of $\mdq_{t, k}$ by backward induction\;
    \If{$\hat\pi_{t}$ is $\veps$-suboptimal for $\;\wt Q_{t}^{k,\star}$}{
      Set challenger $\mdq_{t} \gets \mdq_{t, k}$ and \textbf{break}\;
    }
  }
  \BlankLine
  
 Compute $c_{t}$ as in Algorithm~\ref{alg:appx_forcexp}\; 
    
  Draw $Z_t \sim \mathrm{Bernoulli}(t^{-\gamma})$ and define $\forall (s,h)\in \cS \times [H]$ 
  $$
     \bpolmix_t(\cdot | s, h) \;\leftarrow\;
     \begin{cases}
       c_t(\cdot | s, h), & \text{if } Z_t = 1,\\
       \bpol_t(\cdot | s, h), & \text{if } Z_t = 0,
     \end{cases}
  $$
  \BlankLine
  \tcp{Rollout and data collection}
  Set $s_1^t \gets s_{\mathrm{init}}$\;
  \For{$h = 1$ \KwTo $H$}{
    Sample action $a_h^t \sim \bpolmix_t(\cdot | s_h^t, h)$\;
    Observe reward $r_h^t$ and transition to $s_{h+1}^t$\;
  }
  \BlankLine
  \tcp{KL-based bonuses for the online learner}
  Construct bonus kernel $\loss_t(s,a, h)$ from $\empmdp_{t}$ and $\mdq_{t}$ as in Eq.\eqref{eq:main-loss-def}\; 
  \BlankLine
  \tcp{Behaviour policy improvement}
  Update main exploration policy $\bpol_{t+1} \gets \textsc{ComputeExplorationPolicy}\bigl(t, \bpol_t,\, \loss_t,\, \hat p_t,\, n_t \bigr)$\; 

}
\caption{$\veps$-\textsc{PIPS}: Policy Identification via Posterior Sampling in Tabular MDPs  \label{alg:repit_veps}}
\end{algorithm}

\section{IMPLEMENTATION DETAILS AND ADDITIONAL EXPERIMENTS}
\label{appx:add_exp_impl_details}
This section contains implementation details and additional experimental results comparing\algoname{} to PSRL. 

\subsection{Implementation Details}
\label{appsubsec:impl_details}
In this section, we provide details on the implementation and experimental setup for reproducibility.  

\paragraph{Experimental setup.}  
For each environment shown in Figure~\ref{fig:mocamdp} and Figure~\ref{fig:riverswim}, rewards are modeled as Gaussian random variables with means specified in the figures. In the \textsc{Riverswim} environment, the reward variance is set to $1/20$, while in the MOCA instance it is set to $1/4$.  Although both environments are stationary, we did not include this information in our algorithm, which allows us to estimate the model and maintain distributions over non-stationary MDPs. 

\paragraph{Online learner.}  
For the policy improvement step of the online learner, we use AdaHedge, implementing the pseudo-code described in \citet{rooij_ada_hedge}, which automatically adjusts its learning rate at each round. We recall that the initialization of Algorithm~\ref{alg:appx-ol} with a generic online learner is described and analyzed in Appendix~\ref{appx:sup_player}, with its pseudo-code in Algorithm~\ref{alg:appx-ol}. 

In practice, we observe little difference in performance compared to the natural Hedge update. As argued in Appendix~\ref{appx:sup_player}, both algorithms ensure sublinear regret and thus preserve the theoretical guarantees of Algorithm~\ref{alg:repit}. 

\paragraph{Posterior inflation and thresholds calibration.}  
The posterior is inflated using $\eta_t = 1/(t+1)^4$, which provides a good trade-off between empirical performance and runtime. Running without inflation (i.e., $\eta_t = 1$) tends to yield slower experiments without noticeable benefits. However, for simplicity, we recommend using $\eta_t \gets 1$ with $B(t,\delta) \approx \tfrac{SAH}{\delta}\log(t SAH/\delta)$ which experimentally ensure $\delta$-correctness. 

\paragraph{Other algorithms implementation.}
For each algorithm, we refer to the pseudo-code in the original paper for the implementation. For the UCBVI algorithm \citep{pmlr-v70-azar17a} and SSR \citep{NEURIPS2022_298c3e32} we implement the following idealized Bernstein confidence bonuses 
$$ \bonus_t(s,a, h) = \min \lp \sqrt{\frac{\var_{\hat p_t}[\bar V_{t, h+1}](s,a) \log(t+1)}{\max(1, n_{t}(s,a, h))}} + (H-h+1)\frac{\log(t+1)}{\maxp{1}{ n_{t}(s,a, h)}}, H-h+1\rp\:. $$ 
For PSRL, the prior distribution is the same as for Algorithm~\ref{alg:repit}.

\subsection{Additional Experiments}
\label{appsubsec:add_exp}

We present further experimental results on the MOCA instance. Specifically, we study the number of posterior samples required before identifying an alternative model. This experiment evaluates the behavior of the posterior stopping time. Since both\algoname{} and PSRL rely on posterior sampling, we compare their performance in this setting.  

Each algorithm is run on the MOCA instance for $T=10{,}000$ episodes. At every episode, we record the number of posterior resamples needed to obtain an alternative model. For PSRL, this quantity is immaterial, but for \algoname{} it directly influences the sampling rule. We report the average (over $50$ independent runs). 

The results in Figure~\ref{fig:appx-samples-mocaexp} show that for \algoname{}, the number of resamples grows exponentially with $t$, as further confirmed by the log-plot in Figure~\ref{fig:appx-samples-mocaexp-log}. This behavior is consistent with theory: conditionally on the history, the number of resamples follows a geometric law with parameter scaling with $\myexp{-t\Gamma}$, so its expectation grows exponentially. This indicates that evidence accumulates rapidly, enabling the algorithm to certify the optimal policy.  

In contrast, PSRL exhibits only polynomial growth in the number of resamples (also evident from the log-log plot in Figure~\ref{fig:appx-samples-mocaexp-loglog}). This suggests that its posterior concentrates at a polynomial rate. 

\begin{figure}[H]
  \centering
  \begin{minipage}[b]{0.32\textwidth}
    \centering
    \includegraphics[width=\linewidth]{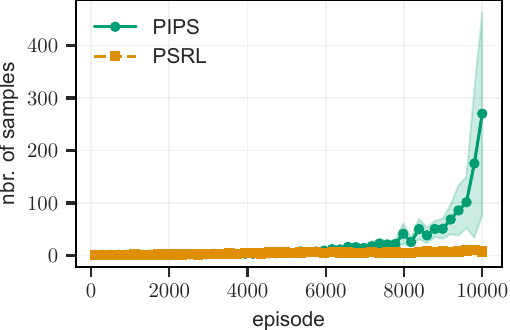}
    \caption{Number of samples for finding an alternative model averaged on 100 runs.}
    \label{fig:appx-samples-mocaexp}
  \end{minipage}
  \begin{minipage}[b]{0.32\textwidth}
    \centering
    \includegraphics[width=\linewidth]{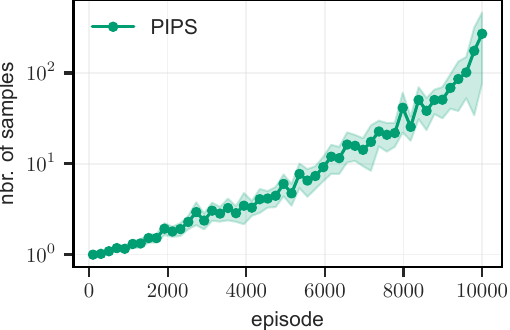}
    \caption{Log number of samples for finding an alternative model averaged on 100 runs.}
    \label{fig:appx-samples-mocaexp-log}
  \end{minipage}
  \hspace{0.015cm}
   \begin{minipage}[b]{0.32\textwidth}
    \centering
 \includegraphics[width=\linewidth]{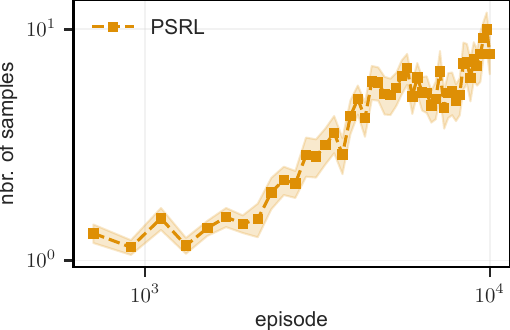}
    \caption{Log-log plot of number of samples for finding an alternative model averaged on 100 runs. }
    \label{fig:appx-samples-mocaexp-loglog}
  \end{minipage}
\end{figure}

\begin{figure}[H]
	\centering
        \includegraphics[width=0.9\linewidth]{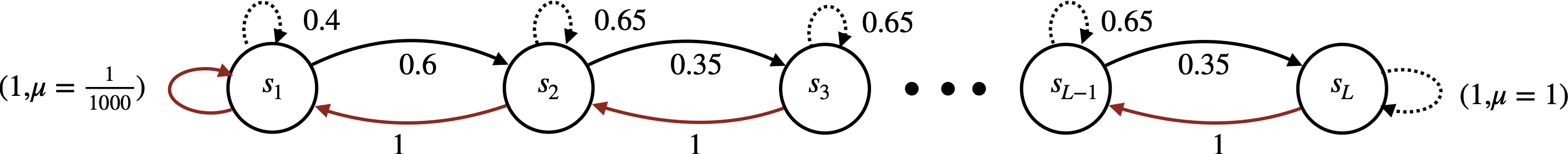}
    \caption{The $L$-state RiverSwim environment. Each state allows two actions: \textcolor{red}{\textsc{left}} (red arrows) and \textsc{right} (black arrows). Rewards are placed at the two ends of the chain. Transition probabilities are annotated above the corresponding arrows.}
    \label{fig:riverswim}
\end{figure}

\end{document}